\theoremstyle{plain}
\newtheorem{theorem}{Theorem}[section]
\newtheorem{proposition}[theorem]{Proposition}
\newtheorem{lemma}[theorem]{Lemma}
\newtheorem{corollary}[theorem]{Corollary}
\theoremstyle{definition}
\newtheorem{definition}[theorem]{Definition}
\theoremstyle{remark}
\newtheorem{remark}[theorem]{Remark}
\def\eqref#1{equation~\ref{#1}}
\def\1{\bm{1}}
\DeclareMathAlphabet{\mathsfit}{\encodingdefault}{\sfdefault}{m}{sl}
\SetMathAlphabet{\mathsfit}{bold}{\encodingdefault}{\sfdefault}{bx}{n}
\newcommand{\Var}{\mathrm{Var}}
\DeclareMathOperator*{\argmax}{arg\,max}
\newcommand{\Rb}{\mathbb{R}}
\newcommand{\Eb}{\mathbb{E}}
\newcommand{\Pb}{\mathbb{P}}
\newcommand{\Zb}{\mathbb{Z}}
\newcommand{\Ac}{\mathcal{A}}
\newcommand{\Dc}{\mathcal{D}}
\newcommand{\Mc}{\mathcal{M}}
\newcommand{\Nc}{\mathcal{N}}
\newcommand{\Sc}{\mathcal{S}}
\newcommand{\bunderline}[1]{\underline{#1\mkern-4mu}\mkern4mu }
\newcommand{\utilde}[1]{\underaccent{\tilde}{#1}}
\newcommand{\jing}[1]{{\color{red}#1}}
\newcommand{\hP}{\hat{P}}
\newcommand{\gp}{\bar{\pi}}
\newcommand{\hrq}[1]{{\color{brown} #1}}
\newcommand{\jingc}[1]{{\color{red}(Jing: #1)}}
\newcommand{\jingc}[1]{}
\newcommand{\ldh}[1]{{\color{blue} #1}}
\title{Near-optimal Conservative Exploration in Reinforcement Learning under Episode-wise Constraints}
\author{
 {Donghao Li}\thanks{Equal contribution} \footnotemark[3] ,~~~
 {Ruiquan Huang}\footnotemark[1] \footnotemark[3],~~~
 {Cong Shen}\thanks{ECE Department, University of Virginia, Charlottesville, VA, USA},~~~
 {Jing Yang}\thanks{School of EECS, The Pennsylvania State University, University Park, PA, USA. Correspondence to:
Jing Yang $<$yangjing@psu.edu$>$}
}
\date{}
\begin{document}

\maketitle
 







\begin{abstract}
This paper investigates conservative exploration in reinforcement learning where the performance of the learning agent is guaranteed to be above a certain threshold throughout the learning process. It focuses on the tabular episodic Markov Decision Process (MDP) setting that has finite states and actions. With the knowledge of an existing safe baseline policy, an algorithm termed as StepMix is proposed to balance the exploitation and exploration while ensuring that the conservative constraint is never violated in each episode with high probability. StepMix features a unique design of a mixture policy that adaptively and smoothly interpolates between the baseline policy and the optimistic policy. Theoretical analysis shows that StepMix achieves near-optimal regret order as in the constraint-free setting, indicating that obeying the stringent episode-wise conservative constraint does not compromise the learning performance. Besides, a randomization-based EpsMix algorithm is also proposed and shown to achieve the same performance as StepMix. The algorithm design and theoretical analysis are further extended to the setting where the baseline policy is not given a priori but must be learned from an offline dataset, and it is proved that similar conservative guarantee and regret can be achieved if the offline dataset is sufficiently large. Experiment results corroborate the theoretical analysis and demonstrate the effectiveness of the proposed conservative exploration strategies.
\end{abstract}

\section{Introduction}
\label{sec:intro}

One of the major obstacles that prevent state-of-the-art reinforcement learning (RL) algorithms from being deployed in real-world systems is the lack of performance guarantee throughout the learning process. In particular, for many practical systems, a reasonable albeit not necessarily optimal {\it baseline policy} is often in place, and RL is later brought in as a (supposedly) superior solution to replace the baseline. System designers want the potentially better RL policy, but are also wary of the possible performance degradation incurred by exploration during the learning process. This dilemma exists in many domains, including digital marketing, robotics, autonomous driving, healthcare, and networking; see \citet{garcia2015comprehensive,wu2016conservative} for a detailed discussion of practical examples. It is desirable to have the RL algorithm perform nearly as well (or better) as the baseline policy {\it at all times}.

To address this challenge, {\it conservative exploration} has received increased interest in RL research over the past few years \citep{garcelon2020conservative,yang2021reduction,efroni2020exploration,zheng2020constrained,xu2020primal,liu2021learning}. 
In the online learning setting, exploration of the unknown environment is necessary for RL to learn about the underlying Markov Decision Process (MDP).  However, ``free'' exploration provides no guarantee on the RL performance, particularly in the early phases where the knowledge of the environment is minimal and the algorithm tends to explore almost randomly.  
To solve this problem, the vast majority of the conservative exploration literature relies on a key idea of invoking the baseline policy early on to build a conservative budget, which can be spent in later episodes to take explorative actions. This intuition, however, critically depends on the definition of the conservative constraint being the {\it cumulative} expected reward over a horizon falling below a certain threshold. 
If a more stringent constraint defined on a {\it per episode} basis is adopted, this idea becomes infeasible and it is unclear how conservative exploration can be achieved.

\begin{table*}[t]
    \caption{Comparison of Related Algorithms}
    \label{table:comp}
    \begin{center}
    \adjustbox{max width= \linewidth}{
    \begin{tabular}{ccccc}
    \Xhline{3\arrayrulewidth}
    Algorithm (Reference) & Regret & Violation & Constraint-type & Baseline Assumption \\
    \Xhline{2\arrayrulewidth}
BPI-UCBVI \citep{menard2021fast} & $\tilde{O}\left(\sqrt{H^3SAK}\right)$ & N/A & N/A & N/A \\
    \midrule
    OptPess-LP \citep{liu2021learning} & $\tilde{O}(\frac{1}{\kappa}\sqrt{H^6S^3AK})$ & $0$ & Episodic, general constraint & Type I \\
    \midrule
    DOPE \citep{bura2022dope} & $\tilde{O}(\frac{1}{\kappa}\sqrt{H^6S^2AK})$ & $0$ & Episodic, general constraint & Type I \\
    \midrule
    Budget-Exporation \citep{yang2021reduction} & $\tilde{O}\left(\sqrt{H^3SAK} + \frac{H^3SA\Delta_0}{\kappa{(\kappa+\Delta_0)}}\right)$ & $0$ & Cumulative, conservative constraint & Type I  \\
    \midrule
    StepMix / EpsMix (this work) & $\tilde{O}\left(\sqrt{H^3SAK}+\frac{H^3SA\Delta_0}{\kappa^2}\right)$ & $0$ & Episodic, conservative constraint & Type I and II \\
    \hline
    \vspace{-11pt}\\
    \hline
    Lower bound \citep{yang2021reduction} & $\Omega \left(\sqrt{H^3SAK} + \frac{H^3SA\Delta_0}{\kappa{(\kappa+\Delta_0)}}\right)$ & $0$ & Cumulative, conservative constraint & Type I \\
    \Xhline{3\arrayrulewidth}
    \end{tabular}
    }
    \end{center}
    
    \begin{center}
        \scriptsize
	    $\Delta_0$: suboptimality gap for the baseline policy; $\kappa$: tolerable reward loss from the baseline policy or the Slater parameter. 
     Type I assumes a known safe baseline policy. Type II assumes availability of an offline dataset generated by an unknown safe behavior policy. The lower bound automatically applies to our problem, {due to its weaker constraint}.
	\end{center}
	\vspace{-0.2in}
		
\end{table*}

In this paper, we focus on conservative exploration in an episodic MDP with finite states and actions. Unlike most of the prior works, we enforce a more strict conservative constraint that the expected reward of the RL policy cannot be much worse than that of a baseline policy {\it for every episode}. One fundamental question we aim to answer is:

\begin{center}
 {\it Is it possible to design a conservative exploration algorithm to achieve the optimal learning regret while satisfying the episode-wise conservative constraint throughout the learning process?}
\end{center}
In this work, we provide an affirmative answer to this question. Our main contributions are summarized as follows.
\begin{itemize}[leftmargin=*]\itemsep=0pt
    \item First, we investigate the scenario where a safe baseline policy is explicitly given upfront, and propose a model-based learning algorithm coined {StepMix}. In contrast to conventional linear programming or primal-dual based approaches in constrained MDPs~\citep{liu2021learning,bura2022dope,wei2022triple,efroni2020exploration}, StepMix features several unique design components. First, in order to achieve the optimal learning regret, StepMix relies on a Bernstein inequality-based design to closely track the estimation uncertainty in learning and construct an efficient optimistic policy correspondingly. Then, a set of candidate policies are explicitly constructed by smoothly interpolating between the safe baseline policy and the optimistic policy. Finally, a mixture of two candidate policies is obtained when necessary to achieve the near-optimal tradeoff between safe exploration and efficient exploitation. 
    \item Second, we theoretically analyze the performance of StepMix, and rigorously show that it achieves  $\tilde{O}(\sqrt{H^3SAK})$ regret, which is the order-optimal learning regret in the unconstrained setting, while never violating the conservative constraint during the learning process with high probability. The conservative constraint turns out to only incur an {\it additive} regret term, as opposed to a multiplicative coefficient in \citet{bura2022dope, liu2021learning}. Furthermore, the additive term differs from that in the lower bound in \citet{yang2021reduction} by a small constant factor, while our constraint is more stringent. Besides, we extend the analysis to a randomization mechanism-based EpsMix algorithm and show that it achieves the same learning regret as StepMix and satisfies the conservative constraint as well. A comparison of our work and these relevant papers is presented in Table~\ref{table:comp}.
    \item Next, instead of assuming a safe baseline policy is explicitly provided, we investigate the scenario where the agent only has access to an offline dataset collected under an unknown safe behavior policy. The agent thus needs to first extract an approximately safe baseline policy from the dataset and then to use it as an input to the StepMix or EpsMix algorithm. We explicitly characterize the impact of the dataset size and the quality of the behavior policy on the safety and regret of StepMix/EpsMix. Our results indicate that similar regret and safety guarantees can be achieved, as long as the dataset is sufficiently large. 
    \item Finally, due to the explicit algorithmic design of the optimistic policy, the candidate policies and the mixture policies, we are able to implement StepMix and EpsMix efficiently and validate their performances through synthetic experiments. The experimental results corroborate our theoretical findings, and showcase the superior performances of StepMix/EpsMix compared with other baseline algorithms. 
\end{itemize}

\if{0}
In this paper, we focus on constrained reinforcement learning with linear function approximation. Our {\bf goal} is to design provable RL algorithms that cope with practical constraints and large state spaces. Our contributions are two-folds.
\begin{itemize}
    \item A novel diversity model is studied. Based on this model, agent is able to explore the environment safely. We also demonstrate two examples of action spaces which satisfy the diversity model, which covering both the continuous and discrete case.
    \item A conservative algorithm is proposed inspired by UCB algorithm and the random exploration. While the random exploration is conservative, it also provides enough information to do UCB type of exploitation and exploration. Theoretical analysis shows that the algorithm achieves $\tilde{O}(\sqrt{d^3H^3 T})$  regret, which matches the regret bound of unconstrained RL.
\end{itemize}
\fi

\section{Related Works}
In this section, we briefly discuss existing works that are most relevant to our work. A detailed literature review is deferred to \Cref{appx:ref}.

\textbf{Unconstrained Episodic Tabular MDPs.} Unconstrained tabular MDPs have been well studied in the literature. For an episodic MDP with $S$ states, $A$ actions and horizon $H$, the minimax regret lower bound scales in $\Omega(\sqrt{H^3SAK})$ \citep{domingues2021episodic}, where $K$ denotes the number of episodes. Several algorithms have been proposed and shown to achieve the minimax lower bound (and thus order-optimal), including \citet{azar2017minimax, zanette2019tighter,menard2021fast}.

\textbf{Conservative Exploration.} Conservative exploration corresponds to the setting where a good baseline policy that may not be optimal is available, and the agent is required to perform not much worse than the baseline policy during the learning process. Such conservative scenario has been studied in bandits \citep{wu2016conservative,kazerouni2017conservative,garcelon2020improved} and tabular MDPs \citep{garcelon2020conservative}.
\citet{garcelon2020conservative} investigate both the average reward setting and the finite horizon setting. 
\citet{yang2021reduction} propose a reduction-based framework for conservative bandits and RL, which translates a minimax lower bound of the non-conservative setting to a valid lower bound for the conservative case. It also proposes a Budget-Exporation algorithm and shows that its regret scales in $\tilde{O}\left(\sqrt{H^3SAK} + \frac{H^3SA\Delta_0}{\kappa(\kappa+\Delta_0)}\right)$ for tabular MDPs, where $\Delta_0$ is the suboptimality gap of the baseline policy, and $\kappa$ is the tolerable performance loss from the baseline. However, all these works assume {\it cumulative} conservative constraint. 
As discussed in \cref{sec:intro}, our episodic-wise constraint is more stringent, and correspondingly the algorithms and the regret analysis are also different from the prior works. 

\textbf{Constrained MDP with Baseline Policies.} Conservative exploration studied in this paper can be viewed as a specific case of the Constrained Markov Decision Process (CMDP) \citep{Altman:CMDP:1999,liu2021learning,efroni2020exploration,wei2022triple}, where the goal is to maximize the expected total reward subject to constraints on the expected total costs in each episode. Assuming a known safe baseline policy that satisfies the corresponding constraints, OptPess-LP \citep{liu2021learning} is shown to achieve an regret of $\tilde{O}(\frac{1}{\kappa}\sqrt{H^6S^3AK})$ without any constraint violation with high probability, while DOPE \citep{bura2022dope} improves the regret to $\tilde{O}(\frac{1}{\kappa}\sqrt{H^6S^2AK})$, where $\kappa$ denotes the Slater parameter. We note that both algorithms do not achieve the optimal regret in the unconstrained counterpart.

\section{Problem Formulation}

We consider an episodic MDP $\Mc=(\Sc, \Ac, H, P, r, s_1)$, where $\Sc$ and $\Ac$ are the sets of states and actions, respectively, $H \in \Zb_+$ is the length of each episode, $P = \{P_h\}^H_{h=1}$ and $r = \{r_h\}^H_{h=1}$ are respectively the state transition probability measures and the reward functions, and $s_1$ is a given initial state.  We assume that $\Sc$ and $\Ac$ are finite sets with cardinality $S$ and $A$ respectively. Moreover, for each $h \in [H]$, $P_h(\cdot|s,a)$ denotes the transition kernel over the next states if action $a$ is taken for state $s$ at step $h\in[H]$, and $r_h : \Sc \times \Ac \rightarrow [0,1]$ is the deterministic reward function at step $h$ which is assumed be known for simplicity. Our result can be easily generalized to random and unknown reward functions. {We consider the learning problem where $\Sc$ and $\Ac$ are known while $P$ are unknown a priori.}

A policy $\pi$ is a set of mappings $\{\pi_h : \Sc \rightarrow \Delta(\Ac)\}_{h\in[H]}$, where $\Delta(\Ac)$ is the set of all probability distributions over the action space $\Ac$. In particular, $\pi_h(a|s)$ denotes the probability of selecting action $a$ in state $s$ at time step $h$. 

An agent interacts with this episodic MDP as follows. In each episode, {the environment begins with a fixed} initial state $s_1$. Then, at each step $h \in [H]$, the agent observes the state $s_h \in \Sc$, picks an action $a_h\in \Ac$, and receives a reward $r_h (s_h, a_h)\in [0,1]$. The MDP then evolves to a new state $s_{h+1}$ that is drawn from the probability measure $P_h(\cdot|s_h,a_h)$. The episode terminates after $H$ steps.


For each $h \in [H]$, we define the state-value function $V_h^\pi: \Sc\rightarrow \Rb$ as the expected total reward received under policy $\pi$ when starting from an arbitrary state at the $h$-th step until the end of the episode. Specifically, $\forall s \in \Sc,h \in[H]$, 
\begin{align} \textstyle
V_h^\pi(s) &:= \Eb_{\pi}\bigg[\sum_{h'=h}^H r_{h'}(s_{h'},a_{h'})\bigg| s_h = s \bigg],
\end{align}
where we use $\Eb_{\pi} [\cdot]$ to denote the expectation over states and actions that are governed by $\pi$ and $P$. Since the MDP begins with the same initial state $s_1$, to simplify the notation, we use $V_1^\pi$ to denote $V^\pi_1(s_1)$ without causing ambiguity. Correspondingly, we define the action-value function $Q_h^{\pi}: \Sc\times\Ac\rightarrow \Rb$ at step $h$ as the expected total reward under policy $\pi$ after taking action $a$ at state $s$ in step $h$, that is:
\begin{align*}
Q_h^{\pi}(s,a) :=& \Eb_\pi\bigg[\sum_{h'=h}^H r_{h'}(s_{h'},a_{h'})\bigg| s_h=s,a_h = a \bigg]\\
=& r_h(s,a)+ [P_h V^{\pi}_{h+1}](s,a),
\end{align*}
where $[P_h V^{\pi}_{h+1}](s,a):=\Eb_{s'\sim P_h(\cdot|s,a)}[V^{\pi}_{h+1}(s')]$.
\if{0}
Therefore, we have the Bellman equation
\begin{align} 
&Q_h^{\pi}(x,a) := r_h(x,a) + \Eb_{x'\sim\Pb_h(\cdot|x,a)}[V_{h+1}^{\pi}(x')]\label{eqn: Bellman1}\\
&V_h^{\pi}(x) = Q_h^{\pi}(x,\pi_h(x))\label{eqn: Bellman2}
\end{align}
It is worth noting that $V_{H+1}(x) = 0$.
\fi
Since the action space and the episode length are both finite, there always exists an optimal policy $\pi^\star$ that gives the optimal value $V_h^\star(s) = \sup_\pi V_h^\pi(s)$ for all $s \in \Sc$ and $h\in [H]$.

\if{0}
\textbf{Linear MDP.} We assume the MDP $(\Sc, \Ac, H, P, r,x_1)$ is a linear MDP~\citep{Jin:2020:COLT} with a (known) feature map $\phi$, i.e., for any $h\in [H]$, there exist $d$ unknown measures $\mu_h = (\mu_h^{(1)},\ldots,\mu_h^{(d)})$ over $\Sc$ and an
unknown vector $\theta_h\in \Rb^d$, such that for any $(x, a)\in \Sc \times \Ac$, we have
$P_h(x'|x, a) = \langle\phi(x, a)$ and $\mu_h(x')\rangle, r_h(x, a) = \langle \phi(x, a), \theta_h\rangle$.
Without loss of generality, we assume $\|\phi(x, a)\| \leq  1$ for all $(x, a)\in \Sc \times \Ac$, and $\max\{\|\mu_h(\Sc)\|, \|\theta_h\|\}\leq \sqrt{d}$ for all $h \in[H]$.
\fi

\textbf{Conservative Constraint.} While there could be various forms of constraints imposed on the RL algorithms, in this work, we focus on a baseline policy-based constraint~\citep{garcelon:AISTATS:2020,yang2021reduction}. In many applications, it is common to have a known and reliable baseline policy that is potentially suboptimal but satisfactory to some degree. Therefore, for applications of RL algorithms, it is important that they are guaranteed to perform not much worse than the existing baseline throughout the learning process. Denote the baseline policy as $\pi^b$ and the corresponding expected total reward obtained under $\pi^b$ in an episode as $V_1^{\pi^b}$. 
Then, throughout the entire learning process, we require that the expected total reward for each episode $k$ is at least $\mathbf{\gamma}$ 
with high probability, where $\mathbf{\kappa}:= V_1^{\pi^b} - \gamma>0$ characterizes how much risk the algorithm can take during the learning process.
A policy $\pi$ that achieves expected total reward at least $\mathbf{\gamma}$ is considered to be ``safe'', and we emphasize that our proposed algorithms do not require the knowledge of $V_1^{\pi^b}$. Let $\pi^k$ be the policy adopted by the agent during episode $k \in[K]$. 
Mathematically, we formulate the conservative constraint as 
\begin{align}\label{eqn:constraint}
\Pb\left[V_1^{\pi^k}\geq \mathbf{\gamma}, \forall k\in[K] \right ]\geq 1-\delta, \text{ where } \delta\in(0,1).
\end{align}

\textbf{Comparison with Previous Conservative Constraints.} The conservative constraint in \Cref{eqn:constraint} is more restrictive compared with \citet{garcelon:AISTATS:2020,yang2021reduction}, where the constraint is imposed on the {cumulative} expected reward over all experienced episodes instead of on each episode. 
We note that this stringent constraint has a profound impact on the algorithm design. 
While the previous cumulative conservative constraint enables the idea of saving the conservative budget early on and spending it later to play explorative actions, it cannot guarantee that in each episode, the expected total reward is above a certain threshold. 
Our constraint in \Cref{eqn:constraint}, in contrast, requires the expected total reward to be above a threshold in each episode. Hence, the idea of saving budget from early episodes for exploration in future episodes cannot be adopted, and it requires a more sophisticated algorithm design to control the budget spending {\it within} each episode and ensure the safety of all executed policies. 

{In addition, the per-episode conservative constraint in our work is more practical than the cumulative reward-based constraints. This is because each episode in the episodic MDP setting corresponds to the learning agent interacting with the environment from the beginning to the end, e.g., a robot walks from a starting point to the end point. Guaranteeing the performance in every episode has physical meanings, e.g., making sure that the robot does not suffer any damage while learning how to walk. This cannot be captured by the long-term constraint that spans many episodes.}

\textbf{Learning Objective.} Under the given episodic MDP setting, the agent aims to learn the optimal policy by interacting with the environment during a set of episodes, subject to the conservative constraint. The difference between $V_1^{\pi^k}$ and $V_1^\star$ serves as the expected regret or the suboptimality of the agent in the $k$-th episode. Thus, after playing for $K$ episodes, the total expected regret is
\begin{align}
\label{eqn:defReg}
\mbox{Reg}(K):=KV_1^\star-\sum_{k=1}^K V_1^{\pi^k} .
\end{align}
Our objective is to minimize $\mbox{Reg}(K)$ while satisfying \Cref{eqn:constraint} for any given $\delta\in (0,1) $. 

\section{The StepMix Algorithm}
In this section, we aim to design a novel safe exploration algorithm to satisfy the episodic conservative constraint and achieve the optimal learning regret. 

\subsection{Challenges}
\label{sec:challenge}
For unconstrained episodic MDPs with finite states and actions, in order to achieve the minimax regret lower bound  $\Omega(\sqrt{H^3SAK})$, the core design principle \citep{azar2017minimax, zanette2019tighter,menard2021fast} is to construct a Bernstein inequality-based Upper Confidence Bound (UCB) for the action-state value function under the optimal policy (i.e., $Q^\star_h$), and then to execute an optimistic policy that maximizes the UCB in each step. Such a UCB takes the variance of the corresponding estimated value function into consideration, leading to a more efficient exploration policy.

Intuitively, in order to achieve the same learning regret, the safe exploration policy should follow a similar Bernstein-inequality based design principle. However, this may lead to several technical challenges, as elaborated below.

First, we note that in conventional CMDP problems under {\it episodic} cost constraints~\citep{bura2022dope,liu2021learning}, the exploration policy in each episode $k$ is usually obtained by solving a constrained optimization problem in the form of $\pi^k = \argmax_{\pi\in \Pi_k} {V}_1^\pi(P^k)$, where $P^k$ is the estimated model and $\Pi_k$ is the set of estimated safe policies. For given $P^k$, both the objective function and the constraint set can be expressed as a linear function of $\pi$ or of occupancy measures, and thus can be solved efficiently. However, if Bernstein inequality is adopted to construct a tighter confidence set of the value functions (and hence $\Pi_k$), it can no longer be formulated as a linear programming problem, resulting in unfavorable computational complexity in each iteration.


Second, in order to keep track of the estimation error under the adopted exploration policy $\pi^k$, it is necessary to bound $(\hP^k_h-P_h)V^{\pi^k}_{h+1}$ for each $h\in[H]$. In order to achieve the optimal dependence on $S$, a common technique is to decompose it into two terms, $(\hP^k_h-P_h)V^{\pi^\circ}_{h+1}$ and {$(\hP^k_h-P_h)(V^{\pi^k}_{h+1}-V^{\pi^\circ}_{h+1})$}, where $\pi^\circ$ is a fixed policy that is independent with the historical data, and then bound them separately. Intuitively, $\pi^\circ$ should be a policy ``close'' to $\pi^k$, so that as $\hP^k_h$ converges to $P_h$, both terms converge to zero and the overall learning regret can thus be bounded. In the unconstrained case, $\pi^\circ$ is naturally set to be the optimal policy {$\pi^\star$}. However, under the episodic conservative constraint in our setting, the selection of $\pi^\circ$ is more delicate. This is because $\pi^k$ may be very different from {$\pi^\star$}, especially at the beginning of the learning stage when little information of the underlying MDP is known. Therefore, how to construct a good ``anchor'' policy $\pi^\circ$ that stays close to $\pi^k$ {\it throughout the learning process} becomes challenging.


Finally, in order to ensure the safety of the exploration policy $\pi_k$ in each episode, it is necessary to obtain a {\it pessimistic} estimation of the corresponding value function and to make sure it is above the threshold $\gamma$. While the Lower Confidence Bound (LCB) under the optimal policy {$\pi^\star$} can be constructed in a symmetric manner as UCB, it is not immediately clear how to construct a Bernstein-type LCB for $V^{\pi_k}$, as $\pi_k$ {is not fixed but dependent on history, and it may deviate from the optimistic policy significantly due to the episodic conservative constraint.}


\if{0}
To solve these two problems, we borrow the idea of ``budget'' from \citet{yang2021reduction}. BudgetExploration or LCBCE \cite{yang2021reduction} can avoid the previous two problems at the cost of sacrificing episodic constraint to become cumulative constraint. With cumulative constraint, they can play a known baseline policy for ``budget'' and use the ``budget'' in the later episode on the existing non-conservative algorithms. We focus on the episodic constraint, which requires us to use the ``budget'' to explore within the episode, otherwise we will lose the exploration opportunities. For algorithm BudgetExploration or LCBCE \cite{yang2021reduction}, they can only decide to use the baseline policy or an optimistic greedy policy given by a non-conservative algorithm. To explore right in the episode, our algorithm must find a policy \textbf{between} the baseline policy and the optimistic greedy policy given by a non-conservative algorithm rather than one of them.

In summary, we want to:
\begin{enumerate}
    \item design the algorithm that can find a safe $\pi^k$ between baseline policy $\pi^b$ and optimistic policy $\hat{\pi}^\star$ given by a non-conservative reinforcement learning algorithm.
    \item give the UCB and LCB for $\pi^k$ armed with $|(\hP-p)V^\star|$-type inequalities in analysis.
    \item prove that the constraint will affect comparably finite episodes
\end{enumerate}
To deal with these problems, StepMix is proposed. We mainly follow BPI-UCBVI \cite{menard2021fast} algorithm as the non-conservative reinforcement learning algorithm. It is designed for best policy identification, but it also has the state-of-the-art order of regret and is easier for our analysis. 
\fi

\subsection{Algorithm Design}\label{sec:design}
In this subsection, we explicitly address the aforementioned challenges and present a novel algorithm termed as StepMix. 
Before we proceed to elaborate the design of StepMix, we first introduce the definition of step mixture policies. 

\begin{definition}[Step Mixture Policies]
\label{def:stepmix}
The step mixture policy of two Markov policies $\pi^1 $ and $\pi^2$ with parameter $\rho$, denoted by $\rho\pi^1 + (1-\rho)\pi^2$, is a Markov policy such that the probability of choosing an action $a_h$ given a state $s_h$ under the step mixture policy is 
$\rho\pi_h^1(a_h|{s_h})+(1-\rho)\pi^2_h(a_h|{s_h})$.
\end{definition}

\begin{algorithm}[t]
    \caption{The StepMix Algorithm}
    \label{alg:step}
\begin{algorithmic}
\STATE {\bf Input:} $\pi^b$, $\gamma$, $\beta$, $\beta^\star$, $\Dc_0=\emptyset$.
\FOR{$k$ = $1$ to $K$}
    \STATE Update model estimate $\hP$ according to \Cref{eqn:p}.
\STATE \textcolor{blue}{\it \# Optimistic policy identification}

$\tilde{V}_{H+1}^{k}(s)=\utilde{V}_{H+1}^{k}(s)=0,\forall s\in \Sc$.
    \FOR{$h$ = $H$ to $1$}
        \STATE Update $\tilde{Q}_{h}^{k}(s,a)$, $\utilde{Q}_{h}^{k}(s,a), \forall (s,a)\in \Sc\times \Ac$ according to \Cref{eqn:q}.
        \STATE $\gp_h^{k}(s)\gets\argmax_a \tilde{Q}_{h}^{k}(s,a)$, $\tilde{V}^{k}_h(s)\gets\tilde{Q}^{k}_h(s,\gp_h^k(s))$, $\utilde{V}^{k}_h(s)\gets \utilde{Q}^{k}_h(s,\gp_h^k(s)),\forall s\in\Sc$.
    \ENDFOR
       \STATE \textcolor{blue}{\it \# Candidate policy construction and evaluation}
    \FOR{$h_0$ = $0$ to $H$}
        \STATE $\pi^{k,h_0} = \{\pi_1^b,\pi_2^b,\cdots,\pi_{h_0}^b,\gp_{h_0+1}^{k},\cdots, \gp_{H-1}^{k}, \gp_{H}^{k}\}$.
       \STATE $\utilde{V}^{k,h_0}=\mbox{PolicyEva}(\hP^k,\pi^{k,h_0})$.
    \ENDFOR
         \STATE  \textcolor{blue}{\it  \# Safe exploration policy selection}
    \IF{$\{h\,|\,\utilde{V}_1^{k,h} \ge \gamma, h=0,1,\ldots,H\}=\emptyset$}
        \STATE $\pi^k = \pi^b$. 
    \ELSE
        \STATE $h^k = \min \{h\,|\,\utilde{V}_1^{k,h} \ge \gamma, h=0,1,\ldots,H\}$.
        \IF{$h^k=0$}
            \STATE $\pi^k = \gp^k$. 
            \ELSE
            \STATE Set $\pi^k$ according to \Cref{eqn:stepmix2}.
        \ENDIF
    \ENDIF
    \STATE Execute $\pi^k$ and collect $\{(s_h^k,a_h^k,s_{h+1}^k)\}_{h=1}^H$.
\STATE $\Dc_n\gets \Dc_{n-1}\cup\{(s_h^k,a_h^k,s_{h+1}^k)\}_{h=1}^H$.
\ENDFOR
\end{algorithmic}
\end{algorithm}

StepMix is a model-based algorithm that features a unique design of the candidate policies and safe exploration policies. In the following, we elaborate its major components.


\textbf{Model Estimation.} At each episode $k$, the agent uses the available dataset to obtain an estimate of the transition kernel. Specifically, let $n_h^{k}(s,a) = \sum_{\tau=1}^{k-1}\mathds{1}{\{s_h^{\tau} = s,~ a_h^{\tau} = a\}}$ and $n_h^{k}(s,a,s') = \sum_{\tau=1}^{k-1} \mathds{1}{\{s_h^{\tau} = s,~ a_h^{\tau} = a,~ s_{h+1}^{\tau} = s' \}}$ be the visitation counters. The agent estimates $\hP_h^{k}(s'|s,a)$ as 
\begin{align}\label{eqn:p}
    \hP_h^{k}(s'|s,a)&=\left\{\begin{array}{cc}
       \frac{n_h^{k}(s,a,s')}{n_h^{k}(s,a)},  &  \mbox{if } n_h^{k}(s,a)>1,\\
      \frac{1}{S},   & \mbox{otherwise}.
    \end{array}
    \right.
\end{align}

\textbf{Bernstein-type Optimistic Policy Identification.} With the updated model estimates $\hP^k$, the agent then tries to construct an optimistic policy. We note that this optimistic policy may not be identical to the exploration policy selected afterwards. However, it provides important information regarding the model estimate accuracy and will be leveraged to construct an efficient yet safe exploration policy. Specifically, we first denote
\[
\Var_{\hP^k_h}(\tilde{V}_{h+1}^k)(s,a)=\mathbb{E}_{s'\sim \hP^k_h(\cdot|s,a)}[(\tilde{V}_{h+1}^k(s')-\mathbb{E}_{s'\sim \hP^k_h(\cdot|s,a)}[\tilde{V}_{h+1}^k(s')])^2],\]
which captures the variance of $\tilde{V}_{h+1}^k$ under transition kernel $\hP^k_h$ given $(s^k_h,a^k_h)=(s,a)$. 

Then, 
with $\tilde{V}_{H+1}^{k}(s)=\utilde{V}_{H+1}^{k}(s)=0$, $\forall s\in \Sc$, for each $h\in[H], (s,a)\in \Sc\times\Ac$, we recursively define
\begin{align}
   & \scriptstyle  \tilde{Q}^{k}_h(s,a)\triangleq  \min\big(H, r_h(s,a)+3\sqrt{{\Var}_{\hP_h^k}(\tilde{V}_{h+1}^{k})(s,a)\frac{\beta^\star}{n^k_h(s,a)}} + 14 H^2\frac{\beta}{n^k_h(s,a)}+\frac{1}{H}\hP_h^k(\tilde{V}^{k}_{h+1}-\utilde{V}^{k}_{h+1})(s,a)+\hP_h^k\tilde{V}^{k}_{h+1}(s,a)\big)\nonumber\\
  & \scriptstyle \utilde{Q}^{k}_h(s,a)\triangleq  \max\big(0, r_h(s,a)-3\sqrt{\Var_{\hP_h^k}(\tilde{V}_{h+1}^{k})(s,a)\frac{\beta^\star}{n^k_h(s,a)}}  - 22 H^2\frac{\beta}{n^k_h(s,a)}-\frac{2}{H}\hP_h^k(\tilde{V}^{k}_{h+1}-\utilde{V}^{k}_{h+1})(s,a)+\hP_h^k\utilde{V}^{k}_{h+1}(s,a)\big),   \label{eqn:q}  
\end{align}
and obtain an optimistic policy $\gp^k$ by setting {$\gp_h^k(s) =\arg\max_{a\in\Ac}\tilde{Q}^{k}_h(s,a)$.}
After that, we set  $\tilde{V}^{k}_h(s)=\tilde{Q}^{k}_h(s,\gp_h^k(s))$, and $\utilde{V}^{k}_h(s)= \utilde{Q}^{k}_h(s,\gp_h^k(s))$.


Intuitively speaking, $\tilde{Q}^{k}_h(s,a)$ serves as a Bernstein-type UCB for the true value function under the optimal policy, i.e., {$Q^\star(s,a)$}, while $\utilde{Q}^{k}_h(s,a)$ serves as the corresponding LCB. We note that the designs of $\tilde{Q}^{k}_h(s,a)$ and $\utilde{Q}^{k}_h(s,a)$ are not symmetric, i.e., the coefficients associated with the Bernstein-type bonus terms are not exactly opposite. Actually, this unique selection of the bonus terms is critical for us to obtain a valid LCB not just for the optimal value function, but also for those value functions under the exploration policy $\pi^k$, as elaborated below.

\textbf{Candidate Policy Construction and Evaluation.} Once the agent obtains the optimistic policy $\gp^k$, it will proceed to construct a set of candidate policies denoted as $\{\pi^{k,h_0}\}_{h_0=0}^H$, where $\pi^{k,h_0} = \{\pi_1^b,\cdots\pi_{h_0}^b,\gp_{h_0+1}^{k},\cdots,\gp^{k}\}$. 
We note that $\pi^{k,h_0}$ follows the safe baseline $\pi^b$ for the first $h_0$ steps, after which it switches to the optimistic policy $\gp^k$. Besides, $\pi^{k,h_0}$ and $\pi^{k,h_0+1}$ only differ at step $h_0+1$. As $h_0$ sweeps from $H$ to $0$, $\pi^{k,h_0}$ essentially forms a smooth interpolation between the safe baseline $\pi^b$ and the optimistic policy $\gp^k$.

For each candidate policy $\pi^{k,h_0}$, we obtain UCB and LCB on the two corresponding true value functions {denoted as $Q^{k,h_0}$ and $V^{k,h_0}$} respectively, by invoking the {PolicyEva} subroutine (See \cref{alg:policyEva} in \cref{appx:proof_step}). Specifically, PolicyEva recursively updates $\tilde{Q}^{k,h_0}_h(s,a)$ and $\utilde{Q}^{k,h_0}_h(s,a)$ in the same form as in \Cref{eqn:q}, while $\tilde{V}^{k,h_0}_h(s)\triangleq\langle \pi_h^{k,h_0}(\cdot|s) ,\tilde{Q}^{k,h_0}_h(s,\cdot) \rangle $, $\utilde{V}^{k,h_0}_h(s)\triangleq\langle \pi_h^{k,h_0}(\cdot|s) ,\utilde{Q}^{k,h_0}_h(s,\cdot) \rangle $.

We design the set of candidate policies in order to explicitly address the second challenge in \Cref{sec:challenge}, i.e., it is desirable to obtain a fixed ``anchor'' policy that stays close to $\pi^k$ throughout the learning process. Intuitively, in order to satisfy the conservative constraint in each episode, $\pi^k$ would stay at $\pi^b$ when it has not collected enough information of the environment; As $k$ proceeds, it is desirable to have $\pi^k$ evolve to the optimal policy {$\pi^\star$}, in order to achieve the optimal learning regret. Thus, it may not be reasonable to expect that a single fixed anchor policy would stay close to $\pi^k$ in every episode. Instead, we construct {\it a set} of anchor policies denoted as $\{\pi^{\star,h_0}\}_{h_0=0}^H$, where $\pi^{\star,h_0} = \{\pi_1^b,\cdots\pi_{h_0}^b,\pi_{h_0+1}^{\star},\cdots,\pi_{H}^{\star}\}$.  
Essentially, $\pi^{k,h_0}$ is the optimistic version of  $\pi^{\star,h_0}$. Thus, the estimation error in $(\hP^k_h-P_h)V^{\pi^{k,h_0}}_{h+1}$ can be decomposed with respect to $V^{\pi^{\star,h_0}}_{h+1}$ and then be bounded separately. As $\pi^k$ dynamically evolves in between $\pi^b$ and $\gp$, we expect that it stays close to $\pi^{k,h_0}$ for certain $h_0$, and thus the estimation error in $(\hP^k_h-P_h)V^{\pi^{k}}_{h+1}$ can be effectively bounded as well.



\if{0}
With the $\pi^{\star,h_0}$, we extend the Bernstein-style inequality from $|(\hP-p)V^\star|$ to all $|(\hP-p)V^{\star,h_0}|$. With the $\pi^{k,h_0}$, we then shrink $\{\pi | \bunderline{V}^\pi>\gamma\}$ to $\Pi$:
\begin{align*}
    A &= \{\pi | \bunderline{V}^\pi>\gamma\} \\
    B &= \{\rho\pi^{k,h_0-1}+(1-\rho)\pi^{k,h_0}|\rho\in[0,1],h_0\in[H]\} \\
    \Pi &= A \cap B
\end{align*}
$A$ is the feasible set. We call $B$ the neighbor-wise step mixture policy set. The neighbor-wise step mixture policy set is our design of policies between $\pi^b$ and $\hat{\pi}^\star$.

To use $|(\hP-p)V^{\star,h_0}|$-style Berstein inequality, we set good events to bound $|(\hP-p)V^{\star,h_0}|$ for all $h_0$ with Bernstein inequalities. With them, upper bound $\tilde{Q}^{k,h_0}$ and lower bound $\utilde{Q}^{k,h_0}$ can be given to bound $Q^{\star,h_0}$ and $Q^{k,h_0}$ as \Cref{eqn:ucblcb}, where $\beta$ and $\beta^\star$ are logarithm terms. We will prove that $\utilde{Q}^{k,h_0}\le Q^{k,h_0}\le Q^{\star,h_0}\le \tilde{Q}^{k,h_0}$. Then, all the policies within $B$ will have reasonable LCB and UCB. By neighbor-wise step mixture, there is $Q^{\text{mix}}=\rho Q^{k,h_0-1}+(1-\rho) Q^{k,h_0}$ where $\pi^\text{mix}=\rho\pi^{k,h_0-1}+(1-\rho)\pi^{k,h_0}$, so the UCB and LCB can also enjoy the linear combination property.
\fi

\textbf{Safe Exploration Policy Selection.} After constructing and evaluating the set of candidate policies, we then design a safe exploration policy by mixing two neighboring candidate policies. 

Specifically, the learner will compare $\utilde{V}_1^{k,h_0}$ with the threshold {$\mathbf{\gamma}$} for $h_0=0,1,\ldots,H$. If it is above the threshold, it indicates that with high probability the candidate policy $\pi^{k,h_0}$ will satisfy the conservative constraint. 
Let $h^k$ be the smallest $h_0$ such that $\utilde{V}_1^{k,h_0} \ge \gamma$. Then, we have the following cases:
\begin{itemize}[leftmargin=*]
\vspace{-0.1in}
    \item If $h^k=0$, it indicates that the LCB of the optimistic policy $\gp$ is above the threshold. Thus the learner executes $\gp^k$. 
    \item If $h^k\in[1:H]$, it indicates that $\pi^{k,h^k}$ is safe but $\pi^{k,h^k-1}$ may be not. More importantly, they only differ in a single step $h^k$. Then, the learner would construct a mixture of $\pi^{k,h^k}$ and $\pi^{k,h^k-1}$ as follows: 
    \begin{align}
                \rho &= \frac{\utilde{V}_{1}^{k,h^k}({s_1})-\gamma}{\utilde{V}_1^{k,h^k}({s_1}) - \utilde{V}_1^{k,h^k-1}({s_1})} ,\label{eqn:stepmix1} \\
                \pi^k &= (1-\rho)\pi^{k,h^k}+\rho \pi^{k,h^k-1}. \label{eqn:stepmix2}
            \end{align}
    \item If none of $\utilde{V}^{k,h_0}$ is above the threshold, it indicates that the LCB of $V^{\pi^b}$ is below the threshold, which occurs when the estimation has high uncertainty. The learner will then resort to $\pi^b$ for conservative exploration. 
\end{itemize}

\if{0}
Therefore, at each episode $k$, StepMix finds a safe policy $\pi^k$ chosen from either $\pi^b, \bar{\pi}^n$, or a step mixture policy in \Cref{eqn:stepmix2} . Once the policy $\pi^n$ is executed and a trajectory is collected, the learner moves on to the next episode.
Therefore, at each episode $k$, StepMix tries to find the most explorative and exploitative policy from $\{\pi|\pi=\rho\pi^{k,h}+(1-\rho)\pi^{k,h-1}, h\in[H],\rho\in[0,1]\}$ that satisfies the constraint $V_1^{\pi}\ge \gamma$. It should be noted that $\pi^{k,H}$ is always the baseline and $\pi^{k,0}$ is always the overall constraint-free greedy policy. 
\fi
Once policy $\pi^k$ is executed and a trajectory is collected, the learner moves on to the next episode.

\subsection{Theoretical Analysis}\label{sec:theoretical_analysis}
The performance of StepMix is stated in the following theorem. 
\if{0}
\begin{theorem}\label{thm:step}
There exist absolute constants $c'$, $c_{\beta}$, $c_1$ and $c_2$ such that, for any $\delta\in(0,1)$, if we choose $\lambda=c'd\log(dNH/\delta)$ and $\beta = c_{\beta}dH\sqrt{\iota}$ in \Cref{alg:step} with $\iota = 2\log(4dHN/\delta)$, then with probability at least $1-\delta$,  StepMix-LSVI (\Cref{alg:step}) simultaneously (i) satisfies the conservative constraint in \Cref{eqn:constraint}, and (ii) achieves a total regret that is at most 
\begin{equation}
\label{eqn:stepreg}
c_1\sqrt{d^3H^4 N\iota^2} + \frac{c_2d^3H^4\Delta_0\iota^2}{\mathbf{\kappa}^2}, 
\end{equation}
where {\color{red}$\Delta_0 := V^\star - V^{\pi^b}$} is the suboptimality gap of the baseline policy and $\mathbf{\kappa} := V^{\pi^b} - \mathbf{\gamma}$ is the tolerable value loss from the baseline policy.
\end{theorem}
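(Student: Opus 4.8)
The plan is to prove the two claims—safety (i) and the regret bound (ii)—under a single high-probability ``good event'' for the ridge-regression estimates, and then combine them with a union bound over the $N$ episodes. Throughout I write $\gp^k$ for the optimistic greedy policy, $\Lambda_h^k=\lambda I+\sum_{\tau<k}\phi(s_h^\tau,a_h^\tau)\phi(s_h^\tau,a_h^\tau)^\top$ for the per-step Gram matrix, and $g^k:=\tilde V_1^{k,0}-\utilde V_1^{k,0}$ for the UCB$-$LCB gap of the optimistic policy.

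\textbf{Step 1: good event and linear concentration.} Since StepMix-LSVI replaces the tabular counters and Bernstein bonuses of \Cref{eqn:q} by ridge estimates with bonuses $\pm\beta\|\phi(s,a)\|_{(\Lambda_h^k)^{-1}}$ (plus the same $\pm\frac1H\hP_h^k(\tilde V_{h+1}^k-\utilde V_{h+1}^k)$ corrections), the first task is to control the least-squares fitting error. Using a self-normalized tail bound together with a uniform $\epsilon$-covering of the value-function class spanned by the candidate and anchor policies, I would show that on an event $\mathcal{G}$ with $\Pb[\mathcal{G}]\ge 1-\delta$ one has $|(\hP_h^k-P_h)V(s,a)|\le \beta\|\phi(s,a)\|_{(\Lambda_h^k)^{-1}}$ simultaneously for all $h,k,(s,a)$ and all relevant $V$. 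The covering number supplies the extra $\sqrt d$, so the stated choices $\lambda=c'd\log(dNH/\delta)$ and $\beta=c_\beta dH\sqrt{\iota}$ are exactly what make the bonus dominate the estimation error.

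\textbf{Step 2: validity of the confidence bounds and safety (i).} On $\mathcal{G}$, I would prove by backward induction in $h$ the sandwich $\utilde Q_h^{k,h_0}\le Q_h^{k,h_0}\le Q_h^{\star,h_0}\le \tilde Q_h^{k,h_0}$ for every candidate $\pi^{k,h_0}$ and its fixed anchor $\pi^{\star,h_0}=\{\pi_1^b,\ldots,\pi_{h_0}^b,\pi_{h_0+1}^\star,\ldots,\pi_H^\star\}$. Optimism ($Q^\star\le\tilde Q$) is the standard LSVI-UCB argument; the nontrivial half is the LCB for the \emph{data-dependent} candidate policies, where I split $(\hP_h^k-P_h)V_{h+1}^{\pi^{k,h_0}}$ into $(\hP_h^k-P_h)V_{h+1}^{\pi^{\star,h_0}}$ (bounded uniformly over the fixed anchors) plus the second-order term $(\hP_h^k-P_h)(V_{h+1}^{\pi^{k,h_0}}-V_{h+1}^{\pi^{\star,h_0}})$, which the deliberately asymmetric LCB bonus and the $-\frac2H\hP_h^k(\tilde V-\utilde V)$ correction are designed to absorb (addressing the second and third challenges of \Cref{sec:challenge}). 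Safety is then immediate: since $\pi^{k,h^k}$ and $\pi^{k,h^k-1}$ differ only at step $h^k$, the true value $V_1^{\pi^k}$ of the step mixture in \Cref{eqn:stepmix2} is \emph{linear} in $\rho$, so the choice of $\rho$ in \Cref{eqn:stepmix1} forces the mixture's LCB to equal $\gamma$; hence $V_1^{\pi^k}\ge\gamma$. The two edge cases ($\pi^k=\gp^k$ with $\utilde V_1^{k,0}\ge\gamma$, and $\pi^k=\pi^b$ with $V_1^{\pi^b}=\gamma+\kappa$) are safe as well, which yields \Cref{eqn:constraint}.

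\textbf{Step 3: regret (ii).} I would first establish the per-episode bound $V_1^\star-V_1^{\pi^k}\le g^k$ in \emph{every} episode: optimism gives $V_1^\star\le\tilde V_1^{k,0}$, and either $V_1^{\pi^k}\ge\utilde V_1^{k,0}$ (when $h^k=0$) or $V_1^{\pi^k}\ge\gamma>\utilde V_1^{k,0}$ (conservative episodes). Partition $[N]=\mathcal{K}_1\cup\mathcal{K}_2$ into optimistic and conservative episodes. On $\mathcal{K}_1$, $\gp^k$ is executed, so unrolling the gap recursion (the $1/H$ corrections only cost an $O(1)$ factor) gives $g^k\lesssim\sum_h\Eb_{\gp^k}[2\beta\|\phi\|_{(\Lambda_h^k)^{-1}}]$ along executed trajectories, and Cauchy--Schwarz plus the elliptical-potential (log-determinant) lemma yield $\sum_{k\in\mathcal{K}_1}g^k\lesssim \beta H\sqrt{dN\iota}=c_1\sqrt{d^3H^4N\iota^2}$. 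On $\mathcal{K}_2$, each episode has $\tilde V_1^{k,0}\ge V_1^\star\ge V_1^{\pi^b}=\gamma+\kappa$ while $\utilde V_1^{k,0}<\gamma$, so $g^k>\kappa$; bounding the per-episode regret crudely by $V_1^\star-\gamma=\kappa+\Delta_0=O(\Delta_0)$, it remains to count $|\mathcal{K}_2|$. I would use the squared-gap estimate $\sum_k(g^k)^2\lesssim\beta^2 dH^2\iota=\tilde O(d^3H^4)$ (again elliptical potential), which forces $|\mathcal{K}_2|\,\kappa^2\le\sum_{k\in\mathcal{K}_2}(g^k)^2\lesssim d^3H^4\iota^2$, i.e. $|\mathcal{K}_2|\lesssim d^3H^4\iota^2/\kappa^2$; multiplying by the $O(\Delta_0)$ per-episode regret produces the additive term $c_2 d^3H^4\Delta_0\iota^2/\kappa^2$, and summing the two contributions gives \Cref{eqn:stepreg}.

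\textbf{Main obstacle.} The delicate point is the squared-gap bound $\sum_k(g^k)^2\lesssim\beta^2 dH^2\iota$ over the conservative episodes: $g^k$ is the UCB$-$LCB gap along the \emph{greedy} optimistic policy $\gp^k$, whereas the elliptical-potential lemma only controls the bonus collected along the \emph{executed} mixture $\pi^k$, which follows $\pi^b$ on the prefix $1,\ldots,h^k-1$ and $\gp^k$ only on the suffix. One must therefore argue that the uncertainty responsible for the safety shortfall ($g^k>\kappa$) is precisely the uncertainty that the executed mixture actually probes; this is exactly where the one-step interpolation structure (neighboring candidates differ only at step $h^k$, so $\pi^k$ re-plays $\pi^b$'s well-visited prefix and shares $\gp^k$'s suffix) and the asymmetric LCB bonuses of Step~2 are essential. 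Everything else is a lengthy but routine combination of the linear-MDP optimism analysis with the covering/elliptical-potential machinery.
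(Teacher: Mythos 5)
Your Steps 1--2 are sound and essentially reproduce the paper's architecture: a uniform good event over the fixed anchor policies $\pi^{\star,h_0}$, a backward-induction sandwich $\utilde{Q}^{k,h_0}\le Q^{k,h_0}\le Q^{\star,h_0}\le\tilde{Q}^{k,h_0}$ in which the asymmetric LCB bonus absorbs the second-order term $(\hP_h^k-P_h)(V^{k,h_0}_{h+1}-V^{\star,h_0}_{h+1})$, and safety via the linearity of the true value under one-step mixing (\Cref{lem:stepcomb}) together with the choice of $\rho$ in \Cref{eqn:stepmix1}. The genuine gap is in Step 3, and it is exactly the point you flag as the ``main obstacle'' but do not resolve: your counting of conservative episodes rests on the squared-gap bound $\sum_k (g^k)^2\lesssim \beta^2 dH^2\iota$, where $g^k=\tilde{V}_1^{k,0}-\utilde{V}_1^{k,0}$ is the UCB$-$LCB gap \emph{under the optimistic policy} $\gp^k$. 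No elliptical-potential argument can deliver this, because the bonuses in $g^k$ are integrated against $\gp^k$'s occupancy measure while the data (and hence the Gram matrices/counts) accumulate along the \emph{executed} policy $\pi^k$. In the worst conservative episodes the algorithm plays $\pi^k=\pi^b$ outright, so \emph{no} step of $\gp^k$ is executed; a two-action bandit-style example (baseline deterministic on a known action, $\gp^k$ pointing at a never-visited action) keeps $g^k\approx H$ for every such episode, so $\sum_k (g^k)^2$ grows linearly in $|\mathcal{K}_2|$ and the inequality $|\mathcal{K}_2|\kappa^2\le\sum_{k\in\mathcal{K}_2}(g^k)^2$ becomes vacuous, yielding no bound on $|\mathcal{K}_2|$ at all.

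The paper avoids this mismatch by never using $g^k$: in its finite non-optimistic policy lemma (\Cref{lem:finite}), the safety shortfall is expressed as $V_1^{\pi^b}-\bigl(\rho\utilde{V}_1^{k,h_0-1}+(1-\rho)\utilde{V}_1^{k,h_0}\bigr)\ge\kappa$ for the \emph{executed} candidate/mixture, then upgraded via $V_1^{\star,h_0}\ge V_1^{\pi^b}$ for all $h_0$ (\Cref{lem:stepoptimal}) to a bound by $\rho\,\pi_1^{k,h_0-1}G_1^{k,h_0-1}+(1-\rho)\,\pi_1^{k,h_0}G_1^{k,h_0}$ (\Cref{lem:ringgap}). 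Because neighboring candidates differ at a single step, this quantity is controlled by bonuses integrated against the executed policy's own occupancy measure (\Cref{lem:Gboundcomb}), so the potential argument restricted to the conservative episodes is self-consistent: $|\mathcal{N}|\kappa\lesssim\sqrt{|\mathcal{N}|}\cdot(\text{potential})+(\text{log terms})$, giving $|\mathcal{N}|\lesssim 1/\kappa^2$ up to the problem-dependent factors. Note also that when $\pi^k=\pi^b$ this is just the candidate $h_0=H$, so the same bound applies and the LCB of $\pi^b$ provably rises above $\gamma$ after finitely many episodes --- this is what rules out your counterexample. A second, smaller error: on conservative episodes you bound the per-episode regret by $V_1^\star-\gamma=\Delta_0+\kappa$ and call it $O(\Delta_0)$, which is false when $\kappa\gg\Delta_0$ and would pollute the additive term with an extra $1/\kappa$ contribution; the paper instead decomposes $V_1^\star-V_1^{\pi^k}=(V_1^\star-V_1^{\pi^b})+(V_1^{\pi^b}-V_1^{\pi^k})$, charging only $\Delta_0$ per conservative episode and absorbing the second piece into the global bonus sum, which is what makes the additive term proportional to $\Delta_0$ exactly as in \Cref{eqn:stepreg}.
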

\fi

\begin{theorem}[Informal]
    \label{thm:step}
    {With probability {at least} $1-\delta$}, StepMix (\Cref{alg:step}) simultaneously (i) satisfies the conservative constraint in \Cref{eqn:constraint}, and (ii) achieves a total regret that is at most 
        \begin{align*} \textstyle   \tilde{O}\big(\sqrt{H^3SAK}+H^3S^2A+H^3SA\Delta_0\big(\frac{1}{\kappa^2}+\frac{S}{\kappa}\big)\big),
        \end{align*}
    where {$\Delta_0 := V_1^\star - V_1^{\pi^b}$} is the suboptimality gap of the baseline policy and $\mathbf{\kappa} := V_1^{\pi^b} - \mathbf{\gamma}$ is the tolerable value loss from the baseline policy.
\end{theorem}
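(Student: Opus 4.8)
The plan is to split the argument into a safety part and a regret part, both resting on a single high-probability event on which every empirical-Bernstein confidence bound holds. First I would fix the family of $H+1$ \emph{anchor policies} $\pi^{\star,h_0}=\{\pi_1^b,\dots,\pi_{h_0}^b,\pi_{h_0+1}^\star,\dots,\pi_H^\star\}$ and, for each $h_0$ and each $(h,s,a)$, control the one-step error $(\hP_h^k-P_h)V_{h+1}^{\pi^{\star,h_0}}$ by a Freedman/empirical-Bernstein inequality together with a union bound over the finitely many anchors, states, actions, steps, and episodes. Because the anchors are history-independent, this produces exactly the variance-weighted quantity that the $\sqrt{\Var_{\hP}(\cdot)/n}$ term in $\tilde Q,\utilde Q$ is built to absorb, and it is what resolves the second challenge (the optimal dependence on $S$) without a union bound over all history-dependent policies.

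On this event I would prove, by backward induction on $h$, the sandwich $\utilde Q_h^{k,h_0}\le Q_h^{\pi^{k,h_0}}\le \tilde Q_h^{k,h_0}$ for every candidate, as well as optimism $\tilde V_1^{k,0}\ge V_1^\star$ for the fully optimistic recursion $\pi^{k,0}=\gp^k$. The delicate point, and the resolution of the third challenge, is that the policies we must lower-bound are history dependent, so the naive symmetric LCB is not valid. The device is to handle $(\hP^k_h-P_h)V_{h+1}^{\pi^{k,h_0}}$ through the split into $(\hP^k_h-P_h)V_{h+1}^{\pi^{\star,h_0}}$, already controlled above, plus a residual $(\hP^k_h-P_h)(V_{h+1}^{\pi^{k,h_0}}-V_{h+1}^{\pi^{\star,h_0}})$ bounded crudely; the asymmetric coefficients ($14H^2$ versus $22H^2$, and $\tfrac1H$ versus $\tfrac2H$ multiplying $\hP_h^k(\tilde V_{h+1}-\utilde V_{h+1})$) are precisely what is needed to swallow this residual together with the bias from estimating the variance, so that the UCB and LCB inequalities hold \emph{simultaneously}.

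Safety then follows from a linearity observation. Since $\pi^{k,h^k}$ and $\pi^{k,h^k-1}$ agree except at the single step $h^k$, the \emph{true} value of their step mixture is exactly the convex combination $V_1^{\pi^k}=(1-\rho)V_1^{\pi^{k,h^k}}+\rho V_1^{\pi^{k,h^k-1}}$; applying the two validated LCBs and the definition of $\rho$ in \Cref{eqn:stepmix1} gives $V_1^{\pi^k}\ge(1-\rho)\utilde V_1^{k,h^k}+\rho\utilde V_1^{k,h^k-1}=\gamma$. The cases $h^k=0$ (execute $\gp^k$, whose LCB already exceeds $\gamma$) and $\{h:\utilde V_1^{k,h}\ge\gamma\}=\emptyset$ (execute $\pi^b$) are immediate, so \Cref{eqn:constraint} holds on the good event, i.e.\ with probability at least $1-\delta$.

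For regret I would begin from optimism, $V_1^\star-V_1^{\pi^k}\le\tilde V_1^{k,0}-V_1^{\pi^k}$, and separate \emph{optimistic} episodes ($\pi^k=\gp^k$) from \emph{conservative} ones ($\pi^k\ne\gp^k$). On optimistic episodes the per-episode regret is at most the confidence width $\tilde V_1^{k,0}-\utilde V_1^{k,0}$; summing the widths and invoking the law of total variance to collapse $\sum_k\sum_h\sqrt{\Var/n}$ into $\sqrt{H^3SAK}$, with the $\tfrac1n$ correction giving the lower-order $H^3S^2A$, recovers the unconstrained-optimal rate. On conservative episodes I would write the mixture regret as $(1-\rho)(V_1^\star-V_1^{\pi^{k,h^k}})+\rho(V_1^\star-V_1^{\pi^{k,h^k-1}})$ and bound each candidate through the anchor decomposition $V_1^\star-V_1^{\pi^{k,h_0}}=(V_1^\star-V_1^{\pi^{\star,h_0}})+(V_1^{\pi^{\star,h_0}}-V_1^{\pi^{k,h_0}})$, where the first term (the cost of following the baseline for $h_0$ steps) is at most $\Delta_0$ and the second (a learning error) is controlled by the candidate's confidence width and charged to the main term. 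The crux, and what I expect to be the main obstacle, is bounding the number of conservative episodes: such an episode forces $\utilde V_1^{k,0}<\gamma$, hence by optimism a confidence width exceeding $\kappa$, and one must show by a pigeonhole/counting argument on the visitation counts that this occurs at most $\tilde O(H^3SA/\kappa^2+H^3S^2A/\kappa)$ times, the $1/\kappa^2$ coming from the leading $\sqrt{\Var/n}$ bonus and the $S/\kappa$ from the $1/n$ bonus. Multiplying by the per-episode $\Delta_0$ conservatism cost yields the additive $H^3SA\Delta_0(\tfrac1{\kappa^2}+\tfrac S\kappa)$ term and completes the bound.
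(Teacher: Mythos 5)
Your overall architecture tracks the paper's proof closely, and most of it is sound: the $H+1$ history-independent anchors $\pi^{\star,h_0}$ each equipped with a Bernstein good event (\Cref{lem:stepgood}), the backward-induction sandwich $\utilde{Q}_h^{k,h_0}\le Q_h^{k,h_0}\le Q_h^{\star,h_0}\le \tilde{Q}_h^{k,h_0}$ in which the asymmetric constants ($14$ vs.\ $22$, $\tfrac1H$ vs.\ $\tfrac2H$) absorb the residual $(\hP_h^k-P_h)(V_{h+1}^{k,h_0}-V_{h+1}^{\star,h_0})$ (\Cref{lem:stepucblcb}), the safety argument via exact linearity of one-step-different mixtures (\Cref{lem:stepcomb}), and the regret decomposition into optimistic episodes (summed widths giving $\sqrt{H^3SAK}+H^3S^2A$) plus a $\Delta_0$ charge per conservative episode. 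All of this matches \Cref{appx:proof_step}.

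The gap is in the step you yourself flag as the crux: counting the conservative episodes. You argue that such an episode forces the width of the \emph{optimistic} candidate's interval, $\tilde{V}_1^{k,0}-\utilde{V}_1^{k,0}$, to exceed $\kappa$, and then invoke a pigeonhole on visitation counts. But a pigeonhole on counts only controls widths measured under occupancies whose counts actually grow, i.e.\ those of the \emph{executed} policy. On a conservative episode the executed policy is not $\gp^k$; it may even be exactly $\pi^b$ (when no candidate's LCB clears $\gamma$). The state--actions that dominate $\gp^k$'s width then need not be visited at all, so ``width of $\gp^k$'s interval $>\kappa$'' can recur indefinitely without contradiction --- nothing forces that particular width to shrink. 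The paper's \Cref{lem:finite} counts a different quantity: since $V_1^{\star,h_0}\ge V_1^{\pi^b}$ for every $h_0$ (\Cref{lem:stepoptimal}), a conservative episode yields $\kappa\le \rho\big(V_1^{\star,h^k-1}-\utilde{V}_1^{k,h^k-1}\big)+(1-\rho)\big(V_1^{\star,h^k}-\utilde{V}_1^{k,h^k}\big)$ (with the pure-baseline case handled as the degenerate mixture $\pi^{k,H}$, whose LCB is $\le\gamma$). This is a lower bound on the mixture of the two \emph{executed} candidates' widths; by the occupancy linearity $d_h^{\pi^k}=\rho d_h^{k,h^k-1}+(1-\rho)d_h^{k,h^k}$ (\Cref{lem:stepvisitratio}, \Cref{lem:Gboundcomb}) that mixture is a width expressed under the executed policy's own occupancy, and then $\sum_{k\in\mathcal{N}}d_h^k(s,a)/(\bar n_h^k(s,a)\vee 1)\le 4\log(|\mathcal{N}|+1)$ (\Cref{lem:partial_sum}) gives $|\mathcal{N}|\kappa\lesssim \sqrt{|\mathcal{N}|H^3SA\beta^\star}+H^3SA\beta$, which solves to $|\mathcal{N}|=\tilde O\big(H^3SA(\kappa^{-2}+S\kappa^{-1})\big)$. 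So the quantity to threshold at $\kappa$ must be the executed mixture's width relative to the step-wise optimal anchors, not the optimistic policy's width; without this substitution your counting step, and hence the additive term in the regret, does not go through.
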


\begin{remark}
\Cref{thm:step} indicates that StepMix achieves a near-optimal regret in the order of $\tilde{O}(\sqrt{H^3SAK})$, while ensuring zero constraint violation with high probability. Compared with {BPI-UCBVI \citep{menard2021fast}}, the conservative exploration only leads to an additive constant term $\tilde{O}(H^3SA\Delta_0(\frac{1}{\kappa^2}+\frac{S}{\kappa}))$ in the learning regret bound. The additive term matches with that in the lower bound under the weaker cumulative conservative constraint in \citet{yang2021reduction} up to a constant, indicating our result is near-optimal. For the special case when $\gamma= 0$, the LCBs estimated in StepMix will always be greater than $\gamma$; thus the optimistic policy is always safe. Therefore, the algorithm reduces to an optimistic algorithm and the additive term becomes zero. Further discussion on this can be found in \Cref{re:more}. 
\end{remark}


The proof of \Cref{thm:step} is provided in Appendix~\ref{appx:proof_step}. We outline the major steps of the proof as follows.  

\if{0}
First, \jing{we note that the mixture policy $\pi^k$ is stochastic in general, as opposed to the deterministic greedy policy under LSVI-UCB. To cope with the policy randomness and temporal dependency, we develop a new uniform concentration lemma for value functions under policy $\tilde{\pi}^{n,h_0}$ for any $h_0\in[0:H+1]$, as elaborated in \Cref{lemma:uniform_step}. Thus, the uniform concentration can be established for any mixture of $\tilde{\pi}^{n,h_0}$ and $\tilde{\pi}^{n,h_0-1}$.} Such uniform concentration ensures that with high probability, the true value functions are bounded by the constructed UCB and LCB in Algorithm~\ref{alg:step} (see \Cref{lemma:LCB_UCB_Guarantee}). Thus, when the LCB of a policy is above the threshold $\gamma$, it ensures its safety with high probability. Moreover, the gap between them is controlled by the total expected bonus within an episode, i.e. $\sum_{h=1}^H\Eb_{\pi^n}\left[\|\phi(x_h,a_h)\|_{(\Lambda_h^n)^{-1}}\right]$, where $\pi^n$ is the actual policy executed in episode $n$, chosen from $\pi^b$, $\bar{\pi}^n$, and $\tilde{\pi}^{n,h_n}$ \jing{(see \Cref{lemma:Difference_LCB_UCB_True_Value}).  
The next step is thus to bound the total expected bonus under $\pi^n$.} 
However, the various forms policies $\pi^n$ may choose from make our analysis significantly harder than the original analysis in \citet{Jin:2020:COLT}. We highlight several major challenges in the following. 
\fi

As discussed in \Cref{sec:design}, one pivotal component in StepMix is the construction of the candidate policies. As a result, in our proof, we first extend the good event related to $(\hP_h^k-P_h)V^\star_{h+1}$ to $H+1$ good events related to $(\hP_h^k-P_h)V^{\star,h_0}_{h+1}$, $h_0=0,1,\ldots, H$. Since $\pi^{*,h_0}$ is a fixed anchor policy, $(\hP_h^k-P_h)V^{\star,h_0}_{h+1}$ can be bounded for all $h_0$. 

Next, we show that $\tilde{Q}^{k,h_0}$ and $\utilde{Q}^{k,h_0}$ are valid UCB and LCB of $Q^{k,h_0}$ and $Q^{\star,h_0}$ respectively, in the sense that $\utilde{Q}^{k,h_0}\le Q^{k,h_0}\le Q^{\star,h_0}\le \tilde{Q}^{k,h_0}$. Meanwhile, we show that $\tilde{Q}^{k,h_0}$ and $\utilde{Q}^{k,h_0}$ are sufficiently tight, as $\tilde{Q}^{k,h_0}-\utilde{Q}^{k,h_0}$ is bounded and will converge to zero sufficiently fast. Furthermore, with the properties of our constructed step mixture policy, we can obtain tight UCB and LCB for the step mixture policies as well.

Finally, we show that $\pi_k$ only stays at $\pi^b$ or the mixture policy for finite number of episodes. This is due to 
the fact that $V_1^{\star,h_0}\geq V_1^{\pi_b}$ for any $h_0=0,1,\ldots,H$. Thus, with high probability, $\utilde{V}_1^{k,h_0}\geq\gamma$ when $k$ is sufficiently large. As a result, the agent will then select the optimistic policy $\gp^{k}$ in most of the episodes. Thus, the regret of StepMix has the same leading term as that under the optimistic policy, which will then be bounded efficiently.

\if{0}
The target of our algorithm StepMix is to find a good policy to explore and exploit under the performance constraint. Conventionally, UCB is used to find the policy and LCB is used to check the constraint. The challenge is that bounding an arbitrary stochastic policy is hard. To cope with the challenge, we restrict the candidate policies to the step-wise greedy policies of the baseline policy and the greedy policy at different steps, namely $\pi^{k,h}$, and their neighbor-wise mixtures, namely $(1-\rho)\pi^{k,h}+\rho\pi^{k,h-1},h\in[H]$.

For the step-wise greedy policies $\pi^{k,h}$, behaviorally, they will converge to the step-wise optimal policies $\pi^{\star,h}$ when the number of samples increases. Moreover, for any $h$, $\pi^{k,h}$ converges to $\pi^{\star,h}$ in the same way that a greedy policy under UCB converges to the optimal policy in BPI-UCBVI. With our design of $\tilde{Q}^{k,h}$ and $\utilde{Q}^{k,h}$, the convergence can be showed in the following lemma.

\begin{lemma}
    \label{lem:ringgap}
    \begin{align*}
        \tilde{Q}_h^{k,h_0}(s,a)-\utilde{Q}_h^{k,h_0}(s,a) &\le G_h^{k,h_0}(s, a) \\
        \tilde{V}_h^{k,h_0}(s)-\utilde{V}_h^{k,h_0}(s,a) &\le \pi_h^{k,h_0}G_h^{k,h_0}(s)
    \end{align*}
    where
    \begin{align*}
        &G_h^{k,h_0}(s,a)\\
        =&\min\Biggr(H, 6\sqrt{\Var_{\hP_h^k}(\tilde{V}^{k,h_0}_{h+1})(s,a)\frac{\beta^\star(n^k_h(s,a),\delta')}{n^k_h(s,a)}}\\
        +&36H^2\frac{\beta^\star(n^k_h(s,a),\delta')}{n^k_h(s,a)}+(1+\frac{3}{H})\hP_h^k \pi_{h+1}^{k,h_0}G_{h+1}^{k,h_0}(s)\Biggr)
    \end{align*}
\end{lemma}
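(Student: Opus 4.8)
The plan is to prove both inequalities simultaneously by backward induction on $h$, from $h=H+1$ down to $h=1$, with the second ($V$-gap) bound at each step following from the first ($Q$-gap) bound at that same step. The base case $h=H+1$ is immediate: since $\tilde{V}_{H+1}^{k,h_0}=\utilde{V}_{H+1}^{k,h_0}=0$, both sides vanish (taking $G_{H+1}^{k,h_0}\equiv 0$). The whole argument is driven by the algebra of subtracting the two recursions in \Cref{eqn:q} (more precisely, their PolicyEva analogues defining $\tilde{Q}_h^{k,h_0}$ and $\utilde{Q}_h^{k,h_0}$).

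For the inductive step, assume $\tilde{V}_{h+1}^{k,h_0}(s)-\utilde{V}_{h+1}^{k,h_0}(s)\le \pi_{h+1}^{k,h_0}G_{h+1}^{k,h_0}(s)$ for all $s$. First I would subtract the two $Q$-recursions, temporarily ignoring the outer $\min(H,\cdot)$ and $\max(0,\cdot)$ clippings. The reward terms $r_h(s,a)$ cancel; the two symmetric Bernstein bonuses $\pm 3\sqrt{\Var_{\hP_h^k}(\tilde{V}_{h+1}^{k,h_0})(s,a)\beta^\star/n_h^k(s,a)}$ combine into the coefficient-$6$ term; the two curvature bonuses combine into $(14+22)H^2\beta/n_h^k(s,a)=36H^2\beta/n_h^k(s,a)$; and, crucially, the leading difference $\hP_h^k\tilde{V}_{h+1}^{k,h_0}-\hP_h^k\utilde{V}_{h+1}^{k,h_0}$ together with the two asymmetric correction terms (coefficients $+\tfrac1H$ in $\tilde Q$ and $-\tfrac2H$ in $\utilde Q$) sum to exactly $(1+\tfrac3H)\hP_h^k(\tilde{V}_{h+1}^{k,h_0}-\utilde{V}_{h+1}^{k,h_0})(s,a)$. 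Since $\hP_h^k$ is a monotone expectation operator (its weights are nonnegative), the induction hypothesis bounds this by $(1+\tfrac3H)\hP_h^k\pi_{h+1}^{k,h_0}G_{h+1}^{k,h_0}(s,a)$, which is precisely the recursive term inside $G_h^{k,h_0}$; the $36H^2\beta/n$ term is absorbed into $36H^2\beta^\star/n$ using $\beta\le\beta^\star$.

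To reinstate the clippings, observe that $\tilde{Q}_h^{k,h_0}$ is at most its unclipped inner expression while $\utilde{Q}_h^{k,h_0}$ is at least its unclipped inner expression, so the gap is bounded above by the unclipped difference just computed; simultaneously $\tilde{Q}_h^{k,h_0}-\utilde{Q}_h^{k,h_0}\le H-0=H$. Taking the minimum and using monotonicity of $x\mapsto\min(H,x)$ gives $\tilde{Q}_h^{k,h_0}(s,a)-\utilde{Q}_h^{k,h_0}(s,a)\le G_h^{k,h_0}(s,a)$, the first claim. The second claim follows by averaging against the fixed, state-dependent distribution $\pi_h^{k,h_0}(\cdot|s)$: since $\tilde{V}_h^{k,h_0}(s)-\utilde{V}_h^{k,h_0}(s)=\langle \pi_h^{k,h_0}(\cdot|s),(\tilde{Q}_h^{k,h_0}-\utilde{Q}_h^{k,h_0})(s,\cdot)\rangle$ and the mixing weights are nonnegative, the pointwise $Q$-gap bound transfers to $\langle \pi_h^{k,h_0}(\cdot|s),G_h^{k,h_0}(s,\cdot)\rangle=\pi_h^{k,h_0}G_h^{k,h_0}(s)$, closing the induction.

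The main obstacle — really the only non-mechanical point — is verifying that the deliberately asymmetric coefficients in $\tilde{Q}_h^{k,h_0}$ and $\utilde{Q}_h^{k,h_0}$ (the $+\tfrac1H$ versus $-\tfrac2H$ weights on the $\hP_h^k(\tilde{V}-\utilde{V})$ correction) collapse to exactly the factor $(1+\tfrac3H)$ that makes the $G$-recursion self-consistent; this asymmetry is what later yields a valid LCB for the non-fixed exploration policy, and a different choice would either break the induction or spoil the contraction. The rest is bookkeeping: confirming that both $Q$-recursions use the \emph{same} variance proxy $\Var_{\hP_h^k}(\tilde{V}_{h+1}^{k,h_0})$ so the Bernstein bonuses subtract cleanly into the coefficient-$6$ term, and tracking the logarithmic factors $\beta$ and $\beta^\star$.
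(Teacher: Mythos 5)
Your proof follows essentially the same route as the paper's own argument (proof of \Cref{lem:ringgap} in \Cref{appx:proof_step}): subtract the two recursions defining $\tilde{Q}_h^{k,h_0}$ and $\utilde{Q}_h^{k,h_0}$, note that the rewards cancel, the two Bernstein bonuses add up to the coefficient-$6$ term, the curvature bonuses add up to $(14+22)H^2\beta/n_h^k(s,a)$, and the leading difference together with the $+\tfrac{1}{H}$ and $-\tfrac{2}{H}$ corrections collapses to $(1+\tfrac{3}{H})\hP_h^k(\tilde{V}_{h+1}^{k,h_0}-\utilde{V}_{h+1}^{k,h_0})(s,a)$; the clippings only decrease the gap, and a backward induction (left implicit in the paper's phrase ``combining with the definition of $G_h^{k,h_0}$'') closes the argument, with the $V$-gap following by averaging against $\pi_h^{k,h_0}(\cdot|s)$. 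The one reduction the paper does that you do not need is bounding $Q_h^{\star,h_0}-\utilde{Q}_h^{k,h_0}$ by $\tilde{Q}_h^{k,h_0}-\utilde{Q}_h^{k,h_0}$ via \Cref{lem:stepucblcb}, since its version of the lemma is stated for the true value functions rather than for the UCB--LCB gap.

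There is, however, one genuinely wrong step: you absorb $36H^2\beta/n_h^k(s,a)$ into $36H^2\beta^\star/n_h^k(s,a)$ ``using $\beta\le\beta^\star$''. That inequality runs the other way: $\beta(n,\delta')=\log(SAH/\delta')+S\log(8e(n+1))\ge\log(SAH/\delta')+\log(8e(n+1))=\beta^\star(n,\delta')$, and the paper itself invokes $\beta^\star<\beta$ in the proof of \Cref{lem:stepucblcb}. Consequently, with $G_h^{k,h_0}$ defined as in the statement above (with $\beta^\star$ in the $H^2$ slot), the bound cannot be obtained this way, and in fact the statement is false in general: at $h=H$ we have $\tilde{V}_{H+1}^{k,h_0}=\utilde{V}_{H+1}^{k,h_0}=0$, so for $n_h^k(s,a)$ large enough that neither clip is active the gap equals exactly $36H^2\beta/n_h^k(s,a)$, which strictly exceeds $36H^2\beta^\star/n_h^k(s,a)$ whenever $S\ge 2$. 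The resolution is that the $\beta^\star$ in the statement's definition of $G$ is a typo: the paper's operative definition (\Cref{eqn:G}) places $36H^2\beta(n_h^k(s,a),\delta')/n_h^k(s,a)$ in that slot, which is precisely what your subtraction produces. With that definition, delete the (false) absorption step and your proof is correct and coincides with the paper's.
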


We should note it again, that $\pi^{k,0}$ is the overall greedy policy and $\pi^{k,H}$ is the baseline policy. Intuitively, $\pi^{k,h}$ is more conservative if $h$ is bigger. Since the baseline policy is assumed to be safe, by adjusting the index $h$, there is always a step-wise greedy policy conservative enough to be safe. In StepMix, the smallest valid $h$ is selected to enable better exploration and exploitation.

However, only the set of $\pi^{k,h}$ is not explorative enough under constraint and the neighbor-wise mixtures $(1-\rho)\pi^{k,h}+\rho\pi^{k,h-1}$ are necessary. For example, imagine a one-step MDP or bandit, that has only two actions. The first action renders reward $1$ and the second action renders $0.5$. If the baseline is deterministic to take the second action and the constraint $\gamma$ is $0.1$, then the algorithm will stay on the baseline forever, since it has no information about the first action and be afraid to make a bad decision. By introducing the mixture policies, the problem can be solved that StepMix can always choose a small $\rho$ to explore and meet the constraint requirement in expectation. In addition, neighbor-wise mixtures has good properties such as \ref{lem:stepcomb} which make the analysis of UCB and LCB much easier. With the neighbor-wise mixture policies, we can prove that StepMix will stay on mixture policies for comparably finite episodes(\ref{lem:finite}) and only cause the constant additional term in the final regret.
\fi

\if{0}
First, when $\pi^n$ is a step mixture policy, we do not have a direct estimation on the corresponding value function. Thanks to the {\it one-step-difference} of $\tilde{\pi}^{n,h_n}$ and $\tilde{\pi}^{n,h_n-1}$, the true value function under the step mixture policy $\pi^n$ is a {\it linear combination} of $V^{\tilde{\pi}^{n,h_n}}$ and $V^{\tilde{\pi}^{n,h_n-1}}$ (see \Cref{lemma:OneStepDiffer}). The linearity also holds for the LCB, \jing{which ensures the {\bf safety} of the $\pi^n$ for each episode $n$. Besides,} the difference between the LCB and $V^{\pi^n}$ is controlled by $ \sum_{h=1}^H\Eb_{\pi^n}\left[\|\phi(x_h,a_h)\|_{(\Lambda_h^n)^{-1}}\right] $, {similar to that between the UCB and true value function under the optimistic policy $\bar{\pi}^n$} (see \Cref{corollary: LCB_pi^n}).

Another challenge is due to the randomness of the step mixture policy. Since the actions taken under the same step mixture policy may be different, the information matrix $\Lambda_h^n$ may have different realizations. To cope with such randomness, we relate the information matrix with its expectation, i.e., $\bar{\Lambda}_h^n:=\lambda I + \sum_{\tau=1}^{n-1}\Eb_{\pi^{\tau}}\left[\phi(x_h,a_h)\phi(x_h,a_h)^{\top}\right]$, and show that the elliptical potential $\|\phi(x_h,a_h)\|_{({\Lambda}_h^n)^{-1}}$ is upper bounded by $\|\phi(x_h,a_h)\|_{(\bar{\Lambda}_h^n)^{-1}}$ up to a constant factor (see \Cref{lemma:random_info_matrix}).

\jing{In order to bound the {\bf regret}, we first relate the step mixture policy with the baseline policy.} Intuitively, since $\bar{\pi}^n$ is an optimistic policy, switching from $\pi^b$ to $\bar{\pi}^n$ after step $h_n$ increases the value function in general. Thus, $V^{\pi^n}$, after padding the bonus terms, should be larger than $V^{\pi^b}$. Combining with the gap between LCB and its true value, we have the following corollary.


We also note that the LCB of $V^{\pi^{n}}$ when $\pi^n$ is a step mixture policy is exactly equal to the threshold $\mathbf{\gamma}$. Therefore,
\begin{align}\label{eqn:bonus_control}
\textstyle
V^{\pi^b} - \mathbf{\gamma} \leq 4\beta\sum_{h'= 1}^H\Eb_{\pi^{n}}\left[ \left\|\phi(x_{h'},a_{h'})\right\|_{(\Lambda_h^n)^{-1}}\right].
\end{align}
Let $\Nc$ be the subset of episodes when a step mixture policy is adopted. Summing \Cref{eqn:bonus_control} over $\Nc$, we have the right hand side bounded by $\tilde{O}(\sqrt{|\Nc|})$ due to a revised elliptical potential lemma (see \Cref{corollary: revised elliptical potential lemma}). However, the left hand side equals $\mathbf{\kappa}|\Nc|$, which implies $\Nc$ is a finite set. Similar arguments can be applied to $\pi^b$. Therefore, StepMix-LSVI always plays $\bar{\pi}^n$ except for a finite number of episodes, which results in adding only a constant term to the original regret $\tilde{O}(\sqrt{d^3H^4N})$.



\fi

\section{The EpsMix Algorithm}
In this section, we briefly introduce another algorithm named EpsMix and defer the detailed design and analysis to \cref{appx:eps_proof}. Different from StepMix in Algorithm~\ref{alg:step}, EpsMix does not construct step mixture policies during the learning process. Rather, it adopts a randomization mechanism at the beginning of each episode, and designs episodic mixture policies \citep{wiering2008ensemble,baram2021maximum} defined as follows.

\begin{definition}[Episodic Mixture Policy]
\label{def:epsmix}
Given two policies $\pi^1$ and $\pi^2$ with parameter $\rho\in (0,1)$, the episodic mixture policy, denoted by $\rho\pi^1 \oplus (1-\rho)\pi^2$, randomly picks $\pi^1$ with probability $\rho$ and $\pi^2$ with probability $1-\rho$ at the beginning of an episode and plays it for the entire episode.
\end{definition}

The EpsMix algorithm is presented in \Cref{alg:eps} in ~\cref{appx:eps_proof}, and it proceeds as follows. Similar to StepMix, at the beginning of each episode $k$, it first constructs an optimistic policy, denoted as $\gp^{k}$. It then evaluates the LCB of the expected total rewards under both $\gp^{k}$ and ${\pi}^b$, denoted as $\utilde{V}_1^{k}$ and $\utilde{V}_1^{k,b}$ respectively.  
If $\utilde{V}_1^{k}$ is above the threshold $\mathbf{\gamma}$, it indicates that the optimistic policy $\gp^{k}$ satisfies the conservative constraint with high probability. The learner thus executes $\gp^{k}$ in the following episode $k$. Otherwise, if $\utilde{V}_1^{k,b}$ is above the threshold while $\utilde{V}_1^{k}$ is not, it constructs an episodic mixture policy $\rho_k \gp^{k} \oplus (1-\rho_k)\pi^b$ so that $\rho_k \utilde{V}_1^{k}+(1-\rho_k) \utilde{V}_1^{k,b}=\mathbf{\gamma}$. It implies that the episodic policy satisfies the conservative constraint in expectation with high probability. If neither $\utilde{V}_1^{k}$ nor $\utilde{V}^{k,b}$ is above the threshold, EpsMix will resort to the baseline policy to collect more information.

Our theoretical analysis shows that EpsMix has the same performance guarantees as StepMix.
At the same time, we note that EpsMix is less conservative than StepMix in the sense that, the expected return under a {\it selected} policy in an episode may be below the threshold when $\utilde{V}_1^{k}< {\mathbf{\gamma}}$. However, when taking the randomness in the policy mixture procedure into consideration, we can still guarantee that the expected total return under an episodic mixture policy is above the threshold with probability at least $1-\delta$.

\section{From Baseline Policy to Offline Dataset}

Both EpsMix and StepMix critically depend on the baseline policy $\pi^b$ to achieve the desired conservative guarantee. In reality, however, a baseline policy that provably satisfies the conservative constraint may not always be explicitly given to the algorithm. Instead, the learning agent may have access to an offline dataset that is collected from the target environment by executing an unknown behavior policy $\mu$, and the goal is to design a conservative exploration algorithm that satisfies \Cref{eqn:constraint} only using the offline dataset. 

A natural approach to solve this problem is to first learn a baseline policy from the dataset, and then use it as an input to EpsMix or StepMix. The challenge, however, is that instead of having full confidence in the conservative guarantee of $\pi^b$, we must deal with the \emph{safety uncertainty} of the learned baseline policy, that is introduced by using the offline dataset as well as the offline learning algorithm that produces the baseline policy. Fortunately, we prove that for StepMix, the uncertainty of learning a safe baseline policy from the offline dateset does not affect the conservative constraint violation or the regret order if the offline dataset is sufficiently large. 


\if{0}
\begin{theorem}
\label{thm:offStep}
Let $\pi^{\text{off}}$ be the output of the PEVI algorithm \citep{jin2021pessimism} (see \Cref{alg:pevi} in \Cref{appx:offline}) with $N_1 = \tilde{\Theta}(\frac{d^3H^4}{\kappa^2})\footnote{We hide the logarithmic factor for simplicity.}$ offline trajectories and parameters chosen properly. If we replace the baseline policy $\pi^b$ used in \Cref{alg:step} by $\pi^{\text{off}}$, then there exist two constants $c_1,c_2$ such that with probability at least $1-2\delta$, we can simultaneously (i) satisfy the conservative constraint in \Cref{eqn:constraint}, and (ii) achieve a total regret that is at most 
\[c_1\sqrt{d^3H^4 N\iota^2} + \frac{4c_2d^3H^4(\Delta_{0}+\kappa/2)\iota^2}{\kappa^2}.\]
\end{theorem}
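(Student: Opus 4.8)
The plan is to split the argument into an \emph{offline} part, which controls both the safety and the suboptimality of the learned baseline $\pi^{\text{off}}$, and an \emph{online} part, which invokes the StepMix guarantee (\Cref{thm:step}) with $\pi^{\text{off}}$ in the role of the baseline but with \emph{degraded} parameters. The key conceptual point is that $\pi^{\text{off}}$ need not satisfy the conservative constraint deterministically: it suffices that, on a high-probability event determined by the offline dataset, $\pi^{\text{off}}$ is safe with a strictly positive \emph{residual} budget and has a controlled gap to the optimum. Spending half of the tolerable budget $\kappa$ to absorb the offline estimation error is precisely what produces the $\Delta_0+\kappa/2$ numerator and the factor-$4$ blow-up of the additive regret term.

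First I would analyze PEVI. Let $\mu$ be the unknown, safe behavior policy, so that $V_1^\mu\ge\gamma$ and, in the notation of the statement, $\Delta_0=V_1^\star-V_1^\mu$ and $\kappa=V_1^\mu-\gamma$. By the pessimism property of PEVI, its output obeys $V_1^{\pi^{\text{off}}}\ge \hat{V}_1(s_1)$ on the offline good event, and since the greedy step gives $\hat{V}_1(s_1)\ge \hat{V}_1^{\mu}(s_1)$, the standard suboptimality decomposition applied with \emph{comparator $\mu$} yields $V_1^{\mu}-V_1^{\pi^{\text{off}}}\le 2\sum_{h}\Eb_{\mu}[\Gamma_h(s_h,a_h)]$, where $\Gamma_h$ is the PEVI bonus. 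Because the offline trajectories are generated by $\mu$ itself, $\mu$ is automatically covered by the dataset, so this bonus sum is $\tilde{O}(\sqrt{d^3H^4/N_1})$; choosing $N_1=\tilde{\Theta}(d^3H^4/\kappa^2)$ makes it at most $\kappa/2$. Hence, with probability at least $1-\delta$,
\[ V_1^{\pi^{\text{off}}}\ge V_1^{\mu}-\kappa/2\ge \gamma+\kappa/2, \]
so $\pi^{\text{off}}$ is genuinely safe with residual budget $\kappa':=V_1^{\pi^{\text{off}}}-\gamma\ge\kappa/2$, while its suboptimality is $\Delta_0':=V_1^\star-V_1^{\pi^{\text{off}}}\le\Delta_0+\kappa/2$.

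Next I would condition on this offline event and treat $\pi^{\text{off}}$ as a deterministically safe baseline for the online phase. Since the offline dataset is collected before, and hence is independent of, the online interactions, every concentration argument underlying \Cref{thm:step} survives the conditioning. Invoking \Cref{thm:step} with $(\Delta_0',\kappa')$ replacing $(\Delta_0,\kappa)$ then guarantees, with online probability at least $1-\delta$, both the conservative constraint \Cref{eqn:constraint} and a regret at most $c_1\sqrt{d^3H^4N\iota^2}+c_2 d^3H^4\Delta_0'\iota^2/\kappa'^2$. Substituting $\Delta_0'\le\Delta_0+\kappa/2$ and $\kappa'\ge\kappa/2$ gives $\Delta_0'/\kappa'^2\le 4(\Delta_0+\kappa/2)/\kappa^2$, which is exactly the claimed additive term, and a union bound over the offline and online good events delivers the overall probability $1-2\delta$.

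The main obstacle will be the interface between offline uncertainty and online safety: StepMix's safety proof presumes a baseline that is \emph{known} to be safe, whereas here safety of $\pi^{\text{off}}$ holds only on the offline good event. I would resolve this by observing that, conditioned on that event, $\pi^{\text{off}}$ is a \emph{fixed} safe policy with value at least $\gamma+\kappa/2$, so the entire online safety argument applies verbatim with the residual budget $\kappa/2$, the sole cost being the extra $\delta$ in the failure probability. A secondary technical point is to run the PEVI suboptimality decomposition against the comparator $\mu$ rather than $\pi^\star$ and to verify that self-coverage of $\mu$ keeps the relevant concentrability coefficient bounded; this is what lets $N_1$ depend only on $\kappa$ (and the horizon/dimension) and avoids any coverage assumption on the unknown optimal policy.
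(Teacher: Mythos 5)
Your proposal is correct and follows essentially the same route as the paper's own proof: bound the offline policy's gap to the behavior policy $\mu$ (not $\pi^\star$) via pessimism plus self-coverage of $\mu$, choose the dataset size so this gap is at most $\kappa/2$ (giving safety with residual budget $\kappa/2$ and suboptimality $\Delta_0+\kappa/2$), then invoke \Cref{thm:step} with these degraded parameters and combine the offline and online failure probabilities.
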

\fi

\if{0}
\begin{theorem}[StepMix with an offline dataset]
\label{thm:offStep}
Assume that a behavior policy $\pi^b$ satisfying $V^{\pi^b} \geq \gamma$ produces a dataset with $N_1$ trajectories, and an offline policy $\pi^\text{off}$ is trained with these $N_1$ trajectories using the PEVI algorithm \citep{jin2021pessimism} (see \Cref{alg:pevi} in \Cref{appx:alg}). Denote $\kappa = V^{\pi^b} - \gamma$ and $\Delta_0=V^\star-V^{\pi^b}$. With probability at least $1-\delta$, invoking the StepMix algorithm with $\pi^\text{off}$ as the baseline policy can simultaneously (i) satisfy the conservative constraint in \Cref{eqn:constraint}, and (ii) achieve a total regret that is at most $c_1\sqrt{d^3H^4 N\iota^2} + \frac{c_2d^3H^4\Delta_{0,\text{off}}\iota^2}{\mathbf{\kappa}_\text{off}^2}$,
if $N_1 \ge 24c_\beta^2d^3H^4\log^2(192c_{\beta}^2d^4H^5/\kappa^2\delta)/\kappa^2$, where $\beta=c_{\beta}dH\sqrt{\iota}$, $\iota=2\log(10dHN/\delta)$, $\kappa_\text{off}=\kappa-2\sqrt{3} \beta_1 H \sqrt{\frac{d}{N_1}}$, $\Delta_{0,\text{off}}=\Delta_0+2\sqrt{3} \beta_1 H \sqrt{\frac{d}{N_1}}$, $\beta_1 = c_{\beta}dH\sqrt{\iota_1}$, $\iota_1 = 2\log(20dHN_1/3\delta)$, $\lambda=c'd\log(5dNH/2\delta)$, and $\lambda_1=c'd\log(5dN_1H/3\delta)$.
\end{theorem}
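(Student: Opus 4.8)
The plan is to treat the learned offline policy $\pi^{\text{off}}$ as an \emph{approximately safe} baseline and then reduce to \Cref{thm:step}. The crux is to quantify precisely how the safety margin and the suboptimality gap degrade when the exactly-safe $\pi^b$ is replaced by $\pi^{\text{off}}$, and to verify that this degradation is controlled by the offline sample size $N_1$.

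First I would establish a high-probability lower bound on $V^{\pi^{\text{off}}}_1(s_1)$. Since PEVI \citep{jin2021pessimism} returns a policy greedy with respect to a pessimistic (penalized) value function $\hat{V}^{\text{off}}$, two facts hold on its good event: (a) pessimism, $\hat{V}^{\text{off}}_1(s_1)\le V^{\pi^{\text{off}}}_1(s_1)$, because the penalty dominates the model-estimation error; and (b) greedy optimality against the penalized Bellman operator, which by backward induction gives $\hat{V}^{\text{off}}_1(s_1)\ge \hat{V}^{\mu}_1(s_1)$, where $\hat{V}^{\mu}$ is the penalized value of the unknown behavior policy $\mu=\pi^b$. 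Because the offline data is generated by $\mu$ itself, the empirical Gram matrix concentrates to $\Eb_\mu[\phi\phi^\top]$, so the expected penalties along $\mu$'s own trajectory distribution are small; summing the $H$ per-step penalties, each of order $\beta_1\sqrt{d/N_1}$, yields $V^{\mu}_1(s_1)-\hat{V}^{\mu}_1(s_1)\le 2\sqrt{3}\,\beta_1 H\sqrt{d/N_1}$. Chaining (a), (b) and this coverage bound produces
\[
V^{\pi^{\text{off}}}_1(s_1)\ \ge\ V^{\pi^b}_1(s_1)-2\sqrt{3}\,\beta_1 H\sqrt{d/N_1}\ =\ \gamma+\kappa_{\text{off}},
\]
and symmetrically $\Delta_{0,\text{off}}=V^\star_1-V^{\pi^{\text{off}}}_1\le \Delta_0+2\sqrt{3}\,\beta_1 H\sqrt{d/N_1}$, matching the definitions of $\kappa_{\text{off}}$ and $\Delta_{0,\text{off}}$.

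Next I would check that the stated lower bound on $N_1$ forces $2\sqrt{3}\,\beta_1 H\sqrt{d/N_1}\le \kappa/2$, so that $\kappa_{\text{off}}\ge \kappa/2>0$ and in particular $V^{\pi^{\text{off}}}_1(s_1)\ge\gamma$; thus $\pi^{\text{off}}$ is a \emph{genuinely} safe baseline. At this point $\pi^{\text{off}}$ satisfies exactly the hypotheses \Cref{thm:step} imposes on the baseline, but with replaced parameters $(\kappa,\Delta_0)\mapsto(\kappa_{\text{off}},\Delta_{0,\text{off}})$. Feeding $\pi^{\text{off}}$ into StepMix and invoking \Cref{thm:step} verbatim then yields both the conservative guarantee \Cref{eqn:constraint} and the regret bound $c_1\sqrt{d^3H^4N\iota^2}+c_2 d^3H^4\Delta_{0,\text{off}}\iota^2/\kappa_{\text{off}}^2$. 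Finally I would allocate the failure budget across the two independent sources of randomness---the offline event governing $\pi^{\text{off}}$ (controlled by $\iota_1,\lambda_1,\beta_1$) and the online event governing StepMix (controlled by $\iota,\lambda$)---by a union bound; this is exactly why the logarithmic constants in $\iota=2\log(10dHN/\delta)$ and $\iota_1=2\log(20dHN_1/3\delta)$ are split, so the total confidence stays $1-\delta$.

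The main obstacle is the first step: obtaining the lower bound on $V^{\pi^{\text{off}}}_1$ with the clean constant $2\sqrt{3}\,\beta_1 H\sqrt{d/N_1}$. This requires re-reading PEVI's guarantee not as a one-sided suboptimality bound $V^\star-V^{\pi^{\text{off}}}$ but as a two-sided comparison against the behavior policy under \emph{its own} coverage, and carefully bounding the summed penalties via the concentration of the empirical covariance to $\Eb_\mu[\phi\phi^\top]$. A secondary subtlety is the clean decoupling of events: StepMix's internal LCBs are built from the online data and are statistically independent of the offline data, and the safety fallback only needs $\pi^{\text{off}}$ to be genuinely safe, so once the offline event holds the online analysis of \Cref{thm:step} applies unchanged and the union bound is valid.
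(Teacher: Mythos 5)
Your proposal follows essentially the same route as the paper's own proof: bound the value loss of the offline-learned policy relative to the \emph{behavior} policy using that policy's coverage of its own occupancy (the paper's Lemma~\ref{lem:offline_bound_by_bonus} plays exactly this role for VI-LCB), use the $N_1$ condition to ensure the learned policy is genuinely safe with margin at least $\kappa/2$ (the paper's \Cref{coro:off}), then invoke \Cref{thm:step} verbatim with the degraded parameters $(\kappa_{\text{off}},\Delta_{0,\text{off}})$ and allocate the failure probability between the offline and online events. The only cosmetic difference is that your pessimism argument is phrased in PEVI's penalized-Bellman language while the paper instantiates the analogous steps for its tabular offline subroutine, so the proposal is correct and matches the paper's argument.
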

\fi

\begin{theorem}
    \label{thm:offstep}
    Let $\hat{\pi}$ be the output of the offline VI-LCB algorithm \citep{xie2021policy} (see \Cref{alg:offline} in \Cref{appx:offline}) with $n = \tilde{\Theta}(\frac{H^5SA}{\bar{\kappa}^2})\footnote{We hide the logarithm factor for simplicity.}$ offline trajectories. If we replace the baseline policy $\pi^b$ used in \Cref{alg:step} by $\hat{\pi}$, then with probability at least $1-\delta$, StepMix can simultaneously (i) satisfy the conservative constraint in \Cref{eqn:constraint}, and (ii) achieve a total regret that is at most
\begin{align*}\textstyle
\tilde{O}\left(\sqrt{H^3SAK}+H^3S^2A+H^3SA\bar{\Delta}_0\left(\frac{1}{\bar{\kappa}^2}+\frac{S}{\bar{\kappa}}\right)\right),
\end{align*}
    where $\bar{\kappa}=(V_1^\mu-\gamma)/2 > 0$ and $\bar{\Delta}_0=V^\star_1-V^\mu_1+\bar{\kappa}$.
\end{theorem}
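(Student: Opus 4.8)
The plan is to reduce \Cref{thm:offstep} to the already-established \Cref{thm:step} by showing that the offline-learned policy $\hat{\pi}$ can play the role of a genuine safe baseline, with a controlled safety margin and suboptimality gap. The argument decouples cleanly into an offline part and an online part, which draw on independent sources of randomness.

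First, I would invoke the performance guarantee of the offline VI-LCB algorithm \citep{xie2021policy}. Since the offline dataset is generated by the behavior policy $\mu$, the policy $\mu$ is trivially covered by its own data (single-policy concentrability one with respect to itself), so VI-LCB competes with $\mu$: with $n=\tilde{\Theta}(H^5SA/\bar{\kappa}^2)$ trajectories, its output $\hat{\pi}$ satisfies $V_1^\mu - V_1^{\hat{\pi}} \le \bar{\kappa}$ with probability at least $1-\delta/2$. Call this event $E_{\mathrm{off}}$. On $E_{\mathrm{off}}$, using $V_1^\mu = \gamma + 2\bar{\kappa}$ (which is the definition $\bar{\kappa}=(V_1^\mu-\gamma)/2$ rearranged), we obtain $V_1^{\hat{\pi}} \ge V_1^\mu - \bar{\kappa} = \gamma + \bar{\kappa}$, so $\hat{\pi}$ is strictly safe with margin at least $\bar{\kappa}$. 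Consequently, when $\hat{\pi}$ is fed as the baseline into \Cref{alg:step}, the effective tolerable loss is $\kappa_{\mathrm{eff}} := V_1^{\hat{\pi}} - \gamma \ge \bar{\kappa}$ and the effective suboptimality gap is $\Delta_{0,\mathrm{eff}} := V_1^\star - V_1^{\hat{\pi}} \le V_1^\star - V_1^\mu + \bar{\kappa} = \bar{\Delta}_0$.

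Next, I would condition on $E_{\mathrm{off}}$ and treat the realized $\hat{\pi}$ as a fixed, known, safe baseline. The crucial observation is that \Cref{thm:step} holds for \emph{any} safe baseline, and its proof relies only on the online interaction data, which is independent of the offline dataset; hence conditioning on $E_{\mathrm{off}}$ leaves the distribution of the online trajectories, and therefore all the online concentration events, undisturbed. Applying \Cref{thm:step} with baseline $\hat{\pi}$ and failure budget $\delta/2$ then gives, on an event $E_{\mathrm{on}}$ of conditional probability at least $1-\delta/2$, both the conservative guarantee in \Cref{eqn:constraint} and the regret bound $\tilde{O}(\sqrt{H^3SAK} + H^3S^2A + H^3SA\,\Delta_{0,\mathrm{eff}}(1/\kappa_{\mathrm{eff}}^2 + S/\kappa_{\mathrm{eff}}))$. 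Since the leading term is independent of $\kappa$ and $\Delta_0$, while the additive term is monotonically increasing in $\Delta_0$ and decreasing in $\kappa$, substituting $\Delta_{0,\mathrm{eff}} \le \bar{\Delta}_0$ and $\kappa_{\mathrm{eff}} \ge \bar{\kappa}$ upper-bounds the regret by the stated expression. A union bound over $E_{\mathrm{off}}^c$ and $E_{\mathrm{on}}^c$ then yields the claimed $1-\delta$ guarantee for both parts.

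The main obstacle is the composition step: I must argue that StepMix's \emph{safety} conclusion, which in \Cref{thm:step} presumes a genuinely safe baseline (so that the fallback to $\pi^b$ truly satisfies $V_1^{\pi^b}\ge\gamma$ and the anchor policies $\pi^{\star,h_0}$ dominate it), survives when the safety of $\hat{\pi}$ holds only on the high-probability event $E_{\mathrm{off}}$. The resolution is exactly the independence of the offline and online randomness: all of StepMix's analysis can be carried out conditionally on the fixed realization of $\hat{\pi}$ produced on $E_{\mathrm{off}}$, after which \Cref{thm:step} applies verbatim with $\kappa_{\mathrm{eff}},\Delta_{0,\mathrm{eff}}$ in place of $\kappa,\Delta_0$. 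A secondary point requiring care is confirming that $n=\tilde{\Theta}(H^5SA/\bar{\kappa}^2)$ is precisely the sample size driving the VI-LCB error below the threshold $\bar{\kappa}$ (rather than some larger multiple of it); this is a direct instantiation of the cited bound with comparator $\mu$, but the constants and logarithmic factors absorbed into $\tilde{\Theta}$ must be tracked so that the margin $V_1^{\hat{\pi}}-\gamma$ is guaranteed to be at least $\bar{\kappa}$.
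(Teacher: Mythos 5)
Your proposal is correct and follows essentially the same route as the paper's own proof: the paper likewise instantiates the VI-LCB guarantee with comparator $\mu$ to show $V_1^{\hat{\pi}}\ge(V_1^\mu+\gamma)/2=\gamma+\bar{\kappa}$ with probability $1-\delta/2$ (its Corollary on offline sample complexity), then conditions on that event and applies \Cref{thm:step} with the learned baseline and failure budget $\delta/2$, substituting $\bar{\kappa}\le V_1^{\hat{\pi}}-\gamma$ and $\bar{\Delta}_0\ge V_1^\star-V_1^{\hat{\pi}}$ by monotonicity, and finally combines the two events (the paper via $\Pb[B|A]\Pb[A]\ge(1-\delta/2)^2\ge 1-\delta$, you via an equivalent union bound). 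Your explicit remark on the independence of offline and online randomness is exactly the justification the paper leaves implicit in that conditioning step.
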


A similar result for EpsMix can be established, and is given as \Cref{thm:offeps} in \Cref{appx:offline}. We see that $n$ scales inversely proportional to $\kappa^2$, suggesting that a good behavior policy would require small amount of data and vice versa. Besides, the additive term in the regret becomes larger compared with that in \Cref{thm:step}. In general, a large $n$ serves two purposes: First, it reduces the safety uncertainty due to offline learning, such that the impact on the safety constraint violation is negligible compared with that caused by the (online) StepMix policy. Second, it ensures that the regret bound is dominated by the number of online episodes $K$. We also note that although both \Cref{thm:offstep} and \Cref{thm:offeps} depend on using VI-LCB as the offline learning algorithm, the conclusion can be extended to general offline algorithms as long as they can produce an approximately safe policy from the pre-collected data with high probability.

\section{Experimental Results}


\subsection{Performance Evaluation of StepMix and EpsMix} \label{sec:exp1}

\textbf{Synthetic Environment.} 
We generate a synthetic environment to evaluate the proposed algorithms. We set the number of states $S$ to be 5, the number of actions $A$ for each state to be 5, and the episode length $H$ to be 3. The reward $r_h(s,a)$ for each state-action pair and each step is generated independently and uniformly at random from $[0, 1]$. We also generate the transition kernel $P_h(\cdot|s,a)$ from an $S$-dimensional simplex independently and uniformly at random. Such procedure guarantees that the synthetic environment is a proper tabular MDP.

\textbf{Baseline Policy.} We adopt the Boltzmann policy \citep{Thrun:1992} as the baseline policy in our algorithms. Under the Boltzmann policy, actions are taken randomly according to 
$\pi_h(a|s)=\frac{\exp\{\eta Q^\star_h(s, a)\}}{\sum_{a\in\Ac} \exp\{\eta Q_h^\star(s, a)\}},$
where a larger $\eta$ leads to a more deterministic policy and higher expected value.

\textbf{Results.}  We first evaluate the proposed StepMix and EpsMix, and compare with {BPI-UCBVI \citep{menard2021fast}}. For each algorithm, we run {10} trials and plot the average expected return per episode. 

\begin{figure*}[hpbt]
    \centering
    \subfigure[$\eta=5,\gamma=2.0$]{ \includegraphics[width=0.23\textwidth]{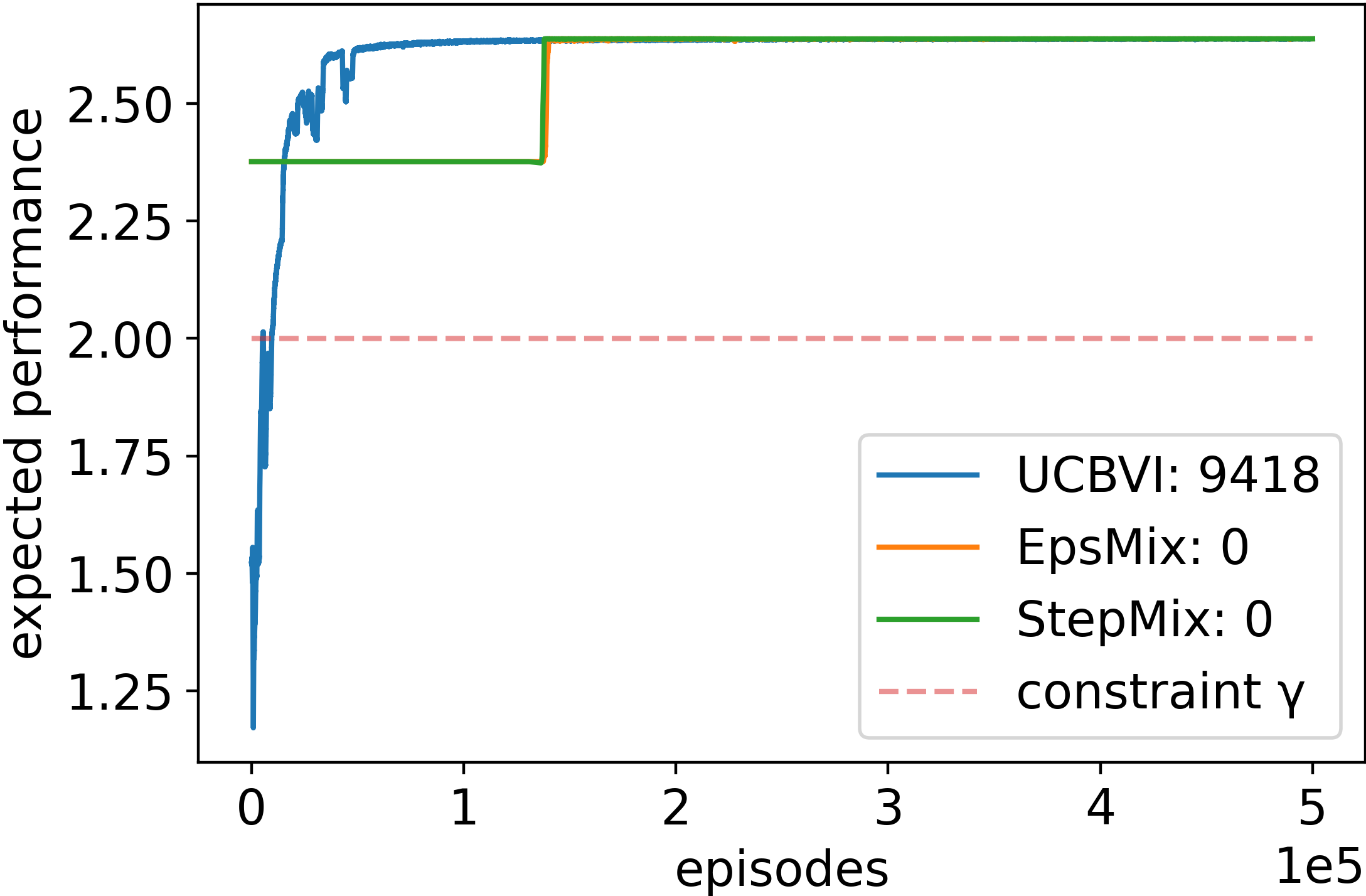}\label{fig:1}}
    \subfigure[$\eta=5,\gamma=2.2$]{ \includegraphics[width=0.23\textwidth]{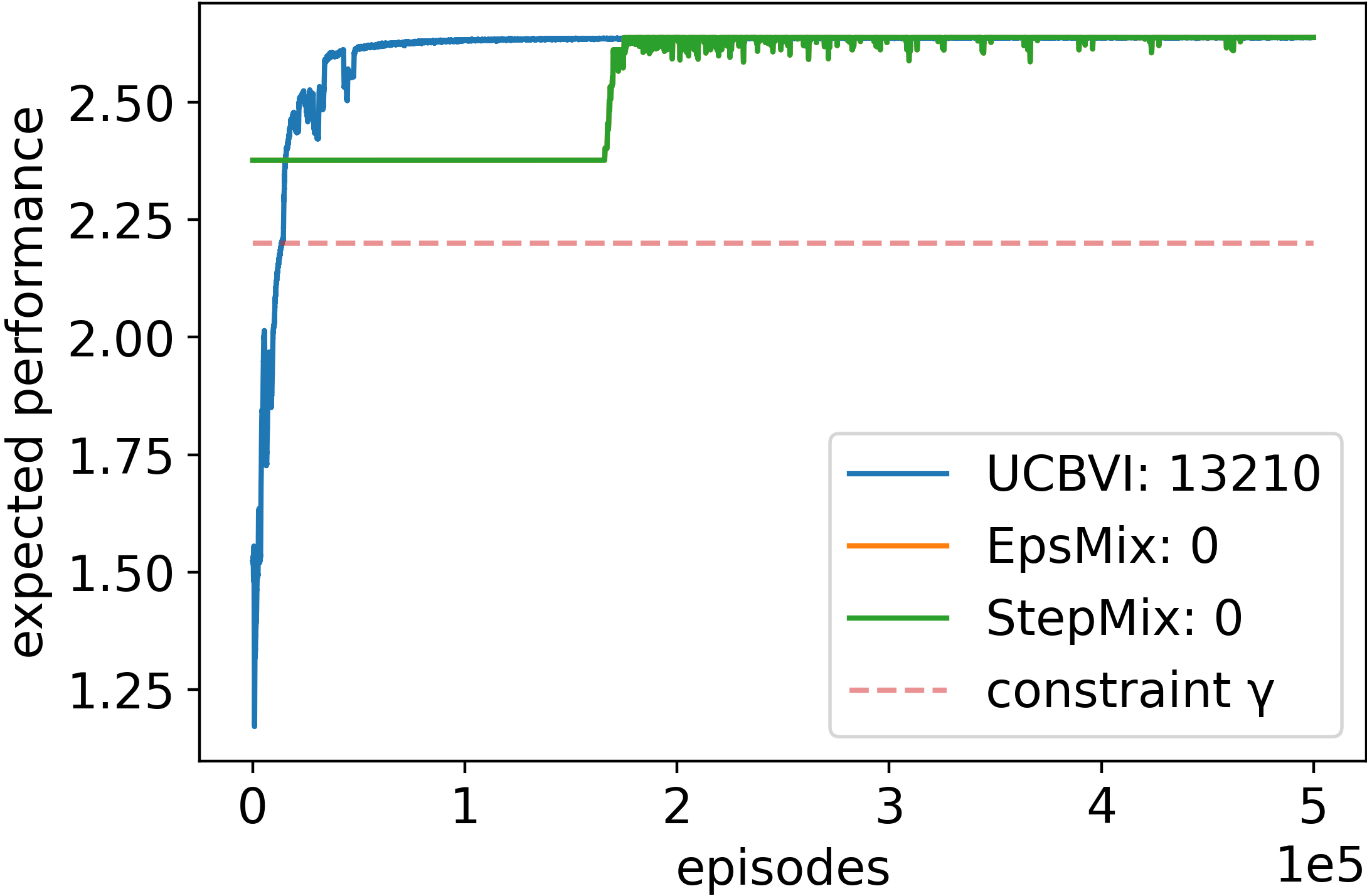}\label{fig:2}}
    \subfigure[$\eta=10,\gamma=2.0$]{ \includegraphics[width=0.23\textwidth]{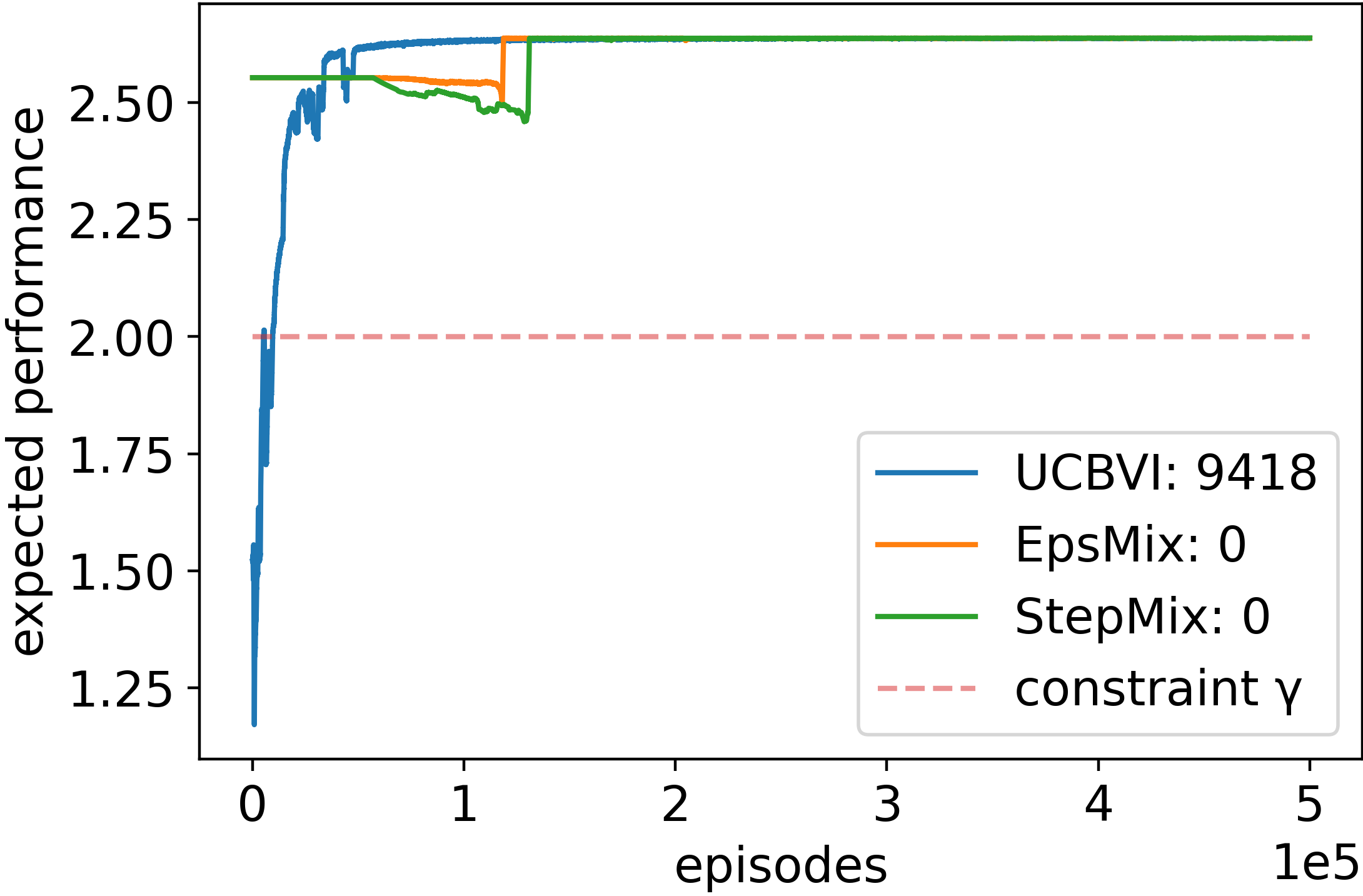}\label{fig:3}}
    \subfigure[$\eta=10,\gamma=2.2$]{ \includegraphics[width=0.23\textwidth]{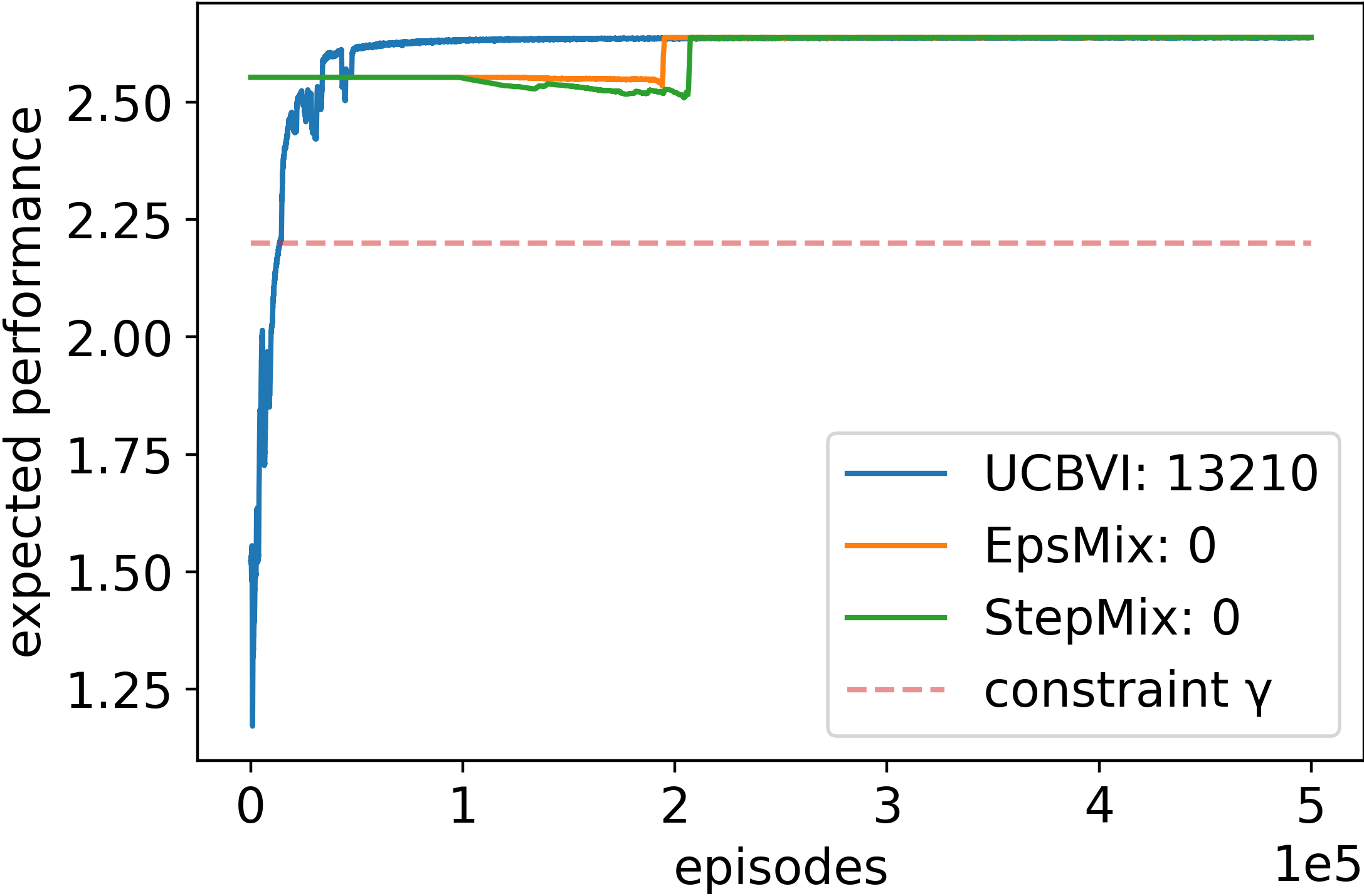}\label{fig:4}}
    \vspace{-0.1in}
    \caption{\small Average expected return of each episode under StepMix, EpsMix, and BPI-UCBVI with different constraint $\gamma$ and baseline parameter $\eta$. Numbers of violations are stated in the legend.}
    \vspace{-0.1in}
    \label{fig:performance}
\end{figure*}

\begin{figure*}[hpbt]
    \centering
    \subfigure[$\eta=10,n=5000$]{ \includegraphics[width=0.23\textwidth]{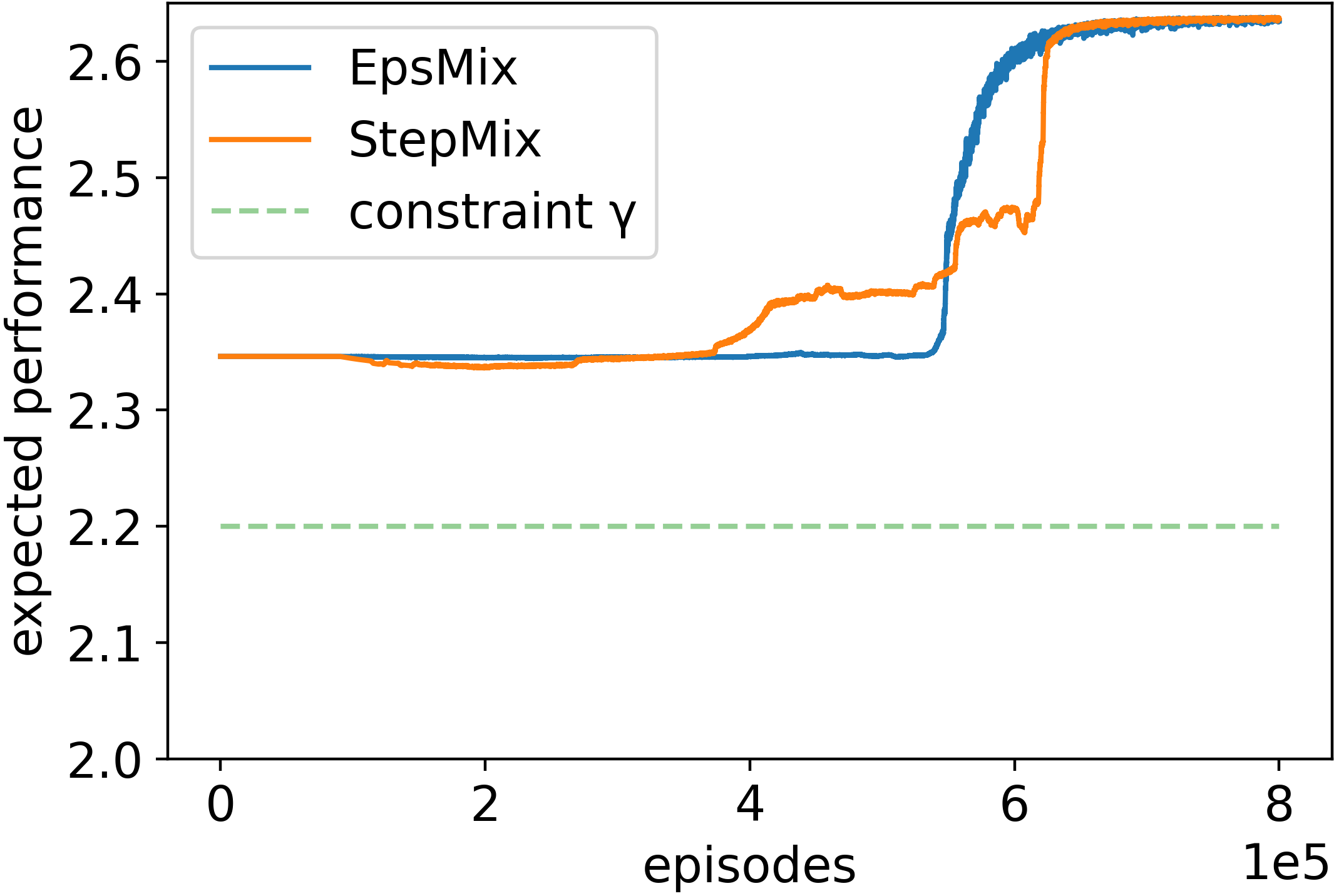}\label{fig:5}}
    \subfigure[$\eta=10,n=8000$]{ \includegraphics[width=0.23\textwidth]{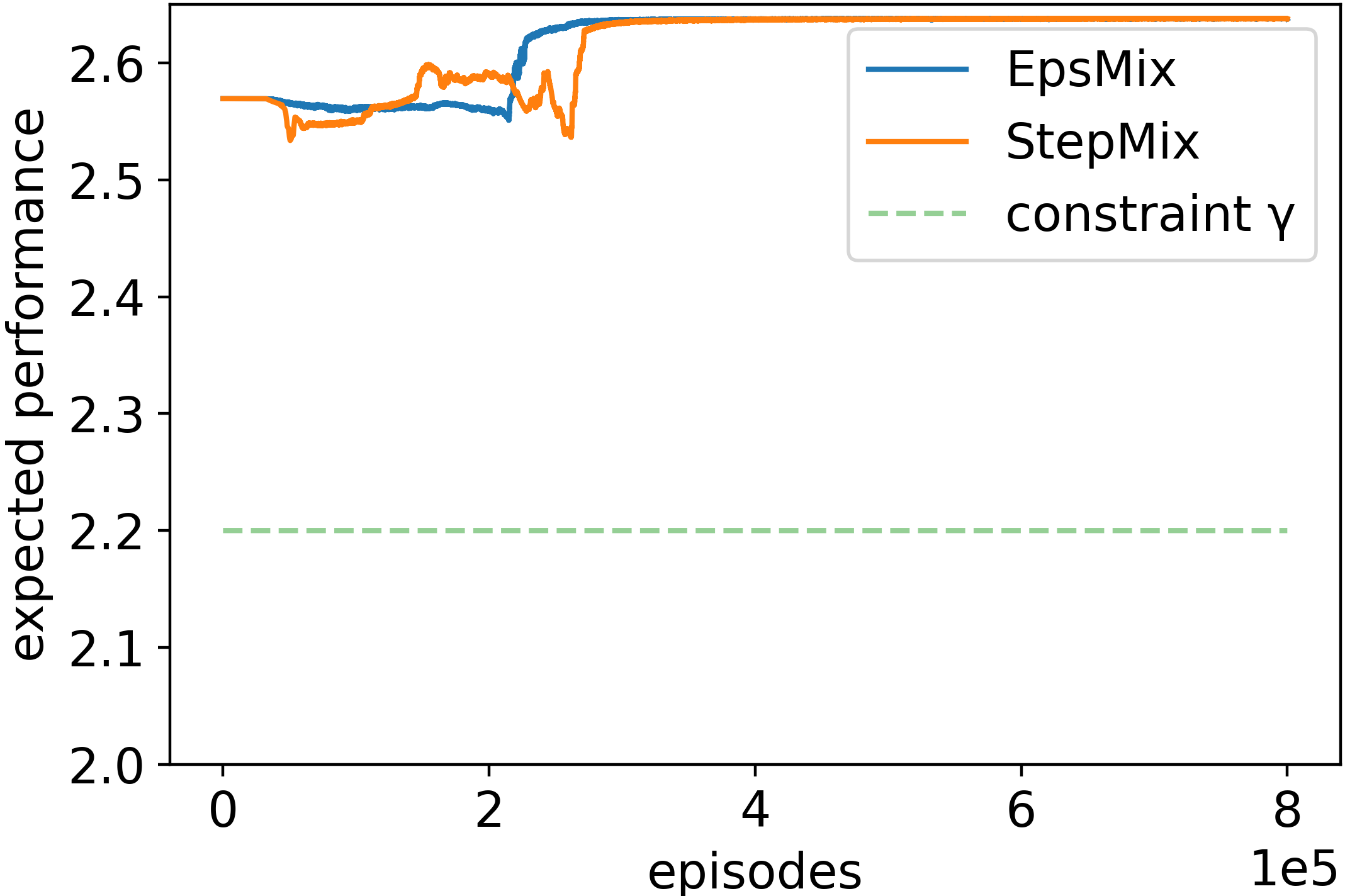}\label{fig:6}}
    \subfigure[$\eta=15,n=5000$]{ \includegraphics[width=0.23\textwidth]{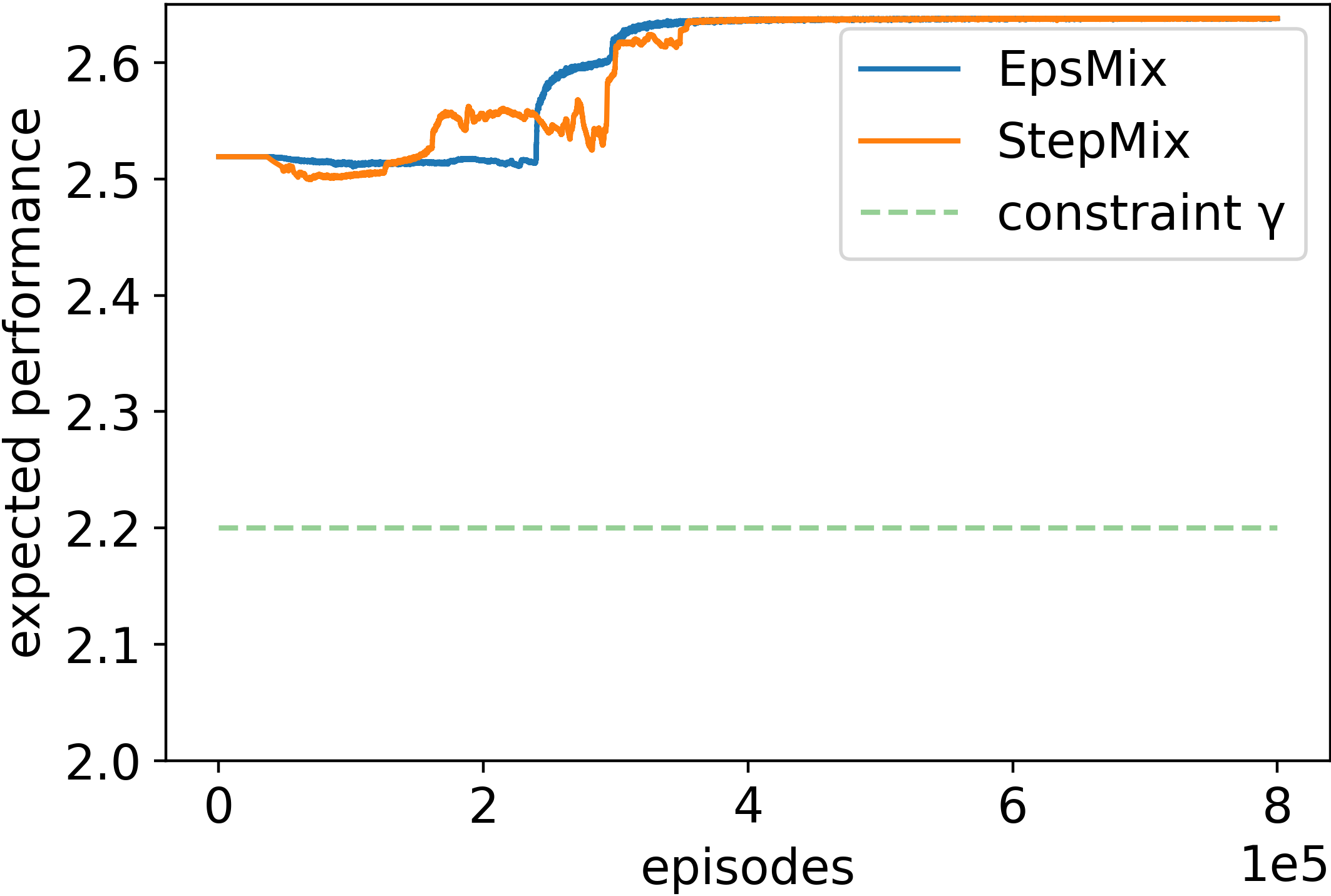}\label{fig:7}}
    \subfigure[$\eta=15,n=8000$]{ \includegraphics[width=0.23\textwidth]{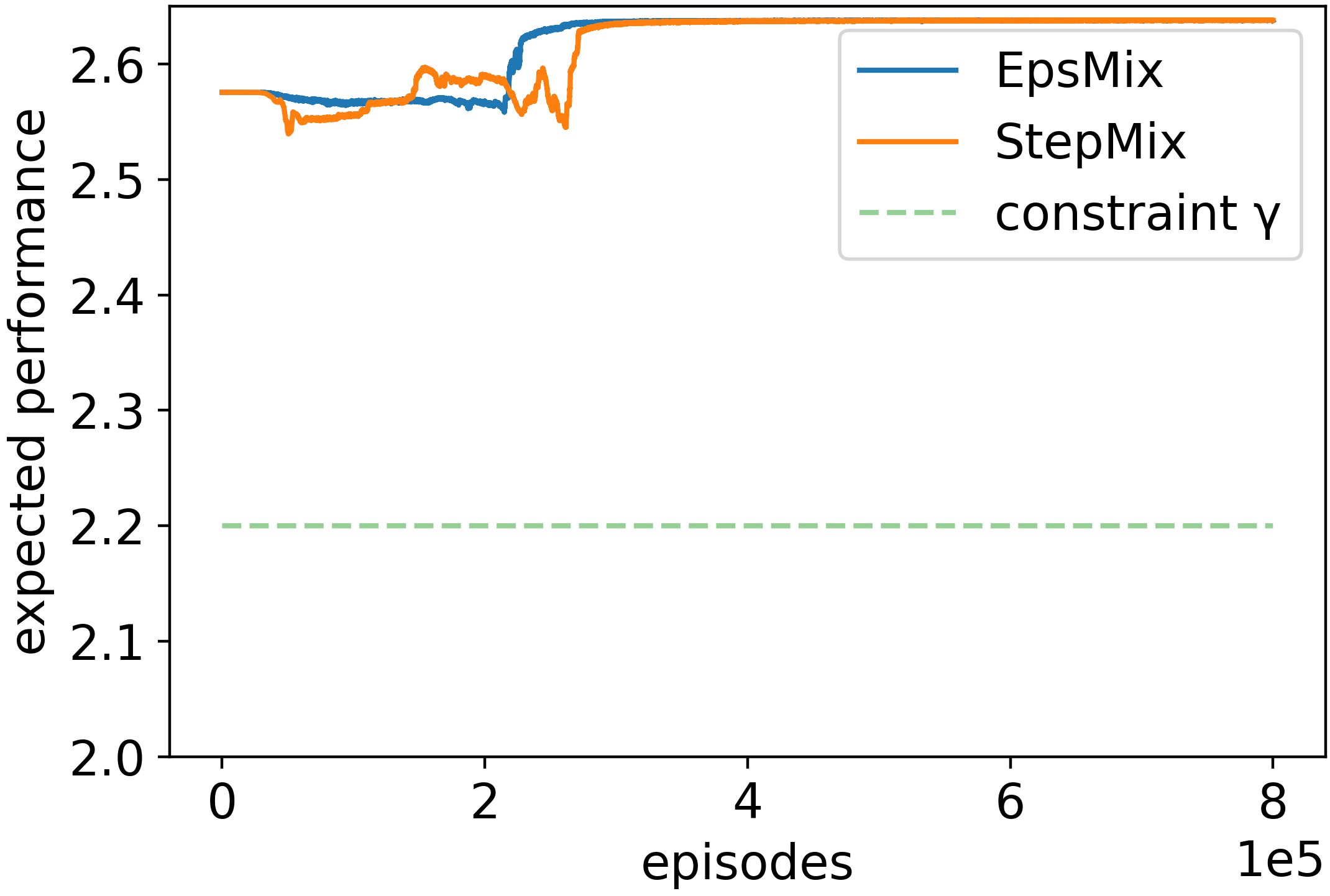}\label{fig:8}}
    \vspace{-0.1in}
    \caption{\small Average expected return of each episode under StepMix, EpsMix, and BPI-UCBVI with offline dataset.}
    \label{fig:offline}
\end{figure*}

In \Cref{fig:performance}, we track the expected return obtained in each episode with different baseline parameter $\eta$ and conservative constraint $\gamma$. We have the following observations. First, both StepMix and EpsMix converge to the optimal policy with no constraint violation in all settings. Between StepMix and EpsMix, the latter exhibits slightly faster convergence. They both tend to stay on the baseline policy when the information is not sufficient, implied by the constant expected return at the beginning of the learning process. When more information is collected, these two algorithms will deviate from the baseline policy and converge to the optimal policy. In contrast, BPI-UCBVI converges to the optimal policy as well, but violates the conservative constraints in earlier episodes. Besides, more stringent constraint $\gamma$ makes StepMix and EpsMix more conservative. Both algorithms experience delayed convergence when $\gamma$ increases. Meanwhile, a better baseline policy also leads to better learning performance throughout the learning process. 

We report the performance of learning with an offline dataset in \Cref{fig:offline}. We use the baseline Boltzmann policy with $\eta=10$ and $\eta=15$ to collect the offline dataset. The numbers of offline trajectories are set to be $5000$ and $8000$, respectively. The conservative constraint $\gamma$ is set to be 2.2.  \Cref{fig:offline} shows that learning a baseline policy from the offline dataset and using it as an input to StepMix and EspMix does not affect their performances significantly. With more offline trajectories collected, the algorithms start from a better baseline and converge to the optimal policy faster.

\subsection{Empirical Comparison with DOPE and OptPess-LP}
In this subsection, we empirically compare the learning performances of StepMix, EpsMix, DOPE~\citep{bura2022dope} and OptPess-LP~\citep{liu2021learning}.

\textbf{Synthetic Homogeneous Environment.} In this experiment, we set $S$ to be 4, $A$ to be 2, and $H$ to be 3. In order to match the homogeneous environment assumption under DOPE and OptPess-LP, we set $P_h=P$ and $r_h=r$ for any $h\in[H]$, and randomly generate $P$ an $r$ as in \Cref{sec:exp1}. As DOPE and OptPess-LP are both developed to solve CMDP problems with general cost functions, to match the conservative constraint considered in this work, we set the corresponding cost function as $c(s,a) = 1 - r(s,a)$ and set the constraint as $\mathbb{E}_{\pi}[\sum_{h=1}^H c(s_h,a_h)]\le H - \gamma$.



\textbf{Results.}
We adopt the Boltzmann policy from \Cref{sec:exp1} as the baseline policy and set $\eta$ to be 5. We run each algorithm for 10 trials and plot the average regrets in \Cref{fig:9} and the average expected return of each episode in \Cref{fig:10}.

\begin{figure}[ht]
    \centering
    \if{0}
    \begin{minipage}[t]{0.3\textwidth}
        \centering
        \includegraphics[width=0.85\textwidth]{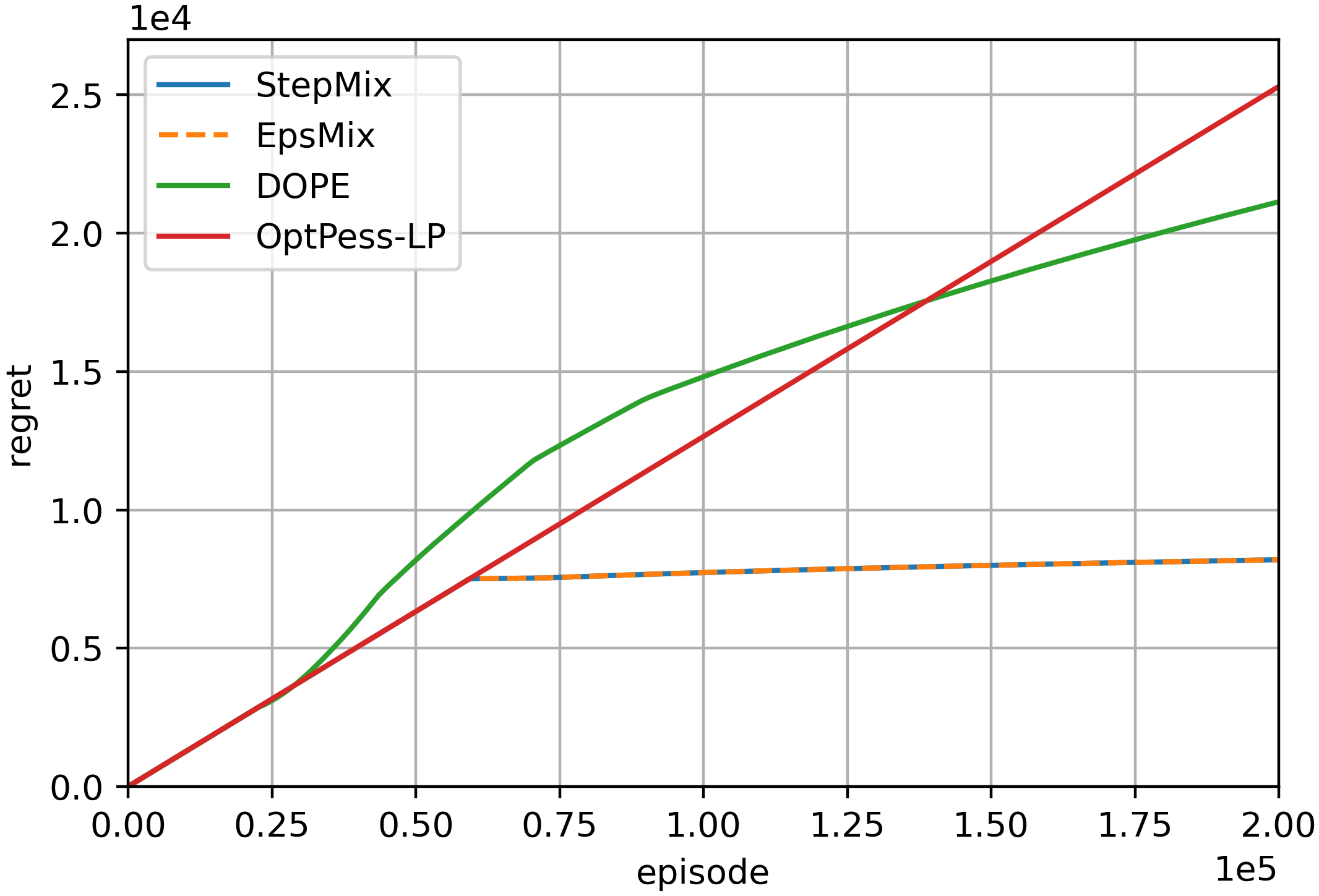}
        \caption{The average regrets of each episode under StepMix, EpsMix, DOPE and OptPess-LP algorithms in a homogeneous environment with $H=3$, $|\Sc|=4$ and $|\Ac|=2$.}
    \label{fig:9}
    \end{minipage}
    \begin{minipage}[t]{0.3\textwidth}
       \includegraphics[width=0.85\textwidth]{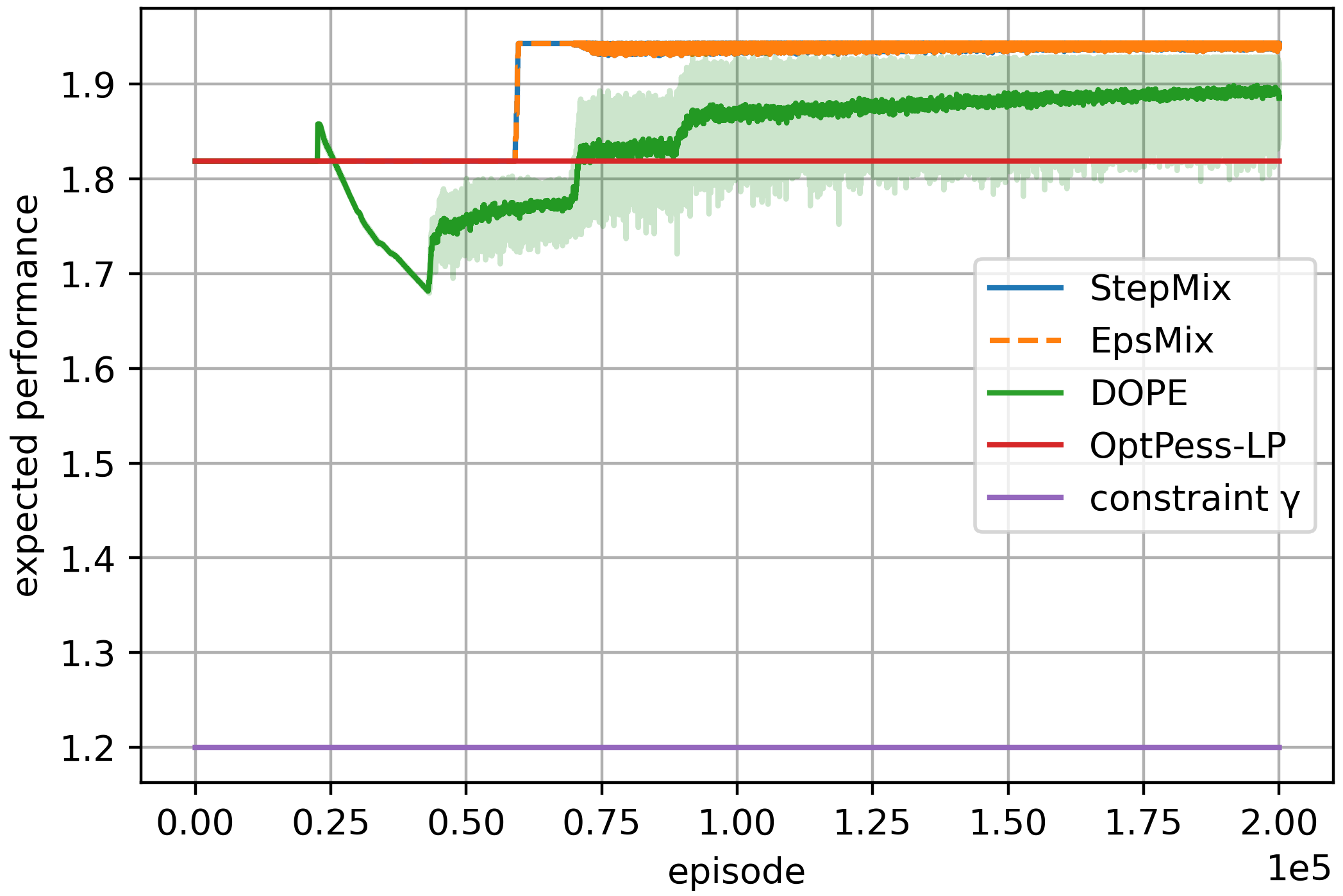}
    \end{minipage}
    \fi
    \subfigure[]{ \includegraphics[width=0.23\textwidth]{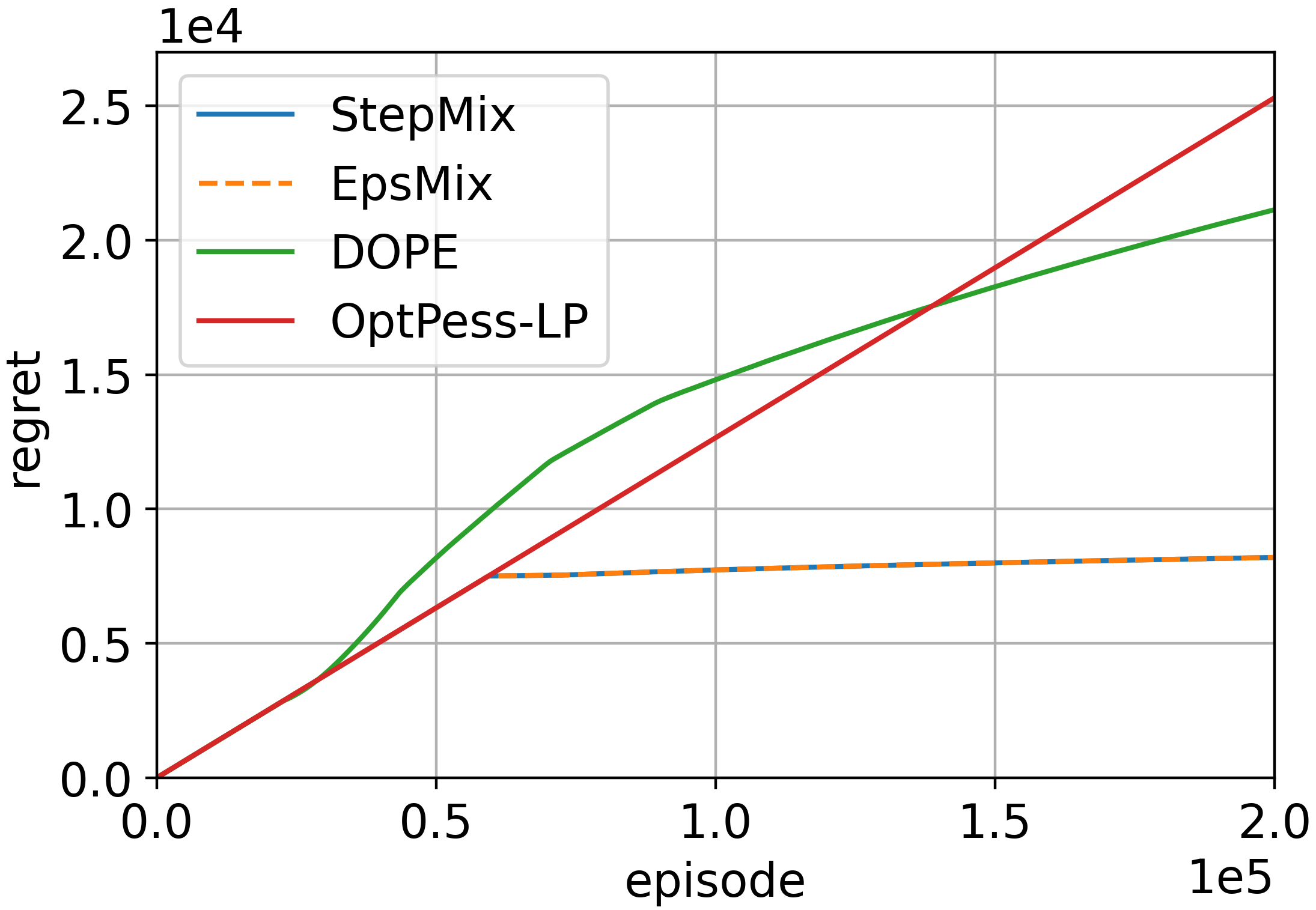}\label{fig:9}}
    \subfigure[]{ \includegraphics[width=0.23\textwidth]{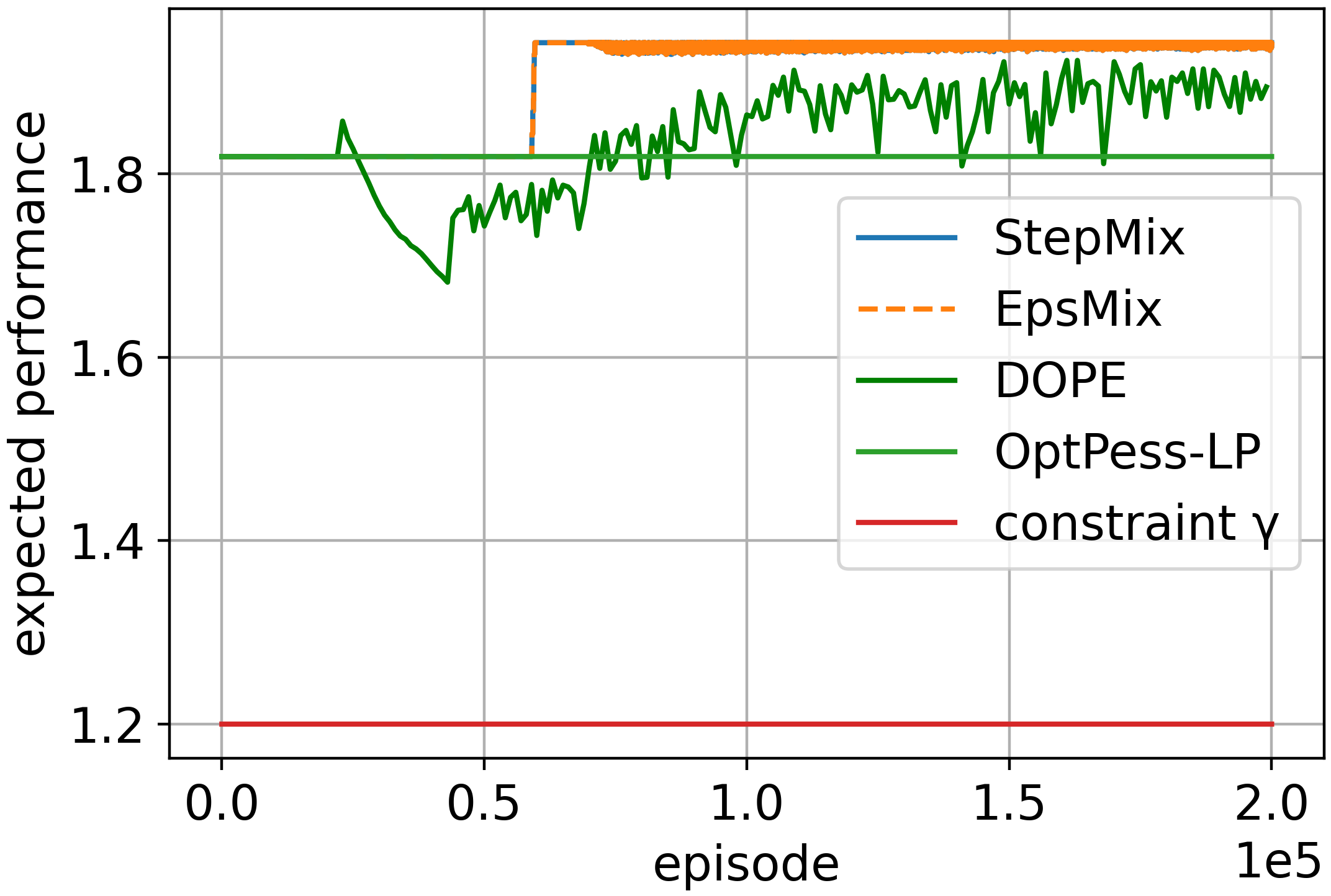}\label{fig:10}}
    \caption{Performance comparison between StepMix, EpsMix, DOPE, and OptPess-LP. (a) Average regret. (b) Average expected return per episode. }
    \label{fig:compare_dope}
\end{figure}

\if{0}
\begin{figure*}[hpbt]
    \centering
    \subfigure[$\eta=10,n=5000$]{ \includegraphics[width=0.23\textwidth]{images/K10G22O5000_crop.png}\label{fig:5}}
    \subfigure[$\eta=10,n=8000$]{ \includegraphics[width=0.23\textwidth]{images/K10G22O8000_crop.png}\label{fig:6}}
    \subfigure[$\eta=15,n=5000$]{ \includegraphics[width=0.23\textwidth]{images/K15G22o5000_crop.png}\label{fig:7}}
    \subfigure[$\eta=15,n=8000$]{ \includegraphics[width=0.23\textwidth]{images/K15G22o8000_crop.png}\label{fig:8}}
    \vspace{-0.1in}
    \caption{\small Average total reward of each episode under StepMix, EpsMix, and BPI-UCBVI with offline dataset.}
    \label{fig:offline}
\end{figure*}
\fi

We observe that all four algorithms achieve the same performance at the beginning of the learning process, implying that they all adopt the baseline policy initially. After that, DOPE is the first algorithm to deviate from the baseline and explore other safe policies, followed by StepMix and EpsMix. Although DOPE starts the exploration earlier, it actually renders much higher regret than StepMix and EpsMix. This implies that the exploration under DOPE is not as efficient as the near-optimal exploration strategies adopted by StepMix and EpsMix. On the other hand, OptPess-LP stays on the baseline throughout the learning horizon, leading to a linearly increasing regret. This is because OptPess-LP does not explore sufficiently, and thus is unable to identify a safe exploration policy other than the baseline in this scenario. Similar phenomenon has been observed in~\citet{bura2022dope}. 
\Cref{fig:10} also shows that the constraint violation is zero throughout the learning horizon under all four algorithms.

\if{0}
In \Cref{fig:7}, we fix the StepMix algorithm and baseline policy with $k=20$, but change the constraint as $\gamma=(1-\alpha)V^{\pi^b}$. Generally, if the $\alpha$ is smaller, the constraint is tighter. It is clear in the figure that, by changing the corresponding $\alpha$ from $0.2$ to $0.05$, the StepMix algorithm becomes more conservative and converge to optimal policy slower. 

We also want to show \Cref{fig:9}, which is corresponding to \Cref{lem:finite} very closely. \Cref{lem:finite} is very critical to our analysis result, which claims that there are comparably finite episodes on the step mixture policies. In \Cref{fig:9}, we can see that the number of episodes spent on the step mixture policies of each step is comparably finite to $h_0=0$ when the number of total episode $K$ increases. And, $h_0=0$ means that the algorithm is reduced to the pure optimistic algorithm , of which the number of episodes increases linearly. Generally, the pure UCBVI episodes will dominate and our algorithm will have the same leading order with it.

Summarizing these experiments results, our algorithms works complying with our analysis results. They converge to the optimal policy, do not violate the constraint under different settings and even performs better than UCBVI with a good baseline policy and favorable constraints.
\fi

\if{0}
\begin{figure}[hpbt]
    \centering
    \subfigure[Regret with $\beta=1$]{ \includegraphics[width=0.24\textwidth]{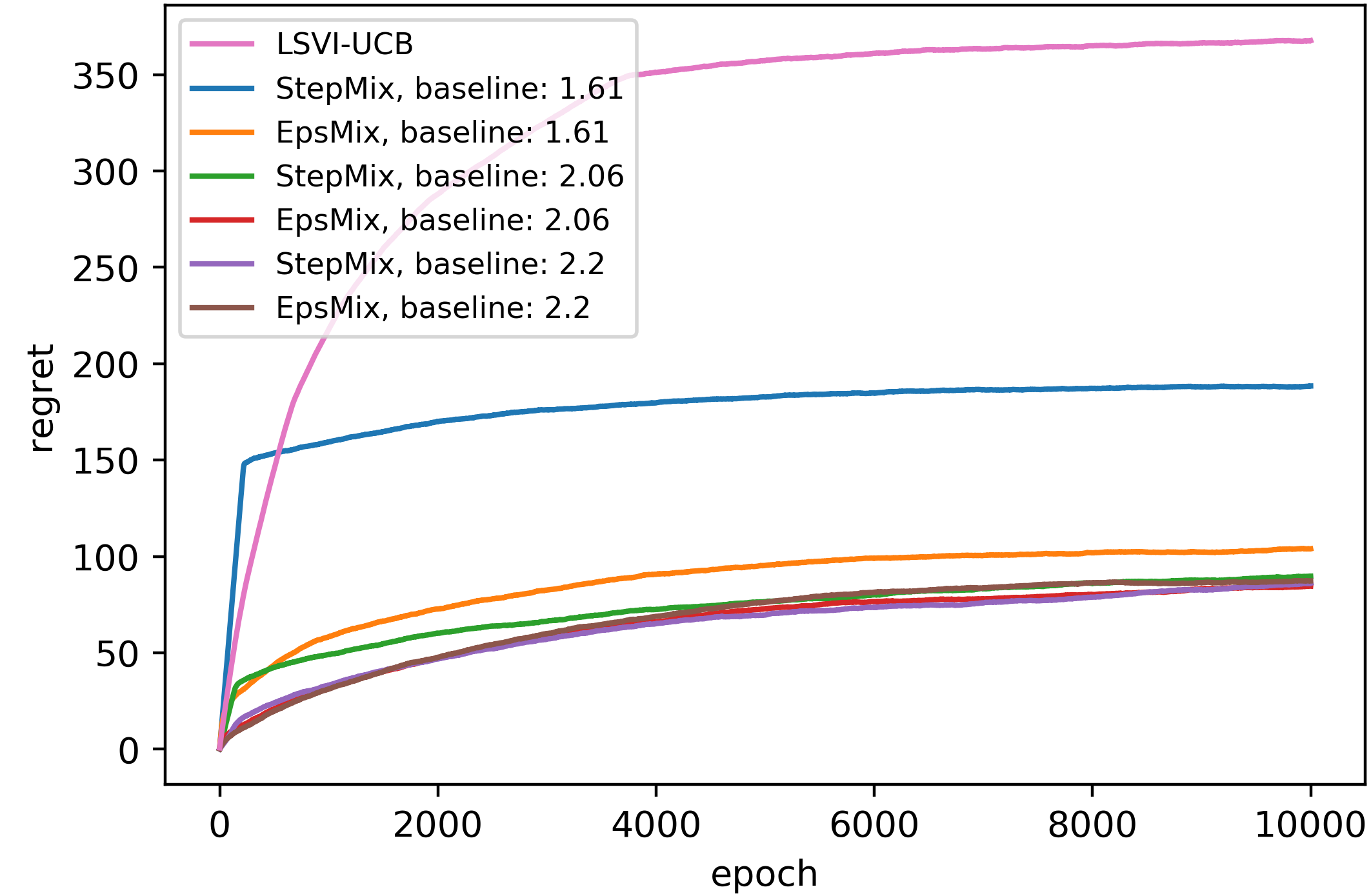}\label{fig:regret}} 
    \subfigure[$k=20$, $\beta=1$]{ \includegraphics[width=0.24\textwidth]{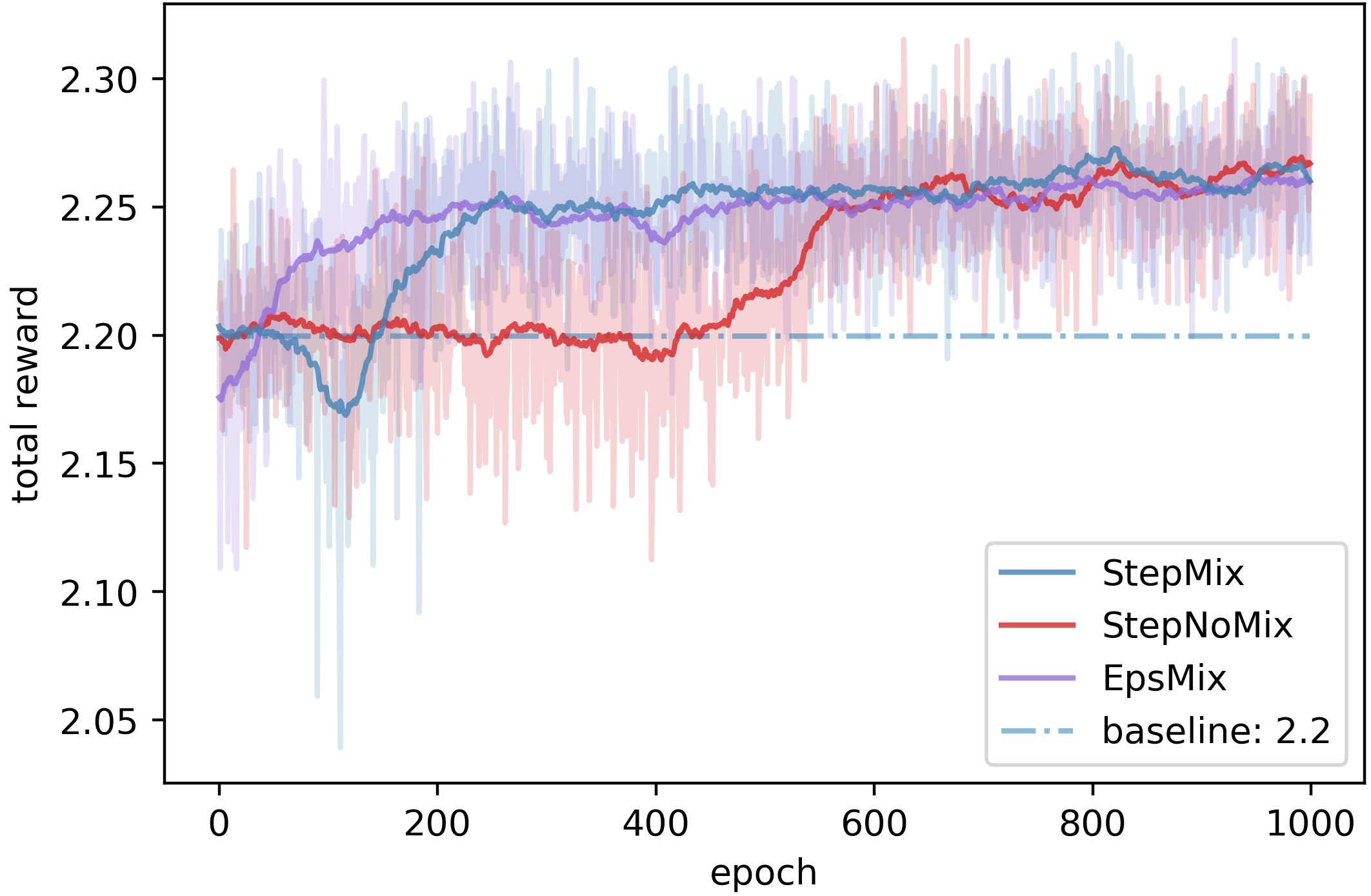}\label{fig:nomix}}
    \subfigure[$k=20$, $\beta=2$, $30$ offline trajectories]{\includegraphics[width=0.24\textwidth]{images/offline/k20.0_beta2.0_30offline_in_main.png}\label{fig:offline}}
    \subfigure[violations vs. number of offline trajectories]{ \includegraphics[width=0.24\textwidth]{images/offline/5to20offlineViolations_log.png}\label{fig:logvio}}
    \vspace{-0.1in}
    \caption{\small Online: (a) regret, (b) compare with StepNoMix. Offline: (c) large $N_1$, (d) varying  $N_1$.}
    \label{fig:performance2}
\end{figure}

We compare the regret performances in \Cref{fig:regret}. All algorithms achieve sub-linear regret, and the regret under StepMix-LSVI and EpsMix-LSVI are much lower than that under LSVI-UCB. In general, a better baseline policy leads to lower regret under our algorithms, which is consistent with the theory. 
In \Cref{fig:nomix}, we compare with an additional algorithm termed StepNoMix, a modified version of StepMix-LSVI. Instead of constructing a step mixture policy, StepNoMix will execute the first concatenated policy whose LCB is above the threshold. Compared with StepMix-LSVI, StepNoMix stays at the baseline for more episodes, indicating that it is less effective in exploring unknown dimensions. 
\fi

\if{0}
\section{compare with existing LCB techniques}

Here we compare our LCB with two existing works \citep{xie2021policy, xie2022armor}.

First, the most important difference is that our LCB requires no coverage assumption for behavior policy, while both \citet{xie2021policy, xie2022armor} require the coverage assumption on the behavior policy in order to bound the estimation error of the pessimistic policy constructed from the offline dataset. This is because the LCB constructed in our work is the lower bound of value function under the exploration policy $\pi^k$
, while the LCB in \citet{xie2021policy, xie2022armor} is for the pessimistic policy constructed from the offline dataset. Essentially, our algorithm relies on the online algorithm to approach the optimal policy thus does not rely on the coverage of the behavior policy, while \citet{xie2021policy} decouples the offline learning and online fine-tuning process and still requires the coverage assumption for the offline learning.

Second, compare to \citet{xie2021policy}, our reference value functions are different. In \citet{xie2021policy}, in order to derive the pessimistic policy and utilize Bernstein's inequality, it constructs a reference function based on an LCB algorithm. In our work, the reference functions are the true value functions of candidate policies, which are chosen from a series of policies mixed by the baseline policy and optimistic policy produced from a Bernstein-style UCB algorithm. This is essentially the reason why we are able to remove the coverage assumption for the behavior policy while \citet{xie2021policy} still relies on it to first bound their reference function.

Third, compare to \citet{xie2022armor}, our LCB expression is computationally efficient. In \citet{xie2022armor}, the authors use an MLE-style estimation of models to achieve the confidence set of models, which is implicit compared with our explicit construction of the Bernstein-style bonus term. Our approach is more computationally efficient.

\fi

\section{Conclusions}
\label{sec:conc}
We investigated conservative exploration in episodic tabular MDPs. Different than the majority of existing literature, we considered a stringent episodic conservative constraint, which motivated us to incorporate mixture policies in conservative exploration. We proposed two model-based algorithms, one with step mixture policies and the other with episodic randomization. Both algorithms were proved to achieve near-optimal regret order as that under the constraint-free setting, while never violating the conservative constraint in the learning process. We also investigated a practical case where the baseline policy is not explicitly given to the algorithm, but must be learned from an offline dataset. We showed that as long as the dataset is sufficiently large, the offline learning step does not affect the conservative constraint or the regret of our proposed algorithms. Experimental results in a synthetic environment corroborated the theoretical analysis and shed some interesting light on the behavior of our algorithms. 


\section*{Acknowledgements}
The work of DL, RH and JY was supported in part by the US National Science Foundation (NSF) under awards 2030026, 2003131, and 1956276. The work of CS was supported in part by the US NSF under awards 2143559, 2029978, 2002902, and 2132700.

\bibliography{conservative}
\bibliographystyle{apalike}

\newpage
\appendix
\onecolumn

\section{Related Works}
\label{appx:ref}

\textbf{Constrained RL with Baseline Policies.} Conservative exploration studied in this paper can be viewed as a specific case of the Constrained Markov Decision Process (CMDP) \citep{Altman:CMDP:1999}, which has been investigated in both offline and online settings. 
In the {\it offline} setting, a given baseline policy produces a set of trajectories for the agent to learn a policy that is guaranteed to perform at least as good as the baseline with high probability without actually interacting with the MDP~\citep{Bottou:2013:JMLR,Thomas:AAAI:2015,Thomas:2015:ICML,swaminathan:2015ICML,Petrik:2016:NIPS,laroche:ICML19,Simao:2019:AAAI}. 
It can also be extended to the {\it semi-batch} setting~\citep{pirotta2013safe}. 
In the {\it online} setting, the agent has to trade off exploration and exploitation while interacting with the MDP. Several algorithms have been proposed in the literature~\citep{garcelon:AISTATS:2020,yang2021reduction}. 
\citet{garcelon:AISTATS:2020} introduce a Conservative Upper-Confidence Bound for Reinforcement Learning (CUCRL2) algorithm for both finite horizon and average reward problems with $O(\sqrt{T})$ regret. 
\citet{yang2021reduction} propose a reduction-based framework for conservative bandits and RL, which translates a minimax lower bound of the non-conservative setting to a valid lower bound for the conservative case. They also propose a Budget-Exploration algorithm and show that its regret scales in $\tilde{O}\left(\sqrt{H^3SAK} + \frac{H^3SA\Delta_0}{\kappa(\kappa+\Delta_0)}\right)$ for tabular MDPs, where $\Delta_0$ is the suboptimality gap of the baseline policy, and $\kappa$ is the tolerable performance loss from the baseline. However, all these works assume {\it cumulative} conservative constraint.

\textbf{Other Forms of Constraints.} Beside the constraint imposed by a baseline policy, which is generally ``aligned'' with the learning goal, CMDP also studies the case where the algorithm must satisfy a set of constraints that potentially are not aligned with the reward. In general, both cumulative cost constraints \citep{efroni2020exploration,turchetta2020safe,zheng2020constrained,qiu2020upper,ding2020natural,kalagarla2020sample,liu2021learning,wei2022triple,ghosh2022provably} and episodic cost constraints \citep{liu2021learning,bura2022dope,huang2022safe} have been investigated. Assuming a known safe baseline policy that satisfies the corresponding constraints, OptPess-LP \citep{liu2021learning} is shown to achieve a regret of $\tilde{O}(\frac{1}{\kappa}\sqrt{H^6S^3AK})$ without any constraint violation with high probability, while DOPE \citep{bura2022dope} improves the regret to $\tilde{O}(\frac{1}{\kappa}\sqrt{H^6S^2AK})$, where $\kappa$ denotes the Slater parameter. We note that both algorithms do not achieve the optimal regret in the unconstrained counterpart, due to the adopted linear programming-based approaches. 
Beyond tabular setting, CMDP has also been discussed in linear~\citep{ding2021provably,ghosh2022provably,amani2021safe,yang2021reduction} or low-rank models~\citep{huang2022safe}.  
Other formulations different from conservative exploration or CMDP, such as minimizing the variance of expected return \citep{10.5555/3042573.3042784} or generally, maximizing some utility function of state-action pairs \citep{ding2021provably}, have also been investigated.  
Lastly, \citet{yang2021accelerating} study constrained reinforcement learning with a baseline policy that may not satisfy the given set of constraints.




\if{0}
\citet{Yang:ICML:2020}  proposes an online RL algorithm, namely the MatrixRL, that leverages ideas from linear bandit to learn a low-dimensional representation of the probability transition model while carefully balancing the exploitation-exploration tradeoff. It shows that MatrixRL achieves a regret bound ${O}\big(H^2d\log T\sqrt{T}\big)$. 
\citet{wang2021optimism} proposes the USVI-UCB algorithm under a weaker optimistic closure assumption, which achieves an $\tilde{\mathcal{O}}(\sqrt{d^3H^3T})$ regret. 
This result is improved to $\tilde{\mathcal{O}}(d\sqrt{H^3T})$ in \citet{zanette2020learning}, which proposes another weaker assumption called low inherent Bellman error. Instance-dependent logarithmic regret bounds are established for linear MDPs in \citet{he2021logarithmic}. 
In addition, there is another related line of works focusing on linear mixture MDPs \citep{ayoub2020model,cai2020provably,zhou2021provably}. 
We note that those algorithms do not consider any conservative constraints into their formulation. 
\fi

\textbf{Safe Bandits.} Bandits problem is a standard RL problem where it interacts with a stationary environment, which reduces the difficulties of learning. Several constraints are considered in the bandits setting. The first is that the cumulative expected reward of an agent should exceed a certain threshold. This setting is originally studied in \citet{wu2016conservative}, which adopts an UCB type of exploration and checks whether the policy satisfies the conservative constraint. \citet{kazerouni2017conservative,garcelon2020improved,pacchiano2021stochastic} then extend the conservative setting to contextual linear bandits. The second constraint is much  stronger, as it requires that each arm played by the learning agent be safe given the baseline or the threshold. \citet{NEURIPS2019_09a8a897} and \citet{khezeli2020safe} both use an LCB type of algorithm to ensure the arms selected by the algorithms are safe under linear bandits setting. \citet{du2021one} consider conservative exploration with a sample-path constraint on the actual observed rewards rather than in expectation. 

\textbf{Policy Optimization.} This is another research direction in RL that utilizes baseline policies \citep{schulman2015trust}. However, the focus and assumptions of these papers are very different from this work. For example, \citet{zhong2021optimistic} and \citet{luo2021policy} focus on the non-stationary and adversary environments, respectively. While policy optimization can achieve sublinear regret under certain MDP models \citep{shani2020optimistic}, it usually lacks performance guarantees during the learning process, which is in stark contrast to our results.

\textbf{Other LCB Techniques.} 
We highlight the differences between the LCBs used in our online algorithms StepMix and EpsMix and two LCB techniques used in \citet{xie2021policy, xie2022armor}. First and foremost, {\citet{xie2021policy,xie2022armor} study offline RL problems.} They both impose a coverage assumption on the behavior policy in order to bound the estimation error of the pessimistic policy constructed from the offline dataset. On the other hand, our LCB has no such coverage assumption for the behavior policy, since the LCB constructed in our work is a lower bound of the value function under the online exploration policy $\pi^k$. 
Second, the reference value functions in this work are different than those in \citet{xie2021policy}, which constructs a reference function based on an LCB algorithm so that it can derive the pessimistic policy and utilize Bernstein's inequality. In our work, however, the reference functions are the true value functions of candidate policies, which are chosen from a series of policies mixed by the baseline policy and optimistic policy produced from a Bernstein-style UCB algorithm. 
Last but not the least, our LCB expression is more computationally efficient than that in \citet{xie2022armor}. 

\section{Notations}\label{appx:notation}
We list the notations of common quantities as follows.
\begin{table*}[!hbt]
    \label{table:notation}
    \begin{center}
    \adjustbox{max width= \linewidth}{
    \begin{tabular}{ccc}
    \Xhline{3\arrayrulewidth}
    Notation & Meaning & Definition \\
    \Xhline{2\arrayrulewidth}
    $r_h(s,a)$ & reward & - \\
    \midrule
    $P_h(s'|s,a)$ & transition probability & - \\
    \midrule
    $n_h^k(s,a)$ & visitation count & $\sum_{\tau=1}^{k-1} \mathds{1}\{s_h^\tau=s,a_h^\tau=a\}$ \\
    \midrule
    $n_h^k(s,a,s')$ & visitation-transition count & $\sum_{\tau=1}^{k-1} \mathds{1}\{s_h^\tau=s,a_h^\tau=a,s_{h+1}^\tau=s'\}$ \\
    \midrule
    $\hP_h^k(s'|s,a)$ & empirical estimate of transition probability & $\frac{n_h^k(s,a,s')}{n_h^k(s,a)}$ if $n_h^k(s,a)\ge 1$; $\frac{1}{S}$, otherwise \\
    \midrule
    $d_h^\pi(s,a)$ & occupancy measure under policy $\pi$ & $\mathbb{E}_{\pi}[\mathds{1}\{s_h=s,a_h=a\}]$\\
    \midrule
    $d_h^k(s,a)$ & occupancy measure under policy $\pi^k$ & $d^{\pi^k}_h(s,a)$\\
    \midrule
    $\bar{n}_h^k(s,a)$ & expected visitation count & $\sum_{\tau=1}^{k-1}d^\tau_h(s,a)$\\
    \midrule
    
    $Q_h^\pi(s,a)$ & true Q function & $\mathbb{E}_\pi[\sum_{i=h}^{H}r_i(s_i,a_i)|s_h=s,a_h=a]$\\
    \midrule
    $V_h^\pi(s)$ & true V function & $\mathbb{E}_\pi[\sum_{i=h}^{H}r_i(s_i,a_i)|s_h=s]$\\
    \midrule
    $\pi^\star$ & optimal policy & $\argmax_\pi V_1^\pi(s_1)$ \\
    \midrule
    $\pi^b$ & baseline policy & - \\
    \midrule
    $\pi^{\star,h_0}$ & step-wise optimal policy & $\{\pi_1^b,\cdots,\pi_{h_0}^b,\pi_{h_0+1}^\star,\cdots,\pi_H^\star\}$ \\
    \midrule
    $\bar{\pi}^k$ & global optimistic policy & constructed from \Cref{alg:step_remake} \\
    \midrule
    $\pi^{k,h_0}$ & step-wise optimistic policy & $\{\pi_1^b,\cdots,\pi_{h_0}^b,\bar{\pi}_{h_0+1}^k,\cdots,\bar{\pi}_{H}^k\}$ \\
    \midrule
    $Q^{k,h_0},V^{k,h_0},Q^{\star,h_0},V^{\star,h_0}$ & corresponding true value functions & $Q^{\pi^{k,h_0}},V^{\pi^{k,h_0}},Q^{\pi^{\star,h_0}},V^{\pi^{\star,h_0}}$ \\
    \midrule
    $\tilde{Q}^{k,h_0}, \tilde{V}^{k,h_0}$ & Upper Confidence Bounds & defined in \Cref{eqn:ucbQ,eqn:ucbV} \\
    \midrule
    $\utilde{Q}^{k,h_0}, \utilde{V}^{k,h_0}$ & Lower Confidence Bounds & defined in \Cref{eqn:lcbQ,eqn:lcbV} \\
    \midrule
    $G^{k,h_0}$ & G function & defined in \Cref{eqn:G} \\
    \midrule
    $\beta(n,\delta)$ & logarithm term involved in $\mathcal{E}$ & $\log(SAH/\delta) + S\log(8e(n+1))$\\
    \midrule
    $\beta^{\text{cnt}}(\delta)$ & logarithm term involved in $\mathcal{E}^\text{cnt}$ & $\log(SAH/\delta)$\\
    \midrule
    $\beta^\star(n,\delta)$ & logarithm term involved in $\mathcal{E}^\star$ & $\log(SAH/\delta) + \log(8e(n+1))$\\
    \Xhline{3\arrayrulewidth}
    \end{tabular}
    }
    \end{center}
\end{table*}
 
We also adopt the following $\min,\max$ notations:
    \[a \wedge b = \min(a,b), a \vee b = \max(a,b).\]
For any given policy $\pi$ and $Q$ function, we denote
    \[\pi_h Q_h(s)=\langle \pi_h(\cdot|s), Q_h(s,\cdot)\rangle=\sum_{a\in\mathcal{A}}\pi_h(a|s)Q_h(s,a).\]
For any given transition kernel $P_h$ and value function $V_{h+1}$, we define the variance of $P_h V_{h+1}(s,a)$ as follows.
    \[\Var_{P_h}(V_{h+1})(s,a)=\mathbb{E}_{s'\sim P_h(\cdot|s,a)}[(V_{h+1}(s')-\mathbb{E}_{s'\sim P_h(\cdot|s,a)}[V_{h+1}(s')])^2].\]

At last, we introduce three types of good events and their notations that will be intensively used in the following proofs.

The first type of good events characterizes the connection between the true visitation counts and the expected visitation counts:
\begin{align*}
    \mathcal{E}^{\text{cnt}}(\delta)  \triangleq \left\{\forall k\in [K], \forall h\in[H], \forall(s,a)\in\mathcal{S}\times\mathcal{A}: n_h^k(s,a)\ge \frac{1}{2}\bar{n}^k_h(s,a)-\beta^{\text{cnt}}(\delta) \right\},
\end{align*}
where $\beta^{\text{cnt}} = \log(SAH/\delta)$, $n_h^k(s,a)$ denotes the number of visitations of state-action pair $(s,a)$ and $\bar{n}_h^k(s,a)$ denotes the expected visitation count.

The second type of good events, defined as follows, upper bounds the KL divergence between the estimated transition distribution and the true transition distribution.
\begin{align*}
    \mathcal{E}(\delta) \triangleq \left\{\forall k\in [K], \forall h\in[H], \forall(s,a)\in\mathcal{S}\times\mathcal{A}: \mbox{KL}(\hP_h^k(\cdot|s,a),P_h(\cdot|s,a))\le \frac{\beta(n^k_h(s,a),\delta)}{n^k_h(s,a)} \right\},
\end{align*}
where $\beta(n,\delta) = \log(SAH/\delta) + S\log(8e(n+1))$.

The third type of good events provides a Bernstein-style concentration guarantee, defined as follows.
\begin{align*}
    \mathcal{E}^\star(V,\delta)& \triangleq \biggr\{\forall k\in [K], \forall h\in[H], \forall(s,a)\in\mathcal{S}\times\mathcal{A}, \in [H]\cup\{0\}: \left|(\hP_h^k-P_h)V_{h+1}(s,a)\right|\le\\ 
    & \quad \quad \min\bigg\{H, \sqrt{2\Var_{P_h}(V_{h+1})(s,a) \frac{\beta^\star(n^k_h(s,a), \delta)} {n^k_h(s,a)}} + 3H\frac{\beta^\star(n^k_h(s,a), \delta)}{n^k_h(s,a)} \bigg\} \biggr\}, 
\end{align*}
where $\beta^\star(n,\delta) = \log(SAH/\delta) + \log(8e(n+1))$ and $V$ is a value function independent with $\hP_h^k$ and bounded by $H$. Later, $V$ will be chosen separately in StepMix or EpsMix.

\if{0}
\begin{equation*}
\begin{aligned}
    \text{true visitation count: } & n_h^k(s,a) = \sum_{\tau=1}^{k-1} \mathbf{1}(s_h^\tau=s,a_h^\tau=a) \\
    \text{expected visitation count: } & \bar{n}_h^k(s,a) = \sum_{\tau=1}^{k-1} d_h^{\pi^\tau}(s,a) \\
    \text{estimated P: } & \hP_h^k(s'|s,a) = \frac{\sum_{\tau=1}^{k-1}  \mathbf{1}(s_h^\tau=s,a_h^\tau=a,s_{h+1}^\tau=s')}{n_h^k(s,a)} \\
    & \beta(n,\delta) = \log(3SAH/\delta) + S\log(8e(n+1)) \\
    & \beta^\star(n,\delta) = \log(3SAH/\delta) + \log(8e(n+1)) \\
    \text{true Q function: } & Q^\pi_h(s,a) = r_h(s,a) + (P_h V_{h+1}^\pi)(s,a) \\
    \text{true V function: } & V^\pi_h(s,a) = (\pi_h Q_{h}^\pi)(s) \\
    \text{optimal policy and optimal Q-function: } & Q^\star = Q^{\pi^\star} \ge Q^{\pi}, \forall \pi \\
    h_0 \text{ step-wise optimal policy: } &\pi^{\star, h_0}=\{\pi^b_1,\pi^b_2,\cdots,\pi^b_{h_0},\pi^\star_{h_0+1},\pi^\star_{h_0+2},\cdots,\pi^\star_{H}\} \\
    \text{short note: } &\pi^{\star,0} = \pi^\star,\vspace{5pt} \pi^{\star,H} = \pi^b\\
    \text{step-wise optimal Q-function: } & Q^{\star,h_0} = Q^{\pi^{\star,h_0}} \\
\end{aligned}
\end{equation*}

Other notations will be listed, but will not be explained in detail here.
\begin{align*}
    \text{upper bound of step-wise optimal functions: } & \tilde{Q},\tilde{V} \\
    \text{lower bound of step-wise optimal functions: } & \utilde{Q},\utilde{V} \\
    \text{estimated step-wise optimal policy: } & \pi^{k,h_0}_h \\
    \text{lower bound of Q/V-functions of estimated step-wise optimal policy: } & \mathring{Q},\mathring{V}
\end{align*}

\fi

\section{Algorithm Design and Analysis of StepMix}
\label{appx:proof_step}
We first recall the StepMix algorithm (\cref{alg:step_remake}) and provide the PolicyEva subroutine in \cref{alg:policyEva}.

\begin{algorithm}[!htb]
    \caption{The StepMix Algorithm}
    \label{alg:step_remake}
\begin{algorithmic}
\STATE {\bf Input:} $\pi^b$, $\gamma$, $\beta$, $\beta^\star$, $\Dc_0=\emptyset$.
\FOR{$k$ = $1$ to $K$}
    \STATE Update model estimate $\hP$ according to \Cref{eqn:p}.
\STATE \textcolor{blue}{\it \# Optimistic policy identification}

$\tilde{V}_{H+1}^{k}(s)=\utilde{V}_{H+1}^{k}(s)=0,\forall s\in \Sc$.
    \FOR{$h$ = $H$ to $1$}
        \STATE Update $\tilde{Q}_{h}^{k}(s,a)$, $\utilde{Q}_{h}^{k}(s,a), \forall (s,a)\in \Sc\times \Ac$ according to \Cref{eqn:q}.
        \STATE $\gp_h^{k}(s)\gets\argmax_a \tilde{Q}_{h}^{k}(s,a)$, $\tilde{V}^{k}_h(s)\gets{\tilde{Q}^{k}_h(s,\gp_h^k(s))}$, $\utilde{V}^{k}_h(s)\gets \utilde{Q}^{k}_h(s,\gp_h^k(s)),\forall s\in\Sc$.
    \ENDFOR
       \STATE \textcolor{blue}{\it \# Candidate policy construction and evaluation}
    \FOR{$h_0$ = $0$ to $H$}
        \STATE $\pi^{k,h_0} = \{\pi_1^b,\pi_2^b,\cdots,\pi_{h_0}^b,\gp_{h_0+1}^{k},\cdots, \gp_{H-1}^{k}, \gp_{H}^{k}\}$.
       \STATE $\utilde{V}^{k,h_0}=\mbox{PolicyEva}(\hP^k,\pi^{k,h_0})$.
    \ENDFOR
         \STATE  \textcolor{blue}{\it  \# Safe exploration policy selection}
    \IF{$\{h\,|\,\utilde{V}_1^{k,h} \ge \gamma, h=0,1,\ldots,H\}=\emptyset$}
        \STATE $\pi^k = \pi^b$. 
    \ELSE
        \STATE $h^k = \min \{h\,|\,\utilde{V}_1^{k,h} \ge \gamma, h=0,1,\ldots,H\}$.
        \IF{$h^k=0$}
            \STATE $\pi^k = \gp^k$. 
            \ELSE
            \STATE Set $\pi^k$ according to \Cref{eqn:stepmix2}.
        \ENDIF
    \ENDIF
    \STATE Execute $\pi^k$ and collect $\{(s_h^k,a_h^k,s_{h+1}^k)\}_{h=1}^H$.
\STATE $\Dc_n\gets \Dc_{n-1}\cup\{(s_h^k,a_h^k,s_{h+1}^k)\}_{h=1}^H$.
\ENDFOR
\end{algorithmic}
\end{algorithm}

\begin{algorithm}[!htb]
    \caption{PolicyEva Subroutine}
    \label{alg:policyEva}
\begin{algorithmic}
    \STATE {\bf Input:} $\hP^k$, $\pi$
    \STATE {\bf Initialization:} Set $\tilde{V}_{H+1}^k(s)$ and $\utilde{V}_{H+1}^k(s)$ to be $0$ for any $s\in \Sc$.
    \FOR{$h$ = $H$ to $1$}
        \STATE Update $\tilde{Q}_{h}^{k}(s,a)$, $\utilde{Q}_{h}^k(s,a), \forall (s,a)\in \Sc\times \Ac$:
        \begin{align*}
   & \scriptstyle \tilde{Q}^{k}_h(s,a)\triangleq  \min\bigg(H, r_h(s,a)+3\sqrt{\Var_{\hP_h^k}(\tilde{V}_{h+1}^{k})(s,a)\frac{\beta^\star}{n^k_h(s,a)}} + 14 H^2\frac{\beta}{n^k_h(s,a)}+\frac{1}{H}\hP_h^k(\tilde{V}^{k}_{h+1}-\utilde{V}^{k}_{h+1})(s,a)+\hP_h^k\tilde{V}^{k}_{h+1}(s,a)\bigg)\\
  & \scriptstyle \utilde{Q}^{k}_h(s,a)\triangleq  \max\bigg(0, r_h(s,a)-3\sqrt{\Var_{\hP_h^k}(\tilde{V}_{h+1}^{k})(s,a)\frac{\beta^\star}{n^k_h(s,a)}} - 22 H^2\frac{\beta}{n^k_h(s,a)}-\frac{2}{H}\hP_h^k(\tilde{V}^{k}_{h+1}-\utilde{V}^{k}_{h+1})(s,a)+\hP_h^k\utilde{V}^{k}_{h+1}(s,a)\bigg)   
\end{align*}
        \STATE $\tilde{V}^{k}_h(s)\gets\langle\tilde{Q}^{k}_h(s,\cdot),\pi_h(\cdot|s)\rangle$, $\utilde{V}^{k}_h(s)\gets \langle\utilde{Q}^{k}_h(s,\cdot),\pi_h(\cdot|s)\rangle,\forall s\in\Sc$.
    \ENDFOR
    \STATE {\bf Output:} $\utilde{V}_1$
\end{algorithmic}
\end{algorithm}

Based on the construction of $\pi^{k,h_0}$ in \Cref{alg:step_remake}, we define the following Q-value functions and value functions.
\begin{align}
    &\begin{aligned}
        \tilde{Q}^{k,h_0}_h(s,a)\triangleq & \min\Biggr(H, r_h(s,a)+3\sqrt{\Var_{\hP_h^k}(\tilde{V}_{h+1}^{k,h_0})(s,a)\frac{\beta^\star(n_h^k(s,a),\delta')}{n^k_h(s,a)}} + 14 H^2\frac{\beta(n_h^k(s,a),\delta')}{n^k_h(s,a)}\\
        &+\frac{1}{H}\hP_h^k(\tilde{V}^{k,h_0}_{h+1}-\utilde{V}^{k,h_0}_{h+1})(s,a)+\hP_h^k\tilde{V}^{k,h}_{h+1}(s,a)\Biggr)
    \end{aligned}\label{eqn:ucbQ}\\
    &\begin{aligned}
        \utilde{Q}^{k,h_0}_h(s,a)\triangleq & \max\Biggr(0, r_h(s,a)-3\sqrt{\Var_{\hP_h^k}(\tilde{V}_{h+1}^{k,h_0})(s,a)\frac{\beta^\star(n_h^k(s,a),\delta')}{n^k_h(s,a)}} - 22 H^2\frac{\beta(n_h^k(s,a),\delta')}{n^k_h(s,a)} \\
        &-\frac{2}{H}\hP_h^k(\tilde{V}^{k,h_0}_{h+1}-\utilde{V}^{k,h_0}_{h+1})(s,a)+\hP_h^k\utilde{V}^{k,h_0}_{h+1}(s,a)\Biggr)
    \end{aligned}\label{eqn:lcbQ}\\
    &\tilde{V}^{k,h_0}_h(s)\triangleq  \langle \pi_h^{k,h_0}(\cdot|s) ,\tilde{Q}^{k,h_0}_h(s,\cdot) \rangle \label{eqn:ucbV}\\
    &\utilde{V}^{k,h_0}_h(s) \triangleq  \langle \pi_h^{k,h_0}(\cdot|s) ,\utilde{Q}^{k,h_0}_h(s,\cdot) \rangle .\label{eqn:lcbV}
\end{align}
We point out that, since the definitions of \Cref{eqn:ucbQ,eqn:lcbQ} are the same as \Cref{eqn:q} used in \Cref{alg:step_remake},  $\utilde{V}^{k,h_0}$ defined in \Cref{eqn:lcbV} is consistent with the same quantity used in \Cref{alg:step_remake}. Moreover, when $h>h_0$,  we have $\pi_h^{k,h_0}=\gp_h^k$, which implies that $\tilde{Q}_h^{k,h_0}=\tilde{Q}_h^{k}$ and $\pi_h^{k,h_0}(s)=\argmax_a Q^{k,h_0}_h(s,a)$.

Before proceeding to the formal proof, we outline its major steps as follows.

{\bf Step one:} In \Cref{appx:steplemmas}, we verify that the good events happen with high probability and introduce the linearity of the ocupancy measure and the value function of the step-mix policies.

{\bf Step two:} In \Cref{appx:stepConfidence}, we prove that  $\tilde{Q}_h^{k,h_0}$ and  $\utilde{Q}_h^{k,h_0}$ are valid UCB and LCB of the true Q-value functions of both the step-wise optimal policies $\pi^{\star,h_0}$ and the step-wise optimistic policies $\pi^{k,h_0}$, respectively, and provide a bound for the gap between $\tilde{Q}_h^{k,h_0}$ and $\utilde{Q}_h^{k,h_0}$.

{\bf Step three:} In \Cref{appx:stepfinite}, we leverage the bound for the gap between $\tilde{Q}_h^{k,h_0}$ and $\utilde{Q}_h^{k,h_0}$ to  show a sublinear ``weak'' regret of the online policies $\pi^k$, where the regret is defined in terms of the performance difference $V^{\pi^b} - V^{\pi^k}$.  Hence, we can prove that there are only finite episodes in which the executed policy is not equal to the optimistic policy (finite non-optimistic policy lemma; cf. \Cref{lem:finite}).

{\bf Step four:} In \Cref{appx:stepthm}, based on the bound of the gap between $\tilde{Q}_h^{k,h_0}$ and $\utilde{Q}_h^{k,h_0}$ and the finite non-optimistic policy lemma, we prove the regret stated in \Cref{thm:step}.

\subsection{Step One: Good Events and Basic Properties of Step Mixture Policies}
\label{appx:steplemmas}
We first prove the following lemma which shows that the good events  defined in \cref{appx:notation} occur with high probability.
\begin{lemma}[Good Events]\label{lem:stepgood} Let $\mathcal{E},\mathcal{E}^{\text{cnt}}$, and $\mathcal{E}^{\star}$ be the events defined in \Cref{appx:notation}. Then, under \Cref{alg:step}, with probability at least $1 - \delta$, the following good events occur simultaneously:
    \[\mathcal{E}\bigg(\frac{\delta}{3}\bigg), \mathcal{E}^{\text{cnt}}\bigg(\frac{\delta}{3} \bigg),\mathcal{E}^\star\bigg(V^{\star,h_0},\frac{\delta}{3(H+1)} \bigg),\forall h_0\in[H]\cup\{0\}.\]
\end{lemma}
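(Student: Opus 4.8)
The plan is to treat the lemma as a three‑way union bound over standard anytime‑valid martingale concentrations, splitting the total failure budget $\delta$ evenly into three pieces of $\delta/3$. The only extra bookkeeping beyond the individual concentrations is that the Bernstein‑type event must hold simultaneously for all $H+1$ anchor value functions $V^{\star,h_0}$, $h_0\in[H]\cup\{0\}$; so I would subdivide its $\delta/3$ budget into $H+1$ equal pieces of $\delta/(3(H+1))$, which is exactly the parameterization appearing in the statement. Once each of the three families fails with probability at most $\delta/3$, a final union bound gives that all good events hold simultaneously with probability at least $1-\delta$.

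For $\mathcal{E}^{\text{cnt}}(\delta/3)$, I would fix $(s,a,h)$ and observe that $n_h^k(s,a)=\sum_{\tau=1}^{k-1}\mathds{1}\{s_h^\tau=s,a_h^\tau=a\}$ is a sum of Bernoulli increments whose conditional means given the history $\mathcal{F}_{\tau-1}$ are exactly the occupancy measures $d_h^\tau(s,a)$, so that the predictable compensator telescopes to $\bar n_h^k(s,a)$. A multiplicative (Bernstein‑type) martingale inequality then yields $n_h^k(s,a)\ge\tfrac12\bar n_h^k(s,a)-\log(1/\delta')$ uniformly in $k$, with no $\log n$ penalty; taking $\delta'=\delta/(3SAH)$ and union‑bounding over the $SAH$ triples absorbs the extra $\log(SAH)$ into $\beta^{\text{cnt}}(\delta/3)$. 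For $\mathcal{E}(\delta/3)$, I would invoke the time‑uniform KL concentration for empirical multinomials (the Laplace / method‑of‑mixtures bound), which guarantees $n_h^k(s,a)\,\mathrm{KL}(\hat{P}_h^k(\cdot|s,a),P_h(\cdot|s,a))\le\log(1/\delta')+S\log(8e(n_h^k(s,a)+1))$ for all $k$ simultaneously with probability $1-\delta'$; the $S\log(8e(n+1))$ term is precisely the cost of covering the $S$‑dimensional simplex uniformly over the sample size. Again $\delta'=\delta/(3SAH)$ with a union bound over $(s,a,h)$ reproduces $\beta(n,\delta/3)$.

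The delicate step, and the reason the anchor functions are introduced at all, is $\mathcal{E}^\star$. The crucial observation is that each $V^{\star,h_0}$ is a \emph{deterministic} function of the fixed MDP — it depends only on $\pi^\star$ and $\pi^b$, not on the collected trajectories — and is therefore independent of $\hat{P}_h^k$. This independence lets me write $(\hat{P}_h^k-P_h)V^{\star,h_0}_{h+1}(s,a)$ as a normalized sum of bounded martingale differences with conditional variance $\Var_{P_h}(V^{\star,h_0}_{h+1})(s,a)$, to which a time‑uniform empirical Bernstein (Freedman‑type) inequality applies, giving the bound $\sqrt{2\Var_{P_h}(V^{\star,h_0}_{h+1})(s,a)\,\beta^\star/n}+3H\beta^\star/n$ with the peeling term $\log(8e(n+1))$ carried inside $\beta^\star$. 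I would run this for each fixed $h_0$ at level $\delta/(3(H+1))$, union over $(s,a,h)$ inside $\beta^\star$, and finally union over the $H+1$ values of $h_0$ to spend exactly $\delta/3$.

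The assembly of these three pieces is then a routine union bound, so the main obstacle lies entirely in the underlying anytime‑valid inequalities — chiefly the multinomial KL bound and the time‑uniform Bernstein — which I would import from the prior tabular‑RL literature (e.g.\ the concentration toolkit of \citet{menard2021fast}) rather than reprove. I would emphasize in the write‑up that the independence of $V^{\star,h_0}$ from the data is what makes the Bernstein event admissible at all; applying the same inequality to the data‑dependent estimates $\tilde{V}^k$ would instead require a covering or uniform‑concentration argument, which is deliberately avoided here by anchoring at the fixed step‑wise optimal value functions.
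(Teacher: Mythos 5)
Your proposal is correct and follows essentially the same route as the paper: the paper's proof simply invokes the three imported concentration results (the time-uniform multinomial KL bound, the martingale counting bound, and the time-uniform Bernstein inequality applied to each fixed anchor $V^{\star,h_0}$ at level $\delta/(3(H+1))$) and combines them with a union bound, exactly as you describe. Your added emphasis on the data-independence of $V^{\star,h_0}$ being what licenses the Bernstein event is implicit in the paper's definition of $\mathcal{E}^\star$ and is a correct reading, not a deviation.
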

\begin{proof}
From \Cref{thm:goodevent1}, \Cref{thm:goodevent2} and \Cref{thm:goodevent3}, we have $\mathcal{E}(\frac{\delta}{3})$ , $\mathcal{E}^\text{cnt}(\frac{\delta}{3})$ and  $\underset{\small h_0\in [H]\cup \{0\}}{\cap} \mathcal{E}^\star(V^{\star,h_0},\frac{\delta}{3(H+1)})$ occur with probability at least $1 - \delta/3$, respectively. 
    Then, by taking a union bound, all those good events occur simultaneously with probability at least $1 - \delta$.
\end{proof}

Then, we provide several useful lemmas that capture the favorable properties of the step mixture policies $\pi^{k,h_0}$ and step-wise optimal policies $\pi^{*,h_0}$. 

The following lemma establishes the optimality of policy $\pi^{\star,h_0}$.
\begin{lemma}[Optimality of Step-wise Optimal Policies]
    \label{lem:stepoptimal}
    Define $\Pi_{h_0}:=\{\pi|\pi_h=\pi_h^b,\forall h\le h_0\}$. Then, 
    for any $\pi\in\Pi_{h_0}$, we must have
    \[Q_h^{\star, h_0}(s,a)\ge Q_h^{\pi}(s,a),\]
    \[V_h^{\star, h_0}(s)\ge V_h^{\pi}(s).\]
\end{lemma}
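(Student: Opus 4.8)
The plan is to prove both inequalities simultaneously by backward induction on $h$, exploiting the fact that $\pi^{\star,h_0}$ and any $\pi\in\Pi_{h_0}$ share identical first $h_0$ steps (both equal to $\pi^b$) and differ only on the tail $\{h_0+1,\ldots,H\}$, where $\pi^{\star,h_0}$ plays the globally optimal $\pi^\star$. The argument therefore splits into two regimes according to whether $h$ lies above or below the switching point $h_0$.

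First I would dispose of the tail regime $h\ge h_0+1$. Since $V_h^\pi$ and $Q_h^\pi$ depend only on the policy components $\pi_h,\ldots,\pi_H$, and since $\pi^{\star,h_0}$ coincides with $\pi^\star$ on exactly this range, we have $V_h^{\star,h_0}=V_h^\star$ and $Q_h^{\star,h_0}=Q_h^\star$ for every $h\ge h_0+1$. By optimality of $\pi^\star$, i.e.\ $V_h^\star(s)=\sup_{\pi}V_h^\pi(s)$ and the corresponding statement for $Q_h^\star$, these dominate the quantities of every policy, in particular of every $\pi\in\Pi_{h_0}$ (which is merely a subset of all policies). This settles the claim for all $h\ge h_0+1$ without even needing an inductive hypothesis, and in particular yields the seed inequality $V_{h_0+1}^{\star,h_0}(s)\ge V_{h_0+1}^\pi(s)$ for all $s$.

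Next I would run the induction downward over $h=h_0,h_0-1,\ldots,1$. Assume $V_{h+1}^{\star,h_0}(s')\ge V_{h+1}^\pi(s')$ for all $s'$ and all $\pi\in\Pi_{h_0}$. Using the Bellman relation $Q_h^\pi(s,a)=r_h(s,a)+[P_h V_{h+1}^\pi](s,a)$ together with the monotonicity of the expectation operator $[P_h\,\cdot\,](s,a)$ (which integrates against a fixed probability measure), the hypothesis gives $Q_h^{\star,h_0}(s,a)\ge Q_h^\pi(s,a)$ for every $(s,a)$. For the value function I would use that, because $h\le h_0$, both $\pi^{\star,h_0}$ and $\pi$ act according to $\pi_h^b$ at step $h$; hence $V_h^{\star,h_0}(s)=\sum_a \pi_h^b(a|s)\,Q_h^{\star,h_0}(s,a)$ and $V_h^\pi(s)=\sum_a\pi_h^b(a|s)\,Q_h^\pi(s,a)$ are averages of the $Q$-values against the \emph{same} nonnegative weights, so the pointwise $Q$-inequality transfers to a $V$-inequality. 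This closes the induction and establishes both claims for all $h$.

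I do not anticipate a genuine obstacle: the result is essentially the assertion that $\pi^\star$ remains optimal over the ``restricted horizon'' once the prefix $h\le h_0$ is frozen to $\pi^b$, and the proof is a routine two-regime backward induction. The only point deserving mild care is the transition at $h=h_0$, where one must invoke the tail-regime identity $V_{h_0+1}^{\star,h_0}=V_{h_0+1}^\star$ to seed the recursion, relying on the observation that $V_h$ is determined solely by the policy from step $h$ onward, so that freezing the prefix does not disturb the optimality already established on the suffix.
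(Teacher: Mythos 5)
Your proof is correct, but it takes a different route from the paper. The paper's proof is a one-shot application of the performance difference lemma of Kakade and Langford: it writes $V_h^{\pi}(s) - V_h^{\star,h_0}(s) = \sum_{m=h}^H\mathbb{E}_{\pi}\bigl[ Q^{\star,h_0}_m(s_m,a_m)-V^{\star,h_0}_m(s_m)\,\big|\, s_h=s\bigr]$ and then argues each expected advantage term is nonpositive --- it vanishes for $m\le h_0$ because both policies play $\pi^b_m$ there, and it is $\le 0$ for $m> h_0$ because $Q^{\star,h_0}_m=Q^\star_m$ and $\pi^\star_m$ is greedy with respect to $Q^\star_m$. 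You instead split into the same two regimes but handle them by direct means: on the tail $h\ge h_0+1$ you identify $V_h^{\star,h_0}=V_h^\star$ and $Q_h^{\star,h_0}=Q_h^\star$ (value functions depend only on the policy suffix) and invoke global optimality of $\pi^\star$; on the prefix $h\le h_0$ you run an explicit backward induction through the Bellman equation, using that both policies average their $Q$-values against the same weights $\pi^b_h(\cdot|s)$. Your argument is more elementary and self-contained --- it requires no external lemma and makes transparent the useful identity $V_h^{\star,h_0}=V_h^\star$ on the tail --- whereas the paper's advantage decomposition is more compact and is the same tool reused elsewhere in such analyses (it degrades gracefully if the prefix policies were only approximately equal, where the advantage terms would pick up the error). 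One point you handled correctly that deserves emphasis: the claim $V_h^\star(s)=\sup_\pi V_h^\pi(s)$ simultaneously for all $(h,s)$ is exactly what the paper assumes of $\pi^\star$, so your tail-regime domination step is legitimate, and your induction is properly seeded at $h=h_0$ by the tail identity.
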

\begin{proof}
    Using the performance difference lemma  \citep{kakade2002approximately}, for any $\pi\in\Pi_{h_0}$ we have
    \[V_h^{\pi}(s) - V_h^{\star,h_0}(s) = \sum_{m=h}^H\mathbb{E}_{\pi}[ Q^{\star,h_0}_m(s_m,a_m)-V^{\star,h_0}_m(s_m)|s_h=s].\]
    
  For the case where $h>h_0$, $\pi^{\star,h_0}_{h}=\pi^\star_{h}$, and thus   $Q_h^{\star,h_0}(s,a)=Q^\star_h(s,a),\forall h\ge h_0,\forall s,a$. In addition, $\pi^\star(s)=\arg\max_a Q^\star(s,a)$. Thus, $\mathbb{E}_{\pi}[Q^{\star,h_0}_h(s_h,a_h)-V^{\star,h_0}(s_h)]\le 0, \forall h\ge h_0$. 
    
    For the case where $h\le h_0$, we have $\pi_h=\pi^{\star,h_0}_h=\pi_h^b$, $\mathbb{E}_{\pi}[Q^{\star,h_0}_h(s_h,a_h)-V^{\star,h_0}(s_h)]= 0, \forall h < h_0$.
    
    Combining the results for both cases, we have
    \[V_h^{\pi}(s) - V_h^{\star,h_0}(s) = \sum_{m=h}^H\mathbb{E}_{\pi}[ Q^{\star,h_0}_m(s_m,a_m)-V^{\star,h_0}_m(s_m)|s_h=s]\le 0.\]
    
    Following the same argument, we can prove that $Q_h^{\star, h_0}(s,a)\ge Q_h^{\pi}(s,a)$.
\end{proof}

The next lemma characterizes the property of the step mixture policies obtained by mixing two policies that are one-step different.
\begin{lemma}
    \label{lem:stepvisitratio}
    If $\pi=\rho\pi^1+\rho\pi^2$, where $\pi^1$ and $\pi^2$ differ only at step $h_0$,
    and let $d_h^1(s,a)$ and $d_h^2(s,a)$ be the {occupancy measure} of $\pi^1$ and $\pi^2$. Then we have \[d_h^{\pi}(s,a)=\rho d_h^1(s,a) + (1-\rho) d_h^2(s,a),\]
    where $d_h^{\pi}(s,a)=\mathbb{E}_{\pi}[\mathds{1}\{s_h=s,a_h=a\}]$ is the {occupancy measure} under policy $\pi$.
\end{lemma}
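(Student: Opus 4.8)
The plan is to track how the state marginal of the mixture policy relates to those of $\pi^1$ and $\pi^2$, and to exploit the one-step-difference assumption to rule out the cross terms that would otherwise break linearity. Write $\mu_h^\pi(s) := \Pb_\pi[s_h = s]$ for the state marginal at step $h$, so that the occupancy measure factorizes as $d_h^\pi(s,a) = \mu_h^\pi(s)\,\pi_h(a|s)$, and recall the forward recursion $\mu_{h+1}^\pi(s') = \sum_{s,a} d_h^\pi(s,a) P_h(s'|s,a)$, which is linear in $d_h^\pi$. By \Cref{def:stepmix}, the mixture $\pi$ agrees with $\pi^1$ and $\pi^2$ at every step $h \ne h_0$, and plays the mixed action law $\rho\pi_{h_0}^1(\cdot|s)+(1-\rho)\pi_{h_0}^2(\cdot|s)$ at step $h_0$.

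First I would dispose of the steps up to $h_0$. Since $\pi^1,\pi^2,\pi$ use identical action laws at every step strictly before $h_0$, they induce identical trajectory distributions up to step $h_0$; hence $\mu_h^1 = \mu_h^2 = \mu_h^\pi$ for all $h \le h_0$, and moreover $d_h^1 = d_h^2 = d_h^\pi$ for $h < h_0$, so the claimed identity holds trivially there. At step $h_0$ the shared state marginal $\mu_{h_0} := \mu_{h_0}^\pi = \mu_{h_0}^1 = \mu_{h_0}^2$ factors out, and linearity of the mixed action law gives
\[d_{h_0}^\pi(s,a) = \mu_{h_0}(s)\big[\rho\pi_{h_0}^1(a|s)+(1-\rho)\pi_{h_0}^2(a|s)\big] = \rho\, d_{h_0}^1(s,a) + (1-\rho) d_{h_0}^2(s,a).\]

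For the steps after $h_0$ I would induct on $h$, proving the stronger state-marginal identity $\mu_h^\pi = \rho\mu_h^1 + (1-\rho)\mu_h^2$ for all $h > h_0$. The base case $h = h_0+1$ follows by plugging the step-$h_0$ identity into the linear forward recursion. For the inductive step, observe that at every $h > h_0$ the three policies share the same action law $\pi_h$, so $d_h^\pi(s,a) = \mu_h^\pi(s)\pi_h(a|s)$ and likewise for $\pi^1,\pi^2$ with the \emph{same} $\pi_h$; the inductive hypothesis on $\mu_h$ then transfers directly to $d_h^\pi = \rho d_h^1 + (1-\rho)d_h^2$, and one further application of the forward recursion propagates linearity to $\mu_{h+1}$. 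Combining the three regimes yields the claim for all $h$.

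The step I expect to require the most care is precisely the point of the lemma: the factorization $d_h^\pi = \mu_h^\pi\cdot\pi_h$ is bilinear, so a naive combination would produce cross terms $\mu_h^1\pi^2$ and $\mu_h^2\pi^1$. The one-step-difference hypothesis is exactly what eliminates them --- before $h_0$ the state marginals coincide so only the action law is mixed, and after $h_0$ the action laws coincide so only the state marginal is mixed, and the two never overlap within a single step. I would make sure the write-up flags this as the sole place the assumption is used, since the identity genuinely fails for policies that differ at two or more steps.
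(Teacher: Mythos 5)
Your proof is correct and follows essentially the same route as the paper's: a case split at $h_0$ (identical occupancy measures before $h_0$, mixing of the action law at $h_0$, and forward induction through the transition recursion afterwards). Phrasing the induction in terms of the state marginals $\mu_h$ rather than directly on $d_h$ is only a cosmetic repackaging of the paper's recursion $d_h^\pi(s,a)=\sum_{s',a'}\pi_h(a|s)P_{h-1}(s|s',a')d_{h-1}^\pi(s',a')$, so no substantive difference remains.
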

\begin{proof}
    Based on the definition of $\pi$, we consider the following possible cases. 
    
    When $h< h_0$, we have $\pi_h^1=\pi_h^2=\pi_h$. Thus the corresponding occupancy measures should also be the same, i.e.,
    \[d_h^1(s,a)=d_h^2(s,a)=d_h^\pi(s,a)=\rho d_h^1(s,a)+(1-\rho)d_h^2(s,a).\]
    
    When $h=h_0$, we have $\pi_{h_0}=\rho\pi_{h_0}^1 + (1 - \rho)\pi_{h_0}^2$. Using the fact that $d_{h_0-1}^1(s,a)=d_{h_0-1}^2(s,a)=d_{h_0-1}^\pi(s,a)$, we have
    \begin{align*}
        d_{h_0}^\pi(s,a) = & \sum_{s',a'}\pi_{h_0}(a|s) P_{h_0-1}(s|s',a') d_{h_0-1}^\pi(s',a') \\
        = & \sum_{s',a'}(\rho\pi_{h_0}^1(a|s)+(1-\rho)\pi_{h_0}^2(a|s)) P_{h_0-1}(s|s',a') d_{h_0-1}^\pi(s',a') \\
        = & \rho\sum_{s',a'}\pi_{h_0}^1(a|s) P_{h_0-1}(s|s',a') d_{h_0-1}^{1}(s',a')+(1-\rho)\sum_{s',a'}\pi_{h_0}^2(a|s) P_{h_0-1}(s|s',a') d_{h_0-1}^{2}(s',a') \\
        = & \rho d_{h_0}^1(s,a) + (1-\rho) d_{h_0}^2(s,a).
    \end{align*}
    
    When $h>h_0$, we again have $\pi_h^1=\pi_h^2=\pi_h$. We then prove the equality through induction. Assume $d_{h-1}^\pi(s,a)=\rho d_{h-1}^1(s,a)+(1-\rho) d_{h-1}^2(s,a),\forall h-1\ge h_0$, which holds when $h-1=h_0$ based on the analysis above. Then, 
    \begin{align*}
        d_{h}(s,a) = & \sum_{s',a'}\pi_{h}(a|s) P_{h-1}(s|s',a') d_{h-1}^\pi(s',a') \\
        = & \sum_{s',a'}\pi_{h}(a|s) P_{h-1}(s|s',a') (\rho d_{h-1}^1(s,a)+(1-\rho) d_{h-1}^2(s,a)) \\
        = & \rho\sum_{s',a'}\pi_{h}^1(a|s) P_{h-1}(s|s',a')d_{h-1}^1(s,a) + (1-\rho)\sum_{s',a'}\pi_{h}^2(a|s) P_{h-1}(s|s',a')d_{h-1}^2(s,a) \\
        = & \rho d_{h}^1(s,a)+(1-\rho) d_{h}^2(s,a),
    \end{align*}
    which completes the proof.
\end{proof}
The above lemma shows that the {occupancy measure} of a step mixture policy obtained by mixing two policies that are one-step different is a linear combination of the {occupancy measures} of the corresponding policies. Such linearity also holds for the corresponding value functions, as shown in the following proposition.
\begin{proposition}
    \label{lem:stepcomb}
    With the same condition as in \Cref{lem:stepvisitratio}, the following equality holds:
    \[V_1^{\pi}=\rho V_1^{\pi^1} + (1-\rho) V_1^{\pi^2}.\]
\end{proposition}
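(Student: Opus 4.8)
The plan is to reduce the claimed linearity of the value function to the linearity of the occupancy measures already established in \Cref{lem:stepvisitratio}. The key observation is that, since the reward functions are deterministic and the MDP starts from the fixed initial state $s_1$, the total expected reward admits the occupancy-measure representation
\[
V_1^{\pi} = \Eb_{\pi}\Big[\sum_{h=1}^H r_h(s_h,a_h)\Big] = \sum_{h=1}^H \sum_{(s,a)\in\Sc\times\Ac} d_h^{\pi}(s,a)\, r_h(s,a),
\]
where $d_h^{\pi}(s,a)=\Eb_{\pi}[\mathds{1}\{s_h=s,a_h=a\}]$ is the occupancy measure. This identity follows by linearity of expectation applied step by step, and crucially it exhibits $V_1^\pi$ as a \emph{linear functional} of the collection $\{d_h^\pi\}_{h=1}^H$.

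First I would write this representation for each of $\pi$, $\pi^1$, and $\pi^2$. Since $\pi^1$ and $\pi^2$ differ only at step $h_0$ and $\pi=\rho\pi^1+(1-\rho)\pi^2$, \Cref{lem:stepvisitratio} applies and gives $d_h^{\pi}(s,a)=\rho\, d_h^1(s,a)+(1-\rho)\,d_h^2(s,a)$ for every $h$ and $(s,a)$. Substituting this into the representation of $V_1^\pi$ and using linearity of the double sum yields
\[
V_1^{\pi} = \sum_{h,s,a}\big(\rho\, d_h^1(s,a)+(1-\rho)\,d_h^2(s,a)\big)r_h(s,a) = \rho\sum_{h,s,a} d_h^1(s,a) r_h(s,a) + (1-\rho)\sum_{h,s,a} d_h^2(s,a) r_h(s,a),
\]
and recognizing the two remaining sums as $V_1^{\pi^1}$ and $V_1^{\pi^2}$ respectively closes the argument.

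There is essentially no hard step left, because the combinatorial heart of the statement---propagating the one-step mixture through the transition dynamics---is exactly what \Cref{lem:stepvisitratio} already resolves; the only point requiring a line of care is justifying the occupancy-measure representation itself. Should one prefer to avoid invoking it, an equivalent route is a direct backward induction on $h$ showing $V_h^\pi(s)=\rho V_h^{\pi^1}(s)+(1-\rho)V_h^{\pi^2}(s)$ for all $s$, where the induction step again uses the one-step-difference structure to split the Bellman expansion at the single step $h_0$. I would favor the occupancy-measure argument, as it is shorter and reuses \Cref{lem:stepvisitratio} verbatim.
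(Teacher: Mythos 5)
Your proposal is correct and matches the paper's own proof essentially verbatim: both express $V_1^\pi = \sum_{h=1}^H \sum_{s,a} d_h^\pi(s,a)\, r_h(s,a)$, invoke \Cref{lem:stepvisitratio} to write $d_h^\pi = \rho\, d_h^1 + (1-\rho)\, d_h^2$, and conclude by linearity of the sum. No gaps; the extra remark about an alternative backward-induction route is fine but unnecessary.
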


\begin{proof}
    By the definition of $V_1^\pi$ and $d_h^\pi(s,a)$, we have $V_1^\pi=\sum_{h=1}^H\sum_{s,a}d_h^\pi(s,a) r_h(s,a)$. Hence,
    \begin{align*}
        V_1^{\pi} =& \sum_{h=1}^H \sum_{s,a} d_h^{\pi}(s,a) r_h(s,a) \\
        =& \sum_{h=1}^H \sum_{s,a} \left(\rho d_h^{1}(s,a) + (1-\rho) d_h^{2}(s,a)\right) r_h(s,a) \\
        =& \rho V_1^{\pi^1} + (1-\rho) V_1^{\pi^2}.
    \end{align*}
\end{proof}

\subsection{Step Two: Confidence Bounds}
\label{appx:stepConfidence}

In this step, we validate the UCBs and LCBs constructed in \Cref{eqn:ucbQ,eqn:lcbQ,eqn:ucbV,eqn:lcbV}, and provide bounds for the estimation error induced by the pairs of UCBs and LCBs.

Recall that the upper confidence bounds of value functions of each policy $\pi^{k,h_0}$ are
\begin{equation}
\left\{
\begin{aligned}
        &\tilde{Q}^{k,h_0}_h(s,a)\triangleq  \min\Biggr(H, r_h(s,a)+3\sqrt{\Var_{\hP_h^k}(\tilde{V}_{h+1}^{k,h_0})(s,a)\frac{\beta^\star(n^k_h(s,a),\delta')}{n^k_h(s,a)}}+ 14 H^2\frac{\beta(n^k_h(s,a),\delta')}{n^k_h(s,a)}\\
        &\quad \quad +\frac{1}{H}\hP_h^k(\tilde{V}^{k,h_0}_{h+1}-\utilde{V}^{k,h_0}_{h+1})(s,a)+\hP_h^k\tilde{V}^{k,h_0}_{h+1}(s,a) \Biggr),\\
        &\tilde{V}^{k,h_0}_h(s)\triangleq  \langle \pi_h^{k,h_0}(\cdot|s), \tilde{Q}^{k,h_0}_h(s,\cdot) \rangle ,\label{Xb}
\end{aligned}
\right.
\end{equation}
where $\delta'=\frac{\delta}{3(H+1)}$ and 
\begin{align}
    &\pi_h^{k,h_0}(s)=  \left\{
    \begin{array}{lc}
     \pi^b_h(s), &\text{ if }h\le h_0,\\ 
 \arg\max_{a\in\mathcal{A}}\tilde{Q}_h^{k,h_0}(s,a), &\text{ if } h > h_0.
     \end{array}
    \right.\label{Xe}
\end{align}
Meanwhile, the lower confidence bounds of the the same value functions are
\begin{equation}
    \left\{
    \begin{aligned}
            &\utilde{Q}^{k,h_0}_h(s,a)\triangleq \max\Biggr(0, r_h(s,a)-3\sqrt{\Var_{\hP_h^k}(\tilde{V}_{h+1}^{k,h_0})(s,a)\frac{\beta^\star(n^k_h(s,a),\delta')}{n^k_h(s,a)}}- 22 H^2\frac{\beta(n^k_h(s,a),\delta')}{n^k_h(s,a)}\\
            &\quad \quad -\frac{2}{H}\hP_h^k(\tilde{V}^{k,h_0}_{h+1}-\utilde{V}^{k,h_0}_{h+1})(s,a)+\hP_h^k\utilde{V}^{k,h_0}_{h+1}(s,a) \Biggr),\\
            &\utilde{V}^{k,h_0}_h(s)\triangleq  \langle \pi_h^{k,h_0}(\cdot|s), \tilde{Q}^{k,h_0}_h(s,\cdot) \rangle\label{Xd}.
    \end{aligned}
    \right.
\end{equation}

The following lemma  shows that the above construction are valid UCBs and LCBs.


\begin{lemma}[UCB and LCB]
    \label{lem:stepucblcb}
    With $\tilde{Q}^{k,h_0}$, $\utilde{Q}^{k,h_0}$, $\tilde{V}^{k,h_0}$, $\utilde{V}^{k,h_0}$ defined in \Cref{Xb,Xd}, the true value functions $Q^{k,h_0},V^{k,h_0},Q^{\star,h_0},V^{\star,h_0}$ can be bounded as:
    \begin{align}
        \utilde{Q}_h^{k,h_0}(s,a) \overset{(\romannumeral1)}\le Q_h^{k,h_0}(s,a)&\overset{(\romannumeral2)}\le Q_h^{\star,h_0}(s,a) \overset{(\romannumeral3)}\le \tilde{Q}_h^{k,h_0}(s,a)  \label{eqn: UCB-LCB-Q},\\
        \utilde{V}_h^{k,h_0}(s) \overset{(\romannumeral4)}\le V_h^{k,h_0}(s)&\overset{(\romannumeral5)}\le V_h^{\star,h_0}(s) \overset{(\romannumeral6)}\le \tilde{V}_h^{k,h_0}(s) \label{eqn: UCB-LCB-V}.
    \end{align}
\end{lemma}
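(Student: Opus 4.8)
The plan is to establish all six inequalities simultaneously by backward induction on $h$, from $h=H+1$ down to $h=1$, working throughout on the good events of \Cref{lem:stepgood}. The base case $h=H+1$ is immediate since every value function vanishes there. In the inductive step I would first dispose of the cheap inequalities. Inequalities $(\romannumeral2)$ and $(\romannumeral5)$, namely $Q_h^{k,h_0}\le Q_h^{\star,h_0}$ and $V_h^{k,h_0}\le V_h^{\star,h_0}$, follow directly from \Cref{lem:stepoptimal}, because $\pi^{k,h_0}$ coincides with $\pi^b$ on the first $h_0$ steps and hence lies in $\Pi_{h_0}$. Inequality $(\romannumeral4)$ follows from $(\romannumeral1)$ at the same step: since $\utilde V_h^{k,h_0}=\langle \pi_h^{k,h_0},\utilde Q_h^{k,h_0}\rangle$ and $V_h^{k,h_0}=\langle \pi_h^{k,h_0},Q_h^{k,h_0}\rangle$ use the \emph{same} policy, $\utilde Q^{k,h_0}\le Q^{k,h_0}$ transfers verbatim to the value functions. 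Inequality $(\romannumeral6)$ follows from $(\romannumeral3)$ together with the greedy construction of $\pi^{k,h_0}$: for $h>h_0$ we have $\tilde V_h^{k,h_0}(s)=\max_a\tilde Q_h^{k,h_0}(s,a)\ge\max_a Q_h^{\star,h_0}(s,a)=V_h^{\star,h_0}(s)$, while for $h\le h_0$ both $\pi^{k,h_0}$ and $\pi^{\star,h_0}$ equal $\pi^b$, so $(\romannumeral3)$ again transfers. Thus the whole burden reduces to the two $Q$-inequalities $(\romannumeral1)$ and $(\romannumeral3)$.

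For $(\romannumeral3)$, I would write $Q_h^{\star,h_0}=r_h+P_h V_{h+1}^{\star,h_0}$, drop the harmless $\min(H,\cdot)$ truncation, and reduce the claim to showing that the bonus dominates the one-step transition error on the anchor. Indeed $\hP_h^k\tilde V_{h+1}^{k,h_0}-P_hV_{h+1}^{\star,h_0}=\hP_h^k(\tilde V_{h+1}^{k,h_0}-V_{h+1}^{\star,h_0})+(\hP_h^k-P_h)V_{h+1}^{\star,h_0}$, and the first summand is nonnegative by the inductive hypothesis $(\romannumeral6)$. The decisive point is that $V^{\star,h_0}$ is a \emph{fixed, data-independent} anchor, so $\mathcal{E}^\star(V^{\star,h_0},\delta')$ bounds $|(\hP_h^k-P_h)V_{h+1}^{\star,h_0}|$ by a Bernstein quantity in the \emph{true} variance $\Var_{P_h}(V_{h+1}^{\star,h_0})$. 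The core technical step is converting this into the empirical variance $\Var_{\hP_h^k}(\tilde V_{h+1}^{k,h_0})$ appearing in the bonus. I would do this at the level of standard deviations: first a standard-deviation transfer $\sqrt{\Var_{P_h}(V^{\star,h_0}_{h+1})}\le \sqrt{\Var_{\hP_h^k}(V^{\star,h_0}_{h+1})}+cH\sqrt{\beta/n_h^k(s,a)}$ (via $\mathcal{E}$), then the triangle inequality for the standard deviation seminorm to pass from $V^{\star,h_0}$ to $\tilde V^{k,h_0}$, with the residual controlled through $\Var_{\hP_h^k}(\tilde V_{h+1}^{k,h_0}-V_{h+1}^{\star,h_0})\le H\,\hP_h^k(\tilde V_{h+1}^{k,h_0}-\utilde V_{h+1}^{k,h_0})$, using the sandwich $\utilde V\le V^{\star,h_0}\le\tilde V$. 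Applying AM--GM to $\sqrt{H\,\hP_h^k(\tilde V^{k,h_0}_{h+1}-\utilde V^{k,h_0}_{h+1})\,\beta^\star/n_h^k}$ splits it into $\tfrac1H\hP_h^k(\tilde V^{k,h_0}_{h+1}-\utilde V^{k,h_0}_{h+1})$ plus an $H^2\beta/n_h^k$ term; the coefficients $3$, $14H^2$, and $\tfrac1H$ in \Cref{eqn:ucbQ} are chosen precisely so that the bonus absorbs all of these contributions together with the $3H\beta^\star/n_h^k$ from $\mathcal{E}^\star$.

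For the LCB $(\romannumeral1)$, the analogous decomposition gives the error term $(\hP_h^k-P_h)V_{h+1}^{k,h_0}$, where now $V^{k,h_0}$ is \emph{data-dependent}, so $\mathcal{E}^\star$ does not apply to it directly. The plan is to insert the anchor and split $(\hP_h^k-P_h)V_{h+1}^{k,h_0}=(\hP_h^k-P_h)V_{h+1}^{\star,h_0}+(\hP_h^k-P_h)(V_{h+1}^{k,h_0}-V_{h+1}^{\star,h_0})$, handling the first piece exactly as in $(\romannumeral3)$. For the residual I would use the pointwise bound $|V_{h+1}^{k,h_0}-V_{h+1}^{\star,h_0}|\le \tilde V_{h+1}^{k,h_0}-\utilde V_{h+1}^{k,h_0}$, valid by the inductive sandwich $\utilde V\le V^{k,h_0}\le V^{\star,h_0}\le\tilde V$, together with a coordinate-wise (Bernstein/Cauchy--Schwarz) control on $\hP_h^k-P_h$ supplied by $\mathcal{E}$, followed again by AM--GM; this bounds the residual by $\tfrac1H\hP_h^k(\tilde V_{h+1}^{k,h_0}-\utilde V_{h+1}^{k,h_0})$ plus an $H^2\beta/n_h^k$ term. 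The extra residual is exactly what forces the larger, asymmetric LCB coefficients $22H^2$ and $\tfrac2H$ in \Cref{eqn:lcbQ}, as opposed to $14H^2$ and $\tfrac1H$ for the UCB. I expect the variance-to-empirical-variance conversion and the control of this data-dependent residual to be the two main obstacles; everything else is bookkeeping to verify that the chosen constants close the induction at step $h$.
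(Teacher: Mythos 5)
Your proposal is correct and follows essentially the same route as the paper's proof: the same alternating backward induction (establishing the $Q$-inequalities at step $h$ from the value sandwich at $h+1$, then transferring to the $V$-inequalities at step $h$), the same use of \Cref{lem:stepoptimal} for inequalities $(\romannumeral2)$ and $(\romannumeral5)$, the same anchor decomposition through the fixed policy value $V^{\star,h_0}_{h+1}$ combined with $\mathcal{E}^\star$, and the same empirical-variance conversion plus AM--GM accounting that makes the asymmetric coefficients $(14H^2,\tfrac1H)$ versus $(22H^2,\tfrac2H)$ close the induction. The only immaterial deviations are that your LCB residual is $(\hP_h^k-P_h)(V^{k,h_0}_{h+1}-V^{\star,h_0}_{h+1})$ where the paper uses $(\hP_h^k-P_h)(\utilde{V}^{k,h_0}_{h+1}-V^{\star,h_0}_{h+1})$ (both are pointwise dominated by $\tilde{V}^{k,h_0}_{h+1}-\utilde{V}^{k,h_0}_{h+1}$ and controlled identically via the KL event $\mathcal{E}$), and that you phrase the variance transfer as a standard-deviation triangle inequality where the paper invokes its variance lemmas.
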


\if{0}
\begin{proof}
    We use induction to prove the lemma. More specifically, we prove that if \Cref{eqn: UCB-LCB-V} holds for $h+1$, $\forall h\in[1:H]$, it must hold for $h$ as well. For the base case $h=H+1$, all value functions are zeros. Thus, \Cref{eqn: UCB-LCB-V} holds. 
    
    \textbf{First}, we prove that the third inequality of \Cref{eqn: UCB-LCB-Q} holds, i.e. $Q_h^{\star,h_0}(s,a)\le \tilde{Q}_h^{k,h_0}(s,a)$. It suffices to consider the case when $\tilde{Q}_h^{k,h_0}<H$. We have
    \begin{align}
         \tilde{Q}_h^{k,h_0}&-Q^{\star,h_0}_h \notag\\
        = & 3\sqrt{\Var_{\hP_h^k}(\tilde{V}_{h+1}^{k,h_0})(s,a)\frac{\beta^\star(n^k_h(s,a),\delta')}{n^k_h(s,a)}}+ 14 H^2\frac{\beta(n^k_h(s,a),\delta')}{n^k_h(s,a)}\notag\\
        &+\frac{1}{H}\hP_h^k(\tilde{V}^{k,h_0}_{h+1}-\utilde{V}^{k,h_0}_{h+1})(s,a)+\hP_h^k\tilde{V}^{k,h_0}_{h+1}(s,a)-P_h V^{\star,h_0}_{h+1}(s,a)\notag\\
        = & 3\sqrt{\Var_{\hP_h^k}(\tilde{V}_{h+1}^{k,h_0})(s,a)\frac{\beta^\star(n^k_h(s,a),\delta')}{n^k_h(s,a)}}+ 14 H^2\frac{\beta(n^k_h(s,a),\delta')}{n^k_h(s,a)}\notag\\
        &+\frac{1}{H}\hP_h^k(\tilde{V}^{k,h_0}_{h+1}-\utilde{V}^{k,h_0}_{h+1})(s,a)+\hP_h^k(\tilde{V}^{k,h_0}_{h+1}-V^{\star,h_0}_{h+1})(s,a)+(\hP_h^k-P_h) V^{\star,h_0}_{h+1}(s,a)\label{eqn:UCB1}.
    \end{align}
    Under good event $\mathcal{E}^\star(V^{\star,h_0},\delta')$, we can bound the last term $(\hP_h^k-P_h) V^{\star,h_0}_{h+1}(s,a)$ as follows:
    \begin{equation}
        \label{eqn:UCB2}
        |(\hP_h^k-P_h) V^{\star,h_0}_{h+1}(s,a)|\\
        \le \sqrt{2\Var_{P_h}(V^{\star,h_0}_{h+1})(s,a)\frac{\beta^\star(n^k_h(s,a), \delta')}{n^k_h(s,a)}}+3H\frac{\beta^\star(n^k_h(s,a), \delta')}{n^k_h(s,a)}.
    \end{equation}
  We further bound the true variance $\Var_{P_h}(V^{\star,h_0}_{h+1})(s,a)$ with the empirical variance $\Var_{\hP_h}(\tilde{V}^{k,h_0}_{h+1})(s,a)$ as follows.
    \begin{align*}
        \Var_{P_h}(V^{\star,h_0}_{h+1})&(s,a)\\
        \overset{(a)}{\le} & 2\Var_{\hP_h}(V^{\star,h_0}_{h+1})(s,a)+4H^2\frac{\beta(n^k_h(s,a), \delta')}{n^k_h(s,a)} \\
        \overset{(b)}{\le} & 4 \Var_{\hP_h}(\tilde{V}^{\star,h_0}_{h+1})(s,a) + 4H\hP_h^k|\tilde{V}_{h+1}^{k,h_0}-V^{\star,h_0}_{h+1}|(s,a)+4H^2\frac{\beta(n^k_h(s,a), \delta')}{n^k_h(s,a)}\\
        \overset{(c)}{\le} & 4 \Var_{\hP_h}(\tilde{V}^{\star,h_0}_{h+1})(s,a) + 4H\hP_h^k(\tilde{V}_{h+1}^{k,h_0}-\utilde{V}^{k,h_0}_{h+1})(s,a)+4H^2\frac{\beta(n^k_h(s,a), \delta')}{n^k_h(s,a)},\\
    \end{align*}
    where $(a)$ follows from \Cref{lem:klvar} and the definition of good event $\mathcal{E}(\delta/3)$, $(b)$ is due to \Cref{lem:varerr}, and $(c)$ is due to the induction hypothesis. \jingc{which induction? if you want to use induction to prove anything, you should clearly state what you want to prove and how to prove it.}\ldh{Add a part at the beginning of the proof.}
    
    Plugging the bound of variance to \Cref{eqn:UCB2} and applying the facts that $\sqrt{x+y}\le \sqrt{x}+\sqrt{y}$ , $\sqrt{xy}\le x+y$ and $\beta^\star<\beta$, we have
    \begin{equation}
        \label{eqn:varbound1}
        |(\hP_h^k-P_h) V^{\star,h_0}_{h+1}(s,a)|\\
        \le 3\sqrt{\Var_{\hP_h}(\tilde{V}^{k,h_0}_{h+1})(s,a)\frac{\beta^\star(n^k_h(s,a), \delta')}{n^k_h(s,a)}}+\frac{1}{H}\hP_h^k(\tilde{V}^{k,h_0}_{h+1}-\utilde{V}^{k,h_0}_{h+1})(s,a)+14H^2\frac{\beta(n^k_h(s,a), \delta')}{n^k_h(s,a)}.
    \end{equation}
    
    Now, plugging \Cref{eqn:varbound1} back to \Cref{eqn:UCB1}, we have
    \[\tilde{Q}_h^{k,h_0}-Q^{\star,h_0}_h \ge \hP_h^k(\tilde{V}^{k,h_0}_{h+1}-V^{\star,h_0}_{h+1})(s,a)\ge 0,\]
    where $\tilde{V}^{k,h_0}_{h+1}-V^{\star,h_0}_{h+1}\ge 0$ comes from the induction hypothesis.\jingc{again, not defined}\hrq{added in the first paragraph}
    
    \textbf{Second}, for the first inequality of \Cref{eqn: UCB-LCB-Q}, i.e. $\utilde{Q}_h^{k,h_0}(s,a)\le Q_h^{k,h_0}(s,a)$, it suffices to consider the case when $\utilde{Q}^{k,h_0}_h>0$. We have
    \begin{equation}
    \label{eqn:LCB0}
    \begin{aligned}
         Q^{k,h_0}_h&(s,a) - \utilde{Q}_h^{k,h_0}(s,a) \\
        = & (P_h-\hP_h^k)V_{h+1}^{\star,h_0}(s,a) + (P_h-\hP_h^k)(\utilde{V}_{h+1}^{k,h_0}-V_{h+1}^{\star,h_0})(s,a) + P_h(V^{k,h_0}_{h+1}-\utilde{V}^{k,h_0}_{h+1})(s,a) \\
        & +
        3\sqrt{\Var_{\hP_h^k}(\tilde{V}_{h+1}^{k,h_0})(s,a)\frac{\beta^\star(n^k_h(s,a),\delta')}{n^k_h(s,a)}} 
        + 22 H^2\frac{\beta(n^k_h(s,a),\delta')}{n^k_h(s,a)} \\ 
        & +\frac{2}{H}\hP_h^k(\tilde{V}^{k,h_0}_{h+1}-\utilde{V}^{k,h_0}_{h+1})(s,a).
    \end{aligned}
    \end{equation}
    We have established a bound for $|(P_h-\hP_h^k)V_{h+1}^{\star,h_0}(s,a)|$ in \Cref{eqn:varbound1}. It then suffice to bound $|(P_h-\hP_h^k)(\utilde{V}_{h+1}^{k,h_0}-V_{h+1}^{\star,h_0})(s,a)|$ as follows.
    
    Because of \cref{lem:klabs}, together with good event $\mathcal{E}(\delta/3)$, we have
    \begin{equation}
        \label{eqn:LCB1}
        |(P_h-\hP_h^k)(\utilde{V}_{h+1}^{k,h_0}-V_{h+1}^{\star,h_0})(s,a)|\le\sqrt{2\Var_{P_h}(V_{h+1}^{\star,h_0}-\utilde{V}_{h+1}^{k,h_0})(s,a)}+\frac{2}{3}H\frac{\beta(n^k_h(s,a),\delta')}{n^k_h(s,a)}.
    \end{equation}
    Moreover, by \cref{lem:klvar}, 
    \begin{equation}
        \label{eqn:LCB1_2}
        \Var_{P_h}(V_{h+1}^{\star,h_0}-\utilde{V}_{h+1}^{k,h_0})(s,a)
        \le 2\Var_{\hP_h^k}(V_{h+1}^{\star,h_0}-\utilde{V}_{h+1}^{k,h_0})(s,a) + 4H^2\frac{\beta(n^k_h(s,a),\delta')}{n^k_h(s,a)} .
    \end{equation}
    Plugging \Cref{eqn:LCB1_2} to \Cref{eqn:LCB1} and applying $\sqrt{x+y}\le \sqrt{x} + \sqrt{y}$ and $\sqrt{xy}\le x+y$, we can bound $|(P_h-\hP_h^k)(\utilde{V}_{h+1}^{k,h_0}-V_{h+1}^{\star,h_0})(s,a)|$ as follows:
    \begin{equation}
        \label{eqn:LCB2}
        |(P_h-\hP_h^k)(\utilde{V}_{h+1}^{k,h_0}-V_{h+1}^{\star,h_0})(s,a)|\le \frac{1}{H}\hP_h^k(\tilde{V}^{k,h_0}_{h+1}-\utilde{V}^{k,h_0}_{h+1})(s,a)+8 H^2\frac{\beta(n^k_h(s,a), \delta')}{n^k_h(s,a)} .
    \end{equation}
    Now, plugging \Cref{eqn:LCB2} and \Cref{eqn:varbound1} back to \Cref{eqn:LCB0}, we have
    \[Q^{k,h_0}_h(s,a) - \utilde{Q}_h^{k,h_0}(s,a)\ge P_h(V^{k,h_0}_{h+1}-\utilde{V}^{k,h_0}_{h+1})(s,a)\ge 0 ,\]
    where $V^{k,h_0}_{h+1}-\utilde{V}^{k,h_0}_{h+1}\ge 0$ comes from the induction hypothesis.
    
  \textbf{Third}, due to \Cref{lem:stepoptimal}, the second inequalities of \Cref{eqn: UCB-LCB-Q} and \Cref{eqn: UCB-LCB-V} hold. In other words, we have $Q_h^{k,h_0}(s,a)\le Q_h^{\star,h_0}(s,a)$ and $V_h^{k,h_0}(s)\le V_h^{\star,h_0}(s)$. 
    
  \textbf{Finally}, with all inequalities in \Cref{eqn: UCB-LCB-Q} being proved, we use it to prove the first and the third inequalities of \Cref{eqn: UCB-LCB-V}. Since $\utilde{V}_h^{k,h_0}(s)$ and $V_h^{k,h_0}(s)$ share the same policy $\pi_h^{k,h_0}$, $\utilde{V}_h^{k,h_0}(s)\le V_h^{k,h_0}(s)$ can be derived from $\utilde{Q}_h^{k,h_0}(s,a)\le Q_h^{k,h_0}(s,a)$. That is
    \[\utilde{V}_h^{k,h_0}(s)=\langle \pi_h^{k,h_0}(\cdot|s),\utilde{Q}_h^{k,h_0}(s,\cdot)\rangle\le\langle \pi_h^{k,h_0}(\cdot|s),Q_h^{k,h_0}(s,\cdot)\rangle = V_h^{k,h_0}(s).\] 
    To show $V_h^{\star,h_0}(s)\le \tilde{V}_h^{k,h_0}(s)$, we consider two cases: $h>h_0$ and $h\le h_0$. When $h>h_0$, policy $\pi^{k,h_0}_h$ is the greedy policy corresponding to $\tilde{Q}_h^{k,h_0}$. Therefore, we have 
    \[\tilde{V}_h^{k,h_0}(s)=\tilde{Q}_h^{k,h_0}(s,\pi^{k,h_0}_h(s))\ge\tilde{Q}_h^{k,h_0}(s,\pi^\star_h(s))\ge Q_h^{k,h_0}(s,\pi^\star_h(s))=V_h^{k,h_0}(s).\] 
    When $h\le h_0$, both policies $\pi^{\star,h_0}_h$ and $\pi^{k,h_0}_h$ are the baseline policy $\pi^b_h$. Thus, we can use $Q_h^{\star,h_0}\le\tilde{Q}_h^{k,h_0}$ from \Cref{eqn: UCB-LCB-Q} to derive $V_h^{\star,h_0}(s)\le \tilde{V}_h^{k,h_0}(s)$. That is,
    \[V_h^{\star,h_0}(s)=\langle \pi_h^b(\cdot|s),Q_h^{\star,h_0}(s,\cdot)\rangle\le\langle \pi_h^b(\cdot|s),\tilde{Q}_h^{k,h_0}(s,\cdot)\rangle = \tilde{V}_h^{k,h_0}(s).\]
    Combining these two cases, we have established $V_h^{\star,h_0}(s)\le \tilde{V}_h^{k,h_0}(s)$  for any $h$. 
\end{proof}
\fi

\begin{proof}
 First, we note that due to \Cref{lem:stepoptimal}, inequalities $(\romannumeral2)$ and $(\romannumeral5)$ hold for any $h\in[1:H]$. 
 
 We then use induction to prove the other four inequalities hold. More specifically, we prove that: \textbf{1)} if $(\romannumeral4)$ and $(\romannumeral6)$ hold for $h+1$, $\forall h\in[1:H]$, then $(\romannumeral1)$ and $(\romannumeral3)$ must hold for $h$, and \textbf{2)} if $(\romannumeral1)$ and $(\romannumeral3)$ hold for any $h\in[1:H]$, then $(\romannumeral4)$ and $(\romannumeral6)$ must hold for $h$ as well. For the base case $h=H+1$, all value functions are zeros. Thus, $(\romannumeral4)$ and $(\romannumeral6)$ hold for $h=H+1$. We now assume $(\romannumeral4)$ and $(\romannumeral6)$ is true for any $h+1$, and prove 1) and 2) recursively through induction.
    
    \textbf{Step 1), part 1, inequality $(\romannumeral3)$:} We prove that inequality $(\romannumeral3)$ in \Cref{eqn: UCB-LCB-Q} holds for any $h\in[1:H]$. It suffices to consider the case when $\tilde{Q}_h^{k,h_0}<H$. We have
    \begin{align}
         \tilde{Q}_h^{k,h_0}-Q^{\star,h_0}_h  = & 3\sqrt{\Var_{\hP_h^k}(\tilde{V}_{h+1}^{k,h_0})(s,a)\frac{\beta^\star(n^k_h(s,a),\delta')}{n^k_h(s,a)}}+ 14 H^2\frac{\beta(n^k_h(s,a),\delta')}{n^k_h(s,a)}\notag\\
        &+\frac{1}{H}\hP_h^k(\tilde{V}^{k,h_0}_{h+1}-\utilde{V}^{k,h_0}_{h+1})(s,a)+\hP_h^k\tilde{V}^{k,h_0}_{h+1}(s,a)-P_h V^{\star,h_0}_{h+1}(s,a)\notag\\
        = & 3\sqrt{\Var_{\hP_h^k}(\tilde{V}_{h+1}^{k,h_0})(s,a)\frac{\beta^\star(n^k_h(s,a),\delta')}{n^k_h(s,a)}}+ 14 H^2\frac{\beta(n^k_h(s,a),\delta')}{n^k_h(s,a)}\notag\\
        &+\frac{1}{H}\hP_h^k(\tilde{V}^{k,h_0}_{h+1}-\utilde{V}^{k,h_0}_{h+1})(s,a)+\hP_h^k(\tilde{V}^{k,h_0}_{h+1}-V^{\star,h_0}_{h+1})(s,a)+(\hP_h^k-P_h) V^{\star,h_0}_{h+1}(s,a)\label{eqn:UCB1}.
    \end{align}
    Under good event $\mathcal{E}^\star(V^{\star,h_0},\delta')$, we can bound the last term $(\hP_h^k-P_h) V^{\star,h_0}_{h+1}(s,a)$ as follows:
    \begin{equation}
        \label{eqn:UCB2}
        |(\hP_h^k-P_h) V^{\star,h_0}_{h+1}(s,a)|\\
        \le \sqrt{2\Var_{P_h}(V^{\star,h_0}_{h+1})(s,a)\frac{\beta^\star(n^k_h(s,a), \delta')}{n^k_h(s,a)}}+3H\frac{\beta^\star(n^k_h(s,a), \delta')}{n^k_h(s,a)}.
    \end{equation}
  We further bound the true variance $\Var_{P_h}(V^{\star,h_0}_{h+1})(s,a)$ with the empirical variance $\Var_{\hP_h}(\tilde{V}^{k,h_0}_{h+1})(s,a)$ as follows:
    \begin{align*}
        \Var_{P_h}(V^{\star,h_0}_{h+1})(s,a)      \overset{(a)}{\le} & 2\Var_{\hP_h}(V^{\star,h_0}_{h+1})(s,a)+4H^2\frac{\beta(n^k_h(s,a), \delta')}{n^k_h(s,a)} \\
        \overset{(b)}{\le} & 4 \Var_{\hP_h}(\tilde{V}^{\star,h_0}_{h+1})(s,a) + 4H\hP_h^k|\tilde{V}_{h+1}^{k,h_0}-V^{\star,h_0}_{h+1}|(s,a)+4H^2\frac{\beta(n^k_h(s,a), \delta')}{n^k_h(s,a)}\\
        \overset{(c)}{\le} & 4 \Var_{\hP_h}(\tilde{V}^{\star,h_0}_{h+1})(s,a) + 4H\hP_h^k(\tilde{V}_{h+1}^{k,h_0}-\utilde{V}^{k,h_0}_{h+1})(s,a)+4H^2\frac{\beta(n^k_h(s,a), \delta')}{n^k_h(s,a)},
    \end{align*}
    where $(a)$ follows from \Cref{lem:klvar} and the definition of good event $\mathcal{E}(\delta/3)$, $(b)$ is due to \Cref{lem:varerr}, and $(c)$ is due to the induction hypothesis. 
    
    Plugging the bound of variance to \Cref{eqn:UCB2} and applying the facts that $\sqrt{x+y}\le \sqrt{x}+\sqrt{y}$ , $\sqrt{xy}\le x+y$ and $\beta^\star<\beta$, we have
    \begin{equation}
        \label{eqn:varbound1}
        |(\hP_h^k-P_h) V^{\star,h_0}_{h+1}(s,a)|\\
        \le 3\sqrt{\Var_{\hP_h}(\tilde{V}^{k,h_0}_{h+1})(s,a)\frac{\beta^\star(n^k_h(s,a), \delta')}{n^k_h(s,a)}}+\frac{1}{H}\hP_h^k(\tilde{V}^{k,h_0}_{h+1}-\utilde{V}^{k,h_0}_{h+1})(s,a)+14H^2\frac{\beta(n^k_h(s,a), \delta')}{n^k_h(s,a)}.
    \end{equation}
    
    Now, plugging \Cref{eqn:varbound1} back to \Cref{eqn:UCB1}, we have
    \[\tilde{Q}_h^{k,h_0}-Q^{\star,h_0}_h \ge \hP_h^k(\tilde{V}^{k,h_0}_{h+1}-V^{\star,h_0}_{h+1})(s,a)\ge 0,\]
    where $\tilde{V}^{k,h_0}_{h+1}-V^{\star,h_0}_{h+1}\ge 0$ comes from the induction hypothesis.
    
    \textbf{Step 1), part 2, inequality $(\romannumeral1)$:} For inequality $(\romannumeral1)$ in \Cref{eqn: UCB-LCB-Q}, it suffices to consider the case when $\utilde{Q}^{k,h_0}_h>0$. We have
    \begin{equation}
    \label{eqn:LCB0}
    \begin{aligned}
         Q^{k,h_0}_h(s,a) - \utilde{Q}_h^{k,h_0}(s,a) = & (P_h-\hP_h^k)V_{h+1}^{\star,h_0}(s,a) + (P_h-\hP_h^k)(\utilde{V}_{h+1}^{k,h_0}-V_{h+1}^{\star,h_0})(s,a) + P_h(V^{k,h_0}_{h+1}-\utilde{V}^{k,h_0}_{h+1})(s,a) \\
        & +
        3\sqrt{\Var_{\hP_h^k}(\tilde{V}_{h+1}^{k,h_0})(s,a)\frac{\beta^\star(n^k_h(s,a),\delta')}{n^k_h(s,a)}} 
        + 22 H^2\frac{\beta(n^k_h(s,a),\delta')}{n^k_h(s,a)} \\ 
        & +\frac{2}{H}\hP_h^k(\tilde{V}^{k,h_0}_{h+1}-\utilde{V}^{k,h_0}_{h+1})(s,a).
    \end{aligned}
    \end{equation}
    We have established a bound for $|(P_h-\hP_h^k)V_{h+1}^{\star,h_0}(s,a)|$ in \Cref{eqn:varbound1}. It then suffice to bound $|(P_h-\hP_h^k)(\utilde{V}_{h+1}^{k,h_0}-V_{h+1}^{\star,h_0})(s,a)|$ as follows.
    
    Because of \cref{lem:klabs}, together with good event $\mathcal{E}(\delta/3)$, we have
    \begin{equation}
        \label{eqn:LCB1}
        |(P_h-\hP_h^k)(\utilde{V}_{h+1}^{k,h_0}-V_{h+1}^{\star,h_0})(s,a)|\le\sqrt{2\Var_{P_h}(V_{h+1}^{\star,h_0}-\utilde{V}_{h+1}^{k,h_0})(s,a)}+\frac{2}{3}H\frac{\beta(n^k_h(s,a),\delta')}{n^k_h(s,a)}.
    \end{equation}
    Moreover, by \cref{lem:klvar}, 
    \begin{equation}
        \label{eqn:LCB1_2}
        \Var_{P_h}(V_{h+1}^{\star,h_0}-\utilde{V}_{h+1}^{k,h_0})(s,a)
        \le 2\Var_{\hP_h^k}(V_{h+1}^{\star,h_0}-\utilde{V}_{h+1}^{k,h_0})(s,a) + 4H^2\frac{\beta(n^k_h(s,a),\delta')}{n^k_h(s,a)} .
    \end{equation}
    Plugging \Cref{eqn:LCB1_2} into \Cref{eqn:LCB1} and applying $\sqrt{x+y}\le \sqrt{x} + \sqrt{y}$ and $\sqrt{xy}\le x+y$, we can bound $|(P_h-\hP_h^k)(\utilde{V}_{h+1}^{k,h_0}-V_{h+1}^{\star,h_0})(s,a)|$ as follows:
    \begin{equation}
        \label{eqn:LCB2}
        |(P_h-\hP_h^k)(\utilde{V}_{h+1}^{k,h_0}-V_{h+1}^{\star,h_0})(s,a)|\le \frac{1}{H}\hP_h^k(\tilde{V}^{k,h_0}_{h+1}-\utilde{V}^{k,h_0}_{h+1})(s,a)+8 H^2\frac{\beta(n^k_h(s,a), \delta')}{n^k_h(s,a)} .
    \end{equation}
    Now, plugging \Cref{eqn:LCB2} and \Cref{eqn:varbound1} back to \Cref{eqn:LCB0}, we have
    \[Q^{k,h_0}_h(s,a) - \utilde{Q}_h^{k,h_0}(s,a)\ge P_h(V^{k,h_0}_{h+1}-\utilde{V}^{k,h_0}_{h+1})(s,a)\ge 0 ,\]
    where $V^{k,h_0}_{h+1}-\utilde{V}^{k,h_0}_{h+1}\ge 0$ comes from the induction hypothesis.

  \textbf{Step 2):} With all inequalities in \Cref{eqn: UCB-LCB-Q} being proved, we use them to prove inequalities $(\romannumeral4)$ and $(\romannumeral6)$ in \Cref{eqn: UCB-LCB-V}. 
  
  Since $\utilde{V}_h^{k,h_0}(s)$ and $V_h^{k,h_0}(s)$ share the same policy $\pi_h^{k,h_0}$, inequality $(\romannumeral4)$ can be derived from $\utilde{Q}_h^{k,h_0}(s,a)\le Q_h^{k,h_0}(s,a)$. That is,
    \[\utilde{V}_h^{k,h_0}(s)=\langle \pi_h^{k,h_0}(\cdot|s),\utilde{Q}_h^{k,h_0}(s,\cdot)\rangle\le\langle \pi_h^{k,h_0}(\cdot|s),Q_h^{k,h_0}(s,\cdot)\rangle = V_h^{k,h_0}(s).\] 
    
    To show inequality $(\romannumeral6)$, we consider two cases: $h>h_0$ and $h\le h_0$. When $h>h_0$, policy $\pi^{k,h_0}_h$ is the optimistic policy corresponding to $\tilde{Q}_h^{k,h_0}$. Therefore, we have 
    \[\tilde{V}_h^{k,h_0}(s)=\tilde{Q}_h^{k,h_0}(s,\pi^{k,h_0}_h(s))\ge\tilde{Q}_h^{k,h_0}(s,\pi^\star_h(s))\ge Q_h^{k,h_0}(s,\pi^\star_h(s))=V_h^{k,h_0}(s).\] 
    When $h\le h_0$, both policies $\pi^{\star,h_0}_h$ and $\pi^{k,h_0}_h$ are the baseline policy $\pi^b_h$. Thus, we can use $Q_h^{\star,h_0}\le\tilde{Q}_h^{k,h_0}$ from \Cref{eqn: UCB-LCB-Q} to derive $V_h^{\star,h_0}(s)\le \tilde{V}_h^{k,h_0}(s)$. That is,
    \[V_h^{\star,h_0}(s)=\langle \pi_h^b(\cdot|s),Q_h^{\star,h_0}(s,\cdot)\rangle\le\langle \pi_h^b(\cdot|s),\tilde{Q}_h^{k,h_0}(s,\cdot)\rangle = \tilde{V}_h^{k,h_0}(s).\]
    Combining these two cases, we have established $V_h^{\star,h_0}(s)\le \tilde{V}_h^{k,h_0}(s)$  for any $h$. 
\end{proof}

After verifying the validity of UCBs and LCBs in \Cref{lem:stepucblcb}, we provide the following lemma to quantify the estimation error induced by the lower bound. 

\begin{lemma}
    \label{lem:ringgap}
    Define $G_h^{k,h_0}$ as
    \begin{align}
        \label{eqn:G}
        &G_h^{k,h_0}(s,a) \\
        &\quad=\min\Biggr(H, 6\sqrt{\Var_{\hP_h^k}(\tilde{V}^{k,h_0}_{h+1})(s,a)\frac{\beta^\star(n^k_h(s,a),\delta')}{n^k_h(s,a)}}+36H^2\frac{\beta(n^k_h(s,a),\delta')}{n^k_h(s,a)}+\left(1+\frac{3}{H} \right)\hP_h^k\pi_{h+1}^{k,h_0}G_{h+1}^{k,h_0}(s,a)\Biggr).
    \end{align}
Then, the estimation error between $Q_h^{\star,h_0}$, $V_h^{\star,h_0}(s)$ and $\utilde{Q}_h^{k,h_0}$, $\utilde{V}_h^{k,h_0}$ can be bounded as
\begin{align*}
    Q_h^{\star,h_0}(s,a)-\utilde{Q}_h^{k,h_0}(s,a) &\le G_h^{k,h_0}(s, a), \\
    V_h^{\star,h_0}(s)-\utilde{V}_h^{k,h_0}(s) &\le \langle \pi_h^{k,h_0}(\cdot|s), G_h^{k,h_0}(s,\cdot) \rangle.
\end{align*}
\end{lemma}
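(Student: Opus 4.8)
The plan is to prove both inequalities by backward induction on $h$, working from $h=H+1$ (where $\tilde{V}_{H+1}^{k,h_0}=\utilde{V}_{H+1}^{k,h_0}=0$) down to $h=1$, and to route everything through the gap $D_h^{k,h_0}(s,a):=\tilde{Q}_h^{k,h_0}(s,a)-\utilde{Q}_h^{k,h_0}(s,a)$ between the UCB and LCB. The central observation is that $D_h^{k,h_0}$ is exactly the quantity whose recursion matches the definition of $G_h^{k,h_0}$, and that by \Cref{lem:stepucblcb} the target quantity $Q_h^{\star,h_0}-\utilde{Q}_h^{k,h_0}$ is sandwiched below $D_h^{k,h_0}$. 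So it suffices to show $D_h^{k,h_0}(s,a)\le G_h^{k,h_0}(s,a)$ pointwise, and then the $Q$-bound follows from $\utilde{Q}_h^{k,h_0}\le Q_h^{\star,h_0}\le\tilde{Q}_h^{k,h_0}$.

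First I would compute the gap ignoring the outer $\min/\max$ clippings in \Cref{eqn:ucbQ,eqn:lcbQ}. Subtracting the unclipped LCB from the unclipped UCB, the reward terms $r_h$ cancel, the two Bernstein bonuses add to $6\sqrt{\Var_{\hP_h^k}(\tilde{V}_{h+1}^{k,h_0})\beta^\star/n_h^k}$, the Hoeffding terms add to $36H^2\beta/n_h^k$ (since $14+22=36$), and the correction terms combine as $\tfrac{1}{H}+\tfrac{2}{H}+1=1+\tfrac{3}{H}$ times $\hP_h^k(\tilde{V}_{h+1}^{k,h_0}-\utilde{V}_{h+1}^{k,h_0})$. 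To handle the clippings I would invoke the elementary fact that $\min(H,A)-\max(0,B)\le A-B$, so the genuine gap $D_h^{k,h_0}$ is at most this unclipped difference; and separately, since $\tilde{Q}_h^{k,h_0}\le H$ and $\utilde{Q}_h^{k,h_0}\ge 0$ always hold by construction, $D_h^{k,h_0}\le H$. Next, because $\tilde{V}^{k,h_0}$ and $\utilde{V}^{k,h_0}$ share the policy $\pi_h^{k,h_0}$, I would use $\tilde{V}_{h+1}^{k,h_0}-\utilde{V}_{h+1}^{k,h_0}=\langle\pi_{h+1}^{k,h_0},\tilde{Q}_{h+1}^{k,h_0}-\utilde{Q}_{h+1}^{k,h_0}\rangle=\pi_{h+1}^{k,h_0}D_{h+1}^{k,h_0}$, and apply the induction hypothesis $D_{h+1}^{k,h_0}\le G_{h+1}^{k,h_0}$ together with nonnegativity of the weights in $\pi_{h+1}^{k,h_0}$ and $\hP_h^k$ to get $\hP_h^k\pi_{h+1}^{k,h_0}D_{h+1}^{k,h_0}\le\hP_h^k\pi_{h+1}^{k,h_0}G_{h+1}^{k,h_0}$. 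Combining the two upper bounds on $D_h^{k,h_0}$ yields $D_h^{k,h_0}\le\min\big(H,\,6\sqrt{\cdots}+36H^2\beta/n_h^k+(1+\tfrac{3}{H})\hP_h^k\pi_{h+1}^{k,h_0}G_{h+1}^{k,h_0}\big)=G_h^{k,h_0}$, closing the induction.

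Finally, for the value-function bound I would start from $V_h^{\star,h_0}(s)\le\tilde{V}_h^{k,h_0}(s)$ (inequality $(\romannumeral6)$ of \Cref{lem:stepucblcb}) and use that both $\tilde{V}_h^{k,h_0}$ and $\utilde{V}_h^{k,h_0}$ are inner products of $\pi_h^{k,h_0}$ with $\tilde{Q}_h^{k,h_0}$ and $\utilde{Q}_h^{k,h_0}$ respectively, so that $V_h^{\star,h_0}(s)-\utilde{V}_h^{k,h_0}(s)\le\tilde{V}_h^{k,h_0}(s)-\utilde{V}_h^{k,h_0}(s)=\langle\pi_h^{k,h_0}(\cdot|s),D_h^{k,h_0}(s,\cdot)\rangle\le\langle\pi_h^{k,h_0}(\cdot|s),G_h^{k,h_0}(s,\cdot)\rangle$, as claimed. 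The only genuinely delicate step is the clipping argument: one must verify that the truncations can only shrink the gap (never enlarge it) so that the clean recursion for the unclipped difference transfers to $D_h^{k,h_0}$, and that the $\min$ with $H$ in the definition of $G$ is correctly matched by the uniform bound $D_h^{k,h_0}\le H$. Everything else is a direct translation of the subtraction of \Cref{eqn:ucbQ} and \Cref{eqn:lcbQ} into the recursion \Cref{eqn:G}.
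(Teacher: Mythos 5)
Your proposal is correct and takes essentially the same route as the paper's own proof: reduce $Q_h^{\star,h_0}-\utilde{Q}_h^{k,h_0}$ to the gap $\tilde{Q}_h^{k,h_0}-\utilde{Q}_h^{k,h_0}$ via \Cref{lem:stepucblcb}, subtract the definitions in \Cref{eqn:ucbQ,eqn:lcbQ} so that the bonus terms combine into exactly the recursion defining $G_h^{k,h_0}$ (with $14+22=36$ and $\tfrac{1}{H}+\tfrac{2}{H}+1=1+\tfrac{3}{H}$), and pass to the value functions through the shared policy $\pi_h^{k,h_0}$. The only difference is one of explicitness: the backward induction, the clipping inequality $\min(H,A)-\max(0,B)\le A-B$, and the uniform bound $\tilde{Q}_h^{k,h_0}-\utilde{Q}_h^{k,h_0}\le H$ are left implicit in the paper's closing sentence, while you spell them out, which is a faithful completion rather than a different argument.
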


\begin{proof}
    $Q_h^{\star,h_0}(s,a)-\utilde{Q}_h^{k,h_0}(s,a)$ can be directly calculated as follows.
    \begin{align*}
        Q_h^{\star,h_0}(s,a)-\utilde{Q}_h^{k,h_0}(s,a)
        \overset{(a)}{\le} & \tilde{Q}_h^{k,h_0}(s,a)-\utilde{Q}_h^{k,h_0}(s,a) \\
        \overset{(b)}{\le} & \hP_h^k (\tilde{V}_{h+1}^{k,h_0}-\utilde{V}_{h+1}^{k,h_0})(s,a)+6\sqrt{\Var_{\hP_h^k}(\tilde{V}_{h+1}^{k,h_0})(s,a)\frac{\beta^\star(n^k_h(s,a),\delta')}{n^k_h(s,a)}}\\
        & + 36 H^2\frac{\beta(n^k_h(s,a),\delta')}{n^k_h(s,a)} + \frac{3}{H}\hP_h^k(\tilde{V}^{k,h_0}_{h+1}-\utilde{V}^{k,h_0}_{h+1})(s,a) \\
        \le & 6\sqrt{\Var_{\hP_h^k}(\tilde{V}_{h+1}^{k,h_0})(s,a)\frac{\beta^\star(n^k_h(s,a),\delta')}{n^k_h(s,a)}} + 36 H^2\frac{\beta(n^k_h(s,a),\delta')}{n^k_h(s,a)}\\
        & + (1+\frac{3}{H})\hP_h^k(\tilde{V}^{k,h_0}_{h+1}-\utilde{V}^{k,h_0}_{h+1})(s,a),
    \end{align*}
    where (a) follows from \Cref{lem:stepucblcb} and (b) follows from the definitions of $\tilde{Q}_h^{k,h_0}(s,a)$ and $\utilde{Q}_h^{k,h_0}(s,a)$ in \Cref{Xb} and \Cref{Xd}, respectively. 
    
    
    Then, following the same argument, for $V$ functions, we have
    \[V_h^{\star,h_0}(s) - \utilde{V}_h^{k,h_0}(s) \le \tilde{V}_h^{k,h_0}(s) - \utilde{V}_h^{k,h_0}(s)
        \le  \langle \pi_h^{k,h_0}(\cdot|s), (\tilde{Q}_h^{k,h_0} - \utilde{Q}_h^{k,h_0})(s,\cdot)\rangle.\]
    
    Combining the above two inequalities with the definition of $G_h^{k,h_0}$, we have
    \begin{align*}
        Q_h^{\star,h_0}(s,a)-\utilde{Q}_h^{k,h_0}(s,a) &\le G_h^{k,h_0}(s,a), \\
        V_h^{\star,h_0}(s)-\utilde{V}_h^{k,h_0}(s) &\le \langle\pi_h^{k,h_0}(\cdot|s),G_h^{k,h_0}(s,\cdot)\rangle,
    \end{align*}
   which completes the proof.
\end{proof}

In the following lemma, we  aim to upper bound $\pi_{1}^{k,h_0} G_{1}^{k,h_0}$. 

\begin{lemma}[Upper bound $\pi_{1}^{k,h_0} G_{1}^{k,h_0}$]
    \label{lem:Gbound}
    For any $k$ and $h_0$, we have
    \begin{equation*}
    \begin{aligned}
        \pi_{1}^{k,h_0} G_{1}^{k,h_0}(s_1)
        \le & 24e^{13}\sum_{h=1}^H \sum_{s,a} d_h^{k,h_0}(s,a) \sqrt{\Var_{P_h}(V_{h+1}^{k,h_0})(s,a)\biggr(\frac{\beta^\star(\bar{n}^k_h(s,a),\delta')}{\bar{n}^k_h(s,a)\vee 1}\biggr)} \\
        & + 336 e^{13} H^2 \sum_{h=1}^H \sum_{s,a} d_h^{k,h_0}(s,a)\biggr(\frac{\beta(\bar{n}^k_h(s,a),\delta')}{\bar{n}^k_h(s,a)\vee 1}\biggr),
    \end{aligned}
    \end{equation*}
    where $G_h^{k,h_0}$ is defined in \Cref{eqn:G} and $d_h^{k,h_0}$ is the {occupancy measure} under policy $\pi^{k,h_0}$.
\end{lemma}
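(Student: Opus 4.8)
The plan is to unroll the recursion defining $G_h^{k,h_0}$ in \Cref{eqn:G} and then convert every ``estimated'' quantity in the resulting bound into its ``true'' counterpart. Writing the two explicit bonus terms as $b_h^{k,h_0}(s,a):=6\sqrt{\Var_{\hP_h^k}(\tilde{V}_{h+1}^{k,h_0})(s,a)\,\beta^\star(n_h^k(s,a),\delta')/n_h^k(s,a)}+36H^2\beta(n_h^k(s,a),\delta')/n_h^k(s,a)$, the recursion reads $G_h^{k,h_0}\le b_h^{k,h_0}+(1+3/H)\,\hP_h^k\pi_{h+1}^{k,h_0}G_{h+1}^{k,h_0}$ whenever the $\min$ with $H$ is inactive. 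Iterating from $h=1$ to $H$ and propagating through $\hP^k$ under $\pi^{k,h_0}$ yields
\[
\pi_1^{k,h_0}G_1^{k,h_0}(s_1)\le \sum_{h=1}^H\Big(1+\tfrac{3}{H}\Big)^{h-1}\sum_{s,a}\hat{d}_h^{k,h_0}(s,a)\,b_h^{k,h_0}(s,a),
\]
where $\hat{d}_h^{k,h_0}$ is the occupancy measure induced by $\pi^{k,h_0}$ under the \emph{estimated} kernel $\hP^k$. The geometric prefactor is immediately controlled by $(1+3/H)^{h-1}\le(1+3/H)^H\le e^3$.

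The remaining work is three ``estimated $\to$ true'' substitutions, each contributing a bounded multiplicative constant. First, I would replace the empirical variance $\Var_{\hP_h^k}(\tilde{V}_{h+1}^{k,h_0})$ by the true variance $\Var_{P_h}(V_{h+1}^{k,h_0})$: by \Cref{lem:klvar} the two variances differ by a factor $2$ plus an $O(H^2\beta/n)$ additive term (on $\mathcal{E}$), and by \Cref{lem:varerr} together with the sandwich $\utilde{V}^{k,h_0}\le V^{k,h_0}\le\tilde{V}^{k,h_0}$ from \Cref{lem:stepucblcb} the replacement of $\tilde{V}$ by $V^{k,h_0}$ costs a further $O(H\,\hP^k(\tilde{V}-\utilde{V}))$ term; all such lower-order contributions are absorbed into the $36H^2\beta/n$ bonus after enlarging its constant. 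Second, I would replace the empirical counts $n_h^k$ by the expected counts $\bar{n}_h^k$: on $\mathcal{E}^{\text{cnt}}$ we have $n_h^k\ge\tfrac12\bar{n}_h^k-\beta^{\text{cnt}}$, which gives $1/n_h^k\le O(1)/(\bar{n}_h^k\vee 1)$, while the small-count regime is handled by the $\min(\cdot,H)$ truncation in \Cref{eqn:G} (a capped term $H$ is dominated by $H^2\beta/(\bar{n}_h^k\vee1)$ when $\bar{n}_h^k\vee1=1$), so it reappears inside the $\beta/(\bar{n}\vee1)$ sum.

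The last and most delicate substitution is to pass from the estimated occupancy $\hat{d}_h^{k,h_0}$ to the true occupancy $d_h^{k,h_0}$, and this is where I expect the main obstacle to lie: $\hat{d}_h^{k,h_0}$ compounds the one-step estimation errors of $\hP^k$ over all $h$ steps, so a naive per-step total-variation bound would blow up. The plan is to invoke the KL good event $\mathcal{E}$ inside an inductive change-of-measure argument in the style of \citet{menard2021fast}, showing that for nonnegative weights the estimated and true occupancy measures are comparable up to a constant factor; this is the source of the remaining $e^{O(1)}$. Combining the $e^3$ from the geometric series with the constants produced by the three substitutions yields the claimed prefactors $24e^{13}$ for the variance term and $336e^{13}$ for the $H^2\beta$ term, which completes the proof.
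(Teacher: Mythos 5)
Your overall skeleton (unroll the recursion, then convert empirical variances, counts, and occupancies to their true counterparts) contains three of the conversions the paper needs, and your first two substitutions — the variance conversion via \Cref{lem:klvar}, \Cref{lem:varerr}, \Cref{lem:stepucblcb}, and the count conversion via $\mathcal{E}^{\text{cnt}}$ (\Cref{lem:cnt}) — do match the paper. The genuine gap is your third substitution. The claim that, on the KL good event $\mathcal{E}$, the estimated occupancy $\hat{d}_h^{k,h_0}$ and the true occupancy $d_h^{k,h_0}$ are ``comparable up to a constant factor'' for all nonnegative weights is false: such a statement is equivalent (take indicator weights) to the pointwise bound $\hat{d}_h^{k,h_0}(s,a)\le C\,d_h^{k,h_0}(s,a)$, and a KL bound of size $\beta(n_h^k(s,a),\delta')/n_h^k(s,a)$ — which is large or even vacuous at poorly visited pairs — gives no control on likelihood ratios $\hP_h^k(s'|s,a)/P_h(s'|s,a)$ (small KL is compatible with arbitrarily large ratios on low-probability next states). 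What $\mathcal{E}$ does give is additive, Pinsker-type control; but if you unroll under $\hP^k$ first and then correct additively, the resulting error is of order $H^2\sum_{h}\sum_{s,a}d_h^{k,h_0}(s,a)\sqrt{\beta(n_h^k(s,a),\delta')/n_h^k(s,a)}$, which (recalling $\beta\approx S\beta^\star$) exceeds the lemma's variance term by a factor of roughly $H\sqrt{S}$ and cannot be absorbed into either term of the claimed bound. This is not a bookkeeping issue in your ordering of steps; it is precisely why the paper never compares occupancy measures at all.

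The paper's route is to perform the $\hP^k\!\to\!P$ conversion \emph{inside} the recursion, before unrolling. On $\mathcal{E}$, \Cref{lem:klabs} combined with the self-bounding property of the integrand — $\pi_{h+1}^{k,h_0}G_{h+1}^{k,h_0}$ is nonnegative and bounded by $H$, so $\Var_{P_h}(\pi_{h+1}^{k,h_0}G_{h+1}^{k,h_0})\le H\,P_h\pi_{h+1}^{k,h_0}G_{h+1}^{k,h_0}$ — yields $(\hP_h^k-P_h)\pi_{h+1}^{k,h_0}G_{h+1}^{k,h_0}(s,a)\le \frac{1}{H}P_h\pi_{h+1}^{k,h_0}G_{h+1}^{k,h_0}(s,a)+3H^2\beta(n_h^k(s,a),\delta')/n_h^k(s,a)$; this is a property of the specific function being integrated, not of the measures, and it is the ingredient your plan is missing. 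The $G_{h+1}$-dependent remainders produced by the variance conversion are handled the same way; note they scale like $\frac{1}{H}P_h\pi_{h+1}^{k,h_0}G_{h+1}^{k,h_0}$, so they can only be absorbed into the propagation coefficient (turning $1+3/H$ into $1+13/H$, whence the factor $e^{13}$), not into the $36H^2\beta/n$ bonus as your write-up asserts. After these per-step replacements the recursion propagates under the \emph{true} kernel, so unfolding directly produces the true occupancy measure $d_h^{k,h_0}$ with no occupancy comparison needed, and the final count conversion via \Cref{lem:cnt} gives the stated constants $24e^{13}$ and $336e^{13}$.
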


\begin{proof}
    From the definition of $G_h^{k,h_0}$ in \Cref{eqn:G}, we have
    \begin{equation}
    \label{eqn:Gbound1}
    \begin{aligned}
        & G_h^{k,h_0}(s,a) \\
        &\le  6\sqrt{\underbrace{\Var_{\hP_h^k}(\tilde{V}^{k,h_0}_{h+1})(s,a)}_{\text{(I)}}\frac{\beta^\star(n^k_h(s,a),\delta')}{n^k_h(s,a)}}+36H^2\frac{\beta(n^k_h(s,a),\delta')}{n^k_h(s,a)}+\left(1+\frac{3}{H}\right)\underbrace{\hP_h^k\pi_{h+1}^{k,h_0}G_{h+1}^{k,h_0}(s,a)}_{\text{(II)}}.
    \end{aligned}
    \end{equation}
    
   In order to bound term (II), we use \Cref{lem:klabs} and the fact that $\sqrt{xy}\le x + y$ to obtain
    \begin{equation}
    \label{eqn:Gbound1p}
    \begin{aligned}
        (\hP_h^k-P_h)\pi_{h+1}^k G_{h+1}^{k,h_0}(s,a)\le & \sqrt{2\Var_{P_h}(\pi_{h+1}^k G_{h+1}^{k,h_0})(s,a)\frac{\beta(n^k_h(s,a),\delta')}{n^k_h(s,a)}} + \frac{2}{3}H\frac{\beta(n^k_h(s,a),\delta')}{n^k_h(s,a)} \\
        \le & \frac{1}{H} P_h \pi_{h+1}^k G_{h+1}^{k,h_0}(s,a) + 3H^2\frac{\beta(n^k_h(s,a),\delta')}{n^k_h(s,a)}.
    \end{aligned}
    \end{equation}
    
    
 For term (I), we have
    \begin{align*}
        \Var_{\hP_h^k}(\tilde{V}^{k,h_0}_{h+1})(s,a)
        \overset{(a)}{\le} & 2\Var_{P_h}(\tilde{V}^{k,h_0}_{h+1})(s,a)+4H^2\frac{\beta(n^k_h(s,a), \delta')}{n^k_h(s,a)} \\
        \overset{(b)}{\le} & 4 \Var_{P_h}(V^{\star,h_0}_{h+1})(s,a) + 4H P_h|\tilde{V}_{h+1}^{k,h_0}-V^{\star,h_0}_{h+1}|(s,a)+4H^2\frac{\beta(n^k_h(s,a), \delta')}{n^k_h(s,a)}\\
        \overset{(c)}{\le} & 4 \Var_{\hP_h}(\tilde{V}^{\star,h_0}_{h+1})(s,a) + 4H\hP_h^k(\tilde{V}_{h+1}^{k,h_0}-\utilde{V}^{k,h_0}_{h+1})(s,a)+4H^2\frac{\beta(n^k_h(s,a), \delta')}{n^k_h(s,a)},
    \end{align*}
  where $(a)$ follows from \cref{lem:klvar}, $(b)$ follows from \cref{lem:varerr}, and $(c)$ follows from \Cref{lem:stepucblcb}. 
  
  Moreover, we can use $\sqrt{x+y}\le\sqrt{x}+\sqrt{y}$, $\sqrt{xy}\le x + y$ to obtain
    \begin{equation}
    \label{eqn:Gbound1V}
    \begin{aligned}
        & \sqrt{\Var_{\hP_h^k}(\tilde{V}^{k,h_0}_{h+1})(s,a)\frac{\beta^\star(n^k_h(s,a),\delta')}{n^k_h(s,a)}} \\
        & \le  2\sqrt{\Var_{P_h}(V^{k,h_0}_{h+1})(s,a)\frac{\beta^\star(n^k_h(s,a),\delta')}{n^k_h(s,a)}} + 6H^2\frac{\beta(n^k_h(s,a),\delta')}{n^k_h(s,a)} + \frac{1}{H}P_h\pi_{h+1}^{k,h_0}G_{h+1}^{k,h_0}(s,a). 
    \end{aligned}
    \end{equation}
   Applying \Cref{eqn:Gbound1p} and \Cref{eqn:Gbound1V} to \Cref{eqn:Gbound1}, we have
    \begin{align*}
         G_h^{k,h_0}&(s,a) \\
        \le & 6\sqrt{\Var_{\hP_h^k}(\tilde{V}^{k,h_0}_{h+1})(s,a)\frac{\beta^\star(n^k_h(s,a),\delta')}{n^k_h(s,a)}}+36H^2\frac{\beta(n^k_h(s,a),\delta')}{n^k_h(s,a)}+\left(1+\frac{3}{H}\right)\hP_h^k\pi_{h+1}^{k,h_0}G_{h+1}^{k,h_0}(s,a) \\
        \le & 12\sqrt{\Var_{P_h}(V^{k,h_0}_{h+1})(s,a)\frac{\beta^\star(n^k_h(s,a),\delta')}{n^k_h(s,a)}} + 36H^2\frac{\beta(n^k_h(s,a),\delta')}{n^k_h(s,a)} + \frac{6}{H}P_h\pi_{h+1}^{k,h_0}G_{h+1}^{k,h_0}(s,a) \\
        & + 36H^2\frac{\beta(n^k_h(s,a),\delta')}{n^k_h(s,a)} + \left(1+\frac{3}{H}\right) P_h\pi_{h+1}^{k,h_0}G_{h+1}^{k,h_0}(s,a) \\
        & + \left(1+\frac{3}{H}\right) \left(\frac{1}{H} P_h \pi_{h+1}^k G_{h+1}^{k,h_0}(s,a) + 3H^2\frac{\beta(n^k_h(s,a),\delta')}{n^k_h(s,a)}\right) \\
        \le & 12\sqrt{\Var_{P_h}(V^{k,h_0}_{h+1})(s,a)\frac{\beta^\star(n^k_h(s,a),\delta')}{n^k_h(s,a)}} + 84H^2\frac{\beta(n^k_h(s,a),\delta')}{n^k_h(s,a)} + \left(1+\frac{13}{H}\right) P_h \pi_{h+1}^k G_{h+1}^{k,h_0}(s,a).
    \end{align*}
    In addition, since $G_h^{k,h_0}$ is upper bounded by $H$ by definition, and $\beta(n^k_h(s,a),\delta')>\beta^\star(n^k_h(s,a),\delta')$, we have
    \begin{align*}
         G_h^{k,h_0}(s,a) &\le \min\Biggr\{H, 12\sqrt{\Var_{P_h}(V^{k,h_0}_{h+1})(s,a)\frac{\beta^\star(n^k_h(s,a),\delta')}{n^k_h(s,a)}} \\
        &\quad\quad+ 84H^2\frac{\beta(n^k_h(s,a),\delta')}{n^k_h(s,a)} + \left(1+\frac{13}{H}\right) P_h \pi_{h+1}^k G_{h+1}^{k,h_0}(s,a)\Biggr\} \\
        &\le 12\sqrt{\Var_{P_h}(V^{k,h_0}_{h+1})(s,a)\left(\frac{\beta^\star(n^k_h(s,a),\delta')}{n^k_h(s,a)}\wedge 1\right)} \\ 
        &\quad\quad+ 84H^2\left(\frac{\beta(n^k_h(s,a),\delta')}{n^k_h(s,a)}\wedge 1\right) + \left(1+\frac{13}{H}\right) P_h \pi_{h+1}^k G_{h+1}^{k,h_0}(s,a).
    \end{align*}
    Using $(1+\frac{13}{H})^H\le e^{13}$ and unfolding the above inequality, we have
    \begin{align*}
         \pi_{1}^{k,h_0} G_{1}^{k,h_0}(s_1) 
        \le & 12e^{13}\sum_{h=1}^H \sum_{s,a} d_h^{k,h_0}(s,a) \sqrt{\Var_{P_h}(V_{h+1}^{k,h_0})(s,a)\left(\frac{\beta^\star(n^k_h(s,a),\delta')}{n^k_h(s,a)}\wedge 1\right)} \\
        & + 84 e^{13} H^2 \sum_{h=1}^H \sum_{s,a} d_h^{k,h_0}(s,a)\left(\frac{\beta(n^k_h(s,a),\delta')}{n^k_h(s,a)}\wedge 1\right).
    \end{align*}
    
Finally, we use \Cref{lem:cnt} to transform $n^k_h(s,a)$ to $\bar{n}^k_h(s,a)$ and obtain
    \begin{align*}
        \pi_{1}^{k,h_0} G_{1}^{k,h_0}(s_1)
        \le & 24e^{13}\sum_{h=1}^H \sum_{s,a} d_h^{k,h_0}(s,a) \sqrt{\Var_{P_h}(V_{h+1}^{k,h_0})(s,a)\biggr(\frac{\beta^\star(\bar{n}^k_h(s,a),\delta')}{\bar{n}^k_h(s,a)\vee 1}\biggr)} \\
        & + 336 e^{13} H^2 \sum_{h=1}^H \sum_{s,a} d_h^{k,h_0}(s,a)\biggr(\frac{\beta(\bar{n}^k_h(s,a),\delta')}{\bar{n}^k_h(s,a)\vee 1}\biggr),
    \end{align*}
    which completes the proof.
\end{proof}

\subsection{Step Three: Finite Episodes for Step Mixture Policies}
\label{appx:stepfinite}
 In the previous section, we have characterized the UCBs and LCBs for the step-wise optimistic policies ($\pi^{k,h_0}$) and bound the corresponding estimation errors. In this step, we extend the result for $\pi^{k,h_0}$ to step mixture policies $\pi^k$ and prove that the number of the episodes of which the executed policy $\pi^k$ is not equal to the optimistic policy $\bar{\pi}^k$ (i.e., $\pi^{k,0}$), is finite under the StepMix algorithm. We refer to this result as the \emph{finite non-optimistic policy lemma}.

To establish the finite non-optimistic policy lemma, we first extend the result of \Cref{lem:Gbound} from $\pi^{k,h_0}$ to step mixture policies.

\begin{lemma}
    \label{lem:Gboundcomb}
    For a step mixture policy $\pi^k$ mixed from two policies $\pi^{k,h_0}$ and $\pi^{k,h_0-1}$, denoted as $\pi^k = (1-\rho)\pi^{k,h_0}+\rho \pi^{k,h_0-1}$ for some $\rho\in(0,1)$, the following inequality holds:
    \begin{equation}
    \label{eqn:combined_G_bound}
    \begin{aligned}
         (1-&\rho)  \pi_{1}^{k,h_0} G_{1}^{k,h_0}(s_1) + \rho \pi_{1}^{k,h_0-1} G_{1}^{k,h_0-1}(s_1) \\
        & \le 24e^{13} H \sqrt{\sum_{h=1}^H\sum_{s,a}d_h^k(s,a)\biggr(\frac{\beta(\bar{n}^k_h(s,a),\delta')}{\bar{n}^k_h(s,a)\vee 1}\biggr)} + 336 e^{13}H^2\sum_{h=1}^H\sum_{s,a}d_h^k(s,a)\biggr(\frac{\beta(\bar{n}^k_h(s,a),\delta')}{\bar{n}^k_h(s,a)\vee 1}\biggr),
    \end{aligned}
    \end{equation}
    where $d_h^k(s,a)$ is the occupancy measure under policy $\pi^k$ and $G_h^{k,h_0}$ is defined in \Cref{eqn:G}.
\end{lemma}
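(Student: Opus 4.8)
The plan is to reduce the combined bound to two applications of \Cref{lem:Gbound} followed by two convexity arguments, exploiting the fact that $\pi^{k,h_0}$ and $\pi^{k,h_0-1}$ differ only at step $h_0$, so that the step mixture $\pi^k$ is a genuine interpolation of the two. First I would apply \Cref{lem:Gbound} separately to $\pi^{k,h_0}$ and $\pi^{k,h_0-1}$, obtaining bounds on $\pi_1^{k,h_0}G_1^{k,h_0}(s_1)$ and $\pi_1^{k,h_0-1}G_1^{k,h_0-1}(s_1)$, each consisting of a ``variance'' term (a square root involving $\Var_{P_h}$ and $\beta^\star$) and an ``additive'' term (linear in $\beta/(\bar n_h^k\vee 1)$). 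Taking the convex combination with weights $1-\rho$ and $\rho$ then splits the target into handling these two families of terms separately.

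For the additive term the key is linearity of the occupancy measure. Since $\pi^{k,h_0}$ and $\pi^{k,h_0-1}$ differ only at step $h_0$, \Cref{lem:stepvisitratio} gives $d_h^k(s,a)=(1-\rho)d_h^{k,h_0}(s,a)+\rho\, d_h^{k,h_0-1}(s,a)$ for all $h,s,a$, whence
\[
(1-\rho)\,336e^{13}H^2\!\!\sum_{h,s,a}\!d_h^{k,h_0}\tfrac{\beta}{\bar n_h^k\vee 1}+\rho\,336e^{13}H^2\!\!\sum_{h,s,a}\!d_h^{k,h_0-1}\tfrac{\beta}{\bar n_h^k\vee 1}=336e^{13}H^2\!\!\sum_{h,s,a}\!d_h^k\tfrac{\beta}{\bar n_h^k\vee 1},
\]
which is exactly the second term of \Cref{eqn:combined_G_bound}.

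The variance term is where the work lies. For each fixed $h_0$ I would first apply the Cauchy--Schwarz inequality over the measure $d_h^{k,h_0}$,
\[
\sum_{h,s,a}d_h^{k,h_0}\sqrt{\Var_{P_h}(V_{h+1}^{k,h_0})(s,a)\tfrac{\beta^\star}{\bar n_h^k\vee 1}}\le\sqrt{\sum_{h,s,a}d_h^{k,h_0}\Var_{P_h}(V_{h+1}^{k,h_0})(s,a)}\cdot\sqrt{\sum_{h,s,a}d_h^{k,h_0}\tfrac{\beta^\star}{\bar n_h^k\vee 1}},
\]
and then invoke the law of total variance, namely $\sum_{h,s,a}d_h^{k,h_0}\Var_{P_h}(V_{h+1}^{k,h_0})(s,a)\le H^2$ (the total reward lies in $[0,H]$, so its variance is $O(H^2)$), to replace the first factor by $H$. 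Combined with $\beta^\star\le\beta$, this shows the variance contribution to the bound on $\pi_1^{k,h_0}G_1^{k,h_0}(s_1)$ is at most $24e^{13}H\sqrt{\sum_{h,s,a}d_h^{k,h_0}\beta/(\bar n_h^k\vee 1)}$, and symmetrically for $h_0-1$. Finally I would take the $(1-\rho,\rho)$-convex combination of these two square roots and apply Jensen's inequality (concavity of $\sqrt{\cdot}$) together with the occupancy-measure linearity once more, obtaining
\[
(1-\rho)\sqrt{\textstyle\sum_{h,s,a}d_h^{k,h_0}\tfrac{\beta}{\bar n_h^k\vee 1}}+\rho\sqrt{\textstyle\sum_{h,s,a}d_h^{k,h_0-1}\tfrac{\beta}{\bar n_h^k\vee 1}}\le\sqrt{\textstyle\sum_{h,s,a}d_h^k\tfrac{\beta}{\bar n_h^k\vee 1}},
\]
which produces the first term of \Cref{eqn:combined_G_bound}.

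The main obstacle is keeping the $H$-dependence tight in the variance term: a crude pointwise bound $\Var_{P_h}(V_{h+1}^{k,h_0})\le H^2$ would cost an extra factor of $\sqrt{H}$ after Cauchy--Schwarz, so one must instead keep the variance \emph{inside} the Cauchy--Schwarz step and exploit the law of total variance. The only remaining subtlety is that the two square roots may be merged by Jensen precisely because the interpolation weights $(1-\rho,\rho)$ coincide with the occupancy-measure decomposition, which in turn rests on the one-step-difference structure of the candidate policies established in \Cref{lem:stepvisitratio}.
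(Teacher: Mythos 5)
Your proposal is correct and follows essentially the same route as the paper's proof: both apply \Cref{lem:Gbound} to each of $\pi^{k,h_0}$ and $\pi^{k,h_0-1}$, use the occupancy-measure linearity of \Cref{lem:stepvisitratio} to merge the additive terms, and control the variance terms via Cauchy--Schwarz together with the law-of-total-variance bound $\sum_{h,s,a}d_h^{k,h_0}(s,a)\Var_{P_h}(V_{h+1}^{k,h_0})(s,a)\le H^2$ and $\beta^\star\le\beta$. The only (immaterial) difference is that you apply Cauchy--Schwarz separately under each measure $d^{k,h_0}$, $d^{k,h_0-1}$ and then merge the two square roots by Jensen's inequality, whereas the paper applies a single Cauchy--Schwarz directly to the $(\rho,1-\rho)$-mixed measure; the two orderings yield identical bounds.
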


\begin{proof}
    Since $\pi^k = (1-\rho)\pi^{k,h_0}+\rho \pi^{k,h_0-1}$ is the step mixture policy mixed from two policies that differ at only one step, by \Cref{lem:stepvisitratio}, the occupancy measure under $\pi^k$ satisfies
    \[d_h^k(s,a)=(1-\rho)d^{k,h_0}(s,a)+\rho d^{k,h_0-1}(s,a).\]
    Using \Cref{lem:Gbound}, we have
    \begin{equation}
    \label{eqn:Gboundcomb0}
    \begin{aligned}
        (1-\rho)&\pi_{1}^{k,h_0} G_{1}^{k,h_0}(s_1) + \rho \pi_{1}^{k,h_0-1} G_{1}^{k,h_0-1}(s_1) \\
        \le &\rho\Biggr(24e^{13}\sum_{h=1}^H \sum_{s,a} d_h^{k,h_0-1}(s,a) \sqrt{\Var_{P_h}(V_{h+1}^{k,h_0-1})(s,a)\biggr(\frac{\beta^\star(\bar{n}^k_h(s,a),\delta')}{\bar{n}^k_h(s,a)\vee 1}\biggr)} \\
        &\quad\quad  + 336 e^{13} H^2 \sum_{h=1}^H \sum_{s,a} d_h^{k,h_0-1}(s,a) \left(  \frac{\beta(\bar{n}^k_h(s,a),\delta')}{\bar{n}^k_h(s,a)\vee 1} \right)\Biggr) \\
         & + (1-\rho)\Biggr(24e^{13}\sum_{h=1}^H \sum_{s,a} d_h^{k,h_0}(s,a) \sqrt{\Var_{P_h}(V_{h+1}^{k,h_0})(s,a) \left(\frac{\beta^\star(\bar{n}^k_h(s,a),\delta')}{\bar{n}^k_h(s,a)\vee 1} \right) } \\
         &\quad\quad + 336 e^{13} H^2 \sum_{h=1}^H \sum_{s,a} d_h^{k,h_0}(s,a) \left( \frac{\beta(\bar{n}^k_h(s,a),\delta')}{\bar{n}^k_h(s,a)\vee 1} \right)\Biggr).
    \end{aligned}
    \end{equation}

    It is worth noting that 
    \begin{equation}
        \label{eqn:Gboundcomb1}
        \begin{aligned}
        &\sum_{h=1}^H\sum_{s,a}d_h^k(s,a)\biggr(\frac{\beta(\bar{n}^k_h(s,a),\delta')}{\bar{n}^k_h(s,a)\vee 1}\biggr) \\
        &\quad =\rho\sum_{h=1}^H \sum_{s,a} d_h^{k,h_0-1}(s,a)\biggr(\frac{\beta(\bar{n}^k_h(s,a),\delta')}{\bar{n}^k_h(s,a)\vee 1}\biggr)+(1-\rho)\sum_{h=1}^H\sum_{s,a}d_h^{k,h_0}(s,a)\biggr(\frac{\beta(\bar{n}^k_h(s,a),\delta')}{\bar{n}^k_h(s,a)\vee 1}\biggr).
        \end{aligned}
    \end{equation}
    Thus, to prove \Cref{eqn:combined_G_bound}, it suffices to show that
\begin{align}
      & \sum_{h=1}^H \sum_{s,a}  \rho d_h^{k,h_0-1}(s,a) \sqrt{\Var_{P_h}(V_{h+1}^{k,h_0-1})(s,a)\biggr(\frac{\beta^\star(\bar{n}^k_h(s,a),\delta')}{\bar{n}^k_h(s,a)\vee 1}\biggr)} \nonumber\\
        &\quad\quad   + \sum_{h=1}^H \sum_{s,a}  (1-\rho) d_h^{k,h_0}(s,a) \sqrt{\Var_{P_h}(V_{h+1}^{k,h_0})(s,a)\biggr(\frac{\beta^\star(\bar{n}^k_h(s,a),\delta')}{\bar{n}^k_h(s,a)\vee 1}\biggr)} \nonumber\\
       &\leq H \sqrt{\sum_{h=1}^H\sum_{s,a}d_h^k(s,a)\biggr(\frac{\beta(\bar{n}^k_h(s,a),\delta')}{\bar{n}^k_h(s,a)\vee 1}\biggr)}.\label{eqn:sum1}
\end{align}
  Due to the Cauchy's inequality, we have
    \begin{equation}
    \label{eqn:ComG2}
    \begin{aligned}
         \mbox{LHS of (\ref{eqn:sum1})} &\le   \sqrt{\sum_{h=1}^H \sum_{s,a}\rho  d_h^{k,h_0-1}(s,a) \Var_{P_h}(V_{h+1}^{k,h_0-1})(s,a) + (1-\rho) d_h^{k,h_0}(s,a) \Var_{P_h}(V_{h+1}^{k,h_0})(s,a)} \\
        &\quad \quad \quad  {\times}\sqrt{\sum_{h=1}^H \sum_{s,a} (\rho  d_h^{k,h_0-1}(s,a) + (1-\rho) d_h^{k,h_0}(s,a))\biggr(\frac{\beta^\star(\bar{n}^k_h(s,a),\delta')}{\bar{n}^k_h(s,a)\vee 1}\biggr)}.
    \end{aligned}
    \end{equation}
Besides, due to \Cref{lem:variancesum}, we have 
\begin{align*}
    &\sum_{h=1}^H \sum_{s,a} d_h^{k,h_0}(s,a) \Var_{P_h}(V_{h+1}^{k,h_0})(s,a) \leq \mathbb{E}_{\pi^{k,h_0}}\left[\left(\sum_{h=1}^H r_h(s_h,a_h)-V_1^{k,h_0}(s_1)\right)^2\right] \le  H^2.
\end{align*}
   
\if{0}    
    \begin{align*}
    &\sum_{h=1}^H \sum_{s,a} d_h^{k,h_0}(s,a) \Var_{P_h}(V_{h+1}^{k,h_0})(s,a) \\
    &\quad \overset{(a)}{=}\mathbb{E}_{\pi^{k,h_0}}\left[ \sum_{h=1}^H \left( V_{h+1}^{k,h_0}(s_{h+1})-Q_h^{k,h_0}(s_h,a_h) \right)^2 \right] \\
    &\quad = \mathbb{E}_{\pi^{k,h_0}}\left[ \sum_{h=1}^H \left( V_{h+1}^{k,h_0}(s_{h+1}) \right)^2  - 2 V_{h+1}^{k,h_0}(s_{h+1})Q_h^{k,h_0}(s_h,a_h)  + \left(Q_h^{k,h_0}(s_h,a_h)\right)^2 \right] \\
    &\quad =  \mathbb{E}_{\pi^{k,h_0}}\left[ \sum_{h=1}^H -\left( V_{h+1}^{k,h_0}(s_{h+1}) \right)^2  + \left(Q_h^{k,h_0}(s_h,a_h)\right)^2 \right] \\
    &\quad =  \mathbb{E}_{\pi^{k,h_0}}\left[ \sum_{h=1}^H -\left( V_{h+1}^{k,h_0}(s_{h+1}  ) \right)^2  + \left(Q_h^{k,h_0}(s_h,a_h) - V_h^{k,h_0}(s_h) \right)^2  \right] \\
    &\quad \overset{(b)}{=}  \mathbb{E}_{\pi^{k,h_0}}\left[\left(\sum_{h=1}^H r_h(s_h,a_h)-V_1^{k,h_0}(s_1)\right)^2\right] \le  H^2,
    \end{align*}
    where (a) follows from the definitions of occupancy measure $d_h^{k,h_0}$ and variance $\Var_{P_h}(V_{h+1}^{k,h_0})(s,a)$, (b) is because that the summation of the variances equals the variance of the summation.
\fi

    Similarly, we also have $\sum_{h=1}^H \sum_{s,a} d_h^{k,h_0-1}(s,a)\Var_{P_h}(V_{h+1}^{k,h_0-1})(s,a)\le  H^2$. Together with \cref{eqn:Gboundcomb1}, we have
    \begin{align*}
        \mbox{RHS of (\ref{eqn:ComG2})}&\le H \sqrt{\sum_{h=1}^H \sum_{s,a}  d_h^k(s,a) \biggr(\frac{\beta^\star(\bar{n}^k_h(s,a),\delta')}{\bar{n}^k_h(s,a)\vee 1}\biggr)},
    \end{align*}
 \if{0}   
    With these quantities bounded by $H^2$, we have the following bound: 
    \begin{equation}
        \label{eqn:Gbound_var}
        \sqrt{\sum_{h=1}^H \sum_{s,a}\rho  d_h^1(s,a) \Var_{P_h}(V_{h+1}^1)(s,a) + (1-\rho) d_h^2(s,a) \Var_{P_h}(V_{h+1}^2)(s,a)}\le H.
    \end{equation} 
    Plugging \Cref{eqn:Gbound_var} into \Cref{eqn:ComG2}, the RHS of \Cref{eqn:ComG2} can be reduced to\jingc{unclear}\hrq{edited}
    \begin{equation}
    \label{eqn:Gboundcomb2}
    \begin{aligned}
        & \sqrt{\sum_{h=1}^H \sum_{s,a}\rho  d_h^{k,h_0-1}(s,a) \Var_{P_h}(V_{h+1}^{k,h_0-1})(s,a) + (1-\rho) d_h^{k,h_0}(s,a) \Var_{P_h}(V_{h+1}^{k,h_0})(s,a)} \\
        &\quad \quad \quad  {\times}\sqrt{\sum_{h=1}^H \sum_{s,a} (\rho  d_h^{k,h_0-1}(s,a) + (1-\rho) d_h^{k,h_0}(s,a))\biggr(\frac{\beta^\star(\bar{n}^k_h(s,a),\delta')}{\bar{n}^k_h(s,a)\vee 1}\biggr)} \\
        &\quad \le H \sqrt{\sum_{h=1}^H \sum_{s,a}  d_h^k(s,a) \biggr(\frac{\beta^\star(\bar{n}^k_h(s,a),\delta')}{\bar{n}^k_h(s,a)\vee 1}\biggr)},
    \end{aligned}
    \end{equation}
    \fi
which completes the proof.
\end{proof}


Equipped with \Cref{lem:Gboundcomb}, we are ready to establish the \emph{finite non-optimistic policy lemma}, which states that for step mixture policies, there are only finite episodes in which $\pi^k\neq \pi^{k,0}$.

\begin{lemma}[Finite non-optimistic policy lemma]
    \label{lem:finite}
    Define $\mathcal{N}=\{k|k\in[K],\pi^k\neq \pi^{k,0}\}$. Then, the cardinality of $\mathcal{N}$ is upper bounded by
    \[|\mathcal{N}|\le \bigg(\frac{4608 e^{26}}{\kappa^2}+\frac{2688 e^{13}S}{\kappa}\bigg)H^3SA\log^2(K+1)=\tilde{O}\bigg(\bigg(\frac{1}{\kappa^2}+\frac{S}{\kappa}\bigg)H^3SA\bigg),\]
    where $\kappa=V_1^{\pi^b}-\gamma$.
\end{lemma}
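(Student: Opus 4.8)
The defining feature of an episode $k\in\mathcal{N}$ is that the LCB of the fully optimistic policy falls below the threshold, i.e. $\utilde{V}_1^{k,0}(s_1)<\gamma$ (this is exactly the condition under which \Cref{alg:step} does \emph{not} set $\pi^k=\gp^k$, covering both the mixture branch and the infeasible branch). My plan is to convert this single scalar inequality into a lower bound on a per-episode estimation-error quantity, sum that bound over all of $\mathcal{N}$, and then invoke a pigeonhole/elliptical-potential argument to force $|\mathcal{N}|$ to be small. Concretely, I will show that for every $k\in\mathcal{N}$,
\[
\kappa \;\le\; 24e^{13}H\sqrt{\Psi_k^\star}\;+\;336e^{13}H^2\Psi_k,
\qquad
\Psi_k^\star:=\sum_{h,s,a} d_h^k(s,a)\frac{\beta^\star(\bar{n}_h^k(s,a),\delta')}{\bar{n}_h^k(s,a)\vee 1},
\]
where $\Psi_k$ is the same sum with $\beta$ in place of $\beta^\star$, and $d_h^k$ is the occupancy measure of the \emph{executed} policy.

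\textbf{The per-episode inequality.} The key structural input is that $\pi^b\in\Pi_{h_0}$ for every $h_0$, so \Cref{lem:stepoptimal} yields $V_1^{\star,h_0}(s_1)\ge V_1^{\pi^b}$ for all $h_0\in\{0,\dots,H\}$. I split $\mathcal{N}$ according to which branch of \Cref{alg:step} is taken. If $\pi^k=\pi^b=\pi^{k,H}$, then $\utilde{V}_1^{k,H}<\gamma$ and $V_1^{\star,H}=V_1^{\pi^b}$, so by \Cref{lem:ringgap},
\[
\kappa = V_1^{\pi^b}-\gamma < V_1^{\star,H}(s_1)-\utilde{V}_1^{k,H}(s_1)\le \pi_1^{k,H}G_1^{k,H}(s_1),
\]
and \Cref{lem:Gbound} together with Cauchy--Schwarz and the variance-sum bound $\sum_{h,s,a}d_h^{k,H}\Var_{P_h}(V_{h+1}^{k,H})\le H^2$ gives the claimed form. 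If instead $\pi^k=(1-\rho)\pi^{k,h^k}+\rho\pi^{k,h^k-1}$ is a step mixture with $h^k\in[1:H]$, then by construction (\Cref{eqn:stepmix1}) the mixture of LCBs hits the threshold exactly, $(1-\rho)\utilde{V}_1^{k,h^k}(s_1)+\rho\utilde{V}_1^{k,h^k-1}(s_1)=\gamma$. Combining this with $V_1^{\star,h^k},V_1^{\star,h^k-1}\ge V_1^{\pi^b}$ and \Cref{lem:ringgap} applied to each candidate,
\[
\kappa \le (1-\rho)\bigl(V_1^{\star,h^k}-\utilde{V}_1^{k,h^k}\bigr)+\rho\bigl(V_1^{\star,h^k-1}-\utilde{V}_1^{k,h^k-1}\bigr)\le (1-\rho)\pi_1^{k,h^k}G_1^{k,h^k}(s_1)+\rho\pi_1^{k,h^k-1}G_1^{k,h^k-1}(s_1),
\]
and the right-hand side is exactly the quantity bounded in \Cref{lem:Gboundcomb}, using its sharper form that retains $\beta^\star$ under the square root. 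Either way the displayed per-episode inequality holds.

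\textbf{Summation.} Summing over $k\in\mathcal{N}$ and applying Cauchy--Schwarz to the square-root term,
\[
\kappa|\mathcal{N}| \le 24e^{13}H\sqrt{|\mathcal{N}|}\,\sqrt{\textstyle\sum_{k\in\mathcal{N}}\Psi_k^\star}+336e^{13}H^2\textstyle\sum_{k\in\mathcal{N}}\Psi_k.
\]
It remains to bound $\sum_{k=1}^K\Psi_k^\star$ and $\sum_{k=1}^K\Psi_k$. Writing $\bar{n}_h^{k+1}-\bar{n}_h^k=d_h^k$ and using the standard potential estimate $\sum_{k}d_h^k(s,a)/(\bar{n}_h^k(s,a)\vee 1)=\tilde{O}(\log(K+1))$ for each fixed $(h,s,a)$ (with \Cref{lem:cnt} to pass between $n$ and $\bar{n}$), I get $\sum_k\Psi_k^\star=\tilde{O}(HSA\log^2(K+1))$ and $\sum_k\Psi_k=\tilde{O}(HS^2A\log^2(K+1))$; the extra factor of $S$ in the latter comes precisely from the $S\log(\cdot)$ term present in $\beta$ but absent from $\beta^\star$. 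Substituting and solving the resulting quadratic inequality in $|\mathcal{N}|$ (splitting on which of the two terms dominates $\kappa|\mathcal{N}|/2$) yields the stated bound $|\mathcal{N}|\le \bigl(4608e^{26}/\kappa^2+2688e^{13}S/\kappa\bigr)H^3SA\log^2(K+1)$.

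\textbf{Main obstacle.} The crux is the per-episode inequality, and within it two points deserve care. First, the mixture branch must be handled through the exact threshold-matching identity for $\rho$ together with the uniform lower bound $V_1^{\star,h_0}\ge V_1^{\pi^b}$; this is what lets the two candidates $\pi^{k,h^k}$ and $\pi^{k,h^k-1}$ be compared against the \emph{same} baseline value and collapse into the single mixture gap controlled by \Cref{lem:Gboundcomb} (whose occupancy measure is that of the executed $\pi^k$, by \Cref{lem:stepvisitratio} and \Cref{lem:stepcomb}). Second, to obtain the advertised dependence $H^3SA/\kappa^2$ rather than $H^3S^2A/\kappa^2$ in the dominant term, it is essential to keep $\beta^\star$ — not the looser $\beta$ — under the square root throughout, so that the square-root potential sum $\sum_k\Psi_k^\star$ avoids the spurious factor of $S$. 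The remaining work (Cauchy--Schwarz, the potential estimate, and the quadratic solve) is routine.
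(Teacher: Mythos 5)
Your proposal is correct and follows essentially the same route as the paper's own proof: lower-bound $\kappa$ per episode via the threshold identity and $V_1^{\star,h_0}\ge V_1^{\pi^b}$ (\Cref{lem:stepoptimal}), pass to the $G$-functions through \Cref{lem:ringgap} and \Cref{lem:Gbound}/\Cref{lem:Gboundcomb}, then apply Cauchy--Schwarz over episodes, the potential estimate (\Cref{lem:partial_sum}), and a quadratic solve. The only differences are cosmetic — you split the baseline and mixture branches explicitly where the paper unifies them, and you correctly flag the subtlety (implicit in the paper) that the sharper $\beta^\star$ form established inside \Cref{lem:Gboundcomb}'s proof must be used under the square root to get $1/\kappa^2$ multiplying $H^3SA$ rather than $H^3S^2A$.
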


\begin{proof}
By the definition of $\mathcal{N}$, we have
    \begin{align}
          |\mathcal{N}| \kappa         &\overset{(a)}{\le}   \sum_{k\in\mathcal{N}} \left( V_1^{\pi^b} - (\rho\utilde{V}_1^{k,h_0-1}+(1-\rho)\utilde{V}_1^{k,h_0}) \right) \label{eqn:regret_baseline}\\
        & \overset{(b)}{\le}  \sum_{k\in\mathcal{N}} \left( (\rho V_1^{\star,h_0-1}+(1-\rho)V_1^{\star,h_0}) - (\rho\utilde{V}_1^{k,h_0-1}+(1-\rho)\utilde{V}_1^{k,h_0}) \right) \notag \\
        & =   \sum_{k\in\mathcal{N}} \left( \rho(V_1^{\star,h_0-1}-\utilde{V}_1^{k,h_0-1})+(1-\rho)(V_1^{\star,h_0}-\utilde{V}_1^{k,h_0}) \right) \notag \\
        & \overset{(c)}{\le}   \sum_{k\in\mathcal{N}} \left( \rho \pi_1^{k,h_0-1} G_1^{k,h_0-1} + (1-\rho)\pi_1^{k,h_0} G_1^{k,h_0} \right) \label{eqn:sum_comb_G}\\
        &\overset{(d)}{\le}  24 e^{13} H \sum_{k\in\mathcal{N}}\sqrt{\sum_{h=1}^H \sum_{s,a} d_h^{k}(s,a) \left(\frac{\beta^\star(\bar{n}^k_h(s,a),\delta')}{\bar{n}^k_h(s,a)\vee 1}\right)} \notag\\
        &\quad\quad + 336 e^{13} H^2 \sum_{k\in\mathcal{N}}\sum_{h=1}^H \sum_{s,a} d_h^{k}(s,a)\left(\frac{\beta(\bar{n}^k_h(s,a),\delta')}{\bar{n}^k_h(s,a)\vee 1}\right)\notag \\
        &\overset{(e)}{\le}  24 e^{13} H \sqrt{|\mathcal{N}|} \sqrt{\sum_{k\in\mathcal{N}}\sum_{h=1}^H \sum_{s,a} d_h^{k}(s,a) \left(\frac{\beta^\star(\bar{n}^k_h(s,a),\delta')}{\bar{n}^k_h(s,a)\vee 1}\right)} \notag \\
        &\quad\quad+ 336 e^{13} H^2 \sum_{k\in\mathcal{N}}\sum_{h=1}^H \sum_{s,a} d_h^{k}(s,a)\left(\frac{\beta(\bar{n}^k_h(s,a),\delta')}{\bar{n}^k_h(s,a)\vee 1}\right),\label{eqn:finite0}
    \end{align}
    where $(b)$ follows from \Cref{lem:stepoptimal}, $(c)$ is due to \Cref{lem:ringgap}, $(d)$ follows from \Cref{lem:Gboundcomb}, $(e)$ is due to the Cauchy's inequality. For inequality $(a)$, by the design of StepMix, when $\pi^k\neq\pi^{k,0}$, we must have $\rho\utilde{V}_1^{k,h_0-1}+(1-\rho)\utilde{V}_1^{k,h_0}=\gamma$ or $\rho\utilde{V}_1^{k,h_0-1}+(1-\rho)\utilde{V}_1^{k,h_0}=\utilde{V}_1^{\pi^{k,H}}\le \gamma$. Both cases indicate that inequality $(a)$ holds.
    
    We can also bound the summation as follows: 
    \begin{align}
        \sum_{k\in\mathcal{N}}\sum_{h=1}^H \sum_{s,a} d_h^k(s,a) \left(\frac{\beta^\star(\bar{n}^k_h(s,a),\delta')}{\bar{n}^k_h(s,a)\vee 1}\right) 
        & \le \sum_{h=1}^H \sum_{s,a} \sum_{k\in\mathcal{N}} d_h^k(s,a) \left(\frac{\beta^\star(\bar{n}^k_h(s,a),\delta')}{\bar{n}^k_h(s,a)\vee 1}\right) \notag\\
        & {\le} \beta^\star(K,\delta') \sum_{h=1}^H \sum_{s,a}\sum_{k\in\mathcal{N}}  \left( \frac{d_h^k(s,a)}{\bar{n}^k_h(s,a)\vee 1}\right) \notag\\
        & \overset{(a)}{\le} \beta^\star(K,\delta')\sum_{h=1}^H \sum_{s,a}  4\log{(|\mathcal{N}|+1)} \notag \\
        & \le 4HSA \beta^\star(K,\delta')\log{(|\mathcal{N}|+1)}\label{eqn:summation1},
    \end{align}
    where inequality (a) follows from \Cref{lem:partial_sum}.
    
    Similar to \Cref{eqn:summation1}, we also have
    \begin{equation}
        \label{eqn:summation2}
        \sum_{k\in\mathcal{N}}\sum_{h=1}^H \sum_{s,a} d_h^k(s,a) \left(\frac{\beta(\bar{n}^k_h(s,a),\delta')}{\bar{n}^k_h(s,a)\vee 1}\right) \le 4HSA \beta(K,\delta')\log{(|\mathcal{N}|+1)}.
    \end{equation}
    
    Hence, putting the bound of summations into \Cref{eqn:finite0}, we have
    \[|\mathcal{N}|\kappa\le 48 e^{13}  \sqrt{|\mathcal{N}|} \sqrt{H^3SA\beta^\star(K,\delta')\log{(|\mathcal{N}|+1)}} + 1344 e^{13} H^3 SA\beta(K,\delta')\log{(|\mathcal{N}|+1)}.\]
    
    Rearranging the terms, we conclude that  
    \[|\mathcal{N}|\le (\frac{4608 e^{26}}{\kappa^2}+\frac{2688 e^{13}S}{\kappa})H^3SA\log^2(K+1) = \tilde{O}\left(\big(\frac{1}{\kappa^2}+\frac{S}{\kappa} \big)H^3SA \right),\] 
    which completes the proof.
\end{proof}

\subsection{Step Four: Putting Everything Together}
\label{appx:stepthm}

Finally, with all the results above, we can prove \Cref{thm:step}.

\begin{theorem}[The complete version of \Cref{thm:step}]
Given $\delta\in(0,1)$, set $\delta' = \frac{\delta}{3(H + 1)}$, $\beta=\log(SAH/\delta') + S\log(8e(K+1))$, and $\beta^\star = \log(SAH/\delta') + \log(8e(K+1))$. Then, with probability at least $1-\delta$, 
StepMix satisfies constraint in (\ref{eqn:constraint}) and achieves a regret upper bounded as 
    \begin{align*}
       \mbox{Reg}(K) \leq & \tilde{O}\left(\sqrt{H^3SAK}+H^3S^2A+H^3SA\Delta_0 \left(\frac{1}{\kappa^2}+\frac{S}{\kappa} \right) \right),
    \end{align*}
    where $\Delta_0=V_1^\star-V_1^{\pi^b}$ and $\kappa = V_1^{\pi^b} - \gamma$.
\end{theorem}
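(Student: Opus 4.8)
The plan is to condition throughout on the good event of \Cref{lem:stepgood}, which occurs with probability at least $1-\delta$, and to prove safety and the regret bound on this event. For the safety claim (i), I would check $V_1^{\pi^k}\ge\gamma$ in each branch of the selection rule. If $\pi^k=\pi^b$ then $V_1^{\pi^k}=V_1^{\pi^b}=\gamma+\kappa\ge\gamma$ by the definition $\kappa=V_1^{\pi^b}-\gamma>0$. If $h^k=0$, then $\pi^k=\gp^k=\pi^{k,0}$ and $V_1^{\pi^k}\ge\utilde{V}_1^{k,0}\ge\gamma$ by the LCB validity in \Cref{lem:stepucblcb} and the selection rule. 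If $\pi^k$ is the step mixture of \Cref{eqn:stepmix2}, then since $\pi^{k,h^k}$ and $\pi^{k,h^k-1}$ differ only at step $h^k$, \Cref{lem:stepcomb} gives $V_1^{\pi^k}=(1-\rho)V_1^{k,h^k}+\rho V_1^{k,h^k-1}\ge(1-\rho)\utilde{V}_1^{k,h^k}+\rho\utilde{V}_1^{k,h^k-1}=\gamma$, where the inequality is \Cref{lem:stepucblcb} and the last equality is the choice of $\rho$ in \Cref{eqn:stepmix1}.

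For the regret (ii), I would split $\mathrm{Reg}(K)=\sum_{k\notin\mathcal{N}}(V_1^\star-V_1^{\pi^k})+\sum_{k\in\mathcal{N}}(V_1^\star-V_1^{\pi^k})$ with $\mathcal{N}$ as in \Cref{lem:finite}. On the optimistic episodes $k\notin\mathcal{N}$ we have $\pi^k=\pi^{k,0}$, so $V_1^\star-V_1^{\pi^k}=V_1^{\star,0}-V_1^{k,0}\le V_1^{\star,0}-\utilde{V}_1^{k,0}\le\pi_1^{k,0}G_1^{k,0}(s_1)$ by \Cref{lem:stepucblcb} and \Cref{lem:ringgap}. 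Since the summands are nonnegative I would extend the sum to all $k\in[K]$, apply \Cref{lem:Gbound}, and then bound the resulting double sum by Cauchy--Schwarz using the variance-sum bound $\sum_{h,s,a}d_h^{k,0}(s,a)\Var_{P_h}(V_{h+1}^{k,0})(s,a)\le H^2$ (\Cref{lem:variancesum}) and the counting bound $\sum_k d_h^k(s,a)/(\bar{n}_h^k(s,a)\vee1)\le 4\log(K+1)$ (\Cref{lem:partial_sum}). This produces the leading term $\tilde{O}(\sqrt{H^3SAK})$, while the additive bonus term with coefficient $H^2\beta$ contributes $\tilde{O}(H^3S^2A)$ because $\beta=\tilde{O}(S)$.

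The crux is the non-optimistic sum. The key idea is to route each gap through the fixed anchor policy $\pi^{\star,h_0}$: I would write $V_1^\star-V_1^{k,h_0}=(V_1^\star-V_1^{\star,h_0})+(V_1^{\star,h_0}-V_1^{k,h_0})$, bound the first difference by $\Delta_0$ using $\pi^b\in\Pi_{h_0}$ and hence $V_1^{\star,h_0}\ge V_1^{\pi^b}$ from \Cref{lem:stepoptimal}, and bound the second by $\pi_1^{k,h_0}G_1^{k,h_0}(s_1)$ via \Cref{lem:ringgap}. Combining the two active candidate indices with \Cref{lem:stepcomb} and \Cref{lem:Gboundcomb} gives, for every $k\in\mathcal{N}$ (including the branch $\pi^k=\pi^b$), the per-episode bound $V_1^\star-V_1^{\pi^k}\le\Delta_0+(1-\rho)\pi_1^{k,h^k}G_1^{k,h^k}(s_1)+\rho\pi_1^{k,h^k-1}G_1^{k,h^k-1}(s_1)$. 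Summing over $\mathcal{N}$, the $\Delta_0$ part yields $\Delta_0|\mathcal{N}|=\tilde{O}(H^3SA\Delta_0(1/\kappa^2+S/\kappa))$ by \Cref{lem:finite}, which is precisely the claimed additive term, while the residual $G$-sum is controlled by Cauchy--Schwarz together with $|\mathcal{N}|=\tilde{O}((1/\kappa^2+S/\kappa)H^3SA)$ and \Cref{lem:partial_sum}.

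The hardest part will be this last decomposition. The naive per-episode bound $V_1^\star-V_1^{\pi^k}\le H$ would inflate the additive regret to $\tilde{O}(H^4SA(1/\kappa^2+S/\kappa))$, so the essential improvement is replacing $H$ by $\Delta_0$, which is exactly what the anchor decomposition through $\pi^{\star,h_0}$ and the domination $V_1^{\star,h_0}\ge V_1^{\pi^b}$ deliver. A secondary subtlety is verifying that the leftover $G$-sums over $\mathcal{N}$ --- which structurally resemble the quantity lower-bounded by $\kappa|\mathcal{N}|$ in the proof of \Cref{lem:finite} --- are genuinely lower order than the terms already obtained; here the $\sqrt{|\mathcal{N}|}$ factor from Cauchy--Schwarz and the logarithmic counting bound are what keep them dominated.
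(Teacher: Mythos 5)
Your overall architecture is the same as the paper's: the safety argument (branch by branch, via \Cref{lem:stepucblcb}, \Cref{lem:stepcomb}, and the choice of $\rho$ in \Cref{eqn:stepmix1}) is identical, and the regret treatment---splitting by $\mathcal{N}$, anchoring through the step-wise optimal policies via \Cref{lem:stepoptimal} and \Cref{lem:ringgap}, charging $\Delta_0|\mathcal{N}|$ via \Cref{lem:finite}, and controlling $G$-sums with \Cref{lem:Gboundcomb}, Cauchy--Schwarz, and \Cref{lem:partial_sum}---is the paper's route (the paper anchors through $V_1^{\pi^b}$ where you anchor through $V_1^{\star,h_0}$, which is an equivalent rearrangement, and your separate $\sqrt{|\mathcal{N}|}$-Cauchy--Schwarz on the residual is fine once you note $|\mathcal{N}|\le K$).

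There is, however, one step that fails as written: extending $\sum_{k\notin\mathcal{N}}\pi_1^{k,0}G_1^{k,0}(s_1)$ to a sum over all $k\in[K]$ and then invoking \Cref{lem:Gbound} followed by the counting bound of \Cref{lem:partial_sum}. \Cref{lem:Gbound} bounds $\pi_1^{k,0}G_1^{k,0}(s_1)$ by sums weighted by $d_h^{k,0}$, the occupancy measure of the \emph{optimistic} policy $\gp^k$, whereas $\bar{n}_h^k(s,a)$ accumulates only the occupancies of the \emph{executed} policies. The telescoping argument in \Cref{lem:partial_sum} requires the numerators to be exactly the increments of the denominator, so it does not apply to $\sum_{k}d_h^{k,0}(s,a)/(\bar{n}_h^k(s,a)\vee 1)$ when $d_h^{k,0}\ne d_h^k$: on episodes in $\mathcal{N}$ the optimistic policy can repeatedly put mass on state--action pairs that the executed baseline/mixture policy never visits, so this sum can grow linearly in $|\mathcal{N}|$ rather than logarithmically, injecting terms of order $H^3|\mathcal{N}|\beta$ (up to $\tilde{O}(H^6S^2A/\kappa^2)$) that are not dominated by the claimed bound when $\Delta_0$ is small. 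The fix is exactly what the paper does: either keep the sum restricted to $k\notin\mathcal{N}$, where $d_h^{k,0}=d_h^k$ and \Cref{lem:partial_sum} applies to that subset, or bound every episode (optimistic or not) through \Cref{lem:Gboundcomb}, whose right-hand side is expressed in the executed occupancy $d_h^k$, and then apply a single Cauchy--Schwarz over all $K$ episodes to get $\tilde{O}(\sqrt{H^3SAK}+H^3S^2A)$. With that correction, your argument goes through and recovers the stated theorem.
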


\begin{proof}
We use the same notations specified in \Cref{appx:notation}. Then, under the previously defined good events (which occur with probability at least $1-\delta$), we have 
    \begin{align}
        \mbox{Reg}(K) &=  \sum_{k=1}^K (V_1^\star - V_1^{\pi^k}) \notag\\
        &\overset{(a)}{=}   \sum_{k\notin\mathcal{N}} (V^{\star,0}-V^{k,0}) + \sum_{k\in\mathcal{N}} (V^\star-V^{\pi^b}) + \sum_{k\in\mathcal{N}} (V^{\pi^b}-V^{\pi^k})\notag \\
        & \overset{(b)}{\le}   \sum_{k\notin\mathcal{N}} (V^{\star,0}-\utilde{V}^{k,0}) + |\mathcal{N}| \Delta_0 + \sum_{k\in\mathcal{N}} (\rho \pi_1^{k,h_0-1}G_1^{k,h_0-1}+(1-\rho) \pi_1^{k,h_0}G_1^{k,h_0}) \notag\\
        & \overset{(c)}{\le}   \sum_{k=1}^{K} (\rho \pi_1^{k,h_0-1}G_1^{k,h_0-1}+(1-\rho) \pi_1^{k,h_0}G_1^{k,h_0}) + |\mathcal{N}| \Delta_0 \notag\\
        &\overset{(d)}{\le}   \sum_{k=1}^K \Biggr(24 e^{13} H \sqrt{\sum_{h=1}^H \sum_{s,a}  d_h^{k}(s,a) \left(\frac{\beta^\star(\bar{n}^k_h(s,a),\delta')}{\bar{n}^k_h(s,a)\vee 1}\right)} \notag\\
        &\quad\quad\quad\quad + 336 e^{13} H^2 \sum_{h=1}^H \sum_{s,a} d_h^{k}(s,a)\left(\frac{\beta(\bar{n}^k_h(s,a),\delta')}{\bar{n}^k_h(s,a)\vee 1}\right)\Biggr) + |\mathcal{N}| \Delta_0,\label{eqn:thmstep1}
    \end{align}
    where $(a)$ is due to that $\pi^{\star,0}=\pi^\star$ and $\pi^k=\pi^{k,0}$ when $k\notin\mathcal{N}$; $(b)$ follows from the fact that $\utilde{V}_1^{k,0}$ is the LCB of $V_1^{k,0}$ and \Cref{eqn:regret_baseline} to \Cref{eqn:sum_comb_G} in the proof of \Cref{lem:finite}; $(c)$ is due to $V_1^{\star,0}-\utilde{V}_1^{k,0}\le\pi_1^{k,0}G_1^{k,0}$ as shown in \Cref{lem:ringgap}, and $(d)$ is from \Cref{lem:Gboundcomb}.

    \if{0}
    \begin{align*}
        &\tilde{V}_1^{k,0}-V^{\pi^{k,0}}\le\tilde{V}_1^{k,0}-\utilde{V}_1^{k,0}\le \pi^{k,0}_1G_1^{k,0}({s_1}) \\
        \le & 24 e^{13} H \sqrt{\sum_{h=1}^H \sum_{s,a}  d_h^{k,0}(s,a) (\frac{\beta^\star(\bar{n}^k_h(s,a),\delta)}{\bar{n}^k_h(s,a)\vee 1})} + 336 e^{13} H^2 \sum_{h=1}^H \sum_{s,a} d_h^{k,0}(s,a)(\frac{\beta(\bar{n}^k_h(s,a),\delta')}{\bar{n}^k_h(s,a)\vee 1})
    \end{align*}
    \fi
    
    Using \Cref{eqn:summation1,eqn:summation2} from the proof of Lemma \ref{lem:finite}, we obtain
    \begin{align}
        & \sum_{k=1}^K \left(24 e^{13} H \sqrt{\sum_{h=1}^H \sum_{s,a}  d_h^{k}(s,a) \left(\frac{\beta^\star(\bar{n}^k_h(s,a),\delta)}{\bar{n}^k_h(s,a)\vee 1}\right)} + 336 e^{13} H^2 \sum_{h=1}^H \sum_{s,a} d_h^{k}(s,a)\left(\frac{\beta(\bar{n}^k_h(s,a),\delta')}{\bar{n}^k_h(s,a)\vee 1}\right)\right)\nonumber\\
        &\quad \le 48 e^{13}  \sqrt{H^3SAK \beta^\star(K,\delta')\log{(K+1)}} + 1344 e^{13} H^3 SA\beta(K,\delta')\log{(K+1)}.\label{eqn:sum2}
    \end{align}
    Plugging (\ref{eqn:sum2}) and the result of Lemma \ref{lem:finite} into \Cref{eqn:thmstep1}, we further have:
    \begin{align*}
        \mbox{Reg}(K) &\le   48 e^{13} \sqrt{H^3SAK \beta^\star(K,\delta')\log{(K+1)}} + 1344 e^{13} H^3 SA\beta(K,\delta')\log{(K+1)} \\
        &\quad  + \left(\frac{4608 e^{26}}{\kappa^2}+\frac{2688 e^{13}S}{\kappa}\right)H^3SA\log(K+1)\Delta_0 \\
        & =  \tilde{O}\left(\sqrt{H^3SAK}+H^3S^2A+H^3SA\Delta_0\left(\frac{1}{\kappa^2}+\frac{S}{\kappa}\right)\right).
    \end{align*}

    Finally, we prove that StepMix satisfies the constraint episodically. Specifically, for any online policy $\pi^k$, we have 
    \[V_1^{\pi^k}=\rho V_1^{k,h_0-1}+(1-\rho) V_1^{k,h_0}\ge\rho \utilde{V}_1^{k,h_0-1}+(1-\rho) \utilde{V}_1^{k,h_0}.\]
    If $\pi^k=\pi^{k,H}=\pi^b$, $\pi^k$ must satisfy the constraint, because $\pi^b$ is assumed to be safe. Otherwise, if $h_0 = 0$, that means $\pi^k=\pi^{k,0}$ and $\utilde{V}_1^{k,0}>\gamma$, so $\pi^k$ must be safe; if $h_0\neq 0$, we have 
    $\utilde{V}_1^{k,h_0}\ge\gamma$ and $\utilde{V}_1^{k,h_0-1}<\gamma$, so that $\rho = \frac{\utilde{V}_1^{k,h_0}({s_1})-\gamma}{\utilde{V}_1^{k,h_0}({s_1})-\utilde{V}_1^{k,h_0-1}}$ guarantees that
    $\gamma=\rho \utilde{V}_1^{k,h_0-1}+(1-\rho) \utilde{V}_1^{k,h_0}$. Since $\rho \utilde{V}_1^{k,h_0-1}+(1-\rho) \utilde{V}_1^{k,h_0}\le \rho V_1^{k,h_0-1}+(1-\rho) V_1^{k,h_0} = V_1^{\pi^k}$, $\pi^k$ is also safe. 
\end{proof}

  We remark that when $\gamma= 0$ the additive term $\tilde{O}\left(H^3 S A \Delta_0\left(\frac{1}{\kappa^2}+\frac{S}{\kappa}\right)\right)$ in \Cref{thm:step} can be dropped, as formally stated in the following corollary.
  
\begin{corollary}[Vanishing additive term]
    \label{re:more}
When $\gamma= 0$, with all the parameters specified in \Cref{thm:step}, StepMix satisfies constraint (\ref{eqn:constraint}) and achieves a  regret upper bounded as
    \begin{align*}
        \mbox{Reg}(K) \leq & \tilde{O}\left(\sqrt{H^3SAK}+H^3S^2A\right),
    \end{align*}
    where $\Delta_0=V_1^\star-V_1^{\pi^b}$, $\kappa = V_1^{\pi^b} - \gamma$.
\end{corollary}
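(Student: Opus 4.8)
The plan is to show that the choice $\gamma = 0$ renders the conservative constraint vacuous, so that StepMix never invokes the baseline or a mixture policy and instead behaves as a pure optimistic algorithm; the additive regret term then vanishes because the set $\mathcal{N}$ of non-optimistic episodes is empty. The pivotal observation is that the lower confidence bounds constructed by StepMix are non-negative by design: from \Cref{eqn:lcbQ} we have $\utilde{Q}^{k,h_0}_h(s,a) = \max(0,\cdots) \ge 0$, and since $\utilde{V}^{k,h_0}_h(s) = \langle \pi_h^{k,h_0}(\cdot|s), \utilde{Q}^{k,h_0}_h(s,\cdot)\rangle$ is a convex combination of non-negative terms, $\utilde{V}^{k,h_0}_h(s) \ge 0$ for all $h, h_0, k, s$. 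In particular $\utilde{V}_1^{k,0}(s_1) \ge 0 = \gamma$.

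Next I would trace this through the safe exploration policy selection step of \Cref{alg:step}. Since $\utilde{V}_1^{k,0} \ge \gamma$, the candidate index set $\{h \mid \utilde{V}_1^{k,h} \ge \gamma,\, h = 0,1,\ldots,H\}$ always contains $h = 0$ and is therefore non-empty, forcing $h^k = 0$ in every episode. By the branching logic of the algorithm, $h^k = 0$ means $\pi^k = \gp^k = \pi^{k,0}$. Consequently $\mathcal{N} = \{k \mid \pi^k \neq \pi^{k,0}\} = \emptyset$, so $|\mathcal{N}| = 0$ and the finite non-optimistic policy lemma (\Cref{lem:finite}) is no longer needed.

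I would then specialize the regret decomposition \Cref{eqn:thmstep1} from the proof of \Cref{thm:step}. With $\mathcal{N} = \emptyset$, both sums over $\mathcal{N}$—including the additive term $|\mathcal{N}|\Delta_0$—vanish, leaving $\mbox{Reg}(K) = \sum_{k=1}^K (V_1^{\star,0} - V_1^{k,0})$. Using the LCB property $\utilde{V}_1^{k,0} \le V_1^{k,0}$ together with \Cref{lem:ringgap} gives $V_1^{\star,0} - V_1^{k,0} \le V_1^{\star,0} - \utilde{V}_1^{k,0} \le \pi_1^{k,0} G_1^{k,0}(s_1)$; applying \Cref{lem:Gbound} (with $d_h^k = d_h^{k,0}$) and the summation bound \Cref{eqn:sum2} then yields exactly $\tilde{O}(\sqrt{H^3SAK} + H^3 S^2 A)$, where the $H^3 S^2 A$ term absorbs the extra factor of $S$ hidden inside $\beta(K,\delta')$. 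Feasibility is immediate: since rewards lie in $[0,1]$ every value function is non-negative, so $V_1^{\pi^k} \ge 0 = \gamma$ holds deterministically and \Cref{eqn:constraint} is satisfied.

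I expect no substantive obstacle here, as the corollary rests entirely on the apparatus already developed for \Cref{thm:step}. The only genuine step is the clean observation that the $\max(0,\cdot)$ truncation in the LCB makes $\gamma = 0$ automatically feasible, which collapses the feasibility-driven mixing mechanism and thereby eliminates the source of the additive $\Delta_0$-dependent regret.
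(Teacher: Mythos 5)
Your proposal is correct and follows essentially the same route as the paper's own proof: the $\max(0,\cdot)$ truncation in the LCB forces $\utilde{Q}^{k,h_0}_h \ge 0 = \gamma$, hence $h^k = 0$ in every episode, $\mathcal{N} = \emptyset$, and the regret decomposition in \Cref{eqn:thmstep1} together with \Cref{eqn:summation1,eqn:summation2} yields $\tilde{O}(\sqrt{H^3SAK}+H^3S^2A)$. Your explicit note that constraint (\ref{eqn:constraint}) holds deterministically because all value functions are non-negative is a small addition the paper leaves implicit, but it does not change the argument.
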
 

\begin{proof}
    If $\gamma= 0$, based on the definition of $\utilde{Q}^{k,h_0}_h$ in \Cref{Xd}, we have $\utilde{Q}^{k,h_0}_h\ge 0= \gamma$ for any $k$, $h_0$ and $h$. Thus, under StepMix, the executed policy $\pi^k$ must be the optimistic policy $\bar{\pi}^k$. Recall that the definition of $\mathcal{N}$ is $\mathcal{N}=\{k|k\in[K],\pi^k\neq \bar{\pi}^{k}\}$. Therefore, we have $\mathcal{N}=\emptyset$ and $|\mathcal{N}|=0$.
    
    With $|\mathcal{N}|=0$ and \Cref{eqn:thmstep1}, we have   
    \begin{align*}
        \mbox{Reg}(K) &\le \sum_{k=1}^K \left(24 e^{13} H \sqrt{\sum_{h=1}^H \sum_{s,a}  d_h^{k}(s,a) \left(\frac{\beta^\star(\bar{n}^k_h(s,a),\delta')}{\bar{n}^k_h(s,a)\vee 1}\right)} + 336 e^{13} H^2 \sum_{h=1}^H \sum_{s,a} d_h^{k}(s,a)\left(\frac{\beta(\bar{n}^k_h(s,a),\delta')}{\bar{n}^k_h(s,a)\vee 1}\right)\right) \\
        &\quad\quad+ |\mathcal{N}| \Delta_0 \\
        &=\sum_{k=1}^K \left(24 e^{13} H \sqrt{\sum_{h=1}^H \sum_{s,a}  d_h^{k}(s,a) \left(\frac{\beta^\star(\bar{n}^k_h(s,a),\delta')}{\bar{n}^k_h(s,a)\vee 1}\right)} + 336 e^{13} H^2 \sum_{h=1}^H \sum_{s,a} d_h^{k}(s,a)\left(\frac{\beta(\bar{n}^k_h(s,a),\delta')}{\bar{n}^k_h(s,a)\vee 1}\right)\right)\\
        & \overset{(a)}{\le} 48 e^{13}  \sqrt{H^3SAK \beta^\star(K,\delta')\log{(K+1)}} + 1344 e^{13} H^3 SA\beta(K,\delta')\log{(K+1)} \\
        &= \tilde{O}\left(\sqrt{H^3SAK}+H^3S^2A\right),
    \end{align*}
    where (a) is due to \Cref{eqn:summation1,eqn:summation2}. 
\end{proof}

\section{Algorithm Design and Analysis of EpsMix Algorithm}
\label{appx:eps_proof}

In this section, we present the detailed design and analysis of the EpsMix algorithm. 
\if{0}
Different from StepMix in Algorithm~\ref{alg:step}, EpsMix does not construct step mixture policies during the learning process. Rather, it adopts a randomization mechanism at the beginning of each episode, and designs episodic mixture policies \citep{wiering2008ensemble,baram2021maximum} defined as follows.

\begin{definition}[Episodic Mixture Policy]
Given two policies $\pi^1$ and $\pi^2$ with a parameter $\rho\in (0,1)$, the episodic mixture policy, denoted by $\rho\pi^1 \oplus (1-\rho)\pi^2$, randomly picks $\pi^1$ with probability $\rho$ and $\pi^2$ with probability $1-\rho$ at the beginning of an episode and plays it for the entire episode.
\end{definition}
\fi

\subsection{Algorithm Design}

\begin{algorithm}[!hbt]
    \caption{The EpsMix Algorithm}
    \label{alg:eps}
\begin{algorithmic}
\STATE {\bf Input:} $\pi^b$, $\gamma$, $\beta$, {$\beta^\star$}, $\Dc_0=\emptyset$
\FOR{$k$ = $1$ to $K$}
    \STATE Update the model estimate \begin{align*}
        \hP_h^k(s'|s,a)=\left\{\begin{aligned}
        & n_h^k(s,a,s')/n_h^k(s,a), &\mbox{ if } n_h^k(s,a)>0, \\
        & 1/S, &\mbox{ if } n_h^k(s,a)=0.
    \end{aligned}\right.
    \end{align*}
    \STATE \textcolor{blue}{\it \# Optimistic policy identification}
    \STATE $\tilde{Q}_{H+1}^{k}=\utilde{Q}_{H+1}^{k}=0$.
    \FOR{$h$ = $H$ to $1$}
        \STATE Update $\tilde{Q}_{h}^{k}(s,a)$, $\utilde{Q}_{h}^{k}(s,a), \forall (s,a)\in \Sc\times \Ac$ according to \Cref{eqn:tildeQ}.
        \STATE $\gp_h^{k}(s)\gets\argmax_a \tilde{Q}_{h}^{k}(s,a)$, $\tilde{V}^{k}_h(s)\gets\tilde{Q}^{k}_h(s,\gp_h^k(s))$, $\utilde{V}^{k}_h(s)\gets \utilde{Q}^{k}_h(s,\gp_h^k(s)),\forall s\in\Sc$.
    \ENDFOR
    \STATE \textcolor{blue}{\it \# Evaluate the baseline policy}
    \STATE $\tilde{Q}_{H+1}^{k,b}=\utilde{Q}_{H+1}^{k,b}=0$.
    \FOR{$h$ = $H$ to $1$}
        \STATE Update $\tilde{Q}_{h}^{k,b}(s,a)$, $\utilde{Q}_{h}^{k,b}(s,a), \forall (s,a)\in \Sc\times \Ac$ according to \Cref{eqn:tildeQb}.
        \STATE $\tilde{V}^{k,b}_h(s)\gets\tilde{Q}^{k,b}_h(s,\pi_{h}^{b}(s))$, $\utilde{V}^{k,b}_h(s)\gets \utilde{Q}^{k,b}_h(s,\pi_h^b(s)),\forall s\in\Sc$.
    \ENDFOR
    \STATE  \textcolor{blue}{\it  \# Safe exploration policy selection}
    \IF{$\utilde{V}_1^{k} \ge \gamma$}
        \STATE $\pi^k = \gp^{k}$. 
    \ELSIF{$\utilde{V}_1^{k,b} < \gamma$}
        \STATE $\pi^k = \pi^b$. 
    \ELSE
        \STATE $\rho = \frac{\utilde{V}_1^{k,b}({s_1})-\gamma}{\utilde{V}_1^{k,b}({s_1})-\utilde{V}_1^{k}}$,
        \STATE $\pi^k = \rho \gp^{k}\oplus(1-\rho) \pi^b$. 
    \ENDIF
    \STATE Execute $\pi^k$ and collect $\{(s_h^k,a_h^k,s_{h+1}^k)\}_{h=1}^H$.
    \STATE $\Dc_n\gets \Dc_{n-1}\cup\{(s_h^k,a_h^k,s_{h+1}^k)\}_{h=1}^H$.
\ENDFOR
\end{algorithmic}
\end{algorithm}

The EpsMix algorithm is presented in \Cref{alg:eps}. 
\if{0}
Similar to StepMix, at the beginning of each episode $k$, it first constructs an optimistic policy, denoted as $\gp^{k}$. It then evaluates the LCB of the expected total rewards under both $\gp^{k}$ and ${\pi}^b$, denoted as $\utilde{V}^{k}$ and $\utilde{V}^{k,b}$, respectively.  
If $\utilde{V}_1^{k}$ is above the threshold $\mathbf{\gamma}$, it indicates that the optimistic policy $\gp^{k}$ satisfies the conservative constraint with high probability. The learner thus executes $\gp^{k}$ in the following episode $k$. Otherwise, if $\utilde{V}_1^{k,b}$ is above the threshold $\gamma$ while $\utilde{V}_1^{k}$ is not, it constructs an episodic mixture policy $\rho_k \gp^{k} \oplus (1-\rho_k)\pi^b$ in \Cref{alg:eps} so that $\rho_k\cdot\utilde{V}_1^{k}+(1-\rho_k)\cdot\utilde{V}_1^{k,b}=\mathbf{\gamma}$. It implies that the episodic mixture policy satisfies the conservative constraint in expectation. If neither $\utilde{V}_1^{k}$ nor $\utilde{V}_1^{k,b}$ is above the threshold, EpsMix will resort to the baseline policy to collect more information.
\fi 
The update rule of $\tilde{Q}^k_h$, $\utilde{Q}^k_h$ is given below, where $\delta'=\delta/4$.
\begin{equation}
    \label{eqn:tildeQ}
    \begin{aligned}
    & \tilde{Q}^{k}_h(s,a)\triangleq  \min\biggr(H, r_h(s,a)+3\sqrt{\Var_{\hP_h^k}(\tilde{V}_{h+1}^{k})(s,a)\frac{\beta^\star(n^k_h(s,a),\delta')}{n^k_h(s,a)}} + 14 H^2\frac{\beta(n^k_h(s,a),\delta')}{n^k_h(s,a)}\\
    &\hspace{2.5cm} +\frac{1}{H}\hP_h^k(\tilde{V}^{k}_{h+1}-\utilde{V}^{k}_{h+1})(s,a)+\hP_h^k\tilde{V}^{k}_{h+1}(s,a)\biggr),\\
    & \utilde{Q}^{k}_h(s,a)\triangleq  \max \biggr(0, r_h(s,a)-3\sqrt{\Var_{\hP_h^k}(\tilde{V}_{h+1}^{k})(s,a)\frac{\beta^\star(n^k_h(s,a),\delta')}{n^k_h(s,a)}} - 22 H^2\frac{\beta(n^k_h(s,a),\delta')}{n^k_h(s,a)}\\
    & \hspace{2.5cm} -\frac{2}{H}\hP_h^k(\tilde{V}^{k}_{h+1}-\utilde{V}^{k}_{h+1})(s,a)+\hP_h^k\utilde{V}^{k}_{h+1}(s,a) \biggr).
\end{aligned}
\end{equation}

Similarly, the update rule of $\tilde{Q}_h^{k,b}$ and $\utilde{Q}_h^{k,b}$ is defined as
\begin{equation}
    \label{eqn:tildeQb}
    \begin{aligned}
    & \tilde{Q}^{k,b}_h(s,a)\triangleq  \min\biggr(H, r_h(s,a)+3\sqrt{\Var_{\hP_h^k}(\tilde{V}_{h+1}^{k,b})(s,a)\frac{\beta^\star(n^k_h(s,a),\delta')}{n^k_h(s,a)}} + 14 H^2\frac{\beta(n^k_h(s,a),\delta')}{n^k_h(s,a)}\\
    &\hspace{2.5cm} +\frac{1}{H}\hP_h^k(\tilde{V}^{k,b}_{h+1}-\utilde{V}^{k,b}_{h+1})(s,a)+\hP_h^k\tilde{V}^{k,b}_{h+1}(s,a)\biggr),\\
    & \utilde{Q}^{k,b}_h(s,a)\triangleq  \max \biggr(0, r_h(s,a)-3\sqrt{\Var_{\hP_h^k}(\tilde{V}_{h+1}^{k,b})(s,a)\frac{\beta^\star(n^k_h(s,a),\delta')}{n^k_h(s,a)}} - 22 H^2\frac{\beta(n^k_h(s,a),\delta')}{n^k_h(s,a)}\\
    &\hspace{2.5cm} -\frac{2}{H}\hP_h^k(\tilde{V}^{k,b}_{h+1}-\utilde{V}^{k,b}_{h+1})(s,a)+\hP_h^k\utilde{V}^{k,b}_{h+1}(s,a) \biggr).
\end{aligned}
\end{equation}

\subsection{Theoretical Analysis}
The performance of the EpsMix Algorithm is characterized in the following theorem.

\begin{theorem}[Regret of EpsMix]
    \label{thm:episodic} 
    Given $\delta\in(0,1)$, set $\delta' = \frac{\delta}{4}$, $\beta=\log(SAH/\delta') + S\log(8e(K+1))$, and $\beta^\star = \log(SAH/\delta') + \log(8e(K+1))$. Then, with probability at least $1-\delta$, 
    EpsMix (\Cref{alg:eps}) simultaneously (i) satisfies the conservative constraint in (\ref{eqn:constraint}), and (ii) achieves a total regret that is upper bounded by
        \begin{align*}      \tilde{O}\bigg(\sqrt{H^3SAK}+H^3S^2A+H^3SA\Delta_0\bigg(\frac{1}{\kappa^2}+\frac{S}{\kappa}\bigg)\bigg),
        \end{align*}
    where {$\Delta_0 = V_1^\star - V_1^{\pi^b}$} is the suboptimality gap of the baseline policy, and $\mathbf{\kappa} = V_1^{\pi^b} - \mathbf{\gamma}$ is the tolerable value loss from the baseline policy.
\end{theorem}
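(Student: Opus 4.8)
The plan is to mirror the four-step template developed for StepMix, exploiting the fact that EpsMix only ever interpolates between \emph{two} fixed ``anchor'' policies --- the optimal policy $\pi^\star$ and the baseline $\pi^b$ --- rather than the $H+1$ step-wise anchors $\pi^{\star,h_0}$ of StepMix. First I would establish the good events: with $\delta'=\delta/4$ I take a union bound over $\mathcal{E}(\delta')$, $\mathcal{E}^{\mathrm{cnt}}(\delta')$, $\mathcal{E}^\star(V^\star,\delta')$, and $\mathcal{E}^\star(V^{\pi^b},\delta')$, which by \Cref{thm:goodevent1,thm:goodevent2,thm:goodevent3} hold simultaneously with probability at least $1-\delta$. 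Here only two reference value functions appear, both history-independent, so the concentration of $(\hP_h^k-P_h)V^\star_{h+1}$ and $(\hP_h^k-P_h)V^{\pi^b}_{h+1}$ is immediate and the delicate ``set of anchors'' device needed for StepMix is unnecessary.

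Next I would verify that the quantities in \Cref{eqn:tildeQ} give valid bounds $\utilde{Q}^k_h\le Q^{\gp^k}_h\le Q^\star_h\le \tilde{Q}^k_h$, and that those in \Cref{eqn:tildeQb} satisfy $\utilde{Q}^{k,b}_h\le Q^{\pi^b}_h\le \tilde{Q}^{k,b}_h$. Both follow from the same induction as \Cref{lem:stepucblcb}: the asymmetric Bernstein bonus (coefficient $14H^2$ on the UCB side, $22H^2$ on the LCB side, plus the $\tfrac1H$ and $\tfrac2H$ correction terms) is exactly what absorbs the variance-transfer error when the reference value function $V^\star$ (resp. $V^{\pi^b}$) is replaced by the empirical $\tilde V^k$ (resp. $\tilde V^{k,b}$). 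I would then reproduce the $G$-function gap bound of \Cref{lem:ringgap,lem:Gbound} for each anchor, obtaining $\tilde V_1^k-\utilde V_1^k\le \pi_1^k G_1^k(s_1)$ and $\tilde V_1^{k,b}-\utilde V_1^{k,b}\le \pi_1^b G_1^{k,b}(s_1)$, each controlled by a sum of the form $24e^{13}\sum_{h,s,a} d_h\sqrt{\Var_{P_h}(\cdot)\beta^\star/\bar n}+336e^{13}H^2\sum_{h,s,a}d_h\,\beta/\bar n$.

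The central step is the finite non-optimistic-episode lemma: let $\mathcal{N}=\{k:\pi^k\ne\gp^k\}$. When $k\in\mathcal{N}$, EpsMix either plays $\pi^b$ (so $\utilde V_1^{k,b}<\gamma$) or an episodic mixture with $\rho\,\utilde V_1^{k}+(1-\rho)\utilde V_1^{k,b}=\gamma$; in both cases $\kappa=V_1^{\pi^b}-\gamma$ is at most a convex combination of $V_1^{\pi^b}-\utilde V_1^{k}\le\tilde V_1^k-\utilde V_1^k$ and $V_1^{\pi^b}-\utilde V_1^{k,b}\le\tilde V_1^{k,b}-\utilde V_1^{k,b}$ (using $\tilde V_1^k\ge V_1^\star\ge V_1^{\pi^b}$ and $\tilde V_1^{k,b}\ge V_1^{\pi^b}$), hence bounded by the combined $G$-sum. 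Crucially, the occupancy measure of $\rho\gp^k\oplus(1-\rho)\pi^b$ equals $\rho\,d^{\gp^k}+(1-\rho)d^{\pi^b}$ by linearity of expectation over the episode-level coin flip, which replaces the step-mixture occupancy computation of \Cref{lem:stepvisitratio,lem:stepcomb} with a triviality. Applying Cauchy--Schwarz, $\sum_{h,s,a}d_h\Var_{P_h}(V_{h+1})\le H^2$, and \Cref{lem:partial_sum} then yields $|\mathcal{N}|\kappa\le 48e^{13}\sqrt{|\mathcal{N}|H^3SA\beta^\star\log(K{+}1)}+1344e^{13}H^3SA\beta\log(K{+}1)$, and solving this quadratic inequality in $\sqrt{|\mathcal{N}|}$ gives $|\mathcal{N}|\le(\tfrac{4608e^{26}}{\kappa^2}+\tfrac{2688e^{13}S}{\kappa})H^3SA\log^2(K{+}1)$, exactly as in \Cref{lem:finite}.

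Finally I would aggregate the regret by decomposing $\mathrm{Reg}(K)=\sum_{k\notin\mathcal{N}}(V_1^\star-V_1^{\gp^k})+\sum_{k\in\mathcal{N}}(V_1^\star-V_1^{\pi^b})+\sum_{k\in\mathcal{N}}(V_1^{\pi^b}-V_1^{\pi^k})$: the first sum is bounded by the global $G$-sum over all $k$, giving $\tilde O(\sqrt{H^3SAK}+H^3S^2A)$ via \Cref{eqn:sum2}; the second by $|\mathcal{N}|\Delta_0$; and the third equals $\rho(V_1^{\pi^b}-V_1^{\gp^k})\le\rho(\tilde V_1^k-\utilde V_1^k)$, which folds into the $G$-sum. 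Safety is the one genuinely new ingredient: since the executed policy is randomized over the coin flip, its expected return is exactly $V_1^{\pi^k}=\rho V_1^{\gp^k}+(1-\rho)V_1^{\pi^b}\ge\rho\,\utilde V_1^{k}+(1-\rho)\utilde V_1^{k,b}=\gamma$, so \Cref{eqn:constraint} holds conditioned on the good events. The main obstacle I anticipate is not any single estimate but keeping the two-anchor bookkeeping consistent with the per-episode randomization --- arguing that the weak-regret accounting and the occupancy linearity both survive the fact that only one of $\gp^k,\pi^b$ is actually sampled each episode, while the constraint is certified only in expectation; once the occupancy-of-mixture identity is in hand, the remainder follows the StepMix route with strictly fewer anchors.
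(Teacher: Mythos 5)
Your proposal is correct and follows essentially the same route as the paper's proof: the same four good events with $\delta'=\delta/4$, the same two-anchor UCB/LCB and $G$-function machinery (\Cref{lem:epsucblcb,lem:epsgap,lem:epsGbound}), the coin-flip occupancy linearity (\Cref{lem:epsvisitratio,lem:Gboundcomb_eps}), the finite non-optimistic-episode lemma with the identical quadratic-in-$\sqrt{|\mathcal{N}|}$ bound and constants (\Cref{lem:epsfinite}), and the same regret decomposition together with the expectation-over-the-coin-flip safety argument. The only cosmetic deviation is bounding $\sum_{k\in\mathcal{N}}(V_1^{\pi^b}-V_1^{\pi^k})$ via $\rho(\tilde{V}_1^{k}-\utilde{V}_1^{k})$ instead of the paper's convex combination $\rho(V_1^{\star}-\utilde{V}_1^{k})+(1-\rho)(V_1^{\pi^b}-\utilde{V}_1^{k,b})$, which folds into the same combined $G$-sum and changes nothing.
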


\if{0}
\begin{remark}
We note that EpsMix has the same performance guarantees as StepMix.
At the same time, we note that EpsMix is less conservative than StepMix in the sense that, the expected total return under a {\it selected} policy in an episode may be below the threshold when $\utilde{V}_1^{\bar{\pi}^k}< {\mathbf{\gamma}}$. However, when taking the randomness in the policy mixture procedure into consideration, we can still guarantee that the expected total return under an episodic mixture policy is above the threshold with probability at least $1-\delta$.
\end{remark}
\fi

Before we proceed to prove \cref{thm:episodic}, we sketch the proof as follows: {First}, we establish the UCBs and LCBs of the value functions for the baseline policy $\pi^b$ and the optimal policy $\pi^\star$ in each episode, following similar approaches as in the proof of Theorem~\ref{thm:step}. We {then} show that the total number of episodes where the algorithm executes $\pi^b$ or the episodic mixture policy is bounded, which ensures that the performance degradation compared with BPI-UCBVI~\citep{menard2021fast} is bounded. Finally, the established LCBs ensure that the conservative constraint is satisfied in each episode.


\begin{lemma}
    \label{lem:epsgood}
    With probability at least $1 - \delta$, the following good events occur simultaneously:
    \[\mathcal{E}(\delta'), \mathcal{E}^{\text{cnt}}(\delta'), \mathcal{E}^\star(V^{\star},\delta'),\mathcal{E}^\star(V^{\pi^b},\delta'),\]
    where $\delta'=\delta/4$.
\end{lemma}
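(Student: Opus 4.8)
The plan is to mirror the proof of the analogous StepMix good-event lemma (\Cref{lem:stepgood}): bound the failure probability of each of the four events separately at level $\delta' = \delta/4$ and then apply a single union bound. The key structural observation that makes this case simpler than StepMix is that EpsMix only needs two Bernstein-type concentration events—one anchored at the optimal value function $V^\star$ and one at the baseline value function $V^{\pi^b}$—rather than the $H+1$ events $\mathcal{E}^\star(V^{\star,h_0},\cdot)$ indexed by the switching step $h_0$. Consequently a flat four-way split of $\delta$ suffices, with no extra $H+1$ factor appearing in the confidence radius.

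First I would invoke the concentration results already used in the excerpt (cf. \Cref{thm:goodevent1} and \Cref{thm:goodevent2}) to assert that $\mathcal{E}(\delta')$ and $\mathcal{E}^{\text{cnt}}(\delta')$ each hold with probability at least $1-\delta'$. These two events concern only the empirical transition kernel $\hP^k$ and the relation between true and expected visitation counts, and are entirely independent of the choice of reference value function, so they transfer verbatim from the StepMix analysis.

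Next I would handle the two Bernstein events via \Cref{thm:goodevent3}. The one point genuinely worth verifying is the hypothesis of $\mathcal{E}^\star(V,\delta)$: the reference $V$ must be a value function bounded by $H$ and independent of the empirical kernels $\hP_h^k$. Both $V^\star$ and $V^{\pi^b}$ meet this requirement, since they depend only on the true MDP $(\Sc,\Ac,H,P,r,s_1)$ and on the fixed, data-independent baseline policy $\pi^b$, and not on the online trajectories collected by \Cref{alg:eps}. Hence \Cref{thm:goodevent3} applies once with $V=V^\star$ and once with $V=V^{\pi^b}$, yielding $\mathcal{E}^\star(V^\star,\delta')$ and $\mathcal{E}^\star(V^{\pi^b},\delta')$ each with probability at least $1-\delta'$.

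Finally, a union bound over the four events—each failing with probability at most $\delta'=\delta/4$—gives total failure probability at most $4\delta'=\delta$, so all four good events occur simultaneously with probability at least $1-\delta$. There is no real obstacle here; the only things requiring care are the independence check for the reference value functions and the bookkeeping that exactly two Bernstein anchors (not $H+1$) are needed. This latter point is precisely the structural difference between EpsMix, which evaluates only the optimistic policy $\gp^k$ and the baseline $\pi^b$, and StepMix, which must control the whole family of step-concatenated candidate policies.
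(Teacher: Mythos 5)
Your proposal is correct and follows essentially the same route as the paper: invoke \Cref{thm:goodevent1} and \Cref{thm:goodevent2} for $\mathcal{E}(\delta')$ and $\mathcal{E}^{\text{cnt}}(\delta')$, apply \Cref{thm:goodevent3} twice with the fixed, data-independent reference functions $V^\star$ and $V^{\pi^b}$, and finish with a union bound over the four events at level $\delta'=\delta/4$. Your added checks (boundedness and independence of the anchors, and the observation that only two Bernstein anchors are needed rather than $H+1$) are sound and consistent with the paper's argument.
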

\begin{proof}
This result can be obtained by noting that each of those events hold with probability at least $1-\delta/4$ under \Cref{thm:goodevent1}, \Cref{thm:goodevent2} and \Cref{thm:goodevent3}, and then taking the union bound.
\end{proof}

In the following proof of EpsMix, we set $\delta' = \delta/4$. 
We note that \Cref{eqn:tildeQ} and \Cref{eqn:tildeQb} are defined in a similar form as \Cref{eqn:ucbQ}. As a result, \Cref{lem:stepucblcb,lem:ringgap,lem:Gbound} can be directly extended for EpsMix, as stated below. We note that $Q^{k,h_0}$ need to be bounded for every $h_0\in [H]\cup\{0\}$ in StepMix, while in EpsMix, we only need to bound $Q^{k}$ and $Q^{k,b}$.

\begin{lemma}[UCB and LCB for EpsMix]
    \label{lem:epsucblcb}
    The relationship between $\tilde{Q}_h^{k}$, $\utilde{Q}^{k}_h$, $\tilde{V}^{k}_h$, $\utilde{V}^{k}_h$ and the corresponding true value functions $Q^{k}_h$, $V^{k}_h$, $Q^{\star}_h$, $V_h^{\star}$ are specified in the following inequalities:
    \begin{align*}
        \utilde{Q}_h^{k}(s,a)\le Q_h^{k}(s,a)&\le Q_h^{\star}(s,a)\le \tilde{Q}_h^{k}(s,a),\\
        \utilde{V}_h^{k}(s)\le V_h^{k}(s)&\le V_h^{\star}(s)\le \tilde{V}_h^{k}(s),
    \end{align*}
    In addition, the relationships between $\tilde{Q}^{k,b}_h$, $\utilde{Q}^{k,b}_h$, $\tilde{V}^{k,b}_h$, $\utilde{V}^{k,b}_h$, and the true value functions $Q^{\pi^b}_h,V^{\pi^b}_h$ are specified in the following inequalities:
    \begin{align*}
        \utilde{Q}_h^{k,b}(s,a)\le &Q_h^{\pi^b}(s,a)\le \tilde{Q}_h^{k,b}(s,a),\\
        \utilde{V}_h^{k,b}(s)\le &V_h^{\pi^b}(s)\le \tilde{V}_h^{k,b}(s).
    \end{align*}
\end{lemma}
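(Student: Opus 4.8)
The plan is to \emph{reduce} this statement to the already-established \Cref{lem:stepucblcb}, by observing that the two confidence-bound recursions used by EpsMix are exactly the StepMix recursions specialized to $h_0 = 0$ and $h_0 = H$. Concretely, comparing \Cref{eqn:tildeQ} with \Cref{eqn:ucbQ,eqn:lcbQ}, the update for $(\tilde{Q}^k_h, \utilde{Q}^k_h)$ coincides with $(\tilde{Q}^{k,h_0}_h, \utilde{Q}^{k,h_0}_h)$ at $h_0 = 0$, since the greedy action $\gp^k_h(s) = \argmax_a \tilde{Q}^k_h(s,a)$ used to form $\tilde{V}^k_h, \utilde{V}^k_h$ is precisely $\pi^{k,0}_h$, and the matching step-wise optimal anchor is $\pi^{\star,0} = \pi^\star$. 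Likewise, the update in \Cref{eqn:tildeQb} for $(\tilde{Q}^{k,b}_h, \utilde{Q}^{k,b}_h)$ coincides with $(\tilde{Q}^{k,H}_h, \utilde{Q}^{k,H}_h)$, because $\pi^{k,H} = \pi^{\star,H} = \pi^b$ and the value recursion contracts $\tilde{Q}^{k,b}_h, \utilde{Q}^{k,b}_h$ against $\pi^b_h$, exactly as $\tilde{V}^{k,H}_h, \utilde{V}^{k,H}_h$ are built in StepMix.

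Given this correspondence, I would first invoke \Cref{lem:epsgood} to condition on the good events $\mathcal{E}(\delta')$, $\mathcal{E}^{\text{cnt}}(\delta')$, $\mathcal{E}^\star(V^\star, \delta')$ and $\mathcal{E}^\star(V^{\pi^b}, \delta')$. The two Bernstein events here are exactly the $h_0 = 0$ and $h_0 = H$ instances of the family $\{\mathcal{E}^\star(V^{\star,h_0}, \cdot)\}$ used in \Cref{lem:stepgood}, since $V^{\star,0} = V^\star$ and $V^{\star,H} = V^{\pi^b}$. The induction in the proof of \Cref{lem:stepucblcb} that validates inequalities $(\romannumeral1)$--$(\romannumeral6)$ at a fixed $h_0$ uses only $\mathcal{E}(\delta')$ together with the single event $\mathcal{E}^\star(V^{\star,h_0}, \cdot)$ and the deterministic \Cref{lem:stepoptimal}; hence it applies verbatim at $h_0 = 0$ and $h_0 = H$. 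Reading off $h_0 = 0$ yields $\utilde{Q}^k_h \le Q^k_h \le Q^\star_h \le \tilde{Q}^k_h$ and the analogous $V$-chain, while reading off $h_0 = H$, where $Q^{\star,H} = Q^{k,H} = Q^{\pi^b}$, collapses the outer chain to $\utilde{Q}^{k,b}_h \le Q^{\pi^b}_h \le \tilde{Q}^{k,b}_h$ and its $V$-counterpart, which is precisely the claim.

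The only points requiring care, and the place where I would slow down, are verifying that the reduction is exact at the two endpoints. For $h_0 = 0$ the value step $(\romannumeral6)$ of \Cref{lem:stepucblcb} falls entirely into its ``$h > h_0$'' branch, so the bound $V^\star_h \le \tilde{V}^k_h$ relies on $\gp^k_h$ being the $\tilde{Q}^k_h$-greedy action; for $h_0 = H$ it falls entirely into the ``$h \le h_0$'' branch, so the bound propagates through $\langle \pi^b_h, \cdot\rangle$. I must also confirm that for $h_0 = H$ the step-wise optimality of \Cref{lem:stepoptimal} is vacuous, since both the anchor and the optimistic policy reduce to $\pi^b$ and thus $Q^{\star,H} = Q^{k,H}$ rather than a strict inequality, which is exactly why the baseline chain has only two nested bounds instead of three. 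Once these endpoint checks are dispatched, no new estimate is needed and the lemma follows.
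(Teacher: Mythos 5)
Your proposal is correct and takes essentially the same route as the paper: the paper omits the proof precisely because \Cref{eqn:tildeQ,eqn:tildeQb} are the StepMix recursions, and your identification of them with the $h_0=0$ and $h_0=H$ instances of \Cref{lem:stepucblcb} (with anchors $V^{\star,0}=V^\star$ and $V^{\star,H}=V^{\pi^b}$, whose Bernstein events are supplied by \Cref{lem:epsgood}) is exactly the specialization the paper intends. Your endpoint checks --- the greedy branch at $h_0=0$, the baseline branch at $h_0=H$, and the collapse of the three-term chain to two terms since $Q^{k,H}=Q^{\star,H}=Q^{\pi^b}$ --- are the same observations underlying the paper's omitted argument, so nothing further is needed.
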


\begin{lemma}
    \label{lem:epsgap}
    Define $G_h^{k}$ and $G_h^{k,b}$ as 
    \begin{equation}
        \label{eqn:epsG}
        G_h^{k}(s,a)=\min\Biggr(H, 6\sqrt{\Var_{\hP_h^k}(\tilde{V}^{k}_{h+1})(s,a)\frac{\beta^\star(n^k_h(s,a),\delta')}{n^k_h(s,a)}}+36H^2\frac{\beta(n^k_h(s,a),\delta')}{n^k_h(s,a)}+(1+\frac{3}{H})\hP_h^k\pi_{h+1}^{k}G_{h+1}^{k}(s,a)\Biggr),
    \end{equation}
    \begin{equation}
        \label{eqn:epsGb}
        G_h^{k,b}(s,a)=\min\Biggr(H, 6\sqrt{\Var_{\hP_h^k}(\tilde{V}^{k,b}_{h+1})(s,a)\frac{\beta^\star(n^k_h(s,a),\delta')}{n^k_h(s,a)}}+36H^2\frac{\beta(n^k_h(s,a),\delta')}{n^k_h(s,a)}+(1+\frac{3}{H})\hP_h^k\pi_{h+1}^{k,b}G_{h+1}^{k,b}(s,a)\Biggr).
    \end{equation}
    Then, the estimation error between $Q_h^{\star}$, $V_h^{\star}$ and $\utilde{Q}_h^{k}$, $\utilde{V}_h^{k}$ can be bounded as
    \[Q_h^{\star}(s,a)-\utilde{Q}_h^{k}(s,a)\le G_h^{k}(s, a),\]
    \[V_h^{\star}(s)-\utilde{V}_h^{k}(s)\le \langle \hat{\pi}_h^{k}(\cdot|s), G_h^{k}(s,\cdot) \rangle .\]
    Moreover, the estimation error between $Q_h^{\pi^b}$, $V_h^{\pi^b}(s)$ and $\utilde{Q}_h^{k,b}$, $\utilde{V}_h^{k,b}$ can be bounded as
    \[Q_h^{\pi^b}(s,a)-\utilde{Q}_h^{k,b}(s,a)\le G_h^{k,b}(s, a),\]
    \[V_h^{\pi^b}(s)-\utilde{V}_h^{k,b}(s)\le \langle \pi_h^{b}(\cdot|s), G_h^{k,b}(s,\cdot) \rangle .\]
\end{lemma}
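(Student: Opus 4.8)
The plan is to transcribe, essentially verbatim, the argument used for the StepMix quantity in \Cref{lem:ringgap}, exploiting the fact that the EpsMix confidence bounds in \Cref{eqn:tildeQ} and \Cref{eqn:tildeQb} are structurally identical to \Cref{eqn:ucbQ} and \Cref{eqn:lcbQ}, specialized to the two policies $\gp^k$ and $\pi^b$ rather than the $H+1$ candidate policies $\pi^{k,h_0}$. Accordingly, I would prove the two optimistic-policy inequalities first and then note the baseline case is entirely parallel.

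For the optimistic policy, I would start from \Cref{lem:epsucblcb}, which gives $Q_h^{\star}(s,a)\le \tilde{Q}_h^{k}(s,a)$, so that $Q_h^{\star}(s,a)-\utilde{Q}_h^{k}(s,a)\le \tilde{Q}_h^{k}(s,a)-\utilde{Q}_h^{k}(s,a)$; since also $Q_h^\star-\utilde{Q}_h^k\le H$ trivially, the $\min(H,\cdot)$ clipping in $G_h^k$ is harmless. Next I would subtract the two defining expressions in \Cref{eqn:tildeQ}: the Bernstein bonuses combine as $3\sqrt{\cdot}+3\sqrt{\cdot}=6\sqrt{\cdot}$, the second-order terms as $14H^2\frac{\beta}{n}+22H^2\frac{\beta}{n}=36H^2\frac{\beta}{n}$, and the correction terms as $\frac1H\hP_h^k(\tilde V^k_{h+1}-\utilde V^k_{h+1})+\frac2H\hP_h^k(\tilde V^k_{h+1}-\utilde V^k_{h+1})+\hP_h^k(\tilde V^k_{h+1}-\utilde V^k_{h+1})=(1+\frac3H)\hP_h^k(\tilde V^k_{h+1}-\utilde V^k_{h+1})$. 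Because $\tilde V^k_{h+1}$ and $\utilde V^k_{h+1}$ are both evaluated under the same greedy map $\gp^k_{h+1}$, I have $\tilde V^k_{h+1}-\utilde V^k_{h+1}=\gp^k_{h+1}(\tilde Q^k_{h+1}-\utilde Q^k_{h+1})$; a downward induction on $h$ (base case $h=H+1$, where all quantities vanish) with hypothesis $\tilde Q^k_{h+1}-\utilde Q^k_{h+1}\le G^k_{h+1}$ then yields $\hP_h^k(\tilde V^k_{h+1}-\utilde V^k_{h+1})\le \hP_h^k\gp^k_{h+1}G^k_{h+1}$, so the difference matches exactly the recursion defining $G_h^k$ in \Cref{eqn:epsG}. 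The value-function bound follows immediately: $V_h^\star(s)-\utilde V_h^k(s)\le \tilde V_h^k(s)-\utilde V_h^k(s)=\langle \gp^k_h(\cdot|s),(\tilde Q^k_h-\utilde Q^k_h)(s,\cdot)\rangle\le \langle \gp^k_h(\cdot|s),G_h^k(s,\cdot)\rangle$.

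The baseline statement is obtained by the same computation with $\tilde Q^{k,b},\utilde Q^{k,b}$ in place of $\tilde Q^k,\utilde Q^k$ and the greedy policy replaced by $\pi^b$, again invoking the baseline half of \Cref{lem:epsucblcb} ($Q_h^{\pi^b}\le\tilde Q_h^{k,b}$) and the fact that $\tilde V^{k,b}$ and $\utilde V^{k,b}$ share the policy $\pi^b$, producing the recursion \Cref{eqn:epsGb}. I do not anticipate a genuine obstacle, since no Bernstein concentration or variance transfer is re-derived here — those were already established in \Cref{lem:epsucblcb}. The only points requiring care are purely bookkeeping: checking that the coefficient arithmetic ($6$, $36H^2$, $1+\tfrac3H$) is reproduced correctly, confirming that the $\min(H,\cdot)$ truncation is consistent with $G_h^k\le H$ at every level, and verifying that the shared-policy identity for the value gap holds for $\gp^k$ and $\pi^b$ exactly as it held for each $\pi^{k,h_0}$ in the StepMix proof.
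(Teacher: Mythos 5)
Your proposal is correct and follows essentially the same route as the paper: the paper explicitly omits this proof, stating it follows the same approach as \Cref{lem:ringgap} for StepMix, and your argument is precisely that transcription — using \Cref{lem:epsucblcb} to replace $Q^\star$ (resp.\ $Q^{\pi^b}$) by $\tilde{Q}^k$ (resp.\ $\tilde{Q}^{k,b}$), subtracting the paired definitions to get the $6\sqrt{\cdot}$, $36H^2\beta/n$, and $(1+\tfrac{3}{H})$ coefficients, and closing the recursion by downward induction through the shared-policy identity for the value gaps, with the $\min(H,\cdot)$ truncation absorbing the clipped cases.
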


\begin{lemma}[Upper bound $\pi_{1}^{k} G_{1}^{k}$ and 
$\pi_{1}^{k,b} G_{1}^{k,b}$]
    \label{lem:epsGbound}
    Recall the functions of $G^k$ and $G^{k,b}$ in \Cref{eqn:epsG,eqn:epsGb}. We have
    \begin{equation*}
    \begin{aligned}
        \pi_{1}^{k} G_{1}^{k}({s_1}) 
        \le & 24e^{13}\sum_{h=1}^H \sum_{s,a} d_h^{k}(s,a) \sqrt{\Var_{P_h}(V_{h+1}^{k})(s,a)\left(\frac{\beta^\star(\bar{n}^k_h(s,a),\delta')}{\bar{n}^k_h(s,a)\vee 1}\right)} \\
        & + 336 e^{13} H^2 \sum_{h=1}^H \sum_{s,a} d_h^{k}(s,a)\left(\frac{\beta(\bar{n}^k_h(s,a),\delta')}{\bar{n}^k_h(s,a)\vee 1}\right),
    \end{aligned}
    \end{equation*}
    and
    \begin{equation*}
    \begin{aligned}
        \pi_{1}^{k,b} G_{1}^{k,b}({s_1}) 
        \le & 24e^{13}\sum_{h=1}^H \sum_{s,a} d_h^{b}(s,a) \sqrt{\Var_{P_h}(V_{h+1}^{k,b})(s,a)\left(\frac{\beta^\star(\bar{n}^k_h(s,a),\delta')}{\bar{n}^k_h(s,a)\vee 1}\right)} \\
        & + 336 e^{13} H^2 \sum_{h=1}^H \sum_{s,a} d_h^{b}(s,a)\left(\frac{\beta(\bar{n}^k_h(s,a),\delta')}{\bar{n}^k_h(s,a)\vee 1}\right),
    \end{aligned}
    \end{equation*}
    where $d_h^{k}$ and $d_h^{b}$ are the {occupancy measures} under policy $\bar{\pi}^{k}$ and $\pi^{b}$, respectively.
\end{lemma}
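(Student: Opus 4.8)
The plan is to mirror the proof of \Cref{lem:Gbound} almost verbatim, since $G_h^k$ and $G_h^{k,b}$ in \Cref{eqn:epsG,eqn:epsGb} have exactly the same recursive form as $G_h^{k,h_0}$ in \Cref{eqn:G}, with the step-wise optimistic policy $\pi^{k,h_0}$ replaced by the global optimistic policy $\bar{\pi}^k$ (for the first bound) and by the baseline policy $\pi^b$ (for the second). I would carry out the two bounds in parallel; I describe the argument for $\pi_1^k G_1^k(s_1)$ and note that the $\pi^{k,b}$ case is identical after swapping the reference quantities.

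First, starting from the definition of $G_h^k$, I would bound the two nontrivial terms. For the transport term $\hat{P}_h^k \pi_{h+1}^k G_{h+1}^k$, I would write it as $P_h \pi_{h+1}^k G_{h+1}^k + (\hat{P}_h^k - P_h)\pi_{h+1}^k G_{h+1}^k$ and control the difference with \Cref{lem:klabs} together with the good event $\mathcal{E}(\delta')$ and the elementary inequality $\sqrt{xy}\le x+y$, obtaining a bound of the form $\frac{1}{H} P_h\pi_{h+1}^k G_{h+1}^k + 3H^2\frac{\beta}{n_h^k}$, exactly as in \Cref{eqn:Gbound1p}.

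Second, for the empirical-variance term I would pass from $\Var_{\hat{P}_h^k}(\tilde{V}_{h+1}^k)$ to the true variance $\Var_{P_h}(V_{h+1}^k)$. This is the key step and is where the EpsMix-specific input enters: I chain \Cref{lem:klvar} (empirical-to-true variance under $\mathcal{E}(\delta')$), then \Cref{lem:varerr} to replace the value argument, and finally \Cref{lem:epsucblcb} — rather than \Cref{lem:stepucblcb} — to bound $\hat{P}_h^k(\tilde{V}_{h+1}^k - V_{h+1}^k)$ by the computable gap $\hat{P}_h^k(\tilde{V}_{h+1}^k - \utilde{V}_{h+1}^k)$. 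For the $\pi^{k,b}$ bound this step instead invokes the UCB/LCB guarantee for $V^{\pi^b}$ in \Cref{lem:epsucblcb}. Combining via $\sqrt{x+y}\le\sqrt{x}+\sqrt{y}$ and $\sqrt{xy}\le x+y$ yields the one-step recursion $G_h^k \le 12\sqrt{\Var_{P_h}(V_{h+1}^k)\frac{\beta^\star}{n_h^k}} + 84 H^2\frac{\beta}{n_h^k} + (1+\tfrac{13}{H})P_h\pi_{h+1}^k G_{h+1}^k$.

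Finally, I would use the truncation $G_h^k\le H$ to replace $\frac{\beta}{n_h^k}$ by $\frac{\beta}{n_h^k}\wedge 1$, unfold the recursion across $h=1,\dots,H$ using $(1+\tfrac{13}{H})^H\le e^{13}$ to introduce the occupancy measure $d_h^k$, and convert the true counts $n_h^k$ into expected counts $\bar{n}_h^k$ via the count good event and \Cref{lem:cnt}, giving the factor-of-two loss that turns $12$ into $24$ and $84e^{13}$ into $336e^{13}$. The same four steps applied with $\tilde{V}^{k,b}$, reference $V^{\pi^b}$, occupancy $d_h^b$, and policy $\pi^b$ give the second inequality. I do not expect a genuine obstacle here: the only thing to verify carefully is that \Cref{lem:epsucblcb} supplies exactly the inequalities $\utilde{Q}\le Q\le Q^\star\le\tilde{Q}$ (and the baseline analogue) that were used through \Cref{lem:stepucblcb} in the StepMix proof, so that the variance-transfer chain goes through unchanged.
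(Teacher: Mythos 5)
Your proposal is correct and coincides with the paper's own treatment: the paper omits the proof of \Cref{lem:epsGbound}, stating only that it "follows the same approaches" as the StepMix proof of \Cref{lem:Gbound}, which is precisely the substitution you carry out (global optimistic policy $\bar{\pi}^k$ or baseline $\pi^b$ in place of $\pi^{k,h_0}$, and \Cref{lem:epsucblcb} in place of \Cref{lem:stepucblcb} in the variance-transfer chain, with good events from \Cref{lem:epsgood}). The only nitpick is cosmetic: \Cref{lem:cnt} costs a factor of $4$ on the count ratio, which is a factor of $2$ under the square root (so $12\to 24$) but a full factor of $4$ on the linear term (so $84\to 336$), not a "factor-of-two loss" throughout as you phrase it.
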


The proofs of the above three lemmas follow the same approaches as those for StepMix, and thus are omitted.


Besides, although the construction of the mixture policy under EpsMix is different from that under StepMix, the linearlity of the occupancy measure and the corresponding value function is preserved under EpsMix.

\begin{lemma}
    \label{lem:epsvisitratio}
    Let $\pi=\rho\pi^1\oplus(1-\rho)\pi^2$, 
    and $d_h^1(s,a)$ and $d_h^2(s,a)$ be the {occupancy measures} under $\pi^1$ and $\pi^2$, respectively. Then, the following equality holds: \[d_h^{\pi}(s,a)=\rho d_h^1(s,a) + (1-\rho) d_h^2(s,a).\]
    
    Recall that the {occupancy measure} under a policy $\pi$ is defined as $d_h^{\pi}(s,a)=\mathbb{E}_{\pi}[\mathds{1}\{s_h=s,a_h=a\}]$.
\end{lemma}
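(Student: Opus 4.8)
The plan is to exploit the defining feature of the \emph{episodic} mixture policy (Definition~\ref{def:epsmix}): unlike the step mixture policy of Definition~\ref{def:stepmix}, the episodic mixture $\rho\pi^1\oplus(1-\rho)\pi^2$ commits to one of the two component policies for the \emph{entire} episode, based on a single randomization performed before step~$1$. This decouples the two components completely, so no induction over $h$ (as was needed in Lemma~\ref{lem:stepvisitratio} for the step mixture) is required; a single application of the law of total expectation will suffice.

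Concretely, I would introduce a Bernoulli random variable $Z$ recording the outcome of the initial randomization, with $\Pr[Z=1]=\rho$ (select $\pi^1$) and $\Pr[Z=0]=1-\rho$ (select $\pi^2$). The crucial observation is that, conditioned on $\{Z=1\}$, every action throughout the episode is drawn according to $\pi^1$ while the transitions follow the fixed kernel $P$; hence the conditional law of the trajectory, and in particular of the pair $(s_h,a_h)$, coincides exactly with the law induced by running $\pi^1$ from the start. This gives $\mathbb{E}_\pi[\mathds{1}\{s_h=s,a_h=a\}\mid Z=1]=d_h^1(s,a)$, and symmetrically $\mathbb{E}_\pi[\mathds{1}\{s_h=s,a_h=a\}\mid Z=0]=d_h^2(s,a)$.

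With this identification, the claim follows by conditioning the occupancy measure on $Z$:
\begin{align*}
d_h^\pi(s,a) &= \mathbb{E}_\pi[\mathds{1}\{s_h=s,a_h=a\}] \\
&= \Pr[Z=1]\,\mathbb{E}_\pi[\mathds{1}\{s_h=s,a_h=a\}\mid Z=1] + \Pr[Z=0]\,\mathbb{E}_\pi[\mathds{1}\{s_h=s,a_h=a\}\mid Z=0] \\
&= \rho\, d_h^1(s,a) + (1-\rho)\, d_h^2(s,a),
\end{align*}
which is the asserted equality.

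There is essentially no technical obstacle here; the only point requiring explicit care is the justification that the conditional trajectory distribution given $\{Z=1\}$ agrees with the trajectory distribution under $\pi^1$. This is immediate from the definition of the episodic mixture---the policy used within the episode is frozen once $Z$ is realized and the MDP dynamics are unchanged---but it is worth stating clearly, since it is precisely the property that separates the episodic mixture from the step mixture and makes this proof far shorter than its step-mixture counterpart. Finally, I note that the same conditioning argument applied to cumulative reward yields $V_1^\pi=\rho V_1^{\pi^1}+(1-\rho)V_1^{\pi^2}$, the analogue of Proposition~\ref{lem:stepcomb}, which is what the safety analysis of EpsMix ultimately needs.
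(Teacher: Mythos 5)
Your proof is correct and follows essentially the same route as the paper's: both introduce a Bernoulli random variable recording the initial policy selection (the paper calls it $B_\rho$, you call it $Z$), identify the conditional trajectory law given each outcome with the law under the corresponding component policy, and conclude by the law of total expectation. Your additional remark explicitly justifying the conditional-distribution identification is a welcome clarification of a step the paper leaves implicit, but it does not change the argument.
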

\begin{proof}
    Let ${B}_\rho$ be an independent Bernoulli random variable with mean $\rho$, and let $\pi$ be $\pi^1$ if $B_\rho=1$ and be $\pi^2$ otherwise. Then, 
    \begin{align*}
        d_h^{\pi}(s,a) = & \mathbb{E}_{\pi}[\mathds{1}\{s_h=s,a_h=a\}] \\
        = & \mathbb{E}_{\pi}[\mathds{1}\{s_h=s,a_h=a\}|{B}_\rho=1]{\Pb}[{B}_\rho=1]+\mathbb{E}_{\pi}[\mathds{1}\{s_h=s,a_h=a\}|{B}_\rho=0]{\Pb}[{B}_\rho=0] \\
        = & \mathbb{E}_{\pi^1}[\mathds{1}\{s_h=s,a_h=a\}]\cdot \rho+\mathbb{E}_{\pi^2}[\mathds{1}\{s_h=s,a_h=a\}](1-\rho) \\
        = & \rho d^1_h(s,a) + (1-\rho) d_h^2(s,a).
    \end{align*}
\end{proof}

\begin{proposition}
    \label{lem:epscomb}
    Under the same condition as in \Cref{lem:epsvisitratio}, the following equality holds:
    \[V_1^{\pi}=\rho V_1^{\pi^1} + (1-\rho) V_1^{\pi^2}.\]
\end{proposition}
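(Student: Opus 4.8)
The plan is to reuse the occupancy-measure representation of the value function exactly as in the proof of \Cref{lem:stepcomb}, now feeding in the linearity of occupancy measures established in \Cref{lem:epsvisitratio}. Concretely, I would first recall the standard identity that expresses the initial value as an inner product between the occupancy measure and the reward: for any policy $\pi$,
\[
V_1^{\pi} = \sum_{h=1}^H \sum_{s,a} d_h^{\pi}(s,a)\, r_h(s,a),
\]
which holds because $V_1^\pi = \Eb_\pi[\sum_{h=1}^H r_h(s_h,a_h)]$ and $d_h^\pi(s,a) = \Eb_\pi[\mathds{1}\{s_h=s,a_h=a\}]$ by definition, so the displayed sum is just the expectation rewritten over the visited state--action pairs at each step.

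The second step is to substitute the occupancy-measure linearity. Since $\pi = \rho\pi^1 \oplus (1-\rho)\pi^2$, \Cref{lem:epsvisitratio} gives $d_h^{\pi}(s,a) = \rho\, d_h^1(s,a) + (1-\rho)\, d_h^2(s,a)$ for every $h,s,a$. Plugging this into the representation above and splitting the finite double sum by linearity yields
\[
V_1^{\pi} = \rho \sum_{h=1}^H \sum_{s,a} d_h^{1}(s,a)\, r_h(s,a) + (1-\rho)\sum_{h=1}^H \sum_{s,a} d_h^{2}(s,a)\, r_h(s,a) = \rho V_1^{\pi^1} + (1-\rho) V_1^{\pi^2},
\]
where the last equality reapplies the same occupancy-measure identity to $\pi^1$ and $\pi^2$. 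This closes the argument.

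There is essentially no technical obstacle here: the entire content of the proposition has already been absorbed into \Cref{lem:epsvisitratio}, whose Bernoulli-coin conditioning argument is where the real (if still elementary) work lives. The only point worth stating cleanly is that the sums are finite (over the finite sets $\Sc \times \Ac$ and the finite horizon $H$), so splitting the summation and rescaling by $\rho$ and $1-\rho$ is unconditionally valid. As an alternative one-line route I would note that one can bypass occupancy measures entirely by conditioning directly on the independent Bernoulli selector $B_\rho$ used to pick the policy: $V_1^\pi = \Eb_\pi[\sum_h r_h] = \rho\,\Eb_{\pi^1}[\sum_h r_h] + (1-\rho)\,\Eb_{\pi^2}[\sum_h r_h]$, which gives the claim immediately; I would keep the occupancy-measure version in the main text to parallel \Cref{lem:stepcomb}.
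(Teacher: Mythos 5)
Your proof is correct and matches the paper's approach: the paper leaves \Cref{lem:epscomb} without an explicit proof, implicitly reusing the argument of \Cref{lem:stepcomb}, which is exactly your occupancy-measure representation $V_1^\pi=\sum_{h}\sum_{s,a}d_h^\pi(s,a)r_h(s,a)$ combined with the linearity from \Cref{lem:epsvisitratio}. Your alternative one-liner via conditioning on the Bernoulli selector is also valid, but the occupancy-measure route is what the paper intends.
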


Based on the linearity shown in \Cref{lem:epsvisitratio}, we obtain a result similar to that in \Cref{lem:Gboundcomb} for episodic mixture policies.
\begin{lemma}
    \label{lem:Gboundcomb_eps}
    For an episodic mixture policy $\pi^k$ mixed from two policies $\gp^{k}$ and $\pi^{b}$, defined as $\pi^k = (1-\rho)\pi^{b}\oplus\rho \gp^{k}$, the following bound holds:
    \begin{equation}
    \label{eqn:combined_G_bound_eps}
    \begin{aligned}
        & (1-\rho)\pi_{1}^{b} G_{1}^{k,b}({s_1}) + \rho \gp_{1}^{k} G_{1}^{k}({s_1}) \\
        &\quad \le   24e^{13} H \sqrt{\sum_{h=1}^H\sum_{s,a}d_h^k(s,a)\left(\frac{\beta(\bar{n}^k_h(s,a),\delta')}{\bar{n}^k_h(s,a)\vee 1} \right)} + 336 e^{13}H^2\sum_{h=1}^H\sum_{s,a}d_h^k(s,a)\left(\frac{\beta(\bar{n}^k_h(s,a),\delta')}{\bar{n}^k_h(s,a)\vee 1} \right),
    \end{aligned}
    \end{equation}
    where $d_h^k(s,a)$ is the {occupancy measure} under policy $\pi^k$.
\end{lemma}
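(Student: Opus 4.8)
The plan is to follow the same strategy used to prove \Cref{lem:Gboundcomb} for StepMix, replacing the step-mixture linearity with the episodic-mixture linearity established in \Cref{lem:epsvisitratio}. First I would invoke \Cref{lem:epsvisitratio} to write the occupancy measure of the episodic mixture $\pi^k = (1-\rho)\pi^b \oplus \rho \gp^k$ as the convex combination
\begin{equation*}
d_h^k(s,a) = (1-\rho)\, d_h^b(s,a) + \rho\, d_h^{\gp^k}(s,a),
\end{equation*}
where $d_h^b$ and $d_h^{\gp^k}$ denote the occupancy measures under $\pi^b$ and $\gp^k$ respectively. This identity is exactly what makes the two separate $G$-bounds combine cleanly into a bound phrased in terms of $d_h^k$.

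Next, I would apply \Cref{lem:epsGbound} to bound $\pi_1^b G_1^{k,b}(s_1)$ and $\gp_1^k G_1^k(s_1)$ individually and take their $\rho$-weighted sum. Each bound splits into a square-root (Bernstein variance) term and a linear $H^2\beta$ term. The linear terms are immediate: by the convexity identity above, the $\rho$-weighted sum of $\sum_{h,s,a} d_h^b \tfrac{\beta}{\bar{n}_h^k \vee 1}$ and $\sum_{h,s,a} d_h^{\gp^k}\tfrac{\beta}{\bar{n}_h^k \vee 1}$ collapses exactly to $\sum_{h,s,a} d_h^k \tfrac{\beta}{\bar{n}_h^k \vee 1}$, reproducing the second term on the right-hand side with factor $336e^{13}H^2$.

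The main work — and the step I expect to be the crux — is controlling the $\rho$-weighted sum of the two square-root terms, namely showing
\begin{align*}
& \rho \sum_{h,s,a} d_h^{\gp^k}(s,a)\sqrt{\Var_{P_h}(V_{h+1}^k)(s,a)\tfrac{\beta^\star}{\bar{n}_h^k(s,a)\vee 1}} + (1-\rho)\sum_{h,s,a} d_h^b(s,a)\sqrt{\Var_{P_h}(V_{h+1}^{k,b})(s,a)\tfrac{\beta^\star}{\bar{n}_h^k(s,a)\vee 1}} \\
& \qquad \le H\sqrt{\sum_{h,s,a} d_h^k(s,a)\tfrac{\beta}{\bar{n}_h^k(s,a)\vee 1}}.
\end{align*}
I would apply the Cauchy--Schwarz inequality to factor the left-hand side into a product of two square roots: one collecting the $\rho$-weighted variances $\rho\sum d_h^{\gp^k}\Var_{P_h}(V_{h+1}^k) + (1-\rho)\sum d_h^b \Var_{P_h}(V_{h+1}^{k,b})$, and the other collecting the $\rho$-weighted $\tfrac{\beta^\star}{\bar{n}_h^k\vee 1}$ weights. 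By \Cref{lem:variancesum}, each of $\sum_{h,s,a} d_h^{\gp^k}\Var_{P_h}(V_{h+1}^k)$ and $\sum_{h,s,a} d_h^b \Var_{P_h}(V_{h+1}^{k,b})$ is at most $H^2$, so the first factor is at most $H$; the second factor equals $\sqrt{\sum d_h^k \tfrac{\beta^\star}{\bar{n}_h^k\vee 1}}$ by the convexity identity, which is bounded by $\sqrt{\sum d_h^k \tfrac{\beta}{\bar{n}_h^k\vee 1}}$ since $\beta^\star < \beta$. Multiplying by the $24e^{13}$ prefactor and adding the linear term from the previous paragraph yields exactly \Cref{eqn:combined_G_bound_eps}.

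Since the episodic mixture is structurally identical to the step mixture once the linearity of occupancy measures is in hand, I anticipate no genuinely new difficulty beyond verifying that \Cref{lem:variancesum} applies verbatim to both $\gp^k$ and $\pi^b$ (both are fixed Markov policies within episode $k$, so the total-variance argument goes through unchanged). The one bookkeeping subtlety is the notational clash where $d_h^k$ denotes the occupancy measure of $\gp^k$ in \Cref{lem:epsGbound} but of the mixture $\pi^k$ here; I would resolve this by using the explicit names $d_h^{\gp^k}$ and $d_h^b$ throughout the combination step to keep the two roles distinct.
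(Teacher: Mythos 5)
Your proposal is correct and follows essentially the same route as the paper, which proves this lemma by repeating the argument of \Cref{lem:Gboundcomb} with the episodic-mixture linearity of occupancy measures (\Cref{lem:epsvisitratio}) in place of the step-mixture version: apply \Cref{lem:epsGbound} to each component policy, combine the square-root terms via Cauchy--Schwarz, bound the variance sums by $H^2$ using \Cref{lem:variancesum}, and collapse the weighted counts into $d_h^k$ by linearity. Your handling of the notational clash for $d_h^k$ is also the right reading of the paper's conventions.
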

\Cref{lem:Gboundcomb_eps} can be proved following a similar approach as in the proof of \Cref{lem:Gboundcomb}. 


Now we establish the EpsMix version of the finite non-optimistic policy lemma.

\begin{lemma}
    \label{lem:epsfinite}
    Define $\mathcal{N}=\{k|k\in[K],\pi^k\neq \gp^{k}\}$. Then, the cardinality of $\mathcal{N}$ in the EpsMix algorithm can be bounded as
    \[|\mathcal{N}|\le \left(\frac{4608 e^{26}}{\kappa^2}+\frac{2688 e^{13}S}{\kappa} \right)H^3SA\log^2(K+1)=\tilde{O}\left(\left(\frac{1}{\kappa^2}+\frac{S}{\kappa} \right)H^3SA \right),\]
    where $\kappa=V_1^{\pi^b}-\gamma$.
\end{lemma}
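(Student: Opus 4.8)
The plan is to mirror the StepMix argument in \Cref{lem:finite}, replacing the family of step-wise anchors $\{\pi^{\star,h_0}\}$ by the two anchors $\pi^\star$ and $\pi^b$ that are natural for the episodic mixture $\pi^k=\rho\gp^k\oplus(1-\rho)\pi^b$. First I would isolate the defining inequality for episodes in $\mathcal{N}$: whenever $\pi^k\neq\gp^k$, the EpsMix selection rule guarantees that the LCB of the executed policy is at most the threshold, i.e. $\rho\utilde{V}_1^k+(1-\rho)\utilde{V}_1^{k,b}\le\gamma$. Indeed, in the episodic-mixture branch the choice $\rho=\frac{\utilde{V}_1^{k,b}(s_1)-\gamma}{\utilde{V}_1^{k,b}(s_1)-\utilde{V}_1^k}$ together with the linearity of the LCB from \Cref{lem:epscomb} forces $\rho\utilde{V}_1^k+(1-\rho)\utilde{V}_1^{k,b}=\gamma$ exactly, while in the pure-baseline branch we have $\rho=0$ and $\utilde{V}_1^{k,b}<\gamma$. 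In either case $V_1^{\pi^b}-(\rho\utilde{V}_1^k+(1-\rho)\utilde{V}_1^{k,b})\ge V_1^{\pi^b}-\gamma=\kappa$, and summing over $\mathcal{N}$ yields $|\mathcal{N}|\kappa\le\sum_{k\in\mathcal{N}}\big(V_1^{\pi^b}-(\rho\utilde{V}_1^k+(1-\rho)\utilde{V}_1^{k,b})\big)$.

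Next I would convert the right-hand side into a sum of confidence-width terms. Since $V_1^{\pi^b}\le V_1^\star$, the convexity bound $V_1^{\pi^b}\le\rho V_1^\star+(1-\rho)V_1^{\pi^b}$ holds, which plays the role that \Cref{lem:stepoptimal} played for StepMix. Subtracting the LCBs and invoking the two gap bounds of \Cref{lem:epsgap}, namely $V_1^\star-\utilde{V}_1^k\le\gp_1^k G_1^k(s_1)$ and $V_1^{\pi^b}-\utilde{V}_1^{k,b}\le\pi_1^b G_1^{k,b}(s_1)$, gives for each $k\in\mathcal{N}$ the bound
\[
V_1^{\pi^b}-(\rho\utilde{V}_1^k+(1-\rho)\utilde{V}_1^{k,b})\le\rho\,\gp_1^k G_1^k(s_1)+(1-\rho)\,\pi_1^b G_1^{k,b}(s_1).
\]
This is precisely the left-hand side of \Cref{lem:Gboundcomb_eps}, so applying that lemma replaces the combined $G$-term by a quantity controlled by the occupancy measure $d_h^k$ of the executed episodic mixture.

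Finally I would aggregate over $\mathcal{N}$ exactly as in StepMix: apply Cauchy--Schwarz to pull $\sqrt{|\mathcal{N}|}$ out of the square-root term, and bound the two occupancy sums $\sum_{k\in\mathcal{N}}\sum_{h=1}^H\sum_{s,a}d_h^k(s,a)\frac{\beta^\star(\bar{n}^k_h(s,a),\delta')}{\bar{n}^k_h(s,a)\vee1}$ and its $\beta$-analogue by $4HSA\,\beta^\star(K,\delta')\log(|\mathcal{N}|+1)$ and $4HSA\,\beta(K,\delta')\log(|\mathcal{N}|+1)$ respectively, using \Cref{lem:partial_sum} as in \Cref{eqn:summation1,eqn:summation2}. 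This produces the self-referential inequality
\[
|\mathcal{N}|\kappa\le48e^{13}\sqrt{|\mathcal{N}|}\sqrt{H^3SA\,\beta^\star(K,\delta')\log(|\mathcal{N}|+1)}+1344e^{13}H^3SA\,\beta(K,\delta')\log(|\mathcal{N}|+1),
\]
and solving it for $|\mathcal{N}|$ (using $\log(|\mathcal{N}|+1)\le\log(K+1)$) gives the stated bound. I expect the only genuinely new point relative to StepMix to be the case analysis in the first paragraph: one must check that the episodic-mixture and pure-baseline branches both yield an executed-policy LCB no larger than $\gamma$, hence a uniform per-episode loss of at least $\kappa$. The rest is a direct transcription of the StepMix proof with the two-anchor family $\{\pi^\star,\pi^b\}$ in place of the $(H+1)$-anchor family.
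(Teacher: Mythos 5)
Your proposal is correct and follows essentially the same route as the paper's proof: the same per-episode loss of at least $\kappa$ via the two-branch case analysis (mixture LCB pinned to $\gamma$, or baseline with $\utilde{V}_1^{k,b}<\gamma$), the same convexity step $V_1^{\pi^b}\le\rho V_1^\star+(1-\rho)V_1^{\pi^b}$ in place of \Cref{lem:stepoptimal}, followed by \Cref{lem:epsgap}, \Cref{lem:Gboundcomb_eps}, Cauchy--Schwarz with \Cref{lem:partial_sum}, and solving the resulting self-referential inequality. Your closing observation is also accurate: the only genuinely new ingredient relative to \Cref{lem:finite} is the branch-by-branch verification that the executed policy's LCB combination is at most $\gamma$.
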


\begin{proof}
    If $\pi^k\neq \hat{\pi}^{k}$, we must have $\utilde{V}_1^{k}<\gamma=V_1^{\pi^b}-\kappa$.
    There are two possible cases for $\utilde{V}_1^{k,b}$. Case 1: $\utilde{V}_1^{k,b}<\gamma=V_1^{\pi^b}-\kappa$. For this case, the algorithm will choose $\pi^k=\pi^{b}$. Thus, $V_1^{\pi^b} - \utilde{V}_1^{k,b}>\kappa $. Case 2: $\utilde{V}_1^{k,b}\ge\gamma$. For this case, the algorithm will choose $\pi^k=\rho\gp^{k} \oplus (1-\rho)\pi^{b}$. 
    The design of $\rho$ ensures that $\rho \utilde{V}_1^{k} + (1 - \rho) \utilde{V}_1^{k,b}=\gamma$. Therefore, $V_1^{\pi^b} - (\rho \utilde{V}_1^{k} + (1 - \rho) \utilde{V}_1^{k,b})=\kappa$. We note that $\pi^{k}=\pi^{b}$ can also be viewed as $1\cdot \pi^{b}\oplus 0\cdot\gp^{k}$. Thus, for any $k\in\mathcal{N}$, we have
    \[V_1^{\pi^b} - (\rho \utilde{V}_1^{k} + (1 - \rho) \utilde{V}_1^{k,b})\ge\kappa.\] 
    Furthermore, due to the optimality of $\pi^\star$, we have $V_1^{\pi^b}\le \rho V_1^{\star} + (1 - \rho) V_1^{\pi^b}$. Thus,
    \begin{align*}
        |\mathcal{N}|\kappa \le & \sum_{k\in\mathcal{N}} \left( V_1^{\pi^b} - (\rho \utilde{V}_1^{k} + (1 - \rho) \utilde{V}_1^{k,b}) \right) \\
        \le & \sum_{k\in\mathcal{N}} \left( (\rho V_1^{\star} + (1 - \rho) V_1^{\pi^b}) - (\rho \utilde{V}_1^{k} + (1 - \rho) \utilde{V}_1^{k,b}) \right)\\
        = & \sum_{k\in\mathcal{N}} \left( \rho (V_1^{\star} - \utilde{V}_1^{k}) + (1 - \rho) (V_1^{\pi^b} - \utilde{V}_1^{k,b}) \right) \\
        \overset{(a)}{\le} & \sum_{k\in\mathcal{N}} \left( \rho \gp^{k}_1G^{k}_1 + (1 - \rho) \pi^{b}_1 G^{k,b}_1 \right),
    \end{align*}
    where inequality $(a)$ is based on \Cref{lem:epsgap}.
    
    Then, leveraging \cref{lem:Gboundcomb_eps}, we have
    \begin{align*}
         |\mathcal{N}|\kappa & \le   24 e^{13} H \sum_{k\in\mathcal{N}}\sqrt{\sum_{h=1}^H \sum_{s,a} d_h^{k}(s,a) \left(\frac{\beta^\star(\bar{n}^k_h(s,a),\delta')}{\bar{n}^k_h(s,a)\vee 1} \right)} \\
         &\quad\quad + 336 e^{13} H^2 \sum_{k\in\mathcal{N}}\sum_{h=1}^H \sum_{s,a} d_h^{k}(s,a)\left(\frac{\beta(\bar{n}^k_h(s,a),\delta' )}{\bar{n}^k_h(s,a)\vee 1} \right) \\
        & \overset{(b)}{\le}  24 e^{13} H \sqrt{|\mathcal{N}|} \sqrt{\sum_{k\in\mathcal{N}}\sum_{h=1}^H \sum_{s,a} d_h^{k}(s,a) \left(\frac{\beta^\star(\bar{n}^k_h(s,a),\delta')}{\bar{n}^k_h(s,a)\vee 1} \right)} \\
        &\quad\quad + 336 e^{13} H^2 \sum_{k\in\mathcal{N}}\sum_{h=1}^H \sum_{s,a} d_h^{k}(s,a)\left(\frac{\beta(\bar{n}^k_h(s,a),\delta')}{\bar{n}^k_h(s,a)\vee 1} \right) ,
    \end{align*}
    where inequality $(b)$ follows from the Cauchy's inequality and $\delta'=\delta/4$.
    
    Similar to \Cref{eqn:summation1} and \Cref{eqn:summation2} in \Cref{lem:finite}, we have
    \begin{equation}
        \label{eqn:epssum1}
        \sum_{k\in\mathcal{N}}\sum_{h=1}^H \sum_{s,a} d_h^k(s,a) \left(\frac{\beta^\star(\bar{n}^k_h(s,a),\delta')}{\bar{n}^k_h(s,a)\vee 1} \right) \le 4HSA \beta^\star(K,\delta')\log{(|\mathcal{N}|+1)},
    \end{equation}
    and
    \begin{equation}
        \label{eqn:epssum2}
        \sum_{k\in\mathcal{N}}\sum_{h=1}^H \sum_{s,a} d_h^k(s,a) \left(\frac{\beta(\bar{n}^k_h(s,a),\delta')}{\bar{n}^k_h(s,a)\vee 1} \right) \le 4HSA \beta(K,\delta')\log{(|\mathcal{N}|+1)}.
    \end{equation}
    Therefore, we have
    \[|\mathcal{N}|\kappa\le 48 e^{13}  \sqrt{|\mathcal{N}|} \sqrt{H^3SA\beta^\star(K,\delta')\log{(|\mathcal{N}|+1)}} + 1344 e^{13} H^3 SA\beta(K,\delta')\log{(|\mathcal{N}|+1)}.\]
    
    By rearranging terms, we conclude that $|\mathcal{N}|\le (\frac{4608e^{26}}{\kappa^2}+\frac{2688e^{13}S}{\kappa})H^3SA\log^2(K+1)=\tilde{O}((\frac{1}{\kappa^2}+\frac{S}{\kappa})H^3SA).$
\end{proof}


Finally, we are ready to prove the regret upper bound of EpsMix.

\if{0}
\begin{theorem}[Regret of EpsMix]
    \label{thm:eps}
    Given a safe baseline $\pi^b$  and a performance constraint $\gamma$ satisfying $V^{\pi^b}\ge \gamma$, by setting $\beta=\log(SAH/\delta') + S\log(8e(K+1)), \beta^\star = \log(SAH/\delta') + \log(8e(K+1)), \delta' = \frac{\delta}{4}$, the EpsMix algorithm does not violate \Cref{eqn:constraint}, and the regret of EpsMix can be upper bounded by:
    \begin{align*}
        \mbox{Reg}(K) &\leq 48 e^{13} \sqrt{H^3SAK\beta^\star(K,\delta')\log{(K+1)}} + 1344 e^{13} H^3 SA\beta(K,\delta')\log{(K+1)} \\
        &\quad + \left(\frac{4608 e^{26}}{\kappa^2}+\frac{2688 e^{13}S}{\kappa}\right)H^3SA\log^2(K+1)\Delta_0 \\
        &=  \tilde{O}\left(\sqrt{H^3SAK}+H^3S^2A+H^3SA\Delta_0\left(\frac{1}{\kappa^2}+\frac{S}{\kappa}\right)\right),
    \end{align*}
    where $\Delta_0=V_1^\star-V_1^{\pi^b}$, $\kappa = V_1^{\pi^b} - \gamma$.
\end{theorem}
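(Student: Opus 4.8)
The plan is to assemble the already-established EpsMix lemmas exactly as in Step Four of the StepMix proof, so that the theorem reduces to a single regret decomposition plus a per-episode safety check. First I would condition on the four good events of \Cref{lem:epsgood}, which hold simultaneously with probability at least $1-\delta$ under the stated choices of $\beta,\beta^\star,\delta'$; all inequalities below are then deterministic. I write $\mathrm{Reg}(K)=\sum_{k=1}^K(V_1^\star-V_1^{\pi^k})$ and use that every executed policy is an episodic mixture $\pi^k=\rho_k\gp^k\oplus(1-\rho_k)\pi^b$, where $\rho_k=1$ on the optimistic episodes $k\notin\mathcal{N}$ and $\rho_k\in[0,1)$ on $\mathcal{N}=\{k:\pi^k\neq\gp^k\}$ (with $\rho_k=0$ capturing the pure-baseline branch). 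By the linearity of value functions under episodic mixtures (\Cref{lem:epscomb}), $V_1^{\pi^k}=\rho_k V_1^k+(1-\rho_k)V_1^{\pi^b}$, hence
\[
V_1^\star-V_1^{\pi^k}=\rho_k(V_1^\star-V_1^k)+(1-\rho_k)\Delta_0 .
\]

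Using LCB validity $V_1^k\ge\utilde{V}_1^k$ together with the gap bound $V_1^\star-\utilde{V}_1^k\le\gp_1^kG_1^k$ from \Cref{lem:epsgap}, and nonnegativity of $\pi_1^bG_1^{k,b}$, I bound $\rho_k(V_1^\star-V_1^k)\le\rho_k\gp_1^kG_1^k+(1-\rho_k)\pi_1^bG_1^{k,b}$, while $(1-\rho_k)\Delta_0=0$ whenever $k\notin\mathcal{N}$. This gives the uniform estimate
\[
\mathrm{Reg}(K)\le\sum_{k=1}^K\big(\rho_k\gp_1^kG_1^k+(1-\rho_k)\pi_1^bG_1^{k,b}\big)+|\mathcal{N}|\Delta_0 .
\]
I would then apply \Cref{lem:Gboundcomb_eps} to each summand to replace the mixed $G$-term by its occupancy-measure bound, and sum over $k$: a Cauchy--Schwarz step over episodes combined with the counting estimate \Cref{lem:partial_sum} (the same manipulations yielding \Cref{eqn:epssum1,eqn:epssum2}) turns the first group into $48e^{13}\sqrt{H^3SAK\beta^\star(K,\delta')\log(K+1)}$ and the second into $1344e^{13}H^3SA\beta(K,\delta')\log(K+1)$. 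Finally \Cref{lem:epsfinite} supplies $|\mathcal{N}|\le(\tfrac{4608e^{26}}{\kappa^2}+\tfrac{2688e^{13}S}{\kappa})H^3SA\log^2(K+1)$, so $|\mathcal{N}|\Delta_0$ contributes the stated additive term, and collecting everything produces $\tilde O\big(\sqrt{H^3SAK}+H^3S^2A+H^3SA\Delta_0(\tfrac1{\kappa^2}+\tfrac S\kappa)\big)$.

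For the safety claim I verify \Cref{eqn:constraint} episode by episode using \Cref{lem:epsucblcb} and \Cref{lem:epscomb}. If $\utilde{V}_1^k\ge\gamma$, then $\pi^k=\gp^k$ and $V_1^{\pi^k}=V_1^k\ge\utilde{V}_1^k\ge\gamma$. If $\utilde{V}_1^k<\gamma\le\utilde{V}_1^{k,b}$, then the algorithm's choice of $\rho_k$ forces $\rho_k\utilde{V}_1^k+(1-\rho_k)\utilde{V}_1^{k,b}=\gamma$, so that $V_1^{\pi^k}=\rho_kV_1^k+(1-\rho_k)V_1^{\pi^b}\ge\rho_k\utilde{V}_1^k+(1-\rho_k)\utilde{V}_1^{k,b}=\gamma$. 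If both LCBs are below $\gamma$, then $\pi^k=\pi^b$, which is safe by assumption $V_1^{\pi^b}\ge\gamma$. Since these hold simultaneously on the good events, the constraint holds for all $k$ with probability at least $1-\delta$.

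I expect no serious obstacle in the theorem itself: the heavy lifting (valid Bernstein-type UCB/LCB for both $\pi^b$ and $\gp^k$, the $G$-function gap bounds, and especially the finite non-optimistic policy lemma) is already packaged in \Cref{lem:epsucblcb,lem:epsgap,lem:epsGbound,lem:Gboundcomb_eps,lem:epsfinite}. The one point needing care is the reading of the constraint: because $\pi^k$ is a \emph{randomized} episodic mixture, safety is certified only for the expected return $V_1^{\pi^k}$ averaged over the Bernoulli draw selecting $\gp^k$ versus $\pi^b$, not for the pathwise-selected policy; the linearity in \Cref{lem:epscomb} is precisely what lets the LCB inequality evaluated at the mixing weight $\rho_k$ guarantee $V_1^{\pi^k}\ge\gamma$.
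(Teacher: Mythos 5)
Your proposal is correct and takes essentially the same route as the paper's proof: conditioning on the good events of \Cref{lem:epsgood}, your mixture-linearity decomposition $V_1^\star-V_1^{\pi^k}=\rho_k(V_1^\star-V_1^k)+(1-\rho_k)\Delta_0$ is an equivalent rearrangement of the paper's split over $\mathcal{N}$ versus $[K]\setminus\mathcal{N}$, and both reduce to the bound $\sum_{k=1}^K\bigl(\rho_k\gp_1^{k}G_1^{k}+(1-\rho_k)\pi_1^{b}G_1^{k,b}\bigr)+|\mathcal{N}|\Delta_0$, which is then handled exactly as in the paper via \Cref{lem:Gboundcomb_eps}, Cauchy--Schwarz together with \Cref{eqn:epssum1,eqn:epssum2}, and \Cref{lem:epsfinite}. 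Your three-case LCB safety verification, including the correct observation that the constraint is certified for the expected return averaged over the Bernoulli mixing draw via \Cref{lem:epscomb}, likewise matches the paper's argument.
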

\fi
\begin{proof}[Proof of \Cref{thm:episodic}]
First, we have
    \begin{align*}
        \sum_{k=1}^K \left( V_1^{\star} - V_1^{\pi^k} \right) &= \sum_{k\notin\mathcal{N}} \left(V_1^{\star} - V_1^{\pi^k}\right) + \sum_{k\in\mathcal{N}} \left(V_1^{\star} - V_1^{\pi^b}\right) + \sum_{k\in\mathcal{N}} \left(V_1^{\pi^b} - V_1^{\pi^k}\right) \\
        & \le  \sum_{k\notin\mathcal{N}} \left(V_1^{\star} - \utilde{V}_1^{k}\right) + \sum_{k\in\mathcal{N}}  \biggr(\rho^k (V_1^{\star} - \utilde{V}_1^{k}) + (1 - \rho^k) (V_1^{\pi^b} - \utilde{V}_1^{k,b})\biggr) + |\mathcal{N}|\Delta_0  \\
        & =  \sum_{k=1}^{K}  \biggr(\rho^k (V_1^{\star} - \utilde{V}_1^{k}) + (1 - \rho^k) (V_1^{\pi^b} - \utilde{V}_1^{k,b})\biggr) +
        |\mathcal{N}|\Delta_0 \\
        & \overset{(a)}{\le}  \sum_{k=1}^{K}  \biggr(\rho^k \gp_1^{k}G_1^{k} + (1 - \rho^k) \pi_1^{b}G_1^{k,b}\biggr) +
        |\mathcal{N}|\Delta_0, 
    \end{align*}
    where inequality $(a)$ follows from \Cref{lem:epsgap}. 
    
    Then, we use \Cref{lem:Gboundcomb_eps} and the result of \Cref{lem:epsfinite} to bound the regret as follows:
    \begin{align*}
        \sum_{k=1}^K & \left( V_1^{\star} - V_1^{\pi^k} \right) \\
        \le & \sum_{k=1}^{K} \biggr( 24e^{13} H \sqrt{\sum_{h=1}^H\sum_{s,a}d_h^k(s,a)\biggr(\frac{\beta(\bar{n}^k_h(s,a),\delta')}{\bar{n}^k_h(s,a)\vee 1}\biggr)} + 336 e^{13}H^2\sum_{h=1}^H\sum_{s,a}d_h^k(s,a)\biggr(\frac{\beta(\bar{n}^k_h(s,a),\delta')}{\bar{n}^k_h(s,a)\vee 1}\biggr) \biggr) \\
        & + \biggr(\frac{4608e^{26}}{\kappa^2}+\frac{2688e^{13}}{\kappa}\biggr)H^3SA\log(K+1)\Delta_0 \\
        \le & 24e^{13} H \sqrt{K} \sqrt{\sum_{k=1}^{K} \sum_{h=1}^H\sum_{s,a}d_h^k(s,a)\biggr(\frac{\beta(\bar{n}^k_h(s,a),\delta')}{\bar{n}^k_h(s,a)\vee 1}\biggr)} + 336 e^{13}H^2\sum_{k=1}^{K} \sum_{h=1}^H\sum_{s,a}d_h^k(s,a)\biggr(\frac{\beta(\bar{n}^k_h(s,a),\delta')}{\bar{n}^k_h(s,a)\vee 1}\biggr) \\
        & +\biggr(\frac{4608e^{26}}{\kappa^2}+\frac{2688e^{13}S}{\kappa}\biggr)H^3SA\log(K+1)\Delta_0,
    \end{align*}
    where the last inequality is due to the Cauchy's inequality. Then we use the bound of summation in \Cref{lem:epsfinite} and plug \Cref{eqn:epssum1,eqn:epssum2} into the above inequality, to conclude that 
    \begin{align*}
         \mbox{Reg}(K)&=\sum_{k=1}^K \left( V_1^{\star} - V_1^{\pi^k}\right) \\
         &\le 48 e^{13} \sqrt{H^3SAK\beta^\star(K,\delta')\log{(K+1)}} + 1344 e^{13} H^3 SA\beta(K,\delta')\log{(K+1)} \\
        &\quad\quad +\left(\frac{4608 e^{26}}{\kappa^2}+\frac{2688 e^{13}S}{\kappa} \right)H^3SA\log^2(K+1)\Delta_0 \\
        &= \tilde{O}\left(\sqrt{H^3SAK}+H^3S^2A+H^3SA\Delta_0\left(\frac{1}{\kappa^2}+\frac{S}{\kappa} \right) \right).
    \end{align*}
\end{proof}

\section{From Baseline Policy to Offline Dataset}
\label{appx:offline}

\subsection{Offline Algorithm}
\label{appx:offline1}

The offline VI-LCB algorithm is detailed in \Cref{alg:offline}. 

\begin{algorithm}[!hbt]
    \caption{Offline VI-LCB (Algorithm 3 in \citet{xie2021policy})}
    \label{alg:offline}
\begin{algorithmic}
\REQUIRE Dataset $D=\{(s_h^{(i)},a_h^{(i)},r_h^{(i)},s_{h+1}^{(i)})_{h=1}^H\}^n_{i=1}$ collected using an unknown baseline policy $\mu$
\STATE Randomly divide $D$ into $H$ sets $\{D_h\}_{h=1}^H$ such that $|D_h|=n/H$.
\STATE Estimation $\hP_h(s'|s,a)$ and $b_h(s,a)$ using $D_h$.
\STATE Set $\hat{V}_{H+1}(s,a) = 0, \forall s,a$.
\FOR{$h$ = $H$ to $1$}
    \STATE $\hat{Q}_{h}(s,a)=\max(0, r_h(s,a) + \hP_h \hat{V}_{h+1}(s,a) - b_h(s,a)), \forall s,a$.
    \STATE Let $\hat{\pi}_h(s)=\argmax_a \hat{Q}_{h}(s,a), \forall s$.
    \STATE $\hat{V}_h(s) = \hat{Q}_h(s, \hat{\pi}_h(s)), \forall s$.
\ENDFOR
\STATE Return $\hat{\pi}$.
\end{algorithmic}
\end{algorithm}

\subsection{Theoretical Analysis}

In \cref{alg:offline}, $\hP_h(s'|s,a)$ and $b_h(s,a)$ are defined as:
\begin{align*}
    \hP_h(s'|s,a) &= \frac{n_h(s,a,s')}{1\vee n_h(s,a)},    \quad b_h(s,a)=c \sqrt{\frac{H^2\iota}{n_h(s,a)\vee 1}}, 
\end{align*}
where $\iota = \log(HSA/\delta)$, $n_h(s,a)=\sum_{s_h,a_h\in D_h}\mathds{1}\{s_h=s,a_h=a\}$ is the count of visitations of state-action pair $(s,a)$ at step $h$, and $n_h(s,a,s')=\sum_{s_h,a_h,s_{h+1}\in D_h}\mathds{1}\{s_h=s,a_h=a,s_{h+1}=s'\}$ is the count of visiting state-action pair $(s,a)$ at step $h$ while having state $s'$ as the next state. Both counts are only for samples in dataset $D_h$. 

Following the approach in \citet{xie2021policy}, we first define the good events as follows.

\begin{lemma}[Lemma B.1 in \citet{xie2021policy}]
    \label{lem:offlinegood}
    With probability at least $1-\delta$, there exists a finite constant $c$ such that the following good events hold: 
    \begin{align*}
        \forall h\in[H], (s,a)\in \Sc\times\Ac,~~ |(P_h-\hP_h)\hat{V}_{h+1}(s,a)| &\le c \sqrt{\frac{H^2\iota}{n_h(s,a)}}=b_h(s,a),\quad   \frac{1}{n_h(s,a)} \le c \frac{H\iota}{n d^{\mu}(s,a)},
    \end{align*}
    where $\iota=\log(HSA/\delta)$ and $d^{\mu}(s,a)$ is the occupancy measure under the behavior policy $\mu$.
\end{lemma}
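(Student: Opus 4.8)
The plan is to establish the two good events separately—one via a Hoeffding-type concentration that crucially exploits the data-splitting structure of \Cref{alg:offline}, and one via a multiplicative Chernoff bound on the visitation counts—and then combine them by a union bound. Throughout, the essential structural feature is that the dataset $D$ is partitioned into \emph{disjoint} subsets $\{D_h\}_{h=1}^H$ with $|D_h|=n/H$, that $\hP_h$ is built solely from $D_h$, and that the value function $\hat{V}_{h+1}$ is produced by the backward recursion from $D_{h+1},\ldots,D_H$. Consequently $\hat{V}_{h+1}$ is \emph{independent} of $D_h$, which is exactly what makes a concentration bound against the random function $\hat{V}_{h+1}$ legitimate.

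For the value-concentration event, I would fix a step $h$ and condition on the later datasets $\cup_{h'>h} D_{h'}$. Under this conditioning the function $\hat{V}_{h+1}$ is deterministic and bounded in $[0,H]$, while the samples in $D_h$ remain i.i.d.\ and independent of the conditioning. For any fixed $(s,a)$, the difference $(\hP_h-P_h)\hat{V}_{h+1}(s,a)$ is the empirical mean over the $n_h(s,a)$ samples landing at $(s,a)$ of the zero-mean increments $\hat{V}_{h+1}(s_{h+1})-(P_h\hat{V}_{h+1})(s,a)$, each bounded by $H$ in absolute value. Applying Hoeffding's inequality conditionally on the realized set of samples at $(s,a)$ (so that $n_h(s,a)$ is treated as fixed) yields $|(\hP_h-P_h)\hat{V}_{h+1}(s,a)|\le H\sqrt{2\log(2/\delta')/n_h(s,a)}$ with probability at least $1-\delta'$. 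Choosing $\delta'=\delta/(2HSA)$ and taking a union bound over all $(s,a,h)\in\Sc\times\Ac\times[H]$ produces the first inequality with $\iota=\log(HSA/\delta)$ and a suitable absolute constant $c$; the randomness of the denominator $n_h(s,a)$ is absorbed either by conditioning on the sample set at each $(s,a)$ or, equivalently, by a standard union bound over the possible count values, whose logarithmic contribution is swallowed into $c$.

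For the count event, I observe that $n_h(s,a)=\sum_{i\in D_h}\mathds{1}\{s_h^{(i)}=s,a_h^{(i)}=a\}$ is a sum of $n/H$ i.i.d.\ Bernoulli variables with success probability $d^{\mu}_h(s,a)$, so $\mathbb{E}[n_h(s,a)]=(n/H)\,d^{\mu}_h(s,a)$. I would apply the multiplicative Chernoff lower tail $\Pr[n_h(s,a)\le \tfrac12\mathbb{E}\,n_h(s,a)]\le\exp(-\mathbb{E}\,n_h(s,a)/8)$ and split into two regimes. When $\mathbb{E}\,n_h(s,a)\ge 8\iota$, the Chernoff bound gives $n_h(s,a)\ge\tfrac12(n/H)d^{\mu}_h(s,a)$ with probability at least $1-\exp(-\iota)=1-\delta/(HSA)$, hence $1/n_h(s,a)\le 2H/(n\,d^{\mu}_h(s,a))$; when $\mathbb{E}\,n_h(s,a)<8\iota$, we have $n\,d^{\mu}_h(s,a)<8H\iota$, so the target bound already satisfies $cH\iota/(n\,d^{\mu}_h(s,a))>c/8\ge 1\ge 1/(n_h(s,a)\vee 1)$ for $c\ge 8$, and the inequality holds deterministically under the $n_h\vee 1$ convention used in \Cref{alg:offline}. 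A union bound over $(s,a,h)$ then establishes the second inequality with probability at least $1-\delta/2$, and combining the two events with a final union bound gives the claimed $1-\delta$ guarantee.

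The main obstacle—and the only genuinely delicate point—is the independence argument underlying the first event. Because $\hat{V}_{h+1}$ is itself the output of the random, data-dependent pessimistic value iteration, a direct Hoeffding bound would be invalid due to the correlation between $\hP_h$ and $\hat{V}_{h+1}$. The data-splitting in \Cref{alg:offline} is precisely what removes this correlation: conditioning on $D_{h+1},\ldots,D_H$ freezes $\hat{V}_{h+1}$ into a fixed test function against which the fresh samples in $D_h$ concentrate. Getting this conditioning sequence correct, and verifying that the resulting bound holds uniformly in $(s,a,h)$ after the union bound despite the random denominators, is the crux; once the independence is in place, the two Chernoff/Hoeffding computations are routine.
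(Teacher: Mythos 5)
The paper does not actually prove this lemma: it imports it verbatim as Lemma~B.1 of \citet{xie2021policy} and states explicitly that the proof is omitted. Your reconstruction is essentially the standard argument behind that cited result, and it is correct in its main line: you correctly identify that the data splitting in \Cref{alg:offline} is the load-bearing ingredient (conditioning on $D_{h+1},\dots,D_H$ freezes $\hat{V}_{h+1}\in[0,H]$ into a fixed test function independent of $D_h$), apply Hoeffding conditionally on the visitation count at each $(s,a)$, bound the counts via a multiplicative Chernoff lower tail with the two-regime split on $\mathbb{E}[n_h(s,a)]=(n/H)\,d_h^{\mu}(s,a)$ (the small-mean regime being handled deterministically under the $n_h\vee 1$ convention for $c\ge 8$), and finish with union bounds.

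Two minor inaccuracies are worth flagging, neither fatal. First, your parenthetical claim that conditioning on the sample set is ``equivalent'' to a union bound over the possible count values, with the logarithmic contribution ``swallowed into $c$,'' is wrong as stated: a union over $m\in\{1,\dots,n/H\}$ adds a $\log(n/H)$ term, and since $\iota=\log(HSA/\delta)$ contains no $\log n$, that term cannot be absorbed into a finite constant $c$. The conditioning route you lead with is the valid one (conditional on the indicator vector of visits to $(s,a)$, the next states are i.i.d.\ draws from $P_h(\cdot|s,a)$, so Hoeffding applies for every realized count), and you should rely on it exclusively. Second, your failure-probability bookkeeping does not quite close: allocating $\delta/(HSA)$ per $(s,a,h)$ pair in the Chernoff step yields total failure probability $\delta$ for the count event alone, which combined with $\delta/2$ for the value event gives $3\delta/2$, not $\delta$. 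This is repaired by a constant rescaling (e.g., raising the regime threshold to $\mathbb{E}[n_h(s,a)]\ge 16\iota$ so the Chernoff exponent delivers $e^{-2\iota}\le\delta/(2HSA)$), which only changes the absolute constant $c$ the lemma already leaves free.
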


Under the good events, $\hat{Q}_h(s,a)$ can be proved to be the lower confidence bound of  $Q_h^{\hat{\pi}}(s,a)$ 
as shown in the following lemma.

\begin{lemma}[Lemma B.2 in \citet{xie2021policy}]
    Let $\hat{Q}_h(s,a)=\max(0,r_h(s,a)+\hP_h\hat{V}_{h+1}(s,a)-b_h(s,a))$. Then, {under the good events defined in \Cref{lem:offlinegood}, we have}
    \[\hat{Q}_h(s,a)\le Q_h^{\hat{\pi}}(s,a).\]
\end{lemma}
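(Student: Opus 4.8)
The plan is to prove the slightly stronger coupled statement that, under the good events of \Cref{lem:offlinegood}, both $\hat{Q}_h(s,a)\le Q_h^{\hat\pi}(s,a)$ and $\hat{V}_h(s)\le V_h^{\hat\pi}(s)$ hold simultaneously for all $h\in[H+1]$ and $(s,a)\in\Sc\times\Ac$, and to establish this by backward induction on $h$. The base case $h=H+1$ is immediate, since $\hat{V}_{H+1}\equiv 0\equiv V_{H+1}^{\hat\pi}$ by convention. The inductive step assumes $\hat{V}_{h+1}(s)\le V_{h+1}^{\hat\pi}(s)$ for all $s$ and derives the two inequalities at step $h$.

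For the $\hat{Q}$ inequality, I would first dispose of the truncation: if the outer $\max$ is active so that $\hat{Q}_h(s,a)=0$, the claim is trivial because rewards are nonnegative, whence $Q_h^{\hat\pi}(s,a)\ge 0$. Otherwise $\hat{Q}_h(s,a)=r_h(s,a)+\hP_h\hat{V}_{h+1}(s,a)-b_h(s,a)$, and using $Q_h^{\hat\pi}(s,a)=r_h(s,a)+P_h V_{h+1}^{\hat\pi}(s,a)$ I would write the gap as
\[
Q_h^{\hat\pi}(s,a)-\hat{Q}_h(s,a) = (P_h-\hP_h)\hat{V}_{h+1}(s,a) + P_h\bigl(V_{h+1}^{\hat\pi}-\hat{V}_{h+1}\bigr)(s,a) + b_h(s,a).
\]
The crucial point is that this particular decomposition is pinned to $\hat{V}_{h+1}$ (not $V_{h+1}^{\hat\pi}$), so the first term is controlled directly by the concentration guarantee of \Cref{lem:offlinegood}, namely $(P_h-\hP_h)\hat{V}_{h+1}(s,a)\ge -b_h(s,a)$; the second term is nonnegative because $P_h$ is a probability kernel and the induction hypothesis gives $V_{h+1}^{\hat\pi}-\hat{V}_{h+1}\ge 0$; and $b_h(s,a)\ge 0$. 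Summing, the $-b_h$ contributed by the first term cancels the added bonus $+b_h$, yielding $Q_h^{\hat\pi}(s,a)-\hat{Q}_h(s,a)\ge 0$.

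The $\hat{V}$ inequality then follows immediately from the greedy construction of $\hat\pi$: since $\hat\pi_h(s)=\argmax_a\hat{Q}_h(s,a)$, we have $\hat{V}_h(s)=\hat{Q}_h(s,\hat\pi_h(s))\le Q_h^{\hat\pi}(s,\hat\pi_h(s))=V_h^{\hat\pi}(s)$, where the inequality uses the $\hat{Q}$ bound just proved and the final equality is the definition of $V_h^{\hat\pi}$ for the deterministic policy $\hat\pi$. This closes the induction and yields the stated lemma.

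The step I expect to require the most care is the choice of decomposition in the $\hat{Q}$ argument. The concentration event in \Cref{lem:offlinegood} bounds $(P_h-\hP_h)\hat{V}_{h+1}$, where $\hat{V}_{h+1}$ is the data-dependent estimated value, rather than $(P_h-\hP_h)V_{h+1}^{\hat\pi}$; a superficially natural decomposition centered on $V_{h+1}^{\hat\pi}$ would not be controllable by the good event. Matching the decomposition to the quantity that the good event actually concentrates — and correspondingly carrying the induction hypothesis on $\hat{V}_{h+1}\le V_{h+1}^{\hat\pi}$ rather than on $\hat{Q}$ alone — is precisely what makes the three terms collapse cleanly, after which everything reduces to a routine sign check.
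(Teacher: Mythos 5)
Your proof is correct and follows essentially the same route as the paper's: the paper omits this proof entirely, deferring to Lemma B.2 of \citet{xie2021policy}, whose argument is precisely your backward induction coupling $\hat{Q}_h\le Q_h^{\hat\pi}$ with $\hat{V}_h\le V_h^{\hat\pi}$, the case split on the truncation, and the decomposition $(P_h-\hP_h)\hat{V}_{h+1}+P_h(V_{h+1}^{\hat\pi}-\hat{V}_{h+1})+b_h$ centered on the data-dependent $\hat{V}_{h+1}$ so that the good event cancels the bonus. Your closing remark correctly identifies the one subtle point---the concentration event controls $(P_h-\hP_h)\hat{V}_{h+1}$, not $(P_h-\hP_h)V_{h+1}^{\hat\pi}$---which is exactly why this decomposition is the right one.
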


The above two lemmas are the same as \citet{xie2021policy}, and thus we omit the proofs. 

We first bound the value difference $V_1^{\mu}-V_1^{\hat{\pi}}$ with the offline estimation bonus $b_h^k(s,a)$ in the following lemma.

\begin{lemma}
    \label{lem:offline_bound_by_bonus}
    Suppose there are $n$ trajectories collected under the behavior policy $\mu$. Then, {under the good events defined in \Cref{lem:offlinegood}}, the extracted policy $\hat{\pi}$ satisfies
    \[V_1^{\mu}-V_1^{\hat{\pi}}\le 2\sum_{h=1}^H \sum_{(s,a)\in\mathcal{S}\times\mathcal{A}} d^\mu_h(s,a) b_h(s,a),\]
    where $b_h(s,a)=c\sqrt{\frac{H^2\iota}{n_h(s,a)\vee 1}}$. 
\end{lemma}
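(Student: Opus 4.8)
The plan is to reduce the claim to a pessimism-based telescoping argument. First I would invoke the LCB property (Lemma B.2), which gives $\hat{V}_1(s_1) = \hat{Q}_1(s_1,\hat{\pi}_1(s_1)) \le V_1^{\hat{\pi}}(s_1)$. Hence
\[
V_1^{\mu} - V_1^{\hat{\pi}} \le V_1^{\mu} - \hat{V}_1,
\]
and it suffices to bound the right-hand side, i.e.\ the gap between the behavior-policy value and the \emph{estimated} pessimistic value. The advantage of this reduction is that $\hat{V}_h$ satisfies an explicit recursion coming from the algorithm, whereas $V_1^{\hat\pi}$ does not.

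Next I would establish a per-step recursion for $V_h^{\mu}(s) - \hat{V}_h(s)$. Two ingredients drive this. (i) Greedy optimality of $\hat{\pi}_h$ gives $\hat{V}_h(s) = \max_a \hat{Q}_h(s,a) \ge \sum_a \mu_h(a|s)\hat{Q}_h(s,a)$, so replacing $\hat V_h$ by the $\mu$-averaged $\hat Q_h$ only helps. (ii) The pessimistic clipping satisfies $\hat{Q}_h(s,a) = \max\!\big(0,\, r_h(s,a) + \hP_h\hat{V}_{h+1}(s,a) - b_h(s,a)\big) \ge r_h(s,a) + \hP_h\hat{V}_{h+1}(s,a) - b_h(s,a)$. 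Combining these with $V_h^{\mu}(s) = \sum_a \mu_h(a|s)\big(r_h(s,a) + P_h V_{h+1}^{\mu}(s,a)\big)$ yields, after the reward terms cancel,
\[
V_h^{\mu}(s) - \hat{V}_h(s) \le \sum_a \mu_h(a|s)\Big[ P_h V_{h+1}^{\mu}(s,a) - \hP_h\hat{V}_{h+1}(s,a) + b_h(s,a)\Big].
\]
I would then split the transition discrepancy as
\[
P_h V_{h+1}^{\mu} - \hP_h\hat{V}_{h+1} = P_h\big(V_{h+1}^{\mu} - \hat{V}_{h+1}\big) + (P_h - \hP_h)\hat{V}_{h+1},
\]
and bound the second term by $b_h(s,a)$ using the concentration good event in Lemma~\ref{lem:offlinegood}. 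This produces the clean recursion $V_h^{\mu}(s) - \hat{V}_h(s) \le \sum_a \mu_h(a|s)\big[P_h(V_{h+1}^{\mu}-\hat{V}_{h+1})(s,a) + 2b_h(s,a)\big]$, with the factor $2$ accounting for one $b_h$ from the LCB subtraction and one from the estimation error.

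Finally I would unfold this recursion from $h=1$ to $H$ with base case $V_{H+1}^{\mu} = \hat{V}_{H+1} = 0$. The propagating $P_h(V_{h+1}^{\mu}-\hat V_{h+1})$ terms compose into the occupancy measure $d_h^{\mu}$, giving
\[
V_1^{\mu}(s_1) - \hat{V}_1(s_1) \le 2\sum_{h=1}^H \sum_{(s,a)} d_h^{\mu}(s,a)\, b_h(s,a),
\]
which combined with the first reduction proves the lemma. The only subtlety—and the step I would be most careful about—is the clipping to $\max(0,\cdot)$ in the definition of $\hat{Q}_h$: one must check that dropping the clip preserves the direction of the inequality, which it does precisely because $\hat Q_h$ is a $\max$ with $0$ and therefore dominates the unclipped quantity $r_h + \hP_h\hat V_{h+1} - b_h$, so subtracting the smaller unclipped value only enlarges the upper bound. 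Everything else is routine telescoping and an application of the two cited offline lemmas.
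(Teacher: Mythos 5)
Your proof is correct, but it is not the route the paper's own proof takes, and the difference is worth spelling out. The paper runs the backward recursion on the \emph{true} value gap $V_h^{\mu}-V_h^{\hat\pi}$: it writes $V_h^{\hat\pi}(s)=\max_a Q_h^{\hat\pi}(s,a)$, lower-bounds $Q_h^{\hat\pi}$ by $\hat Q_h \ge r_h+\hP_h \hat V_{h+1}-b_h$ via the LCB lemma, splits $P_hV^{\mu}_{h+1}-\hP_h\hat V_{h+1}$ exactly as you do, and then, to close the recursion, replaces $P_h\bigl(V^{\mu}_{h+1}-\hat V_{h+1}\bigr)$ by $\mathbb{E}_{s'\sim P_h}\bigl[V_{h+1}^{\mu}(s')-V_{h+1}^{\hat\pi}(s')\bigr]$. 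You instead invoke pessimism exactly once, at the top level ($\hat V_1 \le V_1^{\hat\pi}$, hence $V_1^{\mu}-V_1^{\hat\pi}\le V_1^{\mu}-\hat V_1$), and run the recursion entirely on the algorithmic quantity $V_h^{\mu}-\hat V_h$. Your version is in fact the more careful one: the paper's closing step needs $\hat V_{h+1}\ge V_{h+1}^{\hat\pi}$, which is the \emph{opposite} of what the LCB lemma provides ($\hat Q_h\le Q_h^{\hat\pi}$, hence $\hat V_{h+1}\le V_{h+1}^{\hat\pi}$), and its opening identity $V_h^{\hat\pi}(s)=\max_a Q_h^{\hat\pi}(s,a)$ is not literally valid either, since $\hat\pi_h$ is greedy with respect to $\hat Q_h$ rather than $Q_h^{\hat\pi}$. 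Your decomposition never compares $\hat V$ with $V^{\hat\pi}$ below the first step, so both issues disappear, while the final telescoping through the occupancy measure into $2\sum_{h}\sum_{s,a}d_h^{\mu}(s,a)b_h(s,a)$ is identical in the two arguments. The two ingredients you isolate---that $\hat V_h(s)=\max_a\hat Q_h(s,a)$ dominates the $\mu$-average of $\hat Q_h$, and that dropping the $\max(0,\cdot)$ clip only lowers $\hat Q_h$ and therefore preserves the inequality direction---together with the good event of \Cref{lem:offlinegood} controlling $|(P_h-\hP_h)\hat V_{h+1}|$ are exactly what make the $\hat V$-recursion go through, and your handling of each is right.
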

\begin{proof}
    We directly calculate the suboptimality gap as follows
    \begin{align*}
         V_h^{\mu}(s)-V_h^{\hat{\pi}}(s) = & V_h^{\mu}(s)-\max_a Q_h^{\hat{\pi}}(s, a) \\
        \le & \mathbb{E}_{a\sim \mu_h(\cdot|s)}[Q_h^{\mu}(s,a) - Q_h^{\hat{\pi}}(s,a)] \\
        \le & \mathbb{E}_{a\sim \mu_h(\cdot|s)}[b_h(s,a) + P_h V^\mu_{h+1}(s,a) - \hP_h \hat{V}_{h+1}(s,a)] \\
        = & \mathbb{E}_{a\sim \mu_h(\cdot|s)}[b_h(s,a) + P_h (V^\mu_{h+1}-\hat{V}_{h+1})(s,a) + (P_h - \hP_h) \hat{V}_{h+1}(s,a)] \\
        \le & 2 \mathbb{E}_{a\sim \mu_h(\cdot|s)}[b_h(s,a)] + \mathbb{E}_{a\sim \mu_h(\cdot|s), s'\sim P(\cdot|s,a)}[V_{h+1}^{\mu}(s')-V_{h+1}^{\hat{\pi}}(s')].
    \end{align*}
    Recursively unfolding the above inequality from $h=1$, we have: \[V_1^{\mu}-V_1^{\hat{\pi}}\le 2\sum_{h=1}^H \sum_{(s,a)\in\mathcal{S}\times\mathcal{A}} d^\mu_h(s,a) b_h(s,a).\]
\end{proof}

The following theorem establishes an upper bound for the gap between the learned policy $\hat{\pi}$ and the behavior policy $\mu$.

\begin{theorem}[Adapted from Theorem 1 in \citet{xie2021policy}]
    Suppose $n$ trajectories are collected in the offline dataset collected under policy $\mu$. Then, with probability at least $1-\delta$, the output policy $\hat{\pi}$ of the offline  Algorithm  VI-LCB satisfies 
    \[V_1^\mu - V_1^{\hat{\pi}}\le 2c\iota\sqrt{\frac{H^5SA}{n}},\]
    where $\iota=\log(HSA/\delta)$.
\end{theorem}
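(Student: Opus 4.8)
The plan is to combine the two structural results already established for the offline stage: \Cref{lem:offline_bound_by_bonus}, which controls the suboptimality gap $V_1^\mu - V_1^{\hat{\pi}}$ by the occupancy-weighted sum of the offline bonuses, and the count-concentration bound in \Cref{lem:offlinegood}, which relates the empirical visitation count $n_h(s,a)$ to the behavior occupancy measure $d^\mu_h(s,a)$. Everything below is conditioned on the good events of \Cref{lem:offlinegood}, which occur with probability at least $1-\delta$, so the final statement inherits this probability.

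First I would invoke \Cref{lem:offline_bound_by_bonus} to write
\[V_1^\mu - V_1^{\hat{\pi}} \le 2\sum_{h=1}^H \sum_{(s,a)\in\mathcal{S}\times\mathcal{A}} d^\mu_h(s,a)\, b_h(s,a),\]
and then substitute the bonus $b_h(s,a) = c\sqrt{H^2\iota/(n_h(s,a)\vee 1)}$. Plugging the count bound $1/n_h(s,a) \le cH\iota/(n\, d^\mu_h(s,a))$ from \Cref{lem:offlinegood} into the square root yields
\[b_h(s,a) \le c^{3/2}\,\iota\,\sqrt{\frac{H^3}{n\, d^\mu_h(s,a)}},\]
where the $\iota^2$ produced under the root collapses to a single $\iota$ after extracting the square root. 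Consequently each summand satisfies $d^\mu_h(s,a)\, b_h(s,a) \le c^{3/2}\iota\sqrt{H^3/n}\,\sqrt{d^\mu_h(s,a)}$, which neatly isolates the dependence on the occupancy measure.

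The key step is then to bound $\sum_{(s,a)} \sqrt{d^\mu_h(s,a)}$ for each fixed $h$. Since $d^\mu_h(\cdot,\cdot)$ is a probability distribution over $\mathcal{S}\times\mathcal{A}$, we have $\sum_{(s,a)} d^\mu_h(s,a) = 1$, so Cauchy--Schwarz gives $\sum_{(s,a)} \sqrt{d^\mu_h(s,a)} \le \sqrt{SA}\,\sqrt{\sum_{(s,a)} d^\mu_h(s,a)} = \sqrt{SA}$. Summing over the $H$ steps contributes a factor $H\sqrt{SA}$, and collecting the powers of $H$ produces
\[V_1^\mu - V_1^{\hat{\pi}} \le 2 c^{3/2}\,\iota\,\sqrt{\frac{H^3}{n}}\cdot H\sqrt{SA} = 2c^{3/2}\,\iota\,\sqrt{\frac{H^5 SA}{n}}.\]
Relabeling the absolute constant $c^{3/2}$ as $c$ recovers exactly the stated bound.

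The argument is essentially mechanical once the two input lemmas are available; the only place demanding genuine care is the Cauchy--Schwarz step, where it is essential to use the normalization $\sum_{(s,a)} d^\mu_h(s,a)=1$ so that the dimension factor emerges as $\sqrt{SA}$ rather than $SA$ (a naive bound would incur an extra factor and break the target rate). The remaining subtlety is the bookkeeping of the constant and logarithmic factors, namely tracking how the two appearances of $c$ and $\iota$ from the bonus and the count bound combine under the square root; I expect this to be routine but worth stating explicitly to confirm that the final exponent on $\iota$ is one and the rate in $n$ is $n^{-1/2}$.
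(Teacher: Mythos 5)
Your proposal is correct and follows essentially the same route as the paper's proof: bound the gap via \Cref{lem:offline_bound_by_bonus}, substitute the count concentration from \Cref{lem:offlinegood} into the bonus, and finish with Cauchy--Schwarz on $\sum_{s,a}\sqrt{d^\mu_h(s,a)}$ (the paper applies Cauchy--Schwarz jointly over $(h,s,a)$ rather than per step, but both yield the factor $H\sqrt{SA}$). Your explicit tracking of the constant as $c^{3/2}$ before relabeling is in fact slightly more careful than the paper, which silently absorbs the extra $\sqrt{c}$.
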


\begin{proof}
 Under the good events defined in \Cref{lem:offlinegood}, we have
    \begin{align*}
        V_1^\mu - V_1^{\hat{\pi}} & \overset{(a)}{\le} 2\sum_{h=1}^H\sum_{(s,a)\in\mathcal{S}\times\mathcal{A}} d^\mu_h(s,a) b_h(s,a)\\
        & \le 2 c \sum_{h=1}^H\sum_{(s,a)\in\mathcal{S}\times\mathcal{A}} d^\mu_h(s,a) \sqrt{\frac{H^2\iota}{n_h(s,a)\vee 1}} \\
        & \overset{(b)}{\le} 2 c  \sqrt{H^2\iota} \sum_{h=1}^H\sum_{(s,a)\in\mathcal{S}\times\mathcal{A}} d^\mu_h(s,a) \sqrt{\frac{H\iota}{n d^\mu_h(s,a)}} \\
        & \le 2 c \iota\sqrt{\frac{H^3}{n}} \sum_{h=1}^H\sum_{(s,a)\in\mathcal{S}\times\mathcal{A}} \sqrt{d_h^\mu(s,a)} \\
        & \overset{(c)}{\le} 2c\iota\sqrt{\frac{H^3}{n}} \sqrt{ \sum_{h=1}^H\sum_{(s,a)\in\mathcal{S}\times\mathcal{A}} 1} \sqrt{\sum_{h=1}^H\sum_{(s,a)\in\mathcal{S}\times\mathcal{A}} d_h^\mu(s,a)} \\
        & =  2c\iota\sqrt{\frac{H^5SA}{n}},
    \end{align*}
   where $(a)$ is from \Cref{lem:offline_bound_by_bonus}, $(b)$ follows from the definition of good events in \Cref{lem:offlinegood}, and $(c)$ is based on the Cauchy's inequality.
\end{proof}

Based on the above theorem, we have the following corollary regarding the sample complexity.
\begin{corollary}
    \label{coro:off}
    With probability at least $1-\delta/2$, if $n\ge\frac{16c^2\iota'^2H^5SA}{(V_1^\mu-\gamma)^2}$ and $V_1^\mu > \gamma$, the output $\hat{\pi}$ of offline VI-LCB satisfies $V_1^{\hat{\pi}}\ge (V_1^\mu+\gamma)/2$, where $\iota'=\log(2HSA/\delta)$. 
\end{corollary}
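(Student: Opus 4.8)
The plan is to obtain this corollary as an immediate consequence of the preceding theorem, which guarantees that with probability at least $1-\delta$ the offline VI-LCB output satisfies $V_1^\mu - V_1^{\hat{\pi}} \le 2c\iota\sqrt{H^5SA/n}$ with $\iota = \log(HSA/\delta)$. The only wrinkle is the stated confidence level: the corollary claims its conclusion with probability $1-\delta/2$, so I would invoke the theorem with its failure probability set to $\delta/2$ in place of $\delta$. This substitution replaces $\iota$ by $\iota' = \log(2HSA/\delta)$ and yields, with probability at least $1-\delta/2$,
\[
V_1^\mu - V_1^{\hat{\pi}} \le 2c\iota'\sqrt{\frac{H^5SA}{n}}.
\]

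Next I would rewrite the target as a gap bound. The desired inequality $V_1^{\hat{\pi}} \ge (V_1^\mu+\gamma)/2$ is equivalent to $V_1^\mu - V_1^{\hat{\pi}} \le (V_1^\mu-\gamma)/2$, so it suffices to show that the right-hand side of the displayed bound is at most $(V_1^\mu-\gamma)/2$. Substituting the sample-size hypothesis $n \ge 16c^2\iota'^2 H^5SA/(V_1^\mu-\gamma)^2$ and using $V_1^\mu>\gamma$ to keep the quantity positive, the square root simplifies to $\sqrt{H^5SA/n}\le (V_1^\mu-\gamma)/(4c\iota')$, whence $2c\iota'\sqrt{H^5SA/n}\le (V_1^\mu-\gamma)/2$. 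Chaining this with the theorem's bound gives $V_1^\mu - V_1^{\hat{\pi}} \le (V_1^\mu-\gamma)/2$, and rearranging delivers $V_1^{\hat{\pi}} \ge (V_1^\mu+\gamma)/2$.

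There is essentially no analytical obstacle here; the entire argument is algebraic once the theorem is available. The one point that warrants care is the probability bookkeeping: the corollary deliberately reserves only $\delta/2$ for the offline step (leaving the remaining $\delta/2$ for the online StepMix/EpsMix guarantees when the two are combined in \Cref{thm:offstep}), which is precisely why the theorem must be applied at confidence $\delta/2$ and why $\iota'$ rather than $\iota$ appears both in the sample-complexity threshold and in the final statement. I would also remark that the hypothesis $V_1^\mu>\gamma$ plays a dual role: it makes the required sample size finite and ensures the guaranteed value $(V_1^\mu+\gamma)/2$ strictly exceeds the safety threshold $\gamma$, so that $\hat{\pi}$ can legitimately serve as an approximately safe baseline for the online algorithms.
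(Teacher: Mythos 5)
Your proposal is correct and follows essentially the same route as the paper's proof: invoke the preceding theorem's bound $V_1^\mu - V_1^{\hat{\pi}} \le 2c\iota\sqrt{H^5SA/n}$, observe that the target $V_1^{\hat{\pi}} \ge (V_1^\mu+\gamma)/2$ is equivalent to the gap bound $V_1^\mu - V_1^{\hat{\pi}} \le (V_1^\mu-\gamma)/2$, and rearrange the sample-size condition to close the argument. In fact your probability bookkeeping is slightly more explicit than the paper's, which writes $\iota$ in its displayed inequality but $\iota'$ in the rearranged sample-size bound; your observation that the theorem must be applied at confidence $\delta/2$, replacing $\iota=\log(HSA/\delta)$ by $\iota'=\log(2HSA/\delta)$, is exactly the step the paper leaves implicit.
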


\begin{proof}
    To ensure $V_1^{\hat{\pi}}>(V_1^{\mu}+\gamma)/2$, we need to establish $V_1^\mu - V_1^{\hat{\pi}}<(V_1^\mu-\gamma)/2$. We note that if 
    \begin{equation}
        \label{eqn:corooff1}
        2c\iota\sqrt{\frac{H^5SA}{n}}<(V_1^\mu-\gamma)/2, 
    \end{equation}
    then $V_1^\mu - V_1^{\hat{\pi}}<(V_1^\mu-\gamma)/2$. Rearranging the terms in \Cref{eqn:corooff1} leads to 
    \begin{equation*}
        n> \frac{16c^2\iota'^2H^5SA}{(V_1^\mu-\gamma)^2},
    \end{equation*}
    which completes the proof.
\end{proof}

Combining \Cref{coro:off} and \Cref{thm:step}, we can prove the following theorem.
\begin{theorem}
    \label{thm:offstep_appx}
    Assume that there are at least $n\ge\frac{16c^2\iota'^2H^5SA}{(V_1^\mu-\gamma)^2}$ offline trajectories collected under a safe behavior policy $\mu$. If we let \Cref{alg:offline} run on the offline dataset and pass the output $\hat{\pi}$ to \Cref{alg:step} as the baseline $\pi^b$, then, with probability at least $1-\delta$, StepMix does not violates the constraint in \Cref{eqn:constraint} and achieves a regret that scales in 
    \[\tilde{O}\left(\sqrt{H^3SAK}+H^3S^2A+H^3SA\bar{\Delta}_0 \left(\frac{1}{\bar{\kappa}^2}+\frac{S}{\bar{\kappa}} \right) \right),\]
    where $\bar{\kappa}=(V_1^\mu-\gamma)/2 > 0$ and $\bar{\Delta}_0=V^\star_1-V^\mu_1+\bar{\kappa}$.
\end{theorem}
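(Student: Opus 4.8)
The plan is to reduce \Cref{thm:offstep_appx} to the already-established StepMix guarantee (\Cref{thm:step}) by treating the offline-learned policy $\hat{\pi}$ as a bona fide safe baseline, once we have certified that it is safe with high probability. The two ingredients are (a) the offline sample-complexity bound for VI-LCB, which is exactly \Cref{coro:off}, and (b) the online regret/safety guarantee of StepMix in \Cref{thm:step}, applied with the new baseline $\hat{\pi}$ together with a suitably adjusted tolerance and suboptimality gap. The key observation enabling the reduction is that StepMix never accesses $V_1^{\pi^b}$ directly; it only needs its input baseline to be genuinely safe so that the fallback choice $\pi^k=\pi^b$ and the LCB-based safety argument at the end of the proof of \Cref{thm:step} remain valid.

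First I would instantiate \Cref{coro:off} with $n\ge \frac{16c^2\iota'^2 H^5SA}{(V_1^\mu-\gamma)^2}=\tilde{\Theta}(H^5SA/\bar{\kappa}^2)$, where $\bar{\kappa}=(V_1^\mu-\gamma)/2$. This yields an event $\mathcal{G}_{\mathrm{off}}$, of probability at least $1-\delta/2$, on which $V_1^{\hat{\pi}}\ge (V_1^\mu+\gamma)/2 = V_1^\mu-\bar{\kappa}\ge \gamma+\bar{\kappa}$. On $\mathcal{G}_{\mathrm{off}}$ the learned policy is therefore safe, and I would set $\kappa'\triangleq V_1^{\hat{\pi}}-\gamma$ and $\Delta_0'\triangleq V_1^\star-V_1^{\hat{\pi}}$ as the effective tolerance and suboptimality gap seen by StepMix. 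The inequality $V_1^{\hat{\pi}}\ge V_1^\mu-\bar{\kappa}$ immediately gives the two monotone comparisons $\kappa'\ge \bar{\kappa}$ and $\Delta_0'=V_1^\star-V_1^{\hat{\pi}}\le V_1^\star-V_1^\mu+\bar{\kappa}=\bar{\Delta}_0$.

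Next I would condition on $\mathcal{G}_{\mathrm{off}}$ and apply \Cref{thm:step} with baseline $\hat{\pi}$ and confidence level $\delta/2$. Because the offline dataset and the online trajectories are generated independently (the online phase begins only after $\hat{\pi}$ is fixed), conditioning on $\mathcal{G}_{\mathrm{off}}$ does not disturb the online concentration arguments, so StepMix simultaneously satisfies \Cref{eqn:constraint} and attains regret $\tilde{O}(\sqrt{H^3SAK}+H^3S^2A+H^3SA\Delta_0'(\frac{1}{\kappa'^2}+\frac{S}{\kappa'}))$ with conditional probability at least $1-\delta/2$. Substituting the monotone comparisons $\kappa'\ge\bar{\kappa}$ and $\Delta_0'\le\bar{\Delta}_0$ (hence $1/\kappa'^2\le 1/\bar{\kappa}^2$ and $S/\kappa'\le S/\bar{\kappa}$) upgrades this to the stated bound expressed in $\bar{\kappa}$ and $\bar{\Delta}_0$. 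A final union bound over the failure of $\mathcal{G}_{\mathrm{off}}$ ($\le\delta/2$) and the StepMix failure event ($\le\delta/2$) closes the overall $1-\delta$ guarantee.

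The main obstacle I anticipate is not the algebra but the careful handling of the safety claim when the baseline is only \emph{probabilistically} safe. \Cref{thm:step} was proved assuming a deterministically safe $\pi^b$; here $\hat{\pi}$ is safe only on $\mathcal{G}_{\mathrm{off}}$, so I must argue that the online analysis can be carried out conditionally on $\mathcal{G}_{\mathrm{off}}$ without circularity. The clean way to do this is to observe that $V_1^{\hat{\pi}}$ is deterministic given the offline dataset, so the event ``$\hat{\pi}$ safe'' is measurable with respect to the offline data alone; on $\mathcal{G}_{\mathrm{off}}$ the hypotheses of \Cref{thm:step} then hold verbatim with the data-dependent but fixed constants $\kappa',\Delta_0'$, and the only remaining care is to keep the two $\delta/2$ budgets separate so that the union bound is legitimate.
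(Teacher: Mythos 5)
Your proposal is correct and follows essentially the same route as the paper's proof: invoke \Cref{coro:off} to certify $V_1^{\hat{\pi}}\ge (V_1^\mu+\gamma)/2$ with probability $1-\delta/2$, then apply \Cref{thm:step} with baseline $\hat{\pi}$ at confidence $\delta/2$, using the monotone substitutions $V_1^{\hat{\pi}}-\gamma\ge\bar{\kappa}$ and $V_1^\star-V_1^{\hat{\pi}}\le\bar{\Delta}_0$ to express the bound in $\bar{\kappa},\bar{\Delta}_0$. The only cosmetic difference is that you combine the two failure events by a union bound while the paper multiplies $\Pb[B\,|\,A]\,\Pb[A]\ge(1-\delta/2)^2\ge 1-\delta$; your explicit treatment of the measurability of the offline safety event is a welcome clarification of a step the paper leaves implicit.
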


\begin{proof}
    \Cref{coro:off} states that with $n\ge\frac{16c^2\iota'^2H^5SA}{(V_1^\mu-\gamma)^2}$ offline trajectories, $\Pb[V_1^{\hat{\pi}}>(V_1^{\mu}+\gamma)/2]\ge 1-\delta/2$. Denote $A$ the event that $\hat{\pi}$ satisfies $V^{\hat{\pi}}>(V_1^{\mu}+\gamma)/2$. Then, we have $\mathbb{P}[A]\ge 1-\delta/2$. 
    
    \Cref{thm:step} states that if $V^{\hat{\pi}}>(V_1^{\mu}+\gamma)/2>\gamma$, with probability $1-\delta/2$, StepMix does not violate the constraint and achieves a regret at most
    \begin{align*}
        & 48 e^{13} \sqrt{H^3SAK\beta^\star(K,\delta')\log{(K+1)}} + 1344 e^{13} H^3 SA\beta(K,\delta')\log{(K+1)} \\
        & \quad\quad + \left(\frac{4608 e^{26}}{\bar{\kappa}^2}+\frac{2688 e^{13}S}{\bar{\kappa}} \right) H^3SA\log^2(K+1)\Delta_0 \\
        &\quad = \tilde{O} \left( \sqrt{H^3SAK}+H^3S^2A+H^3SA\bar{\Delta}_0 \left(\frac{1}{\bar{\kappa}^2}+\frac{S}{\bar{\kappa}} \right) \right) ,
    \end{align*}
    where $\delta'=\frac{\delta}{6(H+1)}$, $V_1^{\hat{\pi}}-\gamma\ge\bar{\kappa}= (V_1^\mu - \gamma)/2$, and $\bar{\Delta}_0=V^\star_1-V^\mu_1+\bar{\kappa}$. 
    
    Since we use $\hat{\pi}$ as baseline, by letting $B$ denote the event that StepMix achieves the regret in \cref{thm:offstep_appx} and does not violate the constraint, we have $\Pb[B|A]\ge 1-\delta/2$.
    Because $\Pb[B]=\Pb[B|A] \Pb[A]=(1-\delta/2)(1-\delta/2)\ge 1-\delta$, we have that, with overall probability at least $1-\delta$, when StepMix uses the output of offline UCB-VI as the baseline policy, it achieves a regret that is at most 
    \[\tilde{O}\left(\sqrt{H^3SAK}+H^3S^2A+H^3SA\bar{\Delta}_0 \left(\frac{1}{\bar{\kappa}^2}+\frac{S}{\bar{\kappa}} \right) \right)\]
    without violating the constraint.
\end{proof}

We can also combine \Cref{coro:off} and \Cref{thm:episodic} to prove the following theorem for EpsMix, which is similar to \Cref{thm:offstep_appx} for StepMix.
\begin{theorem}
    \label{thm:offeps}
    Assume that there are at least $n\ge\frac{16c^2\iota'^2H^5SA}{(V_1^\mu-\gamma)^2}$ trajectories collected under a safe behavior policy $\mu$. If we let \Cref{alg:offline} run on the offline dataset and pass the output $\hat{\pi}$ to \Cref{alg:eps} as the baseline $\pi^b$, then, with probability at least $1-\delta$, EpsMix does not violate the constraint in \Cref{eqn:constraint} and achieves a regret that scales in
    \[\tilde{O}\left(\sqrt{H^3SAK}+H^3S^2A+H^3SA \bar{\Delta}_0 \left(\frac{1}{\bar{\kappa}^2}+\frac{S}{\bar{\kappa}} \right) \right),\]
    where $\bar{\kappa}=(V_1^\mu-\gamma)/2 > 0$ and $\bar{\Delta}_0=V^\star_1-V^\mu_1+\bar{\kappa}$.
\end{theorem}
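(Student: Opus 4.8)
The plan is to mirror the argument used for StepMix in \Cref{thm:offstep_appx}, but to invoke the EpsMix regret guarantee (\Cref{thm:episodic}) in place of the StepMix one, chaining the offline learning guarantee with the online guarantee through a conditional-probability argument. The starting point is \Cref{coro:off}: since $n \ge \frac{16c^2\iota'^2 H^5 SA}{(V_1^\mu-\gamma)^2}$ and $V_1^\mu > \gamma$, with probability at least $1-\delta/2$ the policy $\hat{\pi}$ returned by \Cref{alg:offline} satisfies $V_1^{\hat{\pi}} \ge (V_1^\mu+\gamma)/2$. Denote this offline event by $A$. On $A$ the learned baseline is strictly safe, $V_1^{\hat{\pi}} \ge (V_1^\mu+\gamma)/2 > \gamma$, and moreover it carries a guaranteed tolerance margin $V_1^{\hat{\pi}} - \gamma \ge (V_1^\mu-\gamma)/2 = \bar{\kappa}$ together with a controlled suboptimality gap $V_1^\star - V_1^{\hat{\pi}} \le V_1^\star - (V_1^\mu+\gamma)/2 = \bar{\Delta}_0$, where the last identity follows by expanding $\bar{\Delta}_0 = V_1^\star - V_1^\mu + \bar{\kappa}$.

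Next I would condition on $A$ and feed the (now fixed) policy $\hat{\pi}$ into EpsMix as its baseline $\pi^b$. Because the offline dataset is drawn independently of the online trajectories, once we condition on the offline data $\hat{\pi}$ is a fixed safe baseline, and \Cref{thm:episodic} applies verbatim with $\kappa = V_1^{\hat{\pi}}-\gamma$ and $\Delta_0 = V_1^\star - V_1^{\hat{\pi}}$. This yields an event $B$ of conditional probability at least $1-\delta/2$ on which EpsMix never violates \Cref{eqn:constraint} and attains regret $\tilde{O}\big(\sqrt{H^3SAK} + H^3S^2A + H^3SA(V_1^\star - V_1^{\hat{\pi}})\big(\frac{1}{(V_1^{\hat{\pi}}-\gamma)^2} + \frac{S}{V_1^{\hat{\pi}}-\gamma}\big)\big)$.

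I would then use the monotonicity of the additive regret term to replace the data-dependent quantities by the cleaner constants in the statement. On $A$ we have $V_1^{\hat{\pi}}-\gamma \ge \bar{\kappa}$ and $V_1^\star - V_1^{\hat{\pi}} \le \bar{\Delta}_0$, so $\frac{1}{(V_1^{\hat{\pi}}-\gamma)^2} \le \frac{1}{\bar{\kappa}^2}$, $\frac{S}{V_1^{\hat{\pi}}-\gamma} \le \frac{S}{\bar{\kappa}}$, and the whole additive term is upper bounded by $H^3SA\bar{\Delta}_0\big(\frac{1}{\bar{\kappa}^2}+\frac{S}{\bar{\kappa}}\big)$, giving exactly the claimed bound. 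Finally, I would combine the two phases: since $B$ is defined conditionally on the offline data, $\Pb[B] \ge \Pb[B \mid A]\,\Pb[A] \ge (1-\delta/2)^2 \ge 1-\delta$, so with overall probability at least $1-\delta$ both the conservative guarantee and the regret bound hold simultaneously.

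The argument is essentially a reduction, so the step I would be most careful about is the probabilistic coupling between the offline and online phases: one must argue that $\hat{\pi}$ can be treated as a fixed, non-random safe baseline when invoking \Cref{thm:episodic}, which hinges on the independence of the offline dataset from the online interaction and on the fact that the EpsMix guarantee holds uniformly over all admissible baselines with margin at least $\bar{\kappa}$. A secondary bookkeeping point is the allocation of the failure budget ($\delta/2$ offline and $\delta/2$ online, with $\delta'$ rescaled accordingly inside \Cref{thm:episodic}) so that the product of the two high-probability events cleanly yields $1-\delta$.
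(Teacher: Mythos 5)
Your proposal is correct and follows essentially the same route as the paper, which omits the proof of \Cref{thm:offeps} precisely because it mirrors \Cref{thm:offstep_appx}: invoke \Cref{coro:off} for the offline event $A$, apply \Cref{thm:episodic} conditionally with the learned baseline $\hat{\pi}$, use $V_1^{\hat{\pi}}-\gamma\ge\bar{\kappa}$ and $V_1^\star-V_1^{\hat{\pi}}\le\bar{\Delta}_0$ to clean up the additive term, and chain probabilities via $\Pb[B]\ge\Pb[B\mid A]\,\Pb[A]\ge(1-\delta/2)^2\ge 1-\delta$. Your attention to treating $\hat{\pi}$ as fixed after conditioning on the independent offline data, and to rescaling $\delta'$ inside \Cref{thm:episodic}, matches the paper's (implicit) handling of these points.
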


The proof is similar to that of \Cref{thm:offstep_appx} and is thus omitted. 

\section{Technical Lemmas}
In this section, we list several technical lemmas that are used in the main proof.
\begin{lemma}[Lemma 11 in \citet{menard2021fast}]
    \label{lem:klvar}
    Let $p$ and $q$ be two probability distributions supported by the state set $\Sc$, and $f$ be a function on $\Sc$. If $\mbox{KL}(p,q)\le\alpha$, $0\le f(s)\le b, \forall s\in \Sc$, we have
    \[\mathrm{Var}_q(f)\le 2\mathrm{Var}_p(f)+4b^2\alpha,\]
    \[\mathrm{Var}_p(f)\le 2\mathrm{Var}_q(f)+4b^2\alpha.\]
\end{lemma}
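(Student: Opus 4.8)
The plan is to prove both inequalities by controlling the change of measure of a suitably centered, bounded surrogate of $f$, and the essential tool will be a Bernstein-type transportation inequality rather than a plain Pinsker bound. Concretely, the key ingredient I would establish (or cite) is that for any $g$ with $0\le g\le B$ and any $p,q$,
\[
\big|\mathbb{E}_p[g]-\mathbb{E}_q[g]\big|\le \sqrt{2\,\mathrm{Var}_q(g)\,\mathrm{KL}(p,q)}+\tfrac{1}{3}B\,\mathrm{KL}(p,q).
\]
This follows from the Donsker--Varadhan variational formula $\mathrm{KL}(p,q)\ge \mathbb{E}_p[h]-\log\mathbb{E}_q[e^{h}]$ applied to $h=\pm\lambda(g-\mathbb{E}_q g)$, together with Bernstein's bound $\log\mathbb{E}_q[e^{\lambda(g-\mathbb{E}_q g)}]\le \lambda^2\mathrm{Var}_q(g)/(2(1-\lambda B/3))$ and optimization over $\lambda$. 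The variance appearing here is always taken under $q$ (the second argument of the KL), since the moment generating function is evaluated under $q$; this is the crucial feature, as a plain Pinsker/total-variation bound would only yield an $O(B\sqrt{\alpha})$ term with no variance weighting, which is too weak to produce the linear-in-$\alpha$ conclusion.

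For the second inequality $\mathrm{Var}_p(f)\le 2\,\mathrm{Var}_q(f)+4b^2\alpha$, I would start from the fact that the variance is minimized at the mean, so $\mathrm{Var}_p(f)\le \mathbb{E}_p[(f-m_q)^2]$ with $m_q:=\mathbb{E}_q[f]$. Writing $g':=(f-m_q)^2\in[0,b^2]$ and noting $\mathbb{E}_q[g']=\mathrm{Var}_q(f)$ exactly, the transportation inequality gives $\mathbb{E}_p[g']-\mathbb{E}_q[g']\le \sqrt{2\,\mathrm{Var}_q(g')\,\alpha}+\tfrac{1}{3}b^2\alpha$. The self-bounding step $g'^2\le b^2 g'$ yields $\mathrm{Var}_q(g')\le \mathbb{E}_q[g'^2]\le b^2\,\mathrm{Var}_q(f)$, and then $\sqrt{XY}\le\tfrac12(X+Y)$ with $X=\mathrm{Var}_q(f)$, $Y=2b^2\alpha$ converts $b\sqrt{2\,\mathrm{Var}_q(f)\,\alpha}$ into $\tfrac12\mathrm{Var}_q(f)+b^2\alpha$. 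Collecting terms gives $\mathrm{Var}_p(f)\le \tfrac32\mathrm{Var}_q(f)+\tfrac43 b^2\alpha\le 2\,\mathrm{Var}_q(f)+4b^2\alpha$.

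For the first inequality $\mathrm{Var}_q(f)\le 2\,\mathrm{Var}_p(f)+4b^2\alpha$ the same template applies but introduces a mild circularity, which is the step I expect to require the most care. Setting $W:=\mathbb{E}_q[(f-m_p)^2]\ge \mathrm{Var}_q(f)$ with $m_p:=\mathbb{E}_p[f]$ and $g:=(f-m_p)^2$, I have $W=\mathrm{Var}_p(f)+(\mathbb{E}_q-\mathbb{E}_p)[g]$. The transportation inequality again produces a $\mathrm{Var}_q(g)$ term, and the self-bound only gives $\mathrm{Var}_q(g)\le b^2\mathbb{E}_q[g]=b^2 W$, i.e.\ the bound refers back to $W$ itself. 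The resolution is to treat this as a one-variable inequality $W\le \mathrm{Var}_p(f)+b\sqrt{2W\alpha}+\tfrac13 b^2\alpha$ and absorb the square-root term via $b\sqrt{2W\alpha}\le \tfrac12 W+b^2\alpha$, leaving $\tfrac12 W\le \mathrm{Var}_p(f)+\tfrac43 b^2\alpha$, hence $\mathrm{Var}_q(f)\le W\le 2\,\mathrm{Var}_p(f)+\tfrac83 b^2\alpha\le 2\,\mathrm{Var}_p(f)+4b^2\alpha$.

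The main obstacle is the transportation inequality itself: the naive route (Pinsker plus a uniform bound on the increment) only delivers an $O(b^2\sqrt{\alpha})$ error with no variance factor, and the Cauchy--Schwarz route through $\chi^2$ divergence fails because $\chi^2(q\|p)$ is not controlled by $\mathrm{KL}(p,q)$. Obtaining the variance-weighted, linear-in-$\alpha$ form is precisely what makes the factor-$2$/$4b^2\alpha$ statement possible; the secondary subtlety is the self-referential variance term in the first inequality, which is dispatched by the AM--GM absorption above.
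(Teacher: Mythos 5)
Your proof is correct; I verified both the transportation inequality you derive and the algebra in the two absorption arguments (your constants come out as $\tfrac{4}{3}b^2\alpha$ and $\tfrac{8}{3}b^2\alpha$, comfortably inside the stated $4b^2\alpha$). One point of context: the paper does not prove this lemma at all — it imports it verbatim as Lemma 11 of \citet{menard2021fast} — so there is no internal proof to compare against; your write-up is a self-contained reconstruction of the cited result. Moreover, the Bernstein--KL transportation bound you spend the first paragraph establishing via Donsker--Varadhan and the Bernstein MGF bound is precisely the paper's Lemma~\ref{lem:klabs} (Lemma 10 of \citet{menard2021fast}, with the slightly weaker constant $\tfrac{2}{3}b\alpha$, which still suffices in your absorption steps). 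So you could shorten your argument considerably by invoking that lemma directly; with it in hand, your treatment of the two inequalities — centering at the other distribution's mean, the self-bounding estimate $\mathrm{Var}_q(g)\le b^2\,\mathbb{E}_q[g]$, and the AM--GM absorption of the square root (including the self-referential case $W$ in the first inequality) — matches the structure of the original source's proof essentially step for step. Your closing remarks about why Pinsker or a $\chi^2$/Cauchy--Schwarz route would fail are accurate and explain why the variance-weighted transportation form is the right tool.
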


\begin{lemma}[Lemma 12 in \citet{menard2021fast}]
    \label{lem:varerr}
    Let $p$ and $q$ be two probability distributions supported by the state set $\Sc$, and $f,g$ be functions on $\Sc$. If $0\le g(s), f(s) \le b,\forall s\in\Sc$, we have
    \[\mathrm{Var}_p(f)\le 2\mathrm{Var}_p(g)+2 b \Eb_p[|f-g|],\]
    \[\mathrm{Var}_q(f)\le \mathrm{Var}_p(f) + 3b^2\left\|p-q\right\|_1.\]
\end{lemma}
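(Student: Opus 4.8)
The plan is to prove the two inequalities separately, both by elementary manipulations that exploit the variational characterization of variance together with the uniform bound $0\le f,g\le b$. For the first inequality, I would start from the fact that the variance minimizes the mean squared deviation over all constant centerings, so that $\mathrm{Var}_p(f)=\Eb_p[(f-\Eb_p[f])^2]\le \Eb_p[(f-\Eb_p[g])^2]$, where I have substituted the suboptimal centering constant $c=\Eb_p[g]$. Decomposing $f-\Eb_p[g]=(f-g)+(g-\Eb_p[g])$ and applying $(x+y)^2\le 2x^2+2y^2$, then taking expectations under $p$, yields $\mathrm{Var}_p(f)\le 2\Eb_p[(f-g)^2]+2\mathrm{Var}_p(g)$. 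Since $0\le f,g\le b$ forces $|f-g|\le b$, we have the pointwise bound $(f-g)^2\le b\,|f-g|$, hence $\Eb_p[(f-g)^2]\le b\,\Eb_p[|f-g|]$, which gives exactly $\mathrm{Var}_p(f)\le 2\mathrm{Var}_p(g)+2b\,\Eb_p[|f-g|]$.

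For the second inequality, I would expand both variances in terms of second moments and squared means, writing $\mathrm{Var}_q(f)-\mathrm{Var}_p(f)=(\Eb_q[f^2]-\Eb_p[f^2])-((\Eb_q[f])^2-(\Eb_p[f])^2)$, and then bound the two differences on the right against $\left\|p-q\right\|_1$ separately. The second-moment term is immediate: $\Eb_q[f^2]-\Eb_p[f^2]=\sum_{s\in\Sc}(q(s)-p(s))f(s)^2\le b^2\left\|p-q\right\|_1$, using $f^2\le b^2$. For the squared-mean term I would factor it as $-(\Eb_q[f]-\Eb_p[f])(\Eb_q[f]+\Eb_p[f])$ and bound its magnitude by $|\Eb_q[f]-\Eb_p[f]|\cdot(\Eb_q[f]+\Eb_p[f])$, invoking $|\Eb_q[f]-\Eb_p[f]|\le b\left\|p-q\right\|_1$ on the first factor and $\Eb_q[f]+\Eb_p[f]\le 2b$ on the second, which contributes at most $2b^2\left\|p-q\right\|_1$. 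Adding the two contributions produces the constant $3b^2$ and hence $\mathrm{Var}_q(f)\le \mathrm{Var}_p(f)+3b^2\left\|p-q\right\|_1$.

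Since every step reduces to a routine inequality, I do not anticipate a genuine obstacle. The only point requiring care is the squared-mean term in the second inequality: the difference of squares is not directly controlled by $\left\|p-q\right\|_1$, so it must first be factored before the boundedness $\Eb_q[f]+\Eb_p[f]\le 2b$ can be used to absorb one of the factors and convert the remaining factor into the $L^1$ distance between $p$ and $q$.
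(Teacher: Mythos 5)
Your proof is correct, and it follows essentially the same route as the standard argument: the paper itself omits the proof and defers to Lemma~12 of \citet{menard2021fast}, whose proof likewise uses the variational characterization $\mathrm{Var}_p(f)\le \Eb_p[(f-c)^2]$ with $c=\Eb_p[g]$ together with $(f-g)^2\le b|f-g|$ for the first inequality, and the second-moment/squared-mean decomposition with $|\Eb_q[f]-\Eb_p[f]|\le b\left\|p-q\right\|_1$ for the second. Your handling of the squared-mean term by factoring the difference of squares is exactly the step that produces the constant $3b^2$, so nothing is missing.
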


\begin{lemma}[Lemma 10 in \citet{menard2021fast}]
    \label{lem:klabs}
    Let $p$ and $q$ be two probability distributions supported by the state set $\Sc$, and $f$ be a function on $\Sc$. If $\mbox{KL}(p,q)\le\alpha$ and $0\le f\le b$, we have
    \[|\Eb_p[f]-\Eb_q[f]|\le \sqrt{2\Var_q(f)\alpha}+\frac{2}{3} b\alpha.\]
\end{lemma}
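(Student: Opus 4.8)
The plan is to derive this change-of-measure inequality from the Gibbs variational principle (the Donsker--Varadhan representation of KL divergence) combined with a sub-gamma, i.e.\ Bernstein-type, bound on the log-moment-generating function of $f$ under the \emph{reference} measure $q$ (so that the variance appearing in the bound is $\Var_q(f)$ rather than $\Var_p(f)$).

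First I would invoke the Gibbs variational principle in the form $\Eb_p[g]\le \mbox{KL}(p,q)+\log\Eb_q[e^{g}]$, applied to $g=\lambda f$ for an arbitrary $\lambda\in\mathbb{R}$. Using the hypothesis $\mbox{KL}(p,q)\le\alpha$ this yields
\[
\lambda\,\Eb_p[f]\;\le\;\alpha+\log\Eb_q\big[e^{\lambda f}\big].
\]
The problem is thereby reduced to controlling the cumulant $\log\Eb_q[e^{\lambda f}]$ sharply in terms of $\Var_q(f)$, not merely the range $b$.

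Second I would center: set $\mu=\Eb_q[f]$ and $g=f-\mu$, so that $\Eb_q[g]=0$, $\Eb_q[g^2]=\Var_q(f)=:\sigma^2$, and $|g|\le b$. For such a centered, bounded variable I would use the classical Bennett estimate $\log\Eb_q[e^{\lambda g}]\le \sigma^2(e^{\lambda b}-1-\lambda b)/b^2$ together with the scalar inequality $(e^{x}-1-x)/x^2\le \tfrac12(1-x/3)^{-1}$ to obtain the sub-gamma bound $\log\Eb_q[e^{\lambda g}]\le \frac{\lambda^2\sigma^2/2}{1-b\lambda/3}$ for $0\le\lambda<3/b$. Substituting back gives
\[
\lambda\big(\Eb_p[f]-\Eb_q[f]\big)\;\le\;\alpha+\frac{\lambda^2\sigma^2/2}{1-b\lambda/3}.
\]
Dividing by $\lambda$ and minimizing over $\lambda\in(0,3/b)$ is exactly the Legendre inversion of a sub-gamma cumulant, which yields $\Eb_p[f]-\Eb_q[f]\le \sqrt{2\sigma^2\alpha}+c\,b\alpha$ for an absolute constant $c$ (the additive $b\alpha$ scale coming from the $b\lambda/3$ denominator; the stated $\tfrac23$ is a convenient, not necessarily sharp, choice). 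To recover the absolute value I would run the identical argument with $-f$ in place of $f$: the centered variable $-g$ still satisfies $\Eb_q[-g]=0$, $\Eb_q[(-g)^2]=\sigma^2$, and $|-g|\le b$, so the same bound controls $\Eb_q[f]-\Eb_p[f]$, and combining the two one-sided estimates delivers $|\Eb_p[f]-\Eb_q[f]|\le\sqrt{2\Var_q(f)\alpha}+\tfrac23 b\alpha$.

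The main obstacle is the second step: pinning down a \emph{variance-sensitive} cumulant bound under $q$ with the right constants. A crude Hoeffding estimate $\log\Eb_q[e^{\lambda g}]\le\lambda^2 b^2/8$ would only produce a $\sqrt{\alpha}$ term scaling like $b$, losing the $\sqrt{2\Var_q(f)\alpha}$ leading order; extracting the variance requires the Bennett/sub-gamma inequality and the accompanying scalar bound $(e^{x}-1-x)/x^2\le\tfrac12(1-x/3)^{-1}$. Once that estimate is in hand, the Gibbs principle and the Legendre inversion are routine, and matching the precise $\tfrac23$ additive constant (versus the sharper $\tfrac13$) is only a matter of which intermediate scalar bound one elects to use.
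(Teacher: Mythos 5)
Your proof is correct, but note that the paper itself contains no proof of this statement: it is imported verbatim as Lemma~10 of \citet{menard2021fast}, where it is in turn a standard Bernstein-type transportation inequality (going back to Talebi--Maillard and to the ``transportation lemma'' of Boucheron--Lugosi--Massart). Your route is exactly the canonical derivation of that result, and every step checks out: on a finite state set the Donsker--Varadhan bound $\Eb_p[g]\le \mbox{KL}(p,q)+\log\Eb_q[e^{g}]$ applies directly; the centered variable $g=f-\Eb_q[f]$ satisfies $|g|\le b$ since $0\le f\le b$ and $\Eb_q[f]\in[0,b]$, so Bennett plus the scalar bound $(e^{x}-1-x)/x^{2}\le \tfrac12(1-x/3)^{-1}$ (valid termwise for $0\le x<3$) gives the sub-gamma cumulant estimate with scale $c=b/3$; and the Legendre inversion you describe can be done in closed form via the substitution $u=\lambda/(1-\lambda b/3)$, under which
\begin{align*}
\inf_{0<\lambda<3/b}\left(\frac{\alpha}{\lambda}+\frac{\lambda\,\Var_q(f)/2}{1-\lambda b/3}\right)
=\inf_{u>0}\left(\frac{\alpha}{u}+\frac{u\,\Var_q(f)}{2}\right)+\frac{b\alpha}{3}
=\sqrt{2\Var_q(f)\alpha}+\frac{b\alpha}{3}.
\end{align*}
So your one-sided bound actually carries the sharper additive constant $b\alpha/3$ rather than $2b\alpha/3$, and the symmetric application to $-f$ (whose centered version has the same variance and range) gives the two-sided statement with room to spare; the factor $2/3$ in the lemma as quoted is simply the looser constant propagated from the cited source.
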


\begin{lemma}[Lemma 6 in \citet{huang2022safe}]
    \label{lem:variancesum}
    Given transition kernel $P_h$, policy $\pi$ and reward $r_h:\Sc\times\Ac\rightarrow [0,1]$, we have:
    \[\sum_{h=1}^H \sum_{s,a} d_h^{\pi}(s,a) \Var_{P_h}(V_{h+1}^{\pi})(s,a)=\mathbb{E}_{\pi,P}\left[\left(\sum_{h=1}^H r_h(s_h,a_h) - V_1^\pi(s_1)\right)^2\right]\le H,\]
    where $d_h^\pi$ is the occupancy measure under policy $\pi$ and $V_h^\pi$ is the value function.
\end{lemma}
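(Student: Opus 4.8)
The plan is to prove the identity via the \emph{law of total variance} applied to the reward-to-go martingale. First I would fix the trajectory $(s_1,a_1,\ldots,s_H,a_H,s_{H+1})$ generated by $\pi$ and $P$, and introduce the half-step filtration $\mathcal{H}_h=\sigma(s_1,a_1,\ldots,s_{h-1},a_{h-1},s_h)$ (state revealed) and $\mathcal{H}_{h+1/2}=\sigma(\mathcal{H}_h,a_h)$ (action revealed). I would then introduce the two martingale-difference sequences
\begin{align*}
X_h &= r_h(s_h,a_h) + V_{h+1}^\pi(s_{h+1}) - Q_h^\pi(s_h,a_h),\\
Y_h &= Q_h^\pi(s_h,a_h) - V_h^\pi(s_h).
\end{align*}
By the Bellman relation $Q_h^\pi=r_h+P_hV_{h+1}^\pi$ one checks $\mathbb{E}[X_h\mid\mathcal{H}_{h+1/2}]=0$ and $\mathbb{E}[X_h^2\mid\mathcal{H}_{h+1/2}]=\Var_{P_h}(V_{h+1}^\pi)(s_h,a_h)$, while $V_h^\pi=\langle\pi_h,Q_h^\pi\rangle$ gives $\mathbb{E}[Y_h\mid\mathcal{H}_h]=0$ with $\mathbb{E}[Y_h^2\mid\mathcal{H}_h]=\Var_{a\sim\pi_h(\cdot|s_h)}(Q_h^\pi(s_h,a))$.

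The key algebraic step is a telescoping identity: summing $X_h+Y_h=r_h(s_h,a_h)+V_{h+1}^\pi(s_{h+1})-V_h^\pi(s_h)$ over $h=1,\ldots,H$ collapses the value terms (using $V_{H+1}^\pi\equiv 0$) to give $\sum_{h=1}^H(X_h+Y_h)=\sum_{h=1}^H r_h(s_h,a_h)-V_1^\pi(s_1)$, which is exactly the centered total reward on the right-hand side. Next I would argue that the interleaved sequence $Y_1,X_1,Y_2,X_2,\ldots,Y_H,X_H$ is a single martingale-difference sequence with respect to the half-step filtration, so all cross terms vanish under expectation and $\mathbb{E}_{\pi,P}[(\sum_h r_h-V_1^\pi(s_1))^2]=\sum_h\mathbb{E}[X_h^2]+\sum_h\mathbb{E}[Y_h^2]$. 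Converting $\mathbb{E}[X_h^2]=\sum_{s,a}d_h^\pi(s,a)\Var_{P_h}(V_{h+1}^\pi)(s,a)$ identifies the transition-variance term with the left-hand side of the lemma, while $\sum_h\mathbb{E}[Y_h^2]\ge 0$ is the action-selection variance, which vanishes precisely when $\pi$ is deterministic; this yields the stated equality (and the inequality ``$\le$'' in general, which is the only direction used in \Cref{lem:Gboundcomb}). The boundedness claim follows at once: since each $r_h\in[0,1]$, both the total reward and its mean $V_1^\pi(s_1)$ lie in $[0,H]$, so $(\sum_h r_h-V_1^\pi(s_1))^2\le H^2$ surely (the weaker constant $\le H$ in the statement corresponds to the alternative normalization $\sum_h r_h\in[0,1]$).

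The main obstacle I anticipate is making the cross-term cancellation rigorous. The subtlety is that $X_h$ and $Y_h$ live at different resolutions of the randomness — $Y_h$ becomes measurable once the action $a_h$ is drawn, but $X_h$ only once the next state $s_{h+1}$ is drawn — so naively folding them into a single step-$h$ difference does \emph{not} produce a martingale difference. Introducing the half-integer filtration and verifying the interleaved conditions $\mathbb{E}[Y_h\mid\mathcal{H}_h]=0$ and $\mathbb{E}[X_h\mid\mathcal{H}_{h+1/2}]=0$ is what guarantees orthogonality of \emph{all} pairs among $\{X_h,Y_h\}$ and hence the clean Pythagorean decomposition. A secondary point I would flag explicitly is that the equality as written is exact only for deterministic $\pi$; for the stochastic mixture and baseline policies actually executed in the algorithm, the correct relation is ``$\le$'', which together with the $\le H^2$ control is all that the downstream argument requires.
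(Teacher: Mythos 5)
Your proof is correct, and it supplies something the paper does not actually contain: \Cref{lem:variancesum} is imported from \citet{huang2022safe} with no internal proof, so your argument is the only complete one on the table. Your route --- the half-step filtration, the interleaved martingale differences $X_h$ (transition noise, with $\mathbb{E}[X_h\mid\mathcal{H}_{h+1/2}]=0$ by the Bellman identity and conditional second moment $\Var_{P_h}(V^\pi_{h+1})(s_h,a_h)$) and $Y_h$ (action-selection noise, with $\mathbb{E}[Y_h\mid\mathcal{H}_h]=0$ since $V^\pi_h=\langle\pi_h,Q^\pi_h\rangle$), the telescoping to the centered return, and the Pythagorean cross-term cancellation --- is the standard law-of-total-variance argument, and every step you flag as delicate (the interleaving needed to make the orthogonality rigorous) is handled correctly.

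Two of your side remarks are genuine corrections to the lemma as printed, and you should state them as such rather than as asides. First, for stochastic $\pi$ the displayed equality is false: already for $H=1$ with a stochastic $\pi_1$ and nonconstant $r_1(s_1,\cdot)$, the left side is $0$ while the right side is $\Var_{a\sim\pi_1}(r_1(s_1,a))>0$; the correct general relation is $\text{LHS}=\text{RHS}-\sum_h\mathbb{E}[Y_h^2]\le\text{RHS}$. This matters in context, since the lemma is applied to step mixture policies and a Boltzmann baseline, both stochastic --- and indeed the paper's own invocation inside the proof of \Cref{lem:Gboundcomb} writes ``$\le$'', not ``$=$''. Second, with $r_h\in[0,1]$ the terminal bound must be $H^2$, not $H$: a chain whose return is $H-1$ or $0$ with probability $1/2$ each gives $\mathbb{E}\bigl[(\sum_h r_h-V_1^\pi(s_1))^2\bigr]=(H-1)^2/4>H$ for large $H$, so ``$\le H$'' presumably reflects the normalized-reward convention of \citet{huang2022safe}; again the paper actually uses $\le H^2$ downstream, which your proof delivers. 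One small nit: the action-variance term vanishes whenever $Q^\pi_h(s_h,\cdot)$ is constant on the support of $\pi_h(\cdot\mid s_h)$, so determinism of $\pi$ is sufficient but not necessary for equality.
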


\begin{lemma}
    \label{lem:cnt}
    Under event $\mathcal{E}^\text{cnt}$ and using the same notations defined in \Cref{table:notation}, we have
    \[\left(\frac{\beta(n^k_h(s,a),\delta')}{n^k_h(s,a)}\wedge 1\right)\le 4 \left(\frac{\beta(\bar{n}^k_h(s,a),\delta')}{\bar{n}^k_h(s,a)\vee 1}\right).\]
   Similarly, for $\beta^\star$, the following inequality holds: 
   \[\left(\frac{\beta^\star(n^k_h(s,a),\delta')}{n^k_h(s,a)}\wedge 1\right)\le 4 \left(\frac{\beta^\star(\bar{n}^k_h(s,a),\delta')}{\bar{n}^k_h(s,a)\vee 1}\right).\]
\end{lemma}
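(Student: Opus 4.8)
The plan is to fix an arbitrary tuple $(k,h,s,a)$ and abbreviate $n := n_h^k(s,a)$, $\bar n := \bar n_h^k(s,a)$, and $\beta^{\mathrm{cnt}} := \log(SAH/\delta)$, so that membership in $\mathcal{E}^{\mathrm{cnt}}$ supplies the one-sided count bound $n \ge \tfrac12 \bar n - \beta^{\mathrm{cnt}}$. The $\beta^\star$ statement is proved verbatim by the same argument (only the constant multiplying the inner logarithm changes from $S$ to $1$), so I would carry out the $\beta$ version in full and remark that $\beta^\star$ follows identically. Before the case analysis I would isolate two elementary monotonicity facts about the map $\beta(\cdot,\delta')$: (i) it is nondecreasing in its first argument, immediate from the formula $\beta(x,\delta')=\log(SAH/\delta')+S\log(8e(x+1))$; and (ii) the ratio $x\mapsto \beta(x,\delta')/x$ is nonincreasing on $(0,\infty)$. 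For (ii) I would differentiate, obtaining numerator $\tfrac{Sx}{x+1}-\beta(x,\delta')$, which is negative because $\tfrac{Sx}{x+1}<S\le S\log(8e(x+1))\le \beta(x,\delta')$. I would also record that $\beta(\bar n,\delta')\ge \beta^{\mathrm{cnt}}$, since the leading term $\log(SAH/\delta')$ of $\beta$ is at least $\beta^{\mathrm{cnt}}$ (the confidence parameters satisfy $\delta'\le\delta$ in every instantiation) and the remaining term is nonnegative.

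With $\tau := 1\vee 4\beta^{\mathrm{cnt}}$, I would split on the size of $\bar n$. In the regime $\bar n\le \tau$, the left-hand side is at most $1$ thanks to the truncation $(\cdot)\wedge 1$, so it suffices to show $4\,\beta(\bar n,\delta')/(\bar n\vee 1)\ge 1$, i.e. $\bar n\vee 1\le 4\beta(\bar n,\delta')$. This is immediate from $\bar n\vee 1\le \tau = 1\vee 4\beta^{\mathrm{cnt}}\le 4\beta(\bar n,\delta')$, where the last step uses $\beta(\bar n,\delta')\ge \beta^{\mathrm{cnt}}$ together with $\beta(\bar n,\delta')\ge S\log(8e)>\tfrac14$. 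This case thus closes without any side assumption on $\beta^{\mathrm{cnt}}$.

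In the complementary regime $\bar n>\tau$ I would convert the additive count bound into a multiplicative one. Since $\bar n>4\beta^{\mathrm{cnt}}$, event $\mathcal{E}^{\mathrm{cnt}}$ gives $n\ge \tfrac12\bar n-\beta^{\mathrm{cnt}}>\tfrac12\bar n-\tfrac14\bar n=\tfrac14\bar n>0$, and since $\bar n>\tau\ge 1$ we also have $\bar n\vee 1=\bar n$. Applying monotonicity (ii) with the inequality $n>\bar n/4$ yields $\beta(n,\delta')/n\le \beta(\bar n/4,\delta')/(\bar n/4)=4\beta(\bar n/4,\delta')/\bar n$, and then monotonicity (i) with $\bar n/4\le\bar n$ gives $\beta(\bar n/4,\delta')\le\beta(\bar n,\delta')$. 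Chaining these and dropping the (only weakening) truncation on the left yields $\bigl(\beta(n,\delta')/n\bigr)\wedge 1\le \beta(n,\delta')/n\le 4\beta(\bar n,\delta')/\bar n=4\beta(\bar n,\delta')/(\bar n\vee 1)$, which is exactly the claim.

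The only genuinely load-bearing step — and the one I would state most carefully — is the monotonicity fact (ii), because it is what turns the one-sided count concentration $n\gtrsim \bar n$ into a ratio inequality with the clean explicit constant $4$; the case split on $\bar n$ and the truncation $\wedge 1$ merely absorb the small-count regime where $n$ may vanish. I expect no real obstacle beyond verifying the sign of the derivative in (ii) and checking that the threshold $\tau=1\vee 4\beta^{\mathrm{cnt}}$ simultaneously guarantees $\bar n\vee 1=\bar n$ and $n>\bar n/4$ in the second case.
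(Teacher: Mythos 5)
Your proof is correct and takes essentially the same route as the paper's: the identical case split on whether $\beta^{\mathrm{cnt}}(\delta')\le\tfrac14\bar{n}^k_h(s,a)$, with the truncation $\wedge\,1$ absorbing the small-count regime and the event $\mathcal{E}^{\mathrm{cnt}}$ converting to $n^k_h(s,a)\ge\tfrac14\bar{n}^k_h(s,a)$ in the other. The only difference is that you spell out the monotonicity of $x\mapsto\beta(x,\delta')/x$ (via the derivative sign), a step the paper's proof leaves implicit when it asserts that $n^k_h(s,a)\ge\tfrac14\bar{n}^k_h(s,a)$ directly ``proves'' the ratio inequality; this is a useful clarification rather than a new idea.
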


\begin{proof}
    Event $\mathcal{E}^\text{cnt}$ means that
    \[n_h^k(s,a)\ge \frac{1}{2}\bar{n}^k_h(s,a)-\beta^{\text{cnt}}(\delta').\]
    
    If $\beta^{\text{cnt}}(\delta')\le\frac{1}{4}\bar{n}^k_h(s,a)$, we directly have the result $n_h^k(s,a)\ge \frac{1}{4}\bar{n}^k_h(s,a)$, which proves $\left(\frac{\beta(n^k_h(s,a),\delta')}{n^k_h(s,a)}\wedge 1\right)\le 4 \left(\frac{\beta(\bar{n}^k_h(s,a),\delta')}{\bar{n}^k_h(s,a)\vee 1}\right)$. On the other hand, if $\beta^{\text{cnt}}(\delta')\ge\frac{1}{4}\bar{n}^k_h(s,a)$, {based on the fact that $\beta(n^k_h(s,a),\delta')\ge\beta^{\text{cnt}}(\delta')>1$}, we have that $\left(\frac{\beta(n^k_h(s,a),\delta')}{n^k_h(s,a)}\wedge 1\right)\le 1\le 4 \left(\frac{\beta(\bar{n}^k_h(s,a),\delta')}{\bar{n}^k_h(s,a)\vee 1}\right)$. 
    {The same arguments can also be applied to the inequality with $\beta^\star$.}
\end{proof}

\begin{lemma}
    \label{lem:partial_sum}
    For any state-action pair $(s,a)$, step $h$ and a subset of episodes $\mathcal{N}\subseteq [K]$, we have
    \[\sum_{k\in\mathcal{N}} \frac{d_h^k(s,a)}{\bar{n}^k_h(s,a)\vee 1}\le 4\log(|\mathcal{N}|+1),\]
    where $d_h(s,a)$ is the occupancy measure, $\bar{n}^k_h(s,a)$ is the expected visitation count.
\end{lemma}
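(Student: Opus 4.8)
The plan is to fix the triple $(s,a,h)$ throughout and suppress it from the notation, writing $x_k := d_h^k(s,a)$ and noting that $x_k \in [0,1]$ since an occupancy measure is a visitation probability. The expected count $\bar{n}_h^k(s,a) = \sum_{\tau=1}^{k-1} x_\tau$ is then a cumulative sum of the $x_\tau$'s over \emph{all} earlier episodes. The first and most important observation is a reduction to a self-contained sequence indexed only by $\mathcal{N}$: because the denominator accumulates over every $\tau < k$ while the outer sum ranges only over $k \in \mathcal{N}$, I may lower bound $\bar{n}_h^k(s,a) \ge \sum_{\tau \in \mathcal{N},\, \tau < k} x_\tau$, so that $\frac{x_k}{\bar{n}_h^k \vee 1} \le \frac{x_k}{(\sum_{\tau \in \mathcal{N}, \tau<k} x_\tau) \vee 1}$. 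This is exactly what makes the final bound scale with $\log(|\mathcal{N}|+1)$ rather than $\log(K+1)$, and it is the step one must get right.

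After this reduction I relabel $\mathcal{N} = \{k_1 < k_2 < \cdots < k_m\}$ with $m = |\mathcal{N}|$, set $y_j := x_{k_j} \in [0,1]$ and $U_j := \sum_{i=1}^{j-1} y_i$ (so $U_1 = 0$), and it suffices to bound $\sum_{j=1}^m \frac{y_j}{U_j \vee 1}$. I then split the indices at the threshold $U_j = 1$. Since $U_j$ is nondecreasing, the indices with $U_j < 1$ form a prefix $\{1,\ldots,j_0-1\}$ and those with $U_j \ge 1$ a suffix $\{j_0,\ldots,m\}$. On the prefix $U_j \vee 1 = 1$, so its contribution is $\sum_{j<j_0} y_j = U_{j_0}$, and because $U_{j_0-1} < 1$ and $y_{j_0-1} \le 1$ this is strictly less than $2$ (the degenerate case where no index reaches $1$ is handled identically, as then the whole sum is $U_{m+1} < 2$).

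For the suffix the key elementary inequality is $\ln(1+t) \ge t/2$ for $t \in [0,1]$, applied with $t = y_j/U_j \le 1$, which is valid since $U_j \ge 1 \ge y_j$. This gives $\frac{y_j}{U_j} \le 2\ln(U_{j+1}/U_j)$, and summing over the contiguous suffix telescopes to $2\ln(U_{m+1}/U_{j_0}) \le 2\ln(U_{m+1}) \le 2\ln m$, using $U_{j_0} \ge 1$ and $U_{m+1} = \sum_{j=1}^m y_j \le m$. Adding the two contributions yields $\sum_{j=1}^m \frac{y_j}{U_j\vee 1} < 2 + 2\ln m$, and a one-line check that $2 + 2\ln m \le 4\ln(m+1)$ for every $m \ge 1$ — equivalently $(m+1)^2/m \ge e$, which holds since $(m+1)^2/m = m + 2 + 1/m \ge 4 > e$ — closes the argument with $\log = \ln$.

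The computation is otherwise routine; the only genuinely delicate points are the subset reduction in the denominator (needed to obtain dependence on $|\mathcal{N}|$ and not $K$, which is precisely what \Cref{lem:finite} later exploits to solve for $|\mathcal{N}|$) and the careful treatment of the boundary regime $U_j < 1$, where naively dropping the $\vee 1$ would make the summand blow up. Everything else is a standard harmonic-sum / integral-comparison estimate.
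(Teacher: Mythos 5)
Your proof is correct, and it shares with the paper the decisive first step: lower-bounding the denominator via $\bar{n}_h^k(s,a) \ge \sum_{\tau \in \mathcal{N},\, \tau<k} d_h^\tau(s,a)$, which is exactly what makes the final bound depend on $|\mathcal{N}|$ rather than $K$. After that reduction, the executions genuinely diverge. The paper keeps a uniform treatment of every term: it replaces $\bigl(\sum_{t\in\mathcal{N},t<k} d_h^t(s,a)\bigr)\vee 1$ by $\tfrac14\bigl(\sum_{t\in\mathcal{N},t\le k} d_h^t(s,a) + 1\bigr)$ — using $S\vee 1 \ge (S+1)/2$ together with $d_h^k(s,a)\le 1$ to absorb the \emph{current} term into the denominator at the price of the constant $4$ — and then bounds each summand by the integral $4\int \frac{dx}{x+1}$ taken over the corresponding increment of the cumulative occupancy measure; these integrals are over contiguous intervals and telescope directly to $4\log(|\mathcal{N}|+1)$, with no case analysis. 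You instead split the episodes at the threshold where the cumulative sum $U_j$ reaches $1$: the prefix (where the $\vee 1$ truncation is active) contributes at most an additive $2$, and the suffix is handled by the elementary inequality $\ln(1+t)\ge t/2$ on $[0,1]$, which telescopes to $2\ln(U_{m+1}/U_{j_0})\le 2\ln|\mathcal{N}|$. Both routes are elementary and fully rigorous; the paper's is shorter and avoids the regime split, while yours isolates cleanly why the truncation only ever costs an additive constant and in fact produces the slightly sharper intermediate bound $2+2\ln|\mathcal{N}|$ before you relax it to the stated $4\log(|\mathcal{N}|+1)$.
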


\begin{proof}
    We have
    \begin{align*}
        \sum_{k\in\mathcal{N}} \frac{d_h^k(s,a)}{\bar{n}^k_h(s,a)\vee 1} 
        & = \sum_{k\in\mathcal{N}} 
        \frac{d_h^k(s,a)}{(\sum_{t=1}^{h-1}d_t^k(s,a))\vee 1} \\
        & \le \sum_{k\in\mathcal{N}} 
        \frac{d_h^k(s,a)}{(\sum_{t\in \mathcal{N},t<k}d_t^k(s,a))\vee 1}\\
        & \le \sum_{k\in\mathcal{N}} 
        \frac{4 d_h^k(s,a)}{2\sum_{t\in \mathcal{N},t<k}d_t^k(s,a) + 2} \\
        & \le 4\sum_{k\in\mathcal{N}} 
        \frac{d_h^k(s,a)}{\sum_{t\in \mathcal{N},t<k}d_t^k(s,a) + d_h^k(s,a) + 1} \\
        & \le 4\sum_{k\in\mathcal{N}} 
        \frac{d_h^k(s,a)}{\sum_{t\in \mathcal{N},t\le k}d_t^k(s,a)+ 1}. 
    \end{align*}
    Then, by noting $f(k)=\sum_{t\in \mathcal{N},t\le k}d_t^k(s,a)$ and $k'=\max_t \{t\in\mathcal{N}\cup\{0\}|t<k\}$, we have
    \begin{align*}
        4\sum_{k\in\mathcal{N}} 
        \frac{d_h^k(s,a)}{\sum_{t\in \mathcal{N},t\le k}d_t^k(s,a)+ 1} &\le 4\sum_{k\in\mathcal{N}} 
        \frac{f(k)-f(k')}{f(k)+ 1}\\
        &\le 4\sum_{k\in\mathcal{N}} 
        \int_{x=f(k')}^{f(k)}\frac{dx}{x+1}\\
        & \le 4\int_{x=1}^{|\mathcal{N}|+1} \frac{1}{x}dx \\
        & = 4\log(|\mathcal{N}|+1).
    \end{align*}
\end{proof}

\begin{theorem}[Proposition 1 in \citet{jonsson2020planning}]
    \label{thm:goodevent1}
    For a categorical distribution with probability distribution $p\in\Sigma_m$, denoting $\hP_n$ as a frequency estimation of $p$, we have
    \[\mathbb{P}\bigg(\exists n\in \mathbb{N}^\star, n \mbox{KL}(\hat{P}_n,p) > \log(1/\delta)+(m-1)\log(e(1+n/(m-1)) \bigg)\le\delta.\]
\end{theorem}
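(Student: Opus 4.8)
The statement is a time-uniform (anytime) concentration inequality for the empirical distribution of i.i.d.\ categorical samples, and the plan is to prove it by the \emph{method of mixtures} together with Ville's maximal inequality for nonnegative supermartingales. This is the canonical route to bounds of exactly this shape, where the logarithmic correction $(m-1)\log(e(1+n/(m-1)))$ is produced by a Laplace/Dirichlet integral rather than by a crude union bound over $n$.

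First I would set up a likelihood-ratio martingale. Let $X_1,X_2,\dots$ be i.i.d.\ draws from $p$, let $\mathcal{F}_n=\sigma(X_1,\dots,X_n)$, and let $N^n_s=\sum_{t=1}^n \mathds{1}\{X_t=s\}$ denote the category counts, so that $\hat P_n(s)=N^n_s/n$. For each \emph{fixed} alternative $q\in\Sigma_m$, define $L_n(q)=\prod_{t=1}^n q(X_t)/p(X_t)=\prod_s (q(s)/p(s))^{N^n_s}$. Since $\mathbb{E}[q(X_{n+1})/p(X_{n+1})\mid \mathcal{F}_n]=\sum_s p(s)\,q(s)/p(s)=1$, the process $(L_n(q))_n$ is a nonnegative $p$-martingale with $L_0(q)=1$.

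Next I would mix over the alternative against a Dirichlet prior $w$ on $\Sigma_m$ and set $M_n=\int_{\Sigma_m} L_n(q)\,dw(q)$. By Tonelli, $M_n$ is again a nonnegative martingale with $M_0=1$, so Ville's inequality gives $\mathbb{P}(\exists n:\ M_n>1/\delta)\le\delta$. The crux is then to lower bound $M_n$ in terms of $n\,\mathrm{KL}(\hat P_n,p)$. Computing the mixture explicitly as a ratio of Dirichlet normalizing constants, $\int_{\Sigma_m}\prod_s q(s)^{N^n_s}\,dq=\prod_s\Gamma(N^n_s+1)/\Gamma(n+m)$, and dividing by $\prod_s p(s)^{N^n_s}$ recovers $\exp(n\,\mathrm{KL}(\hat P_n,p))$ once $\hat P_n(s)=N^n_s/n$ is substituted. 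Bounding the resulting Gamma-ratio with Stirling estimates and the binomial inequality $\binom{n+m-1}{m-1}\le (e(1+n/(m-1)))^{m-1}$ (note $(n+m-1)/(m-1)=1+n/(m-1)$) would yield a bound of the form
\[
M_n \ \ge\ \exp\!\big(n\,\mathrm{KL}(\hat P_n,p)\big)\cdot\big(e(1+n/(m-1))\big)^{-(m-1)}.
\]
Hence the event $\{n\,\mathrm{KL}(\hat P_n,p) > \log(1/\delta)+(m-1)\log(e(1+n/(m-1)))\}$ is contained in $\{M_n>1/\delta\}$, and the theorem follows from the Ville bound of the previous step.

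The main obstacle is the constant-level accounting in the lower bound on $M_n$: producing \emph{exactly} the correction $(m-1)\log(e(1+n/(m-1)))$ requires the Stirling bounds on $\prod_s\Gamma(N^n_s+1)$ and $\Gamma(n+m)$ to cancel cleanly. With the uniform (Lebesgue) prior one is left with a spurious residual factor of order $\sqrt{n}$, so the delicate point is to select the prior — the Jeffreys prior $\mathrm{Dir}(1/2,\dots,1/2)$ is the natural candidate — that absorbs this factor and delivers the stated bound verbatim. The remaining measure-theoretic subtleties (the null event on which $\hat P_n$ escapes the support of $p$ and $\mathrm{KL}$ is infinite) are routine and do not affect the argument.
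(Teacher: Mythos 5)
The paper never proves this statement: it is imported wholesale as Proposition~1 of \citet{jonsson2020planning} (a ``technical lemma'' used to establish the good event $\mathcal{E}$), so the only proof in the paper is the citation itself. Your skeleton --- the likelihood-ratio martingale $L_n(q)$, a Dirichlet mixture $M_n$ with $M_0=1$, and Ville's maximal inequality --- is the canonical route to exactly this anytime bound and is essentially the argument behind the cited result; up to and including the Ville step, your proposal is correct.

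The gap sits at the single step that actually produces the stated threshold, and your proposed repair points in the wrong direction. You claim the uniform prior leaves a spurious $\sqrt{n}$ residual and that the Jeffreys prior $\mathrm{Dir}(1/2,\dots,1/2)$ would deliver the bound verbatim; neither is right. The $\sqrt{n}$ residual is an artifact of applying Stirling crudely (discarding the $\sqrt{2\pi N_s}$ factors), and in fact no Stirling estimate is needed anywhere. Writing $N_s$ for the category counts after $n$ samples, the uniform-prior mixture satisfies the exact identity
\begin{align*}
M_n \exp\bigl(-n\,\mathrm{KL}(\hat{P}_n,p)\bigr)
= \binom{n+m-1}{m-1}^{-1}\cdot
\frac{\prod_{s} (n/N_s)^{N_s}}{n!/\prod_{s} N_s!},
\end{align*}
and the second factor is at least $1$ by the elementary method-of-types inequality $n!/\prod_s N_s!\le \prod_s (n/N_s)^{N_s}$, which follows by expanding $1=\bigl(\sum_s N_s/n\bigr)^{n}$ via the multinomial theorem and keeping the single term indexed by the realized counts. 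Hence $M_n\ge \exp\bigl(n\,\mathrm{KL}(\hat{P}_n,p)\bigr)\binom{n+m-1}{m-1}^{-1}$ pointwise, and your own binomial estimate $\binom{n+m-1}{m-1}\le \bigl(e(1+n/(m-1))\bigr)^{m-1}$ closes the proof with the uniform prior and no residual at all. By contrast, the Jeffreys/Krichevsky--Trofimov mixture has worst-case redundancy of order $\tfrac{m-1}{2}\log n + C_m$: it yields a \emph{different} threshold, and converting it into the stated expression for all $n$ and $m$ would require precisely the constant-level Gamma-ratio accounting you were hoping to avoid. So keep your construction and the Ville step, but replace the Stirling/prior-selection discussion with the type inequality above; that is the missing ingredient.
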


\begin{theorem}[Lemma F.4 in \citet{dann2017unifying}] 
    \label{thm:goodevent2}
    Let $\{\mathcal{F}_t\}_{t=1}^n$ be a filtration, $\{X_t\}_{t=1}^n$ be a series of Bernoulli random variables with $\mathbb{P}[{X}_t=1|\mathcal{F}_{t-1}]=p_t$, where $p_t$ is $\mathcal{F}_{t-1}$-measurable. Then
    \[\forall \delta>0, \mathbb{P}\left(\exists n:\sum_{t=1}^n {X}_t<\sum_{t=1}^n p_t/2-\log(1/\delta) \right)\le\delta.\]
\end{theorem}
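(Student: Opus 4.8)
The plan is to establish this uniform-in-$n$ lower-tail bound through an exponential supermartingale combined with Ville's maximal inequality, which is the standard route to the ``$\exists n$''-type statement (a fixed-time Chernoff bound would not suffice). First I would fix a parameter $\lambda>0$ and define the process
\[
M_n = \exp\left(-\lambda \sum_{t=1}^n X_t + (1-e^{-\lambda})\sum_{t=1}^n p_t\right), \qquad M_0 = 1.
\]
The key computation is the conditional moment generating function of a single Bernoulli increment: since $X_t\in\{0,1\}$ with $\mathbb{E}[X_t\mid\mathcal{F}_{t-1}]=p_t$,
\[
\mathbb{E}\left[e^{-\lambda X_t}\mid\mathcal{F}_{t-1}\right] = 1 - p_t(1-e^{-\lambda}) \le \exp\left(-p_t(1-e^{-\lambda})\right),
\]
where the inequality is just $1-x\le e^{-x}$. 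Writing $M_n = M_{n-1}\,e^{-\lambda X_n + (1-e^{-\lambda})p_n}$ and using that $p_n$ is $\mathcal{F}_{n-1}$-measurable, the predictable factor $e^{(1-e^{-\lambda})p_n}$ exactly cancels the bound above, giving $\mathbb{E}[M_n\mid\mathcal{F}_{n-1}]\le M_{n-1}$. Hence $(M_n)$ is a nonnegative supermartingale with $\mathbb{E}[M_0]=1$.

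Next I would apply Ville's inequality for nonnegative supermartingales, $\mathbb{P}(\sup_n M_n \ge 1/\delta) \le \delta\,\mathbb{E}[M_0] = \delta$, and show that the target bad event is contained in $\{\sup_n M_n \ge 1/\delta\}$. Taking logarithms, $M_n\ge 1/\delta$ is equivalent to $\sum_{t=1}^n X_t \le \frac{1-e^{-\lambda}}{\lambda}\sum_{t=1}^n p_t - \frac{1}{\lambda}\log(1/\delta)$. Choosing $\lambda=1$ makes the constants line up: on the event $\sum_{t=1}^n X_t < \frac12\sum_{t=1}^n p_t - \log(1/\delta)$ one has
\[
-\sum_{t=1}^n X_t + (1-e^{-1})\sum_{t=1}^n p_t > \left((1-e^{-1})-\tfrac12\right)\sum_{t=1}^n p_t + \log(1/\delta) \ge \log(1/\delta),
\]
since $1-e^{-1}-\frac12>0$ and $\sum_{t=1}^n p_t\ge 0$; therefore $M_n > 1/\delta$ at that index $n$. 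Consequently the bad event is a subset of $\{\sup_n M_n\ge 1/\delta\}$, and the claimed probability bound follows immediately.

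The genuinely delicate points are twofold: (i) establishing the supermartingale structure rather than settling for a single-$n$ concentration inequality, since it is precisely this structure that upgrades the result to a bound uniform over all $n$ via Ville's inequality; and (ii) checking that $\lambda=1$ simultaneously produces the factor $\frac12$ on $\sum_t p_t$ and the coefficient $1$ on $\log(1/\delta)$. The latter rests on the numerical slack $1-e^{-1}\approx 0.632 > \frac12$, which yields a nonnegative buffer term $\bigl(1-e^{-1}-\frac12\bigr)\sum_t p_t$ that I simply discard. I expect the main obstacle to be largely expository: stating Ville's inequality in the filtered form that permits the supremum over $n$ and confirming adaptedness of the increments; the underlying algebra is short and self-contained.
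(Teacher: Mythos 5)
Your proof is correct, and it is essentially the same argument as the source: the paper does not prove this statement but imports it verbatim as Lemma F.4 of \citet{dann2017unifying}, whose proof is exactly your route --- an exponential supermartingale built from $\mathbb{E}[e^{-\lambda X_t}\mid\mathcal{F}_{t-1}]=1-p_t(1-e^{-\lambda})\le e^{-p_t(1-e^{-\lambda})}$ followed by Ville's maximal inequality. The only (immaterial) difference is the tuning: the original takes $\lambda=\ln 2$ so that $1-e^{-\lambda}=\tfrac12$ and absorbs the slack through $\ln 2\le 1$ acting on $-\sum_t X_t$, whereas you take $\lambda=1$ and discard the nonnegative buffer $\bigl(1-e^{-1}-\tfrac12\bigr)\sum_t p_t$; both choices yield the stated bound.
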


\begin{theorem}[Theorem 5 in \citet{menard2021fast}, Lemma 3 in \cite{domingues2021kernel}]
    \label{thm:goodevent3}
    Suppose $(Y_t)_{t\in\mathbb{N}}$ and $(w_t)_{t\in\mathbb{N}}$ are two sequences from filtration $(\mathcal{F}_t)_{t\in\mathbb{N}}$, subject to $w_t\in[0,1]$, $|Y_t|\le b$ and $\mathbb{E}[Y_t|\mathcal{F}_t]=0$. Define
    \[\mathcal{S}_t=\sum_{s=1}^t w_s Y_s,\hspace{10pt} \mathcal{V}_t=\sum_{s=1}^t w_s^2 \mathbb{E}[Y_s^2|\mathcal{F}_s],\]
    then, for any $\delta\in(0,1)$, we have
    \[\mathbb{P} \left(\exists t\ge 1, (\mathcal{V}_t/b^2+1)h\left(\frac{b|\mathcal{S}_t|}{\mathcal{V}_t+b^2} \right)\ge \log(1/\delta)+\log(4e(2t+1)) \right)\le \delta,\]
    where $h(x)=(x+1)\log(x+1)-x$. 
 
    This result can be equivalently stated as: with probability at least $1-\delta$, the following inequality holds:
    \[|\mathcal{S}_t|\le \sqrt{2\mathcal{V}_t\log{4e(2t+1)/\delta}}+3b\log(4e(2t+1)/\delta).\]
\end{theorem}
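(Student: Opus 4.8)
The plan is to prove this time-uniform, variance-adaptive (Bennett-type) martingale inequality by the method of mixtures together with Ville's maximal inequality for nonnegative supermartingales. The starting point is that the increments $X_s := w_s Y_s$ form a martingale difference sequence with $|X_s|\le b$ (since $w_s\in[0,1]$ and $|Y_s|\le b$) and conditional second moment $w_s^2\,\mathbb{E}[Y_s^2\mid\mathcal{F}_s]$, so that their running sum is $\mathcal{S}_t$ and their predictable quadratic variation is $\mathcal{V}_t$. The workhorse is the elementary convexity bound: for $|x|\le b$ and any $\lambda\ge 0$ one has $e^{\lambda x}\le 1+\lambda x+\phi(\lambda)x^2$ with $\phi(\lambda)=(e^{\lambda b}-1-\lambda b)/b^2$, because $u\mapsto(e^{u}-1-u)/u^2$ is nondecreasing and is maximized at $u=\lambda b$ over $u\in[-\lambda b,\lambda b]$. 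Taking conditional expectations and using $\mathbb{E}[X_s\mid\mathcal{F}_s]=0$ shows that, for each fixed $\lambda\ge 0$, the process $M_t^{\lambda}:=\exp\!\big(\lambda\mathcal{S}_t-\phi(\lambda)\mathcal{V}_t\big)$ is a nonnegative supermartingale with $M_0^\lambda=1$.

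First I would identify the function $h$ as a Legendre transform. A direct optimization gives $\sup_{\lambda\ge 0}\big[\lambda s-\phi(\lambda)v\big]=(v/b^2)\,h(sb/v)$ with $h(x)=(1+x)\log(1+x)-x$, attained at $\lambda^\star=b^{-1}\log(1+sb/v)$; this pins down the left-hand side of the claim (up to the $+b^2$ regularization of $v$) as the pointwise-optimized exponent $\sup_{\lambda}[\lambda\mathcal{S}_t-\phi(\lambda)\mathcal{V}_t]$. The obstruction is that $\lambda^\star$ depends on the random pair $(\mathcal{S}_t,\mathcal{V}_t)$, so a fixed-$\lambda$ application of Ville's inequality does not suffice: we must optimize $\lambda$ adaptively.

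Next I would mix over $\lambda$. Choosing a prior density $\rho$ on $[0,\infty)$ and forming $\bar M_t:=\int_0^\infty M_t^\lambda\,\rho(d\lambda)$ yields again a nonnegative supermartingale with $\bar M_0=1$, so Ville's inequality gives $\mathbb{P}(\exists t\ge 0:\ \bar M_t\ge 1/\delta)\le\delta$, and on the complementary event $\log\bar M_t\le\log(1/\delta)$ for all $t$. The crux is then a Laplace/saddle-point lower bound on the mixture integral: with a prior whose effective variance is of order $1/b^2$ (the source of the additive $b^2$, exactly analogous to ridge regularization in self-normalized bounds), one shows $\log\bar M_t\ge\sup_{\lambda}[\lambda\mathcal{S}_t-\phi(\lambda)\mathcal{V}_t]-\tfrac12\log\!\big(1+c\,\phi''(\lambda^\star)\mathcal{V}_t\big)-C$. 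Using $\mathcal{V}_t\le t\,b^2$ (since $w_s\le 1$ and $\mathbb{E}[Y_s^2\mid\mathcal{F}_s]\le b^2$) bounds the curvature correction by a term of the form $\log(4e(2t+1))$, so that on the good event $(\mathcal{V}_t/b^2+1)\,h\!\big(b\mathcal{S}_t/(\mathcal{V}_t+b^2)\big)\le\log(1/\delta)+\log(4e(2t+1))$. Repeating the argument verbatim for the family indexed by $-\lambda$ controls $-\mathcal{S}_t$, and a union over the two one-sided events (absorbed into the constants) yields the stated bound for $|\mathcal{S}_t|$.

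Finally I would obtain the Bernstein-form restatement by the standard relaxation $h(x)\ge x^2/\big(2(1+x/3)\big)$: substituting it into the first inequality and solving the resulting quadratic for $|\mathcal{S}_t|$ gives $|\mathcal{S}_t|\le\sqrt{2\mathcal{V}_t\log(4e(2t+1)/\delta)}+3b\log(4e(2t+1)/\delta)$, a purely algebraic step. I expect the main obstacle to be the third step, namely selecting the prior $\rho$ and carrying out the Laplace approximation so that the curvature correction emerges precisely as $\log(4e(2t+1))$ with the clean $+b^2$ regularization, and justifying the lower bound uniformly over the random location of the saddle point $\lambda^\star$, including when it approaches the boundary of the support of $\rho$.
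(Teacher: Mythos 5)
You should first note that the paper never proves this statement: it is imported verbatim from \citet{menard2021fast} and \citet{domingues2021kernel}, so your plan can only be compared against the derivation in those sources, which—like your proposal—makes the Bennett-type supermartingale $M_t^{\lambda}=\exp(\lambda\mathcal{S}_t-\phi(\lambda)\mathcal{V}_t)$, $\phi(\lambda)=(e^{\lambda b}-1-\lambda b)/b^2$, time-uniform via Ville's inequality and a mixture over $\lambda$. Indeed the exact form of the claim (coefficient exactly $1$ on the $h$ term, plus an additive $\log(4e(2t+1))$) essentially forces this route: a fixed-time bound with a union over $t$ costs strictly more than $\log t$, and peeling over $\mathcal{V}_t$ degrades the leading constant in front of $h$. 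Your steps 1, 2 and 4 are correct and complete: the supermartingale property is the standard Freedman bound; the Legendre computation $\sup_{\lambda\ge0}[\lambda s-\phi(\lambda)v]=(v/b^2)\,h(sb/v)$ at $\lambda^\star=b^{-1}\log(1+sb/v)$ checks out; and the Bennett-to-Bernstein step via $h(x)\ge x^2/(2(1+x/3))$ is pure algebra and does yield the constants $\sqrt{2}$ and $3b$ (one solves the quadratic to get $|\mathcal{S}_t|\le\sqrt{2A(\mathcal{V}_t+b^2)}+\tfrac{2}{3}bA\le\sqrt{2A\mathcal{V}_t}+3bA$ with $A=\log(4e(2t+1)/\delta)$, using $A\ge1$).

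The genuine gap is the one you flag yourself—step 3—and it is sharper than a routine Laplace computation, because the constant budget is tight. Since $b\mathcal{S}_t/(\mathcal{V}_t+b^2)$ can be as large as $t$, the random saddle point can reach $\lambda^\star\approx b^{-1}\log(1+t)$; any prior with exponential tails (in particular the Gamma-type conjugate prior one would naturally use here) then pays a tail cost $\log(1/\rho(\lambda^\star))\approx b\lambda^\star\approx\log(1+t)$, which, added to your curvature term $\tfrac12\log(1+c\,\phi''(\lambda^\star)\mathcal{V}_t)$—itself of order $\log t$, since $\phi''(\lambda^\star)=e^{\lambda^\star b}\le 1+t$ and $\mathcal{V}_t\le tb^2$—overshoots the budget $\log(4e(2t+1))$ by a factor of about $2$. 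Landing the stated penalty requires a heavier-tailed prior (polynomial in $\lambda b$, costing only $O(\log\log t)$) or mixing in the variable $u=e^{\lambda b}$, together with a uniform justification of the saddle-point lower bound as $\lambda^\star$ approaches $0$ (when $\mathcal{S}_t\le0$ the one-sided claim is vacuous, which handles that boundary). Two smaller corrections: the $+b^2$ regularization does not come from the prior's "effective variance"—since $v\mapsto(v/b^2)h(sb/v)$ is nonincreasing in $v$ (its derivative is proportional to $\log(1+sb/v)-sb/v\le0$), replacing $\mathcal{V}_t$ by $\mathcal{V}_t+b^2$ only weakens the exponent, so the regularized statement follows for free from the unregularized one and merely covers $\mathcal{V}_t=0$. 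And the two-sided union over $\pm\lambda$ must be run at level $\delta/2$ per side; this is precisely where part of the factor $4$ in $4e$ is spent, so it should be tracked explicitly rather than "absorbed into the constants". As written, then, the proposal is a correct skeleton of the right argument, but the central mixture step would not produce the stated inequality without a different prior than the one you sketch.
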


\end{document}